\newcommand{\dd}[1]{\text{d}#1}
\crefname{appcha}{Appendix}{Appendices}
\numberwithin{figure}{chapter}
\numberwithin{table}{chapter}
\title{Contributions to Large Scale Bayesian Inference and Adversarial Machine Learning \\
\vspace{0.3cm}
{\Large (Contribuciones a la Inferencia Bayesiana a Gran Escala y al \\ Aprendizaje Automático Adversario) }}
\author{Víctor Gallego Alcalá}
\numberwithin{equation}{chapter}
\begin{document}

\titulo

\cleardoublepage
\begin{description}[labelwidth=\widthof{\textbf{Department:}}, leftmargin=!, labelsep=2em]
\item[Department:] Estadística e Investigación Operativa \\ Facultad de Ciencias Matemáticas \\ Universidad Complutense de Madrid (UCM) \\ Spain
\item[Title:] Contributions to Large Scale Bayesian Inference and Adversarial Machine Learning
\item[Author:] Víctor Gallego Alcalá
\item[Advisors:] David Ríos Insua and David Gómez-Ullate Oteiza
\item[Date:] June 2021
\end{description}

\chapter*{Abstract}

The field of machine learning (ML) has experienced a major boom in the past years, both in theoretical developments and application areas. However, the rampant adoption of ML methodologies has revealed that models are usually adopted to make decisions without taking into account the uncertainties in their predictions. More critically, they can be vulnerable to adversarial examples, strategic manipulations of the data with the goal of fooling those systems. For instance, in retailing, a model may predict very high expected sales for the next week, given a certain advertisement budget. However, the predicted variance may also be quite big, thus making the prediction almost useless depending on the risk tolerance of the company. Similarly, in the case of spam detection, an attacker may insert additional words in a given spam email to evade being classified as spam by making it to appear more legit.
Thus, we believe that developing ML systems that take into account predictive uncertainties and are robust against adversarial examples is a must for critical, real-world tasks. This thesis is a step towards achieving this goal. 

In Chapter 1, we start with a case study in retailing. We propose a robust implementation of the Nerlove–Arrow model using a Bayesian structural time series model to explain the relationship between advertising expenditures
of a country-wide fast-food franchise network with its weekly sales. Its Bayesian nature facilitates incorporating prior information reflecting the manager’s views, which can be updated with relevant data. However, this case study adopted classical Bayesian techniques, such as the Gibbs sampler. Nowadays, the ML landscape is pervaded with complex models, huge in the number of parameters. This is the realm of neural networks and this chapter also surveys current developments in this sub-field. In doing this, three challenges that constitute the core of this thesis are identified.

Chapter 2 is devoted to the first challenge. In it, we tackle the problem of scaling Bayesian inference to complex models and large data regimes. In the first part, we propose a unifying view of two different Bayesian inference algorithms, Stochastic Gradient Markov Chain Monte Carlo (SG-MCMC) and Stein Variational Gradient Descent (SVGD), leading to improved and efficient novel sampling schemes. In the second part, we develop a framework to boost the efficiency of Bayesian inference in probabilistic models by embedding a Markov chain sampler within a variational posterior approximation. We call this framework “variationally inferred sampling”. This framework has several benefits, such as its ease of implementation and the automatic tuning of sampler parameters, leading to a faster mixing time through automatic differentiation. Experiments show the superior performance of both developments compared to baselines. In addition, both could be combined to further improve the results.

In Chapter 3, we address the challenge of protecting ML classifiers from adversarial examples. So far, most approaches to adversarial classification have followed a classical game-theoretic framework. This requires unrealistic common knowledge conditions untenable in the security settings typical of the adversarial ML realm. After reviewing such approaches, we present an alternative perspective on AC based on adversarial risk analysis, and leveraging the scalable Bayesian approaches from chapter 2.

In Chapter 4, we turn our attention form supervised learning to reinforcement learning (RL), addressing the challenge of supporting an agent in a sequential decision making setting where there can be adversaries, specifically modelled as other players. We introduce Threatened Markov Decision Processes (TMDPs) as an extension of the classical Markov Decision Process framework for RL. We also propose a level-$k$ thinking scheme resulting in a novel learning approach to deal with TMDPs. After introducing our framework and deriving theoretical results, relevant empirical evidence is given via extensive experiments, showing the benefits of accounting for adversaries in RL while the agent learns.

Finally, Chapter 5 sums up with several conclusions and avenues for further work.
\vspace{1cm}

The following papers derived from this thesis have been published or already accepted:
\begin{itemize}
    \item \cite{gallego2019dlms}. \citetitle{gallego2019dlms}. In \citejournal{gallego2019dlms}.
    \item \cite{gallego2019reinforcement}. \citetitle{gallego2019reinforcement}. In \citejournal{gallego2019reinforcement}.
     \item \cite{gallego2019vis}. \citetitle{gallego2019vis}. In \citejournal{gallego2019vis}.   
     \item \cite{math8111957}. \citetitle{math8111957}. In \citejournal{math8111957}.
     \item \cite{nn2022}. \citetitle{nn2022}. In \citejournal{nn2022} (to appear).
\end{itemize}

The following papers derived from this thesis are under submission:
\begin{itemize}
    \item \citeauthor{gallego2019opponent}. \citetitle{gallego2019opponent}. 
        \item \citeauthor{gallego2021data}. \citetitle{gallego2021data}. 
    \item \citeauthor{gallego2018stochastic}. \citetitle{gallego2018stochastic}. 
    \item \citeauthor{AMLARA}. \citetitle{AMLARA}. 
     \item \citeauthor{gallego2020protecting}. \citetitle{gallego2020protecting}. 
\end{itemize}

The following papers related to the contents of this thesis were also published:
\begin{itemize}
    \item \cite{angulo2018bayesian}. \citetitle{angulo2018bayesian}. In \citejournal{angulo2018bayesian}.
    \item \cite{banks2020adversarial}. \citetitle{banks2020adversarial}. In \citejournal{banks2020adversarial}.
\end{itemize}

\chapter*{Resumen}

El campo del aprendizaje automático (AA) ha experimentado un auge espectacular en los últimos años, tanto en desarrollos teóricos como en áreas de aplicación. Sin embargo, la rápida adopción  de las metodologías del AA ha mostrado que los modelos que habitualmente se emplean para toma de decisiones no tienen en cuenta la incertidumbre en sus predicciones o, más crucialmente, pueden ser vulnerables a ejemplos adversarios, datos manipulados estratégicamente con el objetivo de engañar estos sistemas de AA. Por ejemplo, en el sector de la hostelería, un modelo puede predecir unas ventas esperadas muy altas para la semana que viene, fijado cierto plan de inversión en publicidad. Sin embargo, la varianza predictiva también puede ser muy grande, haciendo la predicción escasamente útil según el nivel de riesgo que el negocio pueda tolerar. O, en el caso de la detección de spam, un atacante puede introducir palabras adicionales en un correo de spam  para evadir el ser clasificado como tal y aparecer  legítimo.
Por tanto, creemos que desarrollar sistemas de AA que puedan tener en cuenta también las incertidumbres en las predicciones y ser más robustos frente a ejemplos adversarios es una necesidad para tareas críticas en el mundo real. Esta tesis es un paso hasta alcanzar este objetivo.

En el capítulo 1, empezamos con un caso de estudio en el sector de la hostelería. Proponemos una implementación robusta del modelo de Nerlove-Arrow usando un modelo  estructural bayesiano de series temporales para explicar la relación entre las inversiones en publicidad con las ventas semanales de una cadena nacional de restaurantes de comida rápida. Su naturaleza bayesiana facilita  incorporar conocimiento a priori que refleje las creencias del gestor, y pueden  actualizarse con datos observados. Sin embargo, este caso de estudio emplea técnicas bayesianas ya clásicas, como el muestreador de Gibbs. Hoy en día, el panorama del AA está repleto de modelos complejos, enormes en cuanto a número de parámetros. Este es caso de las redes neuronales, así que en este capítulo también resumimos los avances recientes en este subcampo. Tres desafíos constituyen el cuerpo de esta tesis.

El capítulo 2 va dedicado al primer desafío. En él, atacamos el problema de escalar la inferencia Bayesiana a modelos complejos o regímenes de grandes datos. En la primera parte, proponemos una visión unificadora de dos algoritmos de inferencia Bayesiana, Monte Carlo mediante cadenas de Markov con Gradientes Estocásticos y Descenso por el Gradiente Variacional Stein, llegando a  esquemas mejorados y eficientes de inferencio. En la segunda parte, desarrollamos una metodología para mejorar la eficiencia de la inferencia bayesiana mediante el anidamiento de un muestreador basado en cadenas de Markov dentro de una aproximación variacional. A esta metodología la llamamos "aproximación variacional refinada". La metodología conlleva varios beneficios, como su facilidad de implementación y el ajuste automático de los hiperparámetros del muestreador, logrando tiempos de convergencia más rápidos gracias a la diferenciación automática. Los experimentos muestran el rendimiento superior de ambos desarollos comparado con algunas alternativas.

En el capítulo 3, nos centramos en el desafío de proteger clasificadores de AA de los ejemplos adversarios. Hasta ahora, la mayoría de enfoques en clasificación adversaria han seguido el paradigma clásico de teoría de juegos. Esto requiere condiciones poco realistas de conocimiento común, que no son admisibles en entornos típicos en seguridad del aprendizaje automático adversario. Tras revisar estos enfoques, presentamos una perspectiva alternativa basada en análisis de riesgos adversarios y aprovechamos las técnicas bayesianas escalables del capítulo 3.

En el capítulo 4, pasamos nuestra atención del aprendizaje supervisado al aprendizaje por refuerzo (AR), incidiendo en el desafío de apoyar un agente en un escenario de toma de decisiones secuenciales en el que puede haber adversarios, modelizados como otros jugadores. Introducimos los Procesos de Decisión de Markov Amenazados como una extensión del paradigma clásico de los procesos de decisión Markovianos. También proponemos un esquema basado en pensamiento de nivel-$k$ resultando en un nuevo algoritmo de aprendizaje. Tras introducir la metodología y derivar algunos resultados teóricos, damos evidencia empírica relevante mediante experimentos extensos, mostrando los beneficios de modelizar oponentes en AR mientras el agente aprende.

Finalmente, en el capítulo 5 terminamos con varias conclusiones y posibles extensiones para trabajo futuro.

\vspace{1cm}
Los siguientes artículos derivados de esta tesis ya han sido publicados o están aceptados:
\begin{itemize}
    \item \cite{gallego2019dlms}. \citetitle{gallego2019dlms}. En \citejournal{gallego2019dlms}.
    \item \cite{gallego2019reinforcement}. \citetitle{gallego2019reinforcement}. En \citejournal{gallego2019reinforcement}.
     \item \cite{gallego2019vis}. \citetitle{gallego2019vis}. En \citejournal{gallego2019vis}.   
     \item \cite{math8111957}. \citetitle{math8111957}. En \citejournal{math8111957}.
     \item \cite{nn2022}. \citetitle{nn2022}. En \citejournal{nn2022} (por aparecer).
\end{itemize}

Los siguientes artículos derivados de esta tesis se encuentran bajo revisión:
\begin{itemize}
    \item \citeauthor{gallego2019opponent}. \citetitle{gallego2019opponent}. 
        \item \citeauthor{gallego2021data}. \citetitle{gallego2021data}. 
    \item \citeauthor{gallego2018stochastic}. \citetitle{gallego2018stochastic}. 
    \item \citeauthor{AMLARA}. \citetitle{AMLARA}.
     \item \citeauthor{gallego2020protecting}. \citetitle{gallego2020protecting}. 
\end{itemize}

Los siguientes artículos relacionados con el contenido de la tesis también han sido publicados:
\begin{itemize}
    \item \cite{angulo2018bayesian}. \citetitle{angulo2018bayesian}. En \citejournal{angulo2018bayesian}.
    \item \cite{banks2020adversarial}. \citetitle{banks2020adversarial}. En \citejournal{banks2020adversarial}.
\end{itemize}

\addcontentsline{toc}{chapter}{Abstract}
\addcontentsline{toc}{chapter}{Resumen}

\chapter*{Agradecimientos}

Estos cuatro años de tesis se han pasado volando. Aunque se hayan materializado en parte en este documento, no puedo olvidarme de todas las personas que, de un modo u otro, han contribuido a este trabajo. \\

En primer lugar, debo agradecer enormemente la labor y apoyo de mis directores durante este período. A David Ríos, especialmente por su incansable atención y paciencia, sobre todo al leer mis textos y dudas; y a David Gómez-Ullate, por darme la oportunidad de empezar en este mundo de los datos hace ya cinco años. Gracias a los dos por todas las oportunidades. Les considero verdaderos mentores de los que he podido aprender muchísimo en estos años, no solo acerca de los temas de esta tesis, así que espero seguir manteniendo nuestra relación en el futuro. 
Asimismo, debo agradecer la labor del Prof. David Banks, quien me dio la oportunidad de hacer una estancia de investigación en Duke y SAMSI, resultando en una experiencia muy enriquecedora académica y vitalmente. También agradezco al Ministerio por la beca FPU16-05034, la cátedra AXA-ICMAT, el programa “Severo Ochoa”
para Centros de Excelencia en I+D, y la Fundación BBVA, entre otros. \\

También quería mencionar a los compañeros y amigos hechos en el grupo formado en el Datalab del ICMAT y el vecino IFT (y algunos por extensión ya, en Komorebi). Roi, David, Alberto R., Alberto T., Simón, Jorge, Bruno, April, Aitor, Alex, Nadir, Christian... 
Lamento que por la pandemia apenas nos veamos ya por el campus, pero siempre me acordaré de tantos buenos momentos. También quiero mostrar mi agradecimiento a los investigadores Pablo Angulo y Pablo Suárez, por haber tenido el placer de trabajar con ellos al principio de mi carrera. Y a Marta Sanz, por estar siempre dispuesta a resolver cualquier trámite. \\

Por último, mención especial merecen mis padres, Sotero y María, siempre dándome su apoyo y cuidado incondicional a pesar de todo. E Irene, por acompañarme siempre ahí y quien me hizo ver que todo es posible, con toda su ilusión, ingenio y magia. Gracias, os quiero.

\addcontentsline{toc}{chapter}{Agradecimientos}

\indice
\indicetablas
\indicefiguras
\indicealgoritmos


\chapter{Introduction}\label{cha:intro}
\setcounter{page}{1}
\pagenumbering{arabic}

\section{A motivation for Bayesian methods}

Statistical decision theory \parencite{french} is a fundamental block of modern machine learning and statistics research and practice,
providing fundamental tools for analyzing the vast amount of data that have become available in science, government, industry, and everyday life. Over the last century, many problems have been solved (at least partially) with probabilistic models \parencite{bishop2006pattern}, such as classifying email as spam, identifying patterns in genetic sequences, recommending similar movies or performing automatic translation between two languages. For each of these applications, a statistical model was fitted to the data, typically solving the task at hand. Since the previous applications were incredibly diverse,  models came from  different research communities. But there was a common theme shared amongst them: \emph{the need to support a decision using the model}. Thus, statistical decision theory  emerged as a powerful framework to reason between different scientific areas. 

We believe that building and using probabilistic models is not a single-step task, but rather an iterative process, closely resembling the scientific method \parencite{conant1959understanding}. First, propose a simple model based on the latent structure that your prior knowledge make you believe exists in the data. Then, given a data set, use an inference
method to approximate the posterior distribution of the  parameters
given the data. Lastly,
use that posterior to test the model against new, test data. If not satisfied, we criticise and revise it, and we modify the model, iterating all over again. This is called the Box's loop \parencite{doi:10.1080/00401706.1962.10490015,10.2307/1266125}. It focuses on the scientific method, producing new knowledge  through iterative experimental design, data collection,
model formulation, and model criticism.
Amongst all the different paradigms in statistics, the one that is closest to the previous approach to knowledge discovery is Bayesian analysis \parencite{gelman2013bayesian,insua2012bayesian}, and this will be the one adopted in this thesis.

In the past years, we have witnessed the success of neural networks and deep learning, achieving state of the art results in many different tasks \parencite{lecun2015deep}. A brief introduction to deep learning will appear in Section \ref{sec:deep_intro}, but let us advance a few things here. Bayesian analysis is specially compelling for this kind of models, mostly because the object of interest  is now the predictive distribution,
$$
p(y|x, \mathcal{D}) = \int p(y|x,\theta) p (\theta| \mathcal{D}) d\theta,
$$
with $x$ being a sample to be classified or regressed, $y$ the predicted target, $\mathcal{D}$ the training dataset, and $\theta$ the parameters of a neural network. The previous marginalization equation expresses epistemic uncertainty, that is, uncertainty over which configuration of parameters (hypothesis) is correct, given limited data. If the posterior $p(\theta| \mathcal{D})$ is sharply peaked, it makes no difference using a Bayesian approach versus the standard maximum likelihood or maximum a posteriori estimations, since just a single point mass may be a reasonable approximation to the posterior. However, neural networks are typically very underspecified by the training data available (a state of the art NN might have millions of parameters for just thousands of data points), and will thus have diffuse likelihood distributions $p(\mathcal{D}|\theta)$, leading to flatter posteriors which are also multi-modal.

Indeed, there are large valleys in the loss landscape of
neural networks \parencite{garipov2018loss}, over which parameters incur very little loss, but give rise to different high performing
functions which make meaningfully different predictions on the test dataset. The work of \parencite{ZOLNA2020102969}
also demonstrates the variety of well-performing solutions that can be expressed by a neural network
posterior, as it is highly flexible. In these cases, we desire to perform  Bayesian model averaging, since it leads to an ensemble of diverse yet good models, achieving better generalization  and performance stability capabilities than classical training.

In this thesis, we provide several contributions to large scale Bayesian machine learning, motivated by the following case study. On the one hand, it serves us to introduce notation and key concepts, and, on the other hand, the case will motivate our three main problems of interest.

\section{A case study in retailing}\label{sec:dlms}

\subsection{Context}

It is widely acknowledged that a firm's expenditure  on advertising has a positive effect on sales \parencite{assmus1984advertising, tellis2007advertising, luo2012does, wiesel2011practice}. However, the exact relationship between them remains a moot point, see \parencite{tellis2009generalizations} for a broad survey. Since \textcite{dorfman1954optimal} seminal work\, several models have been proposed to pinpoint this relationship, although consensus on the best approach has not been reached yet. Two diverging model-building schools seem to dominate the marketing literature \parencite{little1979aggregate}: \emph{a priori} models rely heavily on intuition and are derived from general principles, although usually with a practical implementation on mind (\textcite{nerlove1962optimal}, or \textcite{vidale1957operations} and \textcite{little1975brandaid}, \emph{inter alia}); and  \emph{statistical} or \emph{econometric} models, which usually start from a specific dataset to be modelled, e.g. \textcite{assmus1984advertising}). Here we will mostly rely on the first type of models, \emph{viz.} that of \textcite{nerlove1962optimal}, which extends  \textcite{dorfman1954optimal} to a dynamic setting \parencite{bagwell2007economic} and adapts seamlessly to the \emph{state-space} or \emph{structural} time series approach.

Bayesian structural time series models \parencite{scott2014predicting}, in turn, have positioned themselves in the past few years as very effective tools  not only for analysing marketing time-series, but also to throw light into more uncertain terrains like  causal impacts, incorporating \emph{a priori} information into the model, accommodating multiple sources of variations or supporting variable selection. Although the origins of this formalism can be traced back to the 1950's in the engineering problems of filtering, smoothing and forecasting, first with \textcite{wiener1949extrapolation} and specially with \textcite{kalman1960new}, these problems can also be understood from the perspective of estimation in which a vector valued time series $\{ X_0, X_1, X_2, \ldots\}$ that we wish to estimate (the \emph{latent} or \emph{hidden} states) is observed through a series of noisy measurements $\{ Y_0, Y_1, Y_2, \ldots\}$. This  means that, in the Bayesian sense, we want to compute the joint posterior distribution of all  states given all the measurements \parencite{sarkka2013bayesian}. The ever-growing computing power and release of several programming libraries  in the last few years like \parencite{petris2010r, scott2016bsts} have in part alleviated the difficulties in the implementation  that this formalism suffers, making these methods broadly known and used. This family of models have been used successfully to, e.g., model financial time series data \parencite{doi:10.1002/asmb.428}, infer causal impact of marketing campaigns \parencite{brodersen2015inferring}, select variables and nowcast consumer sentiment  \parencite{scott2015bayesian}, or for predicting other economic time series models like unemployment \parencite{scott2014predicting}.

We use the formalism of Bayesian structural time-series models to formulate a robust model that links advertising expenditures with weekly sales. Due to the flexibility and modularity of the model, it will be well suited to generalization to various markets and scenarios. Its Bayesian nature also adapts smoothly to the issue of introducing prior information. The formulation of the model allows for non-gaussian innovations of the process, which will take care of the heavy-tailedness of the distribution of sales increases. We also discuss how the forecasts produced by this model can help the manager in allocating the advertising budget. The decision space is reduced to a one dimensional curve of Pareto optimal strategies for  two moments of the forecast distribution (expected return and variance).


\subsection{Theoretical background and model definition}\label{sec:nerlove}

\paragraph{The Nerlove-Arrow model.}

Numerous formulations of aggregate advertising response models exist in the  literature, e.g. \parencite{little1979aggregate}. The model of  \textcite{nerlove1962optimal} extends the Dorfman-Steiner model to cover the situation in which current advertising expenditures affect future product demand; it is parsimonious and is considered a standard in the quantitative marketing community. We use it as our starting point.

In this model, advertising expenditures are considered similar in many ways to investments in durable plant and equipment, in the sense that they affect the present and future character of output and, hence, the present and future net revenue of the investing firm. The idea is to define an ``advertising stock'' called \emph{goodwill}  $A(t)$ which seemingly summarizes the effects of current and past advertising expenditures over demand. Then, the following dynamics is defined for the goodwill
\begin{equation}\label{eq:NA}
\frac{dA}{dt} = qu(t) - \delta A(t),
\end{equation}
where $u(t)$ is the advertising spending rate (e.g., euros or gross rating points per week), $q$ is a parameter that reflects the advertising quality (an effectiveness coefficient) and $\delta$ is a decay or forgetting rate. \emph{Goodwill} then increases linearly with  advertisement expenditure but decreases also linearly due to forgetting. 

Several extensions and modifications have been proposed to this simple model: it can include a limit for potential costumers \parencite{vidale1957operations}, a non-linear response function to advertise expenditures \parencite{little1975brandaid}, wear-in and wear-out effects of advertising \parencite{naik1998planning},  interactions between different advertising channels \parencite{bass2007wearout}, among others. Still, for most  tasks, the  Nerlove-Arrow model remains as a simple and solid starting point. 

\paragraph{Bayesian structural time series models.}

\emph{Structural time series models} or \emph{state-space models} provide a general formulation that  allows a unified treatment of virtually any linear time series model through the Kalman filter and the associated smoother. Several handbooks \parencite{durbin2012time, petris2009dynamic, sarkka2013bayesian, west1998bayesian} discuss this topic in depth. We will  present a few salient features that concern our modelling problem. For further details, the reader may check the aforementioned handbooks. 

The state-space formulation of a time series consists of two different equations: the \emph{state} or \emph{evolution equation} which determines the dynamics of the state of the system as a first-order Markov process ---\,usually parametrized through \emph{state} variables\,--- and an \emph{observation} or \emph{measurement  equation} which links the latent state with the observed state. Both equations are also affected by noise.
\begin{equation}\label{eq:st}
\theta_{t} = G_t \theta_{t-1} + \epsilon_t \qquad \epsilon_t \sim \mathcal{N}(0, W_t).
\end{equation}
The states ($\theta_t$) are not generally observable, but are linked to the \emph{observation variables} $Y_t$ through the \emph{observation equation}:
\begin{equation}\label{eq:obs}
Y_t = F_t \theta_t + \epsilon'_t \qquad \epsilon'_t \sim \mathcal{N}(0, V_t).
\end{equation}
We shall point out that the noise terms $\epsilon_t$ and $\epsilon'_t$ are uncorrelated. We denote by $\mathbf{\theta}_t$ the $m\times1$ \emph{state vector} describing the inner state of the system, by $G_t$ the $m\times m$ matrix that generates the dynamics, and by $ \epsilon_t $ a $g\times 1$ vector of serially uncorrelated disturbances with mean zero and covariance matrix $W_t$. $F_t$ is the $1\times m$ matrix that links the inner state to the observable, and $V_t \in \mathbb{R}^+$ is the variance of $\epsilon'_t $, the random disturbances of the observations.



The specification of the state-space system is completed by assuming that the initial state vector $\theta_0$ has mean $\mu_0$ and a covariance matrix $\Sigma_0$ and it is uncorrelated with the noise. The problem then consists of \emph{estimating the sequence of states $\{\theta_1, \theta_2, \ldots\}$ for a given series of observations $\{y_1, y_2, \ldots\}$} and whichever other structural parameters of the transition and observation matrices. State estimation is readily performed via the \emph{Kalman filter};  different alternatives however arise  when structural parameters are unknown. In the classical setting, these are estimated using maximum likelihood. In the Bayesian approach, the probability distribution about the unknown parameters is updated via Bayes Theorem. If exact computation through conjugate priors is not possible, the probability distributions before each measurement are updated by approximate procedures such as Markov chain Monte Carlo (MCMC) \parencite{scott2014predicting}. 

The Bayesian approach offers several advantages compared to classical methods. For instance, it is natural to incorporate external information through the prior distributions. In particular, this will be materialized in Section \ref{sec:s_s} where expert information is incorporated through the spike and slab prior. Another useful advantage is that, due to the Bayesian nature of the model, it is straightforward to obtain predictive intervals through the predictive distribution (see Section \ref{s:MCMC}).

\paragraph{Model specification.}\label{sec:model}


The continuous-time Nerlove-Arrow model must be first cast in discrete time so as to formulate our model in state-space. From equation (\ref{eq:NA}), we get
$$
A_t =  (1-\delta) A_{t-1} + q u_{t-1} + \epsilon_t 
$$
where $A_t$ is the \emph{goodwill} stock, $u_t$ is the advertising spending rate, $q$ is the effectiveness coefficient, the random disturbance $\epsilon_t$  captures the net effects of the variables that affect the goodwill  but cannot be modelled explicitly, and with $|\delta| < 1$.  This discrete counterpart of Nerlove-Arrow is a distributed-lag structure with geometrically declining weights, i.e., a \emph{Koyck model} \parencite{clarke1976econometric, koyck1954distributed}. Since in our setting the model includes the effect of $k$ different channels in the goodwill, we modify the previous equations to:
$$
A_t =  (1-\delta) A_{t-1} + \sum_{i=1}^k q_i u_{i(t-1)} + \epsilon_t.
$$
Now, following  (\ref{eq:st}) and (\ref{eq:obs}), the discrete-time Nerlove-Arrow model in state-space form will read:
\begin{description}

\item[Evolution equation:]

\begin{equation} \label{eq:NA_st}
\theta_{t} = G_t \theta_{t-1} +  \epsilon_t \qquad \epsilon_t \sim \mathcal{N}(0, W_t),   
\end{equation}
\noindent where
\begin{equation*} 
\theta_{t}  = \begin{bmatrix} A_t  \\ q_1 \\  \vdots \\q_k \end{bmatrix}
, \quad
G_t =  \begin{bmatrix}
   (1- \delta) & u_{1(t-1)} &  \ldots & u_{k(t-1)} \\
   0 & 1 &   \ldots & 0 \\
   \vdots  &   \vdots & \ddots & \vdots \\
   0 & 0 &  \ldots & 1 \\
   \end{bmatrix}.
\end{equation*}

Note that the $q_i$ are constant over time and that the matrix $G_t$ depends on the known inversion levels at time $t-1$ and on an unknown parameter ($\delta$) to be estimated from the data.

 \item[Observation equation:]
 
 \begin{equation} \label{eq:NA_obs}
 Y_t = F_t \theta_t + \epsilon'_t \qquad \epsilon'_t \sim \mathcal{N}(0, V_t)  
 \end{equation}

 where $Y_t$ are the observed sales at time $t$ and $F_t = \begin{bmatrix} 1,  & 0 , & \ldots, & 0 \end{bmatrix}$.
\end{description}

\paragraph{Modularity and additional structure.}

The above model is very flexible in the sense that it can be defined \emph{modularly},  in as much as different hidden states evolve independently of the others (\emph{i.e.} the evolution matrix can be cast in block-diagonal form). This greatly simplifies their implementation and allows for simple building-blocks with characteristic behavior. Typical blocks specify trend and seasonal components ---\,which can be helpful to discover additional patterns in the time series--- or explanatory variables that can be added to further reduce  uncertainty in the model and bridge the gap between time series and regression models. Via the \emph{superposition principle} \parencite[Chapter 3]{petris2009dynamic} we could include additional blocks in our model

$$
Y_t = Y_{NA, t} + Y_{R, t} + Y_{T, t} + Y_{S, t}
$$
where $Y_{NA, t}$ corresponds to the discretized Nerlove-Arrow equation, defined in (\ref{eq:NA_st}) and (\ref{eq:NA_obs}); $Y_{R, t}$ is a regression component; containing the effects of  external explanatory variables $X_t$; $Y_{T, t}$ is a trend component or a simpler local level component; and $Y_{S, t}$ is a seasonal component.

\paragraph{Regression components. Spike and slab variable selection.}\label{sec:s_s}

To take into account the effects of external explanatory variables such as the weather or sport events, a  regression component can be easily incorporated into the model through

$$
Y_{R,t} = X_t \beta + \epsilon_t,
$$
where the state $\beta$ is constant over time to favor parsimony.

A spike and slab prior \parencite{mitchell1988bayesian} is used for the  regression component, since it can incorporate \emph{prior information} and also facilitate variable selection. This is specially useful for models with a large number of regressors, a typical setting in business scenarios.
Let $\gamma$ denote a binary vector that indicates whether the regressors are included in the regression. Specifically, $\gamma_i = 1$ if and only if $\beta_i \neq 0$. The subset of $\beta$ for which $\gamma_i = 1$ will be denoted $\beta_{\gamma}$. Let $\sigma^2_{\epsilon}$ be the residual variance from the regression part. The spike and slab prior \parencite{george1997approaches} can be expressed as
$$
p(\beta, \gamma, \sigma^2_{\epsilon}) = p(\beta_{\gamma} | \gamma, \sigma^2_{\epsilon})p(\sigma^2_{\epsilon} | \gamma)p(\gamma).
$$
A usual choice for the $\gamma$ prior is a product of Bernoulli distributions:

$$
\gamma \sim \Pi_i \pi_i^{\gamma_i}(1-\pi_i)^{1-\gamma_i}.
$$
The manager of the firm may elicit these $\pi_i$ in various ways. A reasonable choice when detailed prior information is unavailable is to set all $\pi_i = \pi$. Then, we may specify an expected number of non-zero coefficients by setting $\pi = k/p$, where $p$ is the total number of regressors. Another possibility is to set $\pi_i = 1$ if the manager believes that the $i-$th regressor is crucial for the model. 

\paragraph{Model estimation and forecasting}\label{s:MCMC}

Model parameters can be estimated using Markov Chain Monte Carlo simulation, as described in Chapter 4 of \parencite{petris2009dynamic} or \parencite{scott2014predicting}. We follow the same scheme.

Let $\Theta$ be the set of model parameters other than $\beta$ and $\sigma^2_{\epsilon}$. The posterior distribution can be simulated with the following Gibbs sampler 

\begin{enumerate}
\item Simulate $\theta \sim p(\theta | y, \Theta, \beta, \sigma^2_{\epsilon})$.
\item Simulate $\Theta \sim p(\Theta | y, \theta, \beta, \sigma^2_{\epsilon})$.
\item Simulate $\beta, \sigma^2_{\epsilon} \sim p(\beta, \sigma^2_{\epsilon} | y, \theta, \Theta)$.
\end{enumerate}
Defining $\rho = (\Theta, \beta, \sigma^2_{\epsilon}, \theta)$ and repeatedly iterating the above steps gives a sequence of draws $\rho^{(1)}, \rho^{(2)}, \ldots, \rho^{(K)}$ $\sim$ $p(\rho)$. In our experiments, we set $K = 4000$ and discard the first 2000 draws to avoid burn-in issues. \\

In order to sample from the predictive distribution, we follow the usual Bayesian approach summarized by the following predictive equation, in which $y_{1:t}$ denotes the sequence of observed values, and $\bar{y}$ denotes the set of values to the forecast

\[
p(\bar{y} | y_{1:t}) = \int p(\bar{y}| \rho)p(\rho | y_{1:t}) d\rho,
\]
assuming conditional independence of $\bar{y}$ and $y_{1:t}$ given $\rho$. Thus, it is sufficient to sample from $p(\bar{y}| \rho^{(i)})$, which can be achieved by iterating through equations (\ref{eq:obs}) and (\ref{eq:st}). With these predictive samples $\bar{y}^{(i)}$ we can compute statistics of interest regarding the predictive distribution   $p(\bar{y} | y_{1:t})$ such as the mean or variance (MC estimates of $E[ \bar{y} | y_{1:t}]$ and $Var[\bar{y} | y_{1:t}]$, respectively) or quantiles of interest.\\

\paragraph{Robustness.}

We can replace the assumption of Gaussian errors with student-$t$ errors in the observation equation, thus leading to the model
$$ 
Y_{t} = F_t \theta_t + \epsilon'_t \qquad \epsilon'_t \sim \mathcal{T}_\nu(0, \tau^2).
$$
Typically, in these settings we set $\nu > 2$ to impose a finite variance, and this variance parameter can be estimated from data using empirical Bayes methods, for instance.
In this manner, we allow the model to predict occasional larger deviations, which is reasonable in the context of forecasting sales. For instance, a special event not taken into account through the predictor variables may lead to an increase in the sales for that week.


\subsection{Case Study. Data and parameter estimation}

\paragraph{Data analysis.}

\begin{figure}[h]
\centering
\includegraphics[scale=0.6]{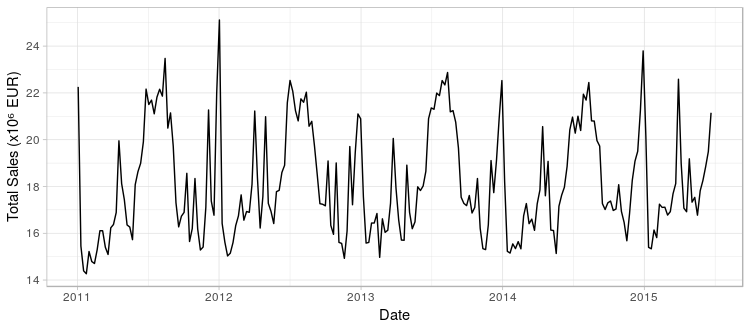}
\caption{Total weekly sales. Jan-2011 to Jun-2015}\label{fig:sales}
\end{figure}

The time series analyzed in this case study contains the total weekly sales of a country-wide franchise of fast food restaurants, Figure \ref{fig:sales}, covering the period January 2011 - June 2015, thus comprising 234 observations. The total weekly sales is in fact the aggregated sum from the individual sales of the whole country network of 426 franchises. Along with the sales figures, the series includes the investment levels $\{u_{it}\}$  in advertising during this period  for seven different channels \emph{viz.} \texttt{OOH} (Out-of-home, \emph{i.e.} billboards), \texttt{Radio}, \texttt{TV}, \texttt{Online}, \texttt{Search}, \texttt{Press} and \texttt{Cinema}, Figure \ref{fig:canales}, $i=1,\ldots,7$.

\begin{figure}[h]
\centering
\includegraphics[scale=0.5]{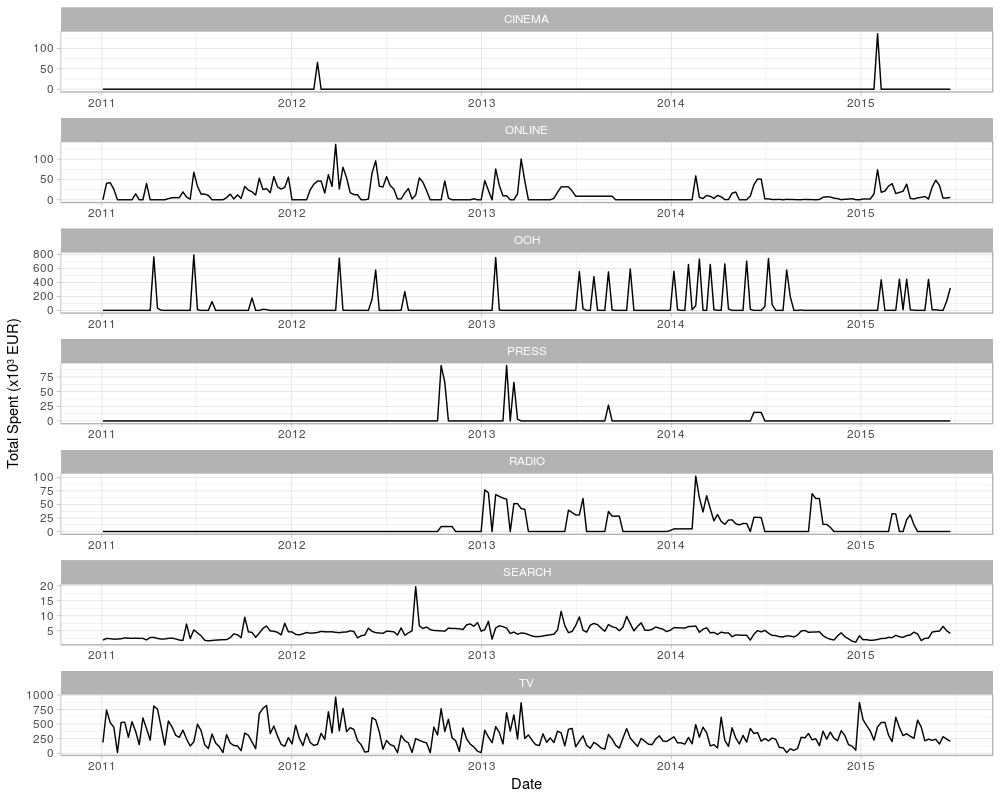}
\caption{Advertising expenditure. Jan-2011-Jul-2015. \\ Note  different scales on  y-axis.}\label{fig:canales}
\end{figure}

From such graphs we observe that:
\begin{itemize}
\item In Figure \ref{fig:sales}, the series has a characteristic seasonal pattern that shows peaks in sales coinciding with Christmas, Easter and summer vacations. 
\item In Figure \ref{fig:canales} we observe that the investment levels at each channel vary largely in scale, with investments in \texttt{TV}, and \texttt{OOH} dominating the other channels.
\item The investment strategies adopted by the firm at each channel are also qualitatively different. Some of them show spikes while others depict a relatively even spread investment across time.
\end{itemize}
A handful of other predictors $X_t$ which are also known to affect sales will be used in the model, all of them weekly sampled

\begin{itemize}
\item Global economic indicators: unemployment rate (\texttt{Unemp\_IX}), price index (\texttt{Price\_IX}) and consumer confidence index (\texttt{CC\_IX}).
\item Climate data: average weekly temperature (\texttt{AVG\_Temp}) and weekly rainfall (\texttt{AVG\_Rain}) along the country.
\item Special events: holidays (\texttt{Hols}) and important sporting events (\texttt{Sport\_EV}).
\end{itemize}

\paragraph{Experimental setup.}

Following the notation in Section \ref{sec:model}, we consider three model variants for the particular dataset in increasing order of complexity


\begin{itemize}
\item \textbf{Baseline model}, which makes use of no external variables $$Y^{\text{B}}_t = Y_{NA,t} + Y_{T, t}.$$
\item \textbf{Auto-regression}: this model (we will refer to it as RA) incorporates the external ambient and investment variables, so the equation of the model becomes 
\begin{equation}\label{eq:autoreg}
Y_t^{\text{RA}} = Y_{NA,t} + Y_{T, t} + Y_{R,t}.
\end{equation}
We select an expected model size\footnote{Defined as the average number of selected regression features.} of 5 in the spike and slab prior, letting all variables to be treated equally.
\item \textbf{Regression (forcing)}: the model has same equation as Eq. (\ref{eq:autoreg}) (we will refer to it as \text{RF}).
However, in the prior we force investment variables to be used by setting their corresponding $\pi_i = 1$, and imposing an expected model size of 5 for the rest of the variables.
\end{itemize}
In all cases, only the five main advertising channels (\texttt{TV}, \texttt{OOH}, \texttt{ONLINE}, \texttt{SEARCH} and \texttt{RADIO}) will be used; the remaining two (\texttt{CINEMA} and \texttt{PRESS}) are sensibly lower both in magnitude and frequency than the others so we can safely disregard them in a first approximation.

As customary in a supervised learning setting with time series data, we perform the following split of our dataset: since it comprises four years of sales, we take the first two years of observations as training set, and the rest as holdout, in which we assess several predictive performance criteria. Before fitting the data, we scale the series to have zero mean and unit variance as this increases MCMC stability. Reported sales forecasts are transformed back to the original scale for easy interpretation. The models were implemented in R  using the \texttt{bsts} package \parencite{scott2016bsts}.


\subsection{Discussion of results}

It is customary to aim at models achieving good predictive performance. 
For this reason, we test  our three models using two metrics

\begin{itemize}
\item Mean Absolute Percentage Error $$ \text{MAPE} = \frac{100\%}{T}\sum_{t=1}^T \frac{|y_t - \hat{y}_t|}{y_t},$$
where $y_t$ denotes the actual value; $\hat{y}_t$, the mean one-step-ahead prediction; and, $T$ is the length of the hold-out period.
\item Cumulative Predicted Sales over a year $Y$
$$
\text{CPS}_{Y} = \sum_{t \in \mathcal{T}(Y)} \hat{y}_t
$$
where $\mathcal{T}(Y)$ denotes the set of time-steps $t$ contained in year $Y$.
\end{itemize}
These scores are reported in Table \ref{tab:mapes} with sales in million EUR. Note that the models which include external information (RA and RF) achieve better accuracy than the baseline. In addition, 
predictions are unbiased, since cumulative predictions are extremely close to their observed counterparts. Overall, we found the predictive performance of our models to be successful for a business scenario, as we achieve under 5\% relative absolute error using the variants augmented with external information. This is clearly useful for a decision maker who wants to forecast their weekly sales one week ahead to within a 5\% error in the estimation.

\begin{table}[h]
\centering
\begin{tabular}{|l|c|c|c|}
\hline
Model & MAPE & $\text{CPS}_{2013}$ & $\text{CPS}_{2014}$ \\
\hline
B &  5.85\% &   9660 & 9627 \\
RA &  4.62\% &  9680 & 9613 \\
RF &  4.59\% &  9665 & 9582 \\
\hline
\multicolumn{2}{|c|}{Cumulative True Sales:} & 9666 & 9610 \\
\hline
\end{tabular}
\caption{Accuracy measures for each model variant.} \label{tab:mapes}
\end{table}



Figure \ref{fig:forecasts} displays the predictive ability of model RF over the hold-out period. The model seems sufficiently flexible to adapt to fluctuations such as Christmas peaks. Predictive intervals also adjust their width with respect to the time to reflect varying uncertainty, yet in the worst cases they are sufficiently narrow. Further information can be tracked in Figure \ref{fig:residuals}, where mean standardized residuals are plotted for each model variant. Notice that the residuals for models RA and RF are roughly comparable, being both sensibly smaller than those of the baseline. This means that the simpler Nerlove-Arrow model benefits from the addition of the ambient variables $X_i$, as suggested in  Table \ref{tab:mapes}.
\begin{figure}[h]
\centering
\includegraphics[scale=0.55]{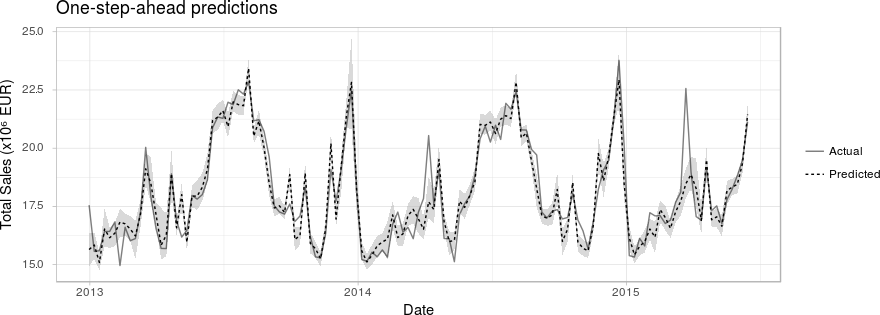}
\caption{One-step-ahead forecasts of  RF model versus actual data during hold-out period. 95\% predictive intervals depicted in light gray.}\label{fig:forecasts}
\end{figure}

\begin{figure}[h]
\centering
\includegraphics[scale=0.6]{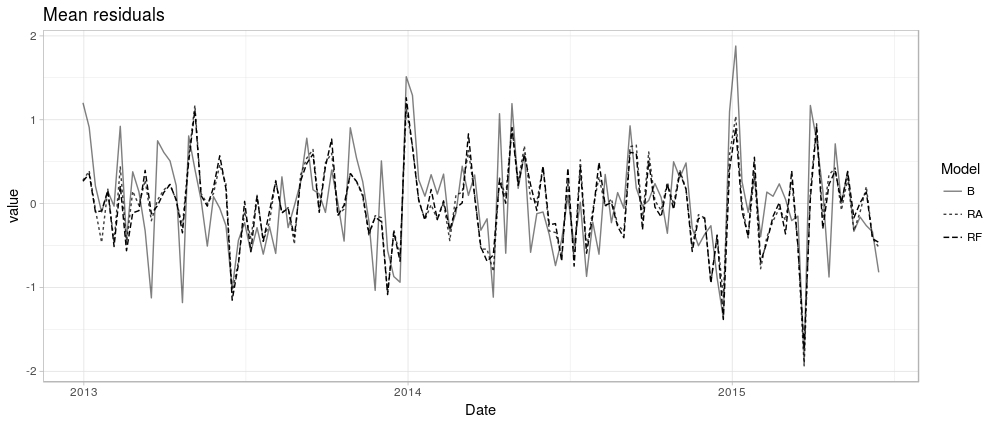}
\caption{Mean residuals for each model}\label{fig:residuals}
\end{figure}

Having built good predictive models, we inspect them more closely with the aim of performing valuable inferences in our business setting. The average estimated parameter and expected standard deviations of the $q_i$ coefficients for the different advertising channels are displayed in Table \ref{tab:1}. We also show the weights of the ambient variables $X_i$ for the augmented models RA and RF in Table \ref{tab:11}, as well as the probability of a variable being selected in the MCMC simulation for a given model in Figure \ref{fig:inc_probas}. Convergence diagnostics of the MCMC scheme are reported in Appendix \ref{app:GR}.


\begin{table}[h]
\centering
\begin{tabular}{ |l|c|c|c|c|c|c|c| }
  \hline
  & \multicolumn{2}{|c|}{Model B} & \multicolumn{2}{|c|}{Model RA} & \multicolumn{2}{|c|}{Model RF}\\
  \hline
  Channel & mean & sd & mean & sd & mean & sd\\
  \hline
  \texttt{y\_AR} & \textbf{8.00e-01} & \textbf{5.10e-02} &  \textbf{5.17e-01} &  \textbf{4.50e-02} &  \textbf{5.07e-01} &  \textbf{4.55e-02}   \\
  \texttt{OOH} & 9.30e-03 & 3.11e-02 & 1.15e-03 & 9.54e-03  & \textbf{6.10e-02} &  \textbf{3.54e-02}\\
  \texttt{ONLINE} & 1.10e-03 & 9.25e-03  & 1.95e-03 &  1.23e-02 & \textbf{9.04e-02} & \textbf{4.01e-02}\\
  \texttt{RADIO} & -5.34e-04 & 6.80e-03  &-2.61e-05 & 2.30e-03 &   -2.57e-02 & 3.42e-02\\
  \texttt{TV} & -2.85e-04 & 5.12e-03 & -5.53e-05 & 3.71e-03  & -6.14e-02 & 4.21e-02\\
  \texttt{SEARCH} & 1.51e-05  & 2.74e-03  & 8.58e-05 & 2.49e-03  & 1.38e-02 & 3.50e-02 \\
    \hline
\end{tabular} \caption{Expected value and standard error of  $q_i$. Statistically significant coefficients  in bold.}\label{tab:1}
\end{table}

\begin{table}[h]
\centering
\begin{tabular}{ |l|c|c|c|c|c| }
  \hline
  &  \multicolumn{2}{|c|}{Model RA} & \multicolumn{2}{|c|}{Model RF}\\
  \hline
  Channel & mean & sd & mean & sd\\
  \hline
  \texttt{Sport\_EV} & \textbf{-2.06e-01} & \textbf{3.80e-02} & \textbf{-2.00e-01} &\textbf{ 3.71e-02} \\
  \texttt{AVG\_Temp} &  \textbf{2.57e-01} & \textbf{4.43e-02} & \textbf{2.43e-01} & \textbf{4.65e-02}   \\
  \texttt{Hols} & \textbf{3.10e-01} &\textbf{ 3.70e-02}   & \textbf{3.15e-01} & \textbf{3.72e-02}  \\
  \texttt{AVG\_Rain} &  -2.86e-02 & 5.01e-02 & -1.96e-02 & 4.14e-02     \\
  \texttt{Price\_IX} &  1.38e-03 &  1.15e-02 & 3.60e-03 & 1.92e-02  \\
  \texttt{Unemp\_IX} &  6.34e-05 & 3.32e-03 & 2.36e-04 & 7.79e-03  \\
  \texttt{CC\_IX} & 6.27e-07 & 2.94e-03 & 2.85e-04 & 5.50e-03 \\
    \hline
\end{tabular} \caption{Expected value and standard error of  $\beta_i$. Statistically significant coefficients in  bold.}\label{tab:11}
\end{table}


\begin{figure}[h]
\centering
\includegraphics[scale=0.6]{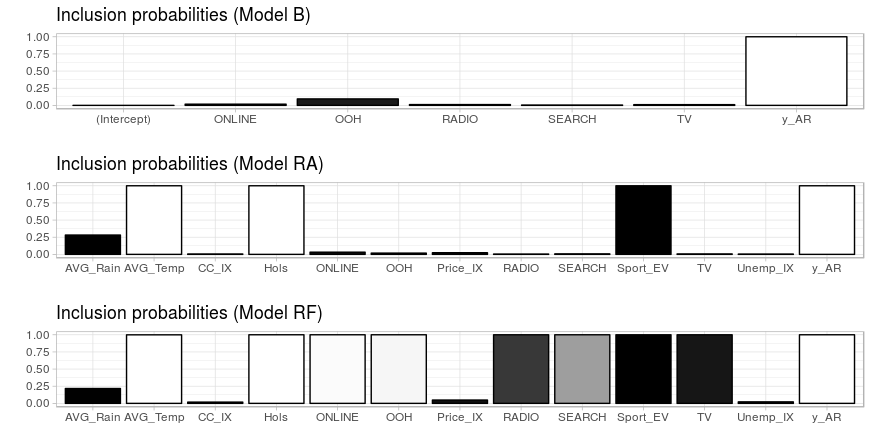}
\caption{Selection probabilities in  MCMC simulation for each predictor variable in the  models. Color code shows variables positively (white) or negatively (black) correlated with sales. }\label{fig:inc_probas}
\end{figure}


Looking at the ambient variables, the following comments seem in order:
\begin{itemize}

\item From Table \ref{tab:11} we see that the socio-economic indicators (unemployment rate, inflation and consumer confidence) do not seem statistically relevant for this problem. 

\item Looking at the sign of the coefficients of the most significant regressors $X_i$ (\texttt{Hols}, \texttt{Sport\_EV}, \texttt{AVG\_Temp} and \texttt{AVG\_Rain}) we see that they are as we would naturally expect. Moreover, their absolute value is well above the error in both models RA and RF, a strong indicator of their influence in the expected weekly sales (\emph{cf.} Table \ref{tab:11}, Figure \ref{fig:inc_probas}). 

\item We see, for instance, that sporting events are negatively correlated with sales. This can be interpreted as follows: major sporting events in the country where the data have been recorded receive a large media coverage and are followed by a significant fraction of the population. The chain in this study has no TVs broadcasting in their restaurants, so customers probably choose alternative places to spend their time on a day when \texttt{Sport\_EV} = 1. 

\item The sign of \texttt{Hols} and $\texttt{AVG\_Temp}$ is positive, showing strong evidence for the fact that sales increase in periods of the year where potential customers have more leisure time, like national holidays or  summer vacation. 

\item One would expect \texttt{AVG\_Rain} to be negatively correlated with restaurant sales, but in our study (despite having negative sign) it is not statistically significant. A possible explanation is that \texttt{AVG\_Rain} records average rainfall over a large country. 

\end{itemize}

Next, we turn our attention to the investment variables across different advertising channels.

\begin{itemize}

\item The advertising channels $u_i$ are almost never selected in model RA, and their $q_i$ coefficients are not significantly higher than their errors to be considered influential in the model. In model RF, however, there is strong evidence that their effect is more than a random fluctuation (\emph{cf.} Table \ref{tab:1}, Figure \ref{fig:inc_probas}). 
\item The negative sign in both \texttt{RADIO} and \texttt{TV} in all three models suggests that (at least locally) part of the expenditures in these two channels should be diverted towards other channels with positive sign on their $q_i$ coefficients, specially to the channel with the strongest positive coefficient (\texttt{ONLINE} and \texttt{OOH}).
\item It is interesting that the trend that shows the year-to-year advertising budget of this firm (\emph{cf.} figure \ref{fig:yearly}) has a significant reduction in \texttt{TV} expenditures and a big increase in \texttt{OOH}. \texttt{RADIO} however is not reduced accordingly but increased, and \texttt{ONLINE} ---\,which our model considers the best local inversion alternative\,--- follows the inverse path\footnote{It has to be noted, however, that \texttt{ONLINE} expenditures are typically correlated with discounts, coupons and offers, and this information was not available to us.}.
\item The autoregressive term is close to $0.5$, which means that the immediate effect of advertising is roughly half to the \textit{long run accumulated effects}.
\end{itemize}

\begin{figure}[h]
\centering
\includegraphics[scale=0.75]{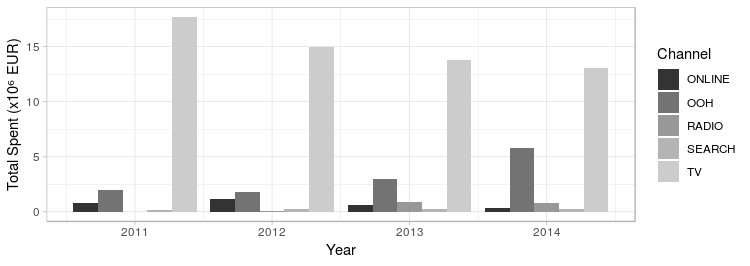}
\caption{Yearly total expenditures on advertising campaigns for the restaurant chain, per advertising channel.}\label{fig:yearly}
\end{figure}



\paragraph{Budget allocation. Model-based solutions.}\label{section: budget allocation}

Based on the above, we propose a model which can be used as a decision support system by the manager, helping her in adopting the advertising investment strategy. The company is interested in maximizing the expected sales for the next period, subject to a budget constraint for the advertising channels and also a risk constraint, i.e., the variance of the predicted sales must be under a certain threshold.
This optimization problem based on one-step-ahead forecasts can be formulated as a non linear, but convex, problem that depends on  parameter $\sigma^2$:

\begin{equation*}
\begin{aligned}
& \underset{u_{(t+1),1}...u_{(t+1),k}}{\text{maximize}}
& &  E[ \bar{y}_{t+1} | y_{1:t}, x_{t+1}, u_{t+1}] \\
& \text{subject to}
& & \sum_{i=1}^k  u_{(t+1),i} \leq b_{t+1} \\
& & & Var[ \bar{y}_{t+1} | y_{1:t}, x_{t+1}, u_{t+1}] \leq \sigma^2, \\
\end{aligned}
\end{equation*}
where $b_t$ is the total advertising budget for week $t$ and $\sigma$ is a parameter that controls the risk of the sales. We have made explicit the dependence on the regressor variables $x_t$ and advertisement investments $u_t$ in the mean and variance expressions.
Solving for different values of $\sigma$, we obtain a continuum of Pareto optimal investment strategies that we can present to the manager, each one representing a different trade-off between risk and expected sales that we can plot in a risk-return diagram.
This approach greatly reduces the decision space for the manager.

A possible alternative would be to rewrite the objective function as 
$$E[ \bar{y}_{t+1} | y_{1:t}, x_{t+1}, u_{t+1}] - \lambda \sqrt{Var[ \bar{y}_{t+1} | y_{1:t}, x_{t+1}, u_{t+1}]}$$ 
which may be regarded as a lower quantile if the predictive samples $\bar{y}^{(i)}_{t+1}$ are normally distributed.
As in the previous approach, different values of $\lambda$ represent different risk-return trade-offs.

If the errors in the observation equation (\ref{eq:obs}) are normal, the computations for expected predicted sales and the above variance can be done exactly, and quickly, using conjugacy as in \parencite[3.7.1]{zbMATH06123712}.
Otherwise, the desired quantities must be computed through Monte Carlo simulation, as in Section \ref{s:MCMC}.

Note that due to the nature of the state-space model, it is straightforward to extend the previous optimization problem over $k$ timesteps, for $k>1$.
The two objectives would be the expected total sales and the total risk over the period $t+1,\dots,t+k$.
For normally distributed errors, the predicted sales on the period also follow a Normal distribution, which makes computations specially simple.

The above optimization problem may not need to be solved by exhaustively searching over the space of possible channel investments. In a typical business setting, the manager would consider a discrete set of $s$ investment strategies that are easy to interpret, so she may perform $s$ simulations of the predictive distribution and use the above strategy to discard the Pareto suboptimal strategies.

\subsection{Discussion}

We have developed a data-driven approach for the management of advertising investments of a firm. First, using the firm's investment levels in advertising, we propose a formulation of the Nerlove-Arrow model via a Bayesian structural time series to predict an economic variable (global sales)  which also incorporates information from the external environment (climate, economical situation and special events). The model thus defined offers low predictive errors while maintaining interpretability and can be built in a modular fashion, which offers great flexibility to adapt it to other business scenarios. The model performs variable selection and allows to incorporate \emph{prior} information via the \emph{spike-and-slab} prior. It can handle non-gaussian deviations and also provides hints to which of the advertising channels are having positive effects upon sales. The model can be used as a basis for a decision support system by the manager of the firm, helping with the task of allocating ad investments.

This model could be expanded in several ways. For instance, in regimes where the data is of high frequency or there are large amounts of it, the Gibbs sampler from Section \ref{sec:nerlove} could be replaced with a more scalable sampler, such as the ones developed in Chapter 2 of this thesis. We could also take into account the presence of competing retailing firms, which could impact the expected sales of the supported franchise. The next two chapters are devoted to taking the account the presence of adversaries in ML models, both in classification (Chapter 3) and reinforcement learning settings (Chapter 4). For this we require first briefly reviewing some earlier work.

\section{State of the art models and big data}\label{sec:deep_intro}

\subsection{Introduction}

The history of neural network (NN) models have gone 
through several waves of popularity. The first 
one starts with the introduction of the perceptron
by \textcite{rosenblatt1958perceptron} and its training algorithm 
for classification in linearly separable problems.
Limitations brought up by 
\textcite{minsky} somehow reduced the enthusiasm
about these models by the early 70's.
The next period of success coincides with the emergence
of results presenting NNs as universal
approximators, e.g.\ \parencite{cybenko1989approximation}. Yet 
technical issues and the emergence of other paradigms like 
support vector machines led, essentially,
to a new stalmate by the early 2000's. Finally, several of the 
technical issues were solved, coinciding with the 
availability of faster computational tools,
improved algorithms and the emergence of
large annotated datasets. These produced outstanding 
applied developments leading, over the last decade, to the current boom 
surrounding deep NNs \parencite{10.5555/3086952}. 

This section overviews  
recent advances in NNs. 
There are numerous reviews with various emphasis 
including physical \parencite{cirac}, computational \parencite{chollet}, mathematical \parencite{maths} and pedagogical \parencite{teach} pùrposes, 
notwithstanding  those concerning different  application areas, 
from autonomous driving systems (ADS) \parencite{rumanos} to
drug design \parencite{hessler}, to name but a few. 
Our emphasis is on statistical
aspects and, more specifically, on Bayesian approaches
to NN models for reasons that will become 
apparent during this thesis but include mainly:
the provision of improved uncertainty estimates, which is
of special relevance in decision support under uncertainty; their 
increased generalization capabilities; their 
enhanced robustness against adversarial attacks; 
their improved capabilities for model calibration;
and the possibility of using sparsity-inducing priors
to promote simpler NN architectures.

We first recall basic results from (the now-called) 
shallow NNs.
Section \ref{sec:dnn_examples} then covers deep NNs, including their most
relevant 
variants, as well as classical and Bayesian approaches
for their analysis. Next, Section \ref{sec:dnn_examples}
presents two examples illustrating
diverse NN architectures.

\subsection{Shallow neural networks}
This section briefly introduces key concepts
about shallow networks to support later discussions on current approaches.
Our focus will be mainly on nonlinear regression 
problems. Specifically, we aim at approximating 
an $r$-variate response (output) $y$ with respect to $p$ explanatory 
(input) variables $x=(x_1,\ldots,x_p)$ through the 
model
\begin{eqnarray}\label{kantora}
  y         & = & \sum_{j=1}^m \beta_j \psi(x' \gamma_j) +
                    \epsilon 
                    \nonumber\\
              & & \epsilon \sim N(0,\sigma^2),
                  \nonumber \\
              & & \psi(\eta) = \exp(\eta)/(1+\exp(\eta)).
                  \end{eqnarray}
This defines a neural network with one hidden 
layer with $m$ hidden neurons and logistic 
activation functions.
Figure \ref{figuradkk1} sketches 
a graphical model of a shallow NN with 10 inputs, 4 hidden nodes and 
2 outputs. 
\begin{figure}
    \centering
    \includegraphics[scale=0.5]{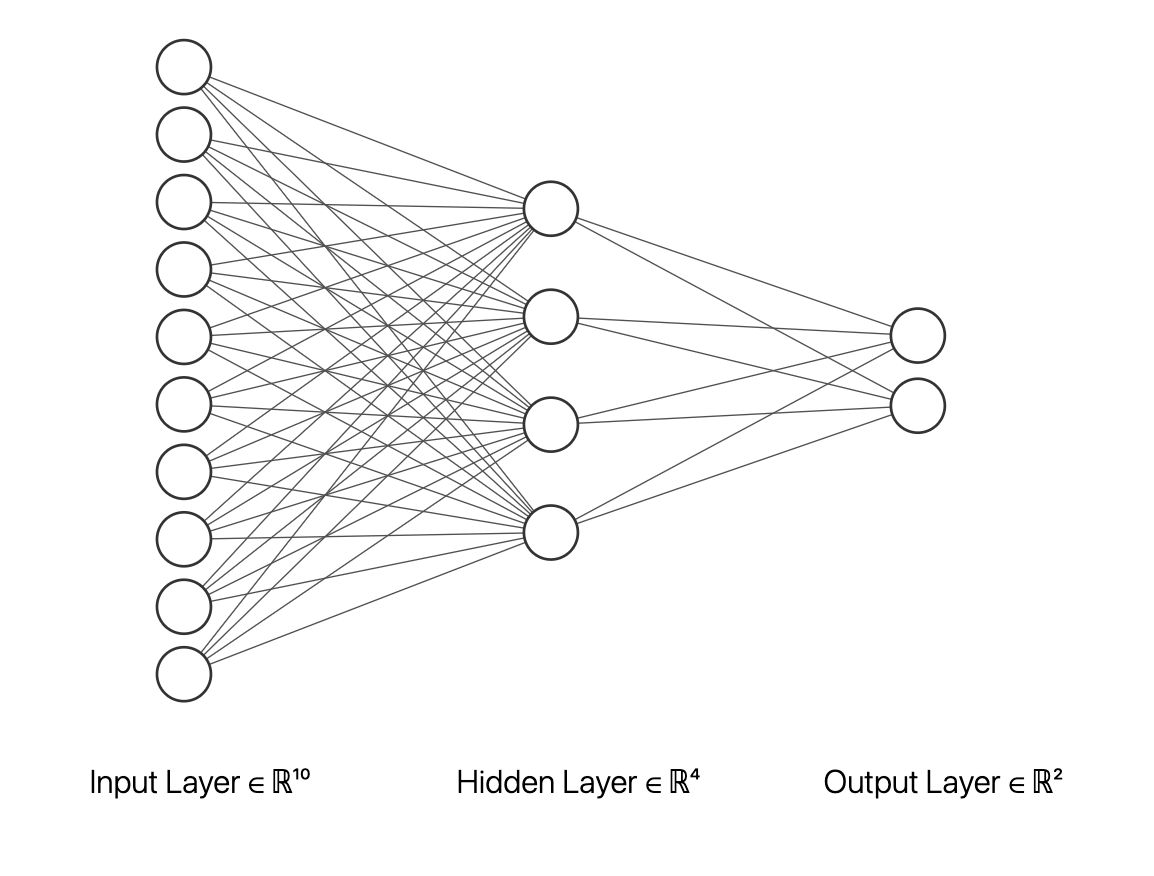}
    \caption{A shallow NN architecture with 4 hidden nodes and 2 scalar outputs}
    \label{figuradkk1}
\end{figure}

Let us designate with $\beta=(\beta_1,\ldots,\beta_m)$ and $\gamma=(\gamma_1,\ldots,\gamma_m)$ the network parameters. $\sigma$  
will be considered a hyperparameter. Clearly, the model
is linear in $\beta$ but non-linear in  
$\gamma$. Interest in this type of models stems from 
results such as those of \textcite{cybenko1989approximation}
who presents them as universal approximators:
 any continuous function in the 
$r$-dimensional unit cube 
may be approximated by models of type
$\sum_{j=1}^m \beta_j \psi(x' \gamma_j)$
when the $\psi$
functions are sigmoidal (as with the logistic) and
$m\rightarrow \infty$. Our discussion focuses on $r=1$.

\paragraph{Classical approaches.}\label{sanchez}

Given $n$ observations $D=\{ (x_i, y_i), i=1,...,n \}$,
 maximum likelihood estimation (MLE) 
 computes the log-likelihood and maximises it 
leading to the classical non-linear least squares problem
\begin{equation}\label{pdt}
 \min_{\beta , \gamma } f (\beta , \gamma) = \sum _{i=1}^n f_i(\beta, \gamma)  =\sum _{i=1}^n \left( y_i -
  \sum_{j=1}^m \beta_j \psi(x_i'\gamma_j) \right)^2 
 \end{equation}

\noindent Quite early, researchers paid attention to the introduction of regularisers, such as weight decay $\ell_2$ penalization \parencite{krogh1992simple}, so as to improve model 
generalization, leading to the modified optimisation problem
\begin{equation}\label{kkdbak}
 \min  g(\beta ,\gamma) = f (\beta ,\gamma ) +
 h (\beta ,\gamma ), \end{equation}
where $h(\beta , \gamma )$ represents the regularisation 
term. For example, in the above mentioned case, the 
additional term is  
$h(\beta , \gamma )= \lambda _1 \sum \beta_i ^2 +
\lambda _2 \sum \sum \gamma _{ji} ^2$. 

Typically problems (\ref{pdt}) and (\ref{kkdbak}) are solved via steepest gradient descent \parencite{meza} through iterations of the type
\[
   (\beta, \gamma )_{k+1}=
   (\beta, \gamma )_{k}- \eta \nabla g (  (\beta, \gamma )_{k} ),
   \]
where $\eta $ is frequently chosen as a fixed small learning rate parameter and $\nabla g $ is the gradient of function $g$, with respect to 
$(\beta ,\gamma )$.  Very importantly,  the structure of the network and the 
    chain rule of calculus 
    facilitates efficient estimation of the gradients 
    via backpropagation, e.g. \parencite{rumelhart1986learning}.
    
    A problem entailed by NN model estimation is the highly multimodal nature of
    the log-likelihood for three reasons:
    invariance with respect to arbitrary relabeling of
    parameters (these may
    be handled by means of order 
    constraints among the 
    parameters);
    inherent non-linearity; and, finally, 
    node duplicity (which may be dealt 
    with a model reflecting uncertainty 
    about the number of nodes,
    as explained below).
A way to mitigate multimodality is to use a global optimization method, like multistart, but
this is very demanding computationally in this domain.

Finally, note that the same kind of NN models 
may be used for nonlinear auto-regressions in
time series analysis \parencite{menchero} and 
non-parametric 
regression \parencite{insuamuller}. Moreover,
similar models may be used for classification purposes,
although this 
 requires modifying the likelihood
\parencite{bishop2006pattern} to e.g.\
\begin{equation}
    p(y | x, \beta, \gamma) = Mult(n=1, 
    p_1 (x, \beta, \gamma) , \ldots, p_K (x, \beta, \gamma) ),
\end{equation}
that is, a draw from a multinomial distribution with $K$ classes. 
Class probabilities
 can be computed using the softmax function,
$$
p_k = \frac{\exp{\beta_k \psi(x'\gamma_k)}}{\exp{\sum_{k=1}^K \beta_k \psi(x'\gamma_k)}}.
$$

\paragraph{Bayesian approaches.}\label{bayeshallow}
We discuss now Bayesian approaches to shallow NNs.
assuming standard priors 
in Bayesian hierarchical modeling, see e.g.\ \textcite{LavineWest}: 
$  \beta_i      \sim  N(\mu_\beta,\sigma_\beta^2)$
and 
  $\gamma_i     \sim  N(\mu_\gamma,S_\gamma^2)$,
  completed with priors over the hyperparameters
$\mu_\beta \sim N(a_\beta,A_\beta)$,
$\mu_\gamma \sim N(a_\gamma,A_\gamma)$,
$\sigma^{-2}_\beta \sim Gamma(c_b/2,c_b/2)$,
$S_\gamma^{-1} \sim Wish(c_\gamma,(c_\gamma C_\gamma)^{-1})$ and
$\sigma^{-2} \sim Gamma(s/2, s/2)$.
In this model, 
an informative prior probability model
is meaningful as parameters are interpretable. For example, the $\beta_ j$’s would reflect the
order of magnitude of the data $y_i$; typically positive and negative values for
$\beta _j$ would be equally likely, calling for a symmetric prior around 
0 with
a standard deviation reflecting the range of plausible values for $y_i$. Similarly,
a range of reasonable values for the logistic coefficients $\gamma_ j$ will be determined
mainly to address smoothness
issues.

Initial attempts to perform Bayesian analysis
of NNs, adopted
arguments based on the asymptotic normality of the posterior, 
as in  
\parencite{mckay} and \parencite{buntineweigend}. However these methods
fail if they are dominated by  
less important modes. 
 \parencite{buntineweigend} mitigate this by finding several modes and
basing inference on weighted mixtures of the corresponding normal approximations, but we return to the same issue as some
important local modes might have been left out. An alternative view was argued
by \parencite{mckay}: inference from such schemes is 
 considered as approximate
posterior inference in a submodel defined by constraining the
parameters to a neighborhood of the particular local mode. Depending on
the emphasis of the analysis, this might be reasonable, especially if in a final
implementation our aim is to set the parameters at specific values,
the usual scenario in deep learning. We prefer
though to propagate the uncertainty in parameters, since this allows
better predictions, e.g. \parencite{raftery}. 

For this, an efficient 
Markov chain Monte Carlo (MCMC) scheme
may be used \parencite{muller1998issues}. It 
 samples from the posterior conditionals when available (steps 3, 9), and use
Metropolis steps (4-8), otherwise. To fight potential inefficiencies due to
multimodality, two features are built in 
 for
fast and effective mixing over local posterior modes:
whenever
possible, the $\gamma$ weights are 
partially marginalized; second,
these weights are resampled jointly.
The key observation is that,
given
 $\gamma $,
we actually have a standard hierarchical normal linear
model \parencite{french}. This facilitates sampling from the posterior marginals of the $\beta $ weights (step 3)
 and hyperparameters (step 9) and 
 allows marginalizing the model with respect
 to the $\beta$'s
  to obtain the marginal likelihood $p(D|\gamma, \nu)$
  (step 3), where $\nu=(\mu_\beta,\sigma_\beta,\mu_\gamma,S_\gamma,\sigma^2)$ designates the hyperparameters.
The procedure runs like the described in Algorithm \ref{alg:mcmc}.

\begin{algorithm}[!ht]
\begin{algorithmic}[1]
\State Start with arbitrary $(\beta , \gamma ,\nu )$.
\While{not convergence} 
 \State Given current $(\gamma,\nu)$, draw  
    $\beta$ from  
    $p(\beta|\gamma,\nu,y)$ (a multivariate normal).
    \For{$j=1,...,m$, marginalizing in $\beta$ and given $\nu$ } 
    \State Generate a candidate $\tilde\gamma_j \sim g_j(\gamma_j)$.
    \State Compute 
    $
       a(\gamma_j,\tilde\gamma_j) =
       \min\left(1,\frac{p(D |\tilde\gamma,\nu)}
                       {p(D |\gamma,\nu)}\right)
    $
    with $\tilde\gamma = (\gamma_1,\gamma_2,\ldots,\tilde\gamma_i, ...,\gamma_m)$.
    \State With probability $a(\gamma_j,\tilde\gamma_j)$ replace $\gamma_j$
    by $\tilde\gamma_j$. If not, preserve $\gamma_j$.
    \EndFor
    \State Given $\beta$ and $\gamma$, replace $\nu$
        based on their posterior conditionals:
 $p(\mu_\beta|\beta,\sigma_\beta)$ is normal;
 $p(\mu_\gamma|\gamma,S_\gamma)$, multivariate normal;
 $p(\sigma_\beta^{-2}|\beta,\mu_\beta)$, Gamma; 
    $p(S_\gamma^{-1}|\gamma,\mu_\gamma)$, Wishart; 
    $p(\sigma^{-2}|\beta,\gamma,y)$, Gamma.
    
\EndWhile
\end{algorithmic}
 \caption{MCMC sampler}\label{alg:mcmc}
\end{algorithm}

\noindent For proposal generation distributions $g_j(\cdot)$,
normal multivariate distributions
$N(\gamma_j,c^2 C_\gamma)$ are 
adopted. 
Appropriate values for $c$ can be found by trying
a few alternative choices until acceptance rates around
 0.25 are achieved \parencite{gamerman}. 

Combined with model augmentation to a variable architecture,
 this leads to a useful scheme
for complete shallow NN analyses as it allows for 
the identification of
architectures supported by data, by  
contemplating $m$ as an additional parameter.
A random $m$ with a prior 
favoring smaller values reduces posterior multimodality.
Moreover, as marginalization over $\gamma_ j$ requires inversion of matrices
of dimension related to $m$, 
avoiding unnecessarily large
hidden layers is critical to
mitigating computational effort.
 Thus, we assume 
a maximum size $m^*$ for the network and introduce 
indicators  $d_j$ suggesting whether node
$j$ is included ($d_j=1$) or not ($d_j=0$). 
We  also 
include a linear regression term $x'a$
to favor parsimony. On the whole, the model
becomes 
\begin{eqnarray*}
  y          & = & x_i'a + \sum_{j=1}^{m^*} d_j\beta_j \psi(x '\gamma_j) +
                    \epsilon \\ 
                    & & \epsilon \sim N(0,\sigma^2),\nonumber \\
                    & &    \psi(\eta) = \exp(\eta)/(1+\exp(\eta)),
                        \nonumber \\
  Pr(d_j=0|d_{j-1}=1)   & = & 1-\alpha, \nonumber\\
  Pr(d_j=1|d_{j-1}=1)   & = & \alpha, \nonumber\\
  \beta_i    \sim  N(\mu_b,\sigma_\beta^2),& 
  a     \sim  N(\mu_a,\sigma_a^2), &   \gamma_i   \sim  N(\mu_\gamma,\Sigma_\gamma).
                \label{eq:model-var}
\end{eqnarray*}
Learning is done through a reversible jump sampler
\parencite{green} embedding our first algorithm.
As a consequence, we perform inference
about the architecture based on the distribution of 
 $p(m|D)$. 

 \textcite{neal2012bayesian} proposed using an 
algorithm merging conventional Metropolis-Hastings chains with sampling
techniques based on dynamic simulation, the currently popular
Hamiltonian Monte Carlo (HMC) approaches.
Let us designate by $\theta$ the NN 
weights, $\theta = (\beta, \gamma)$, and denote the potential energy function as
$$
U(\theta) = \tau_{\beta}\sum_{i=1}^m \beta_i^2/2 + \tau_{\gamma} \sum_{i=1}^m \gamma_i^2/2 + \tau \sum_{j=1}^n (y_j - f(x_i, \theta_i))^2/2,
$$
where $\tau_{\beta}, \tau_{\gamma}, \tau$ are hyperparameters 
controlling regularization, similarly to (\ref{kkdbak}). 
Let us also introduce the Hamiltonian 
$$
H(\theta, r) = U(\theta) + \frac{1}{2} \sum_{i=1}^m r_i^2,
$$
with momentum variables $r$ of the same dimension as $\theta$; such 
 variables serve to accelerate the walk towards posterior modes. Then, the HMC scheme would be as described in Algorithm \ref{alg:hmc}.

\begin{algorithm}[!ht]
\begin{algorithmic}[1]
\State Start with arbitrary $\theta_0 = (\beta _0, \gamma _0)$.
\While{not convergence}
  \State Given current $\theta_t$ and $r_t \sim \mathcal{N}(0, I)$, perform one or more leapfrog integration
  steps 
  \begin{align*}
  r_{t + \frac{1}{2}} &= r_t - \frac{\epsilon}{2}\nabla U(\theta_t) \\
   \theta_{t+ 1} &= \theta_t + \epsilon  r_{t + \frac{1}{2}} \\
   r_{t + 1} &= r_{t + \frac{1}{2}} - \frac{\epsilon}{2}\nabla U(\theta_{t+ 1})
    \end{align*}
    to reach  $\theta^*$ and $r^*$.
    \State Compute $\alpha(\theta_t, \theta^*) = \min \left\{ 1, \frac{\exp H(\theta^*, r^*)}{\exp H(\theta_t, r_t)} \right\} $.
   \State Accept $\theta^*$ as $\theta_{t+1}$ with probability $\alpha(\theta_t, \theta^*)$, else discard it. 
  
 \EndWhile
 \end{algorithmic}
 \caption{HMC sampler}\label{alg:hmc}
\end{algorithm}

\subsection{Deep neural networks}\label{sec:dnn_examples}

Training by backpropagation has been in use 
for many years by now.
The decade of the 2010's witnessed major developments
leading to 
the boom around deep learning \parencite{10.5555/3086952}
or inference and prediction with deep NNs. 
Such advances include:
the availability of fast GPU kernels and routines
facilitating much faster training;
the access to massive amounts of data (e.g.\ Imagenet), which prevented overfitting to smaller datasets; 
the creation of new architectures, which prevented  convergence issues, such as the vanishing gradient problem; 
and, finally, the provision of automatic differentiation libraries such as Tensorflow, Caffe or Theano.
Figure \ref{figuradkk} displays an example
of what are now designated 
deep NNs, that is NNs with more than one hidden
layer, four in the portrayed case.
\begin{figure}
    \centering
    \includegraphics[scale=0.35]{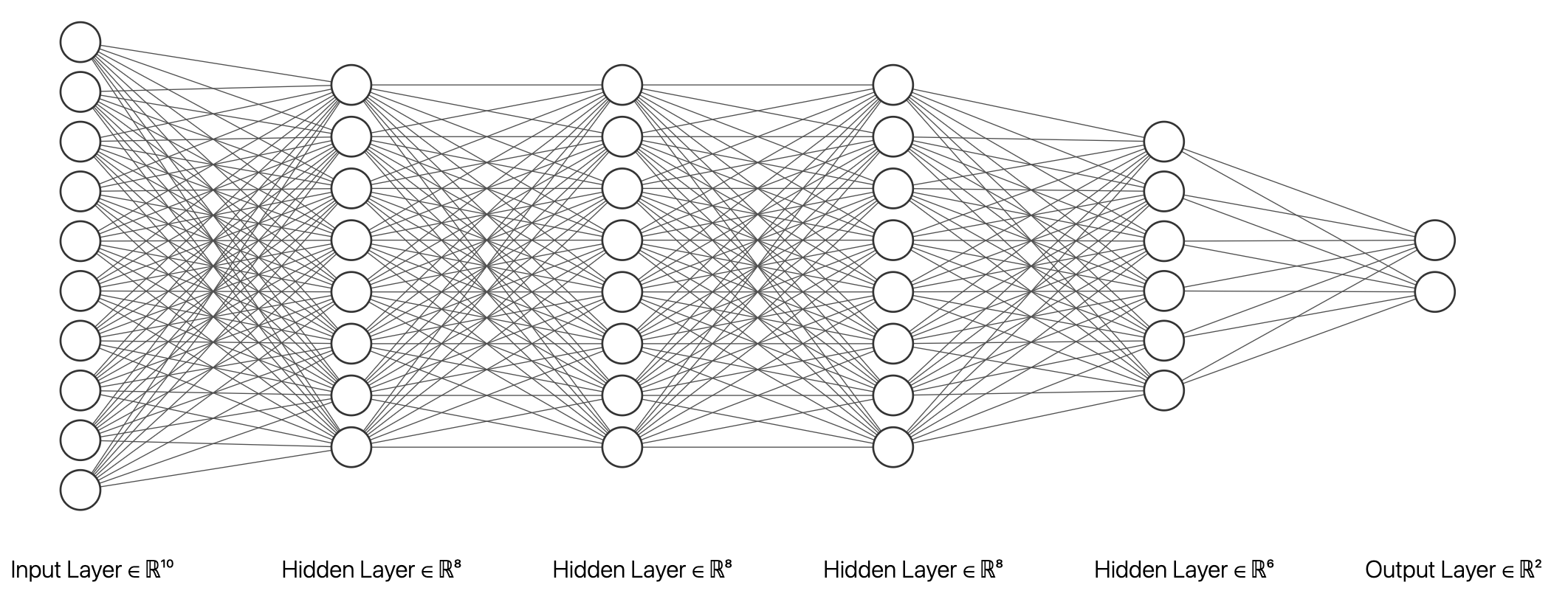}
    \caption{A deep NN architecture with four hidden layers and 2 scalar outputs}
    \label{figuradkk}
\end{figure}

A deep NN may be defined through   
a sequence of functions $\lbrace f_0, f_1, ..., f_{L-1} \rbrace$, each parametrized by some weights $\gamma_l$
of dimension $m_l$  (the corresponding number of hidden nodes) with the output of each layer being the input of the following one, as in
$$
    z_{l+1} = f_l ( z_l, \gamma_l).
$$
Lastly, we compute a prediction from the hidden activations of the last layer, as in Eq. (\ref{kantora})
\begin{eqnarray}
y         & = & \sum_{j=1}^{m_L} \beta_j z_{L,j} +
                    \epsilon 
                    \nonumber\\
              & & \epsilon \sim N(0,\sigma^2),
                  \nonumber \\
\end{eqnarray}

Modern architectures do not longer require the
$f_l$ functions to be sigmoidal (like the  
logistic functions in (\ref{kantora})) and include  
the rectified linear unit (ReLU), the leaky ReLU
or the exponential LU. In particular, these functions mitigate the vanishing gradient problem \parencite{kolen2001gradient} that 
plagued earlier attempts with deep architectures using sigmoidal
activation functions. Besides, such activation functions have other benefits likes being faster to compute both the activation and its derivative.

Beyond the above generic deep architectures 
a few important specialised models have emerged which 
are relevant in specific application domains,
as we describe now. 
\paragraph{Convolutional neural networks} 
CNNs are typically used 
 in computer vision tasks and related signal processing applications.
 Stemming from the work by Le Cun and coauthors  
  \parencite{lecun89, lecun98} and their original 
  LeNet5 design, they achieved major
 successes  in  competitions \parencite{NIPS2012_c399862d}
 leading to architectures like 
 AlexNet       \parencite{NIPS2012_c399862d}, VGGNet \parencite{simonyan2014very} or
 GoogleNet \parencite{szegedy2015going}, reaching 
 superhuman performance in 
 image recognition tasks.
 
 In CNNs, the layer transformation is taken to be a convolution with some 2D or 3D kernel; this makes the network able to recognise patterns independently of their location or scale in the input, a desirable property in computer vision tasks known as spatial equivariance. In addition, by replacing a fully-connected layer with a small kernel (typically, in the 2D case these are of shapes $3\times3, 5\times 5$ or $7\times 7$), there is weight sharing amongst the unit from the previous layer and this allows reducing 
 the number of parameters and prevents overfitting.
 Thus, the typical convolutional network layer is composed of several sub-layers:
 \begin{itemize}
     \item A convolution operation, as before, 
     serving as an affine transformation of the representation from the previous layer. A layer can apply several convolutions in parallel to produce a set of linear activations.
     \item A non-linear layer, such as the rectifier (based on the ReLU function), converting the previous activations to nonlinear ones.
     \item An optional pooling layer, which replaces the output of the net at a certain position with a summary statistic of 
     the nearby outputs (typically the mean or the maximum).  
 \end{itemize}
 
 

\paragraph{Recurrent neural networks} RNNs are typically used for sequence processing, as in natural language processing (NLP), e.g.\ \parencite{hochreiter1997long} and \parencite{chung2014empirical}. They have feedback connections which make the network aware of temporal dependencies in the input.
Let $x_t$ denote the $t$-th token (usually a word, but could even be 
a smaller part) in the input sequence. A simple recurrent layer can be described as
$$
h_t = \psi(W_x x_t + W_h h_{t-1} + b_h)
$$
with the output at that time-step given by
$$
y_t = \psi(W_y h_t + b_y),
$$
where $\psi$ is a non-linear activation function, such as the logistic or the hyperbolic tangent functions. This is the Elman network \parencite{cruse2006neural}. Note that weights are shared between different time-steps. 
For training purposes, all of the previous loops must be unrolled back in time, and then perform the usual gradient descent routine.
This is called \emph{backpropagation through time} \parencite{58337}.
Backpropagating through long sequences may lead to problems
of either vanishing or exploding gradients. Thus, 
simple architectures like Elman's cannot be applied to long inputs, as those arising in NLP. 
As a consequence, gating architectures which improve the stability have been proposed, and successfully applied in real-life tasks,
such as long 
short-term memory (LSTM) \parencite{hochreiter1997long} and gated recurrent unit (GRU) networks \parencite{cho2014learning}. 


\paragraph{Transformers} These architectures substitute the sequential processing from RNNs by a more efficient, parallel approach inspired by 
attention mechanisms \parencite{vaswani2017attention,bahdanau2014neural}. Their basic building components are scaled dot-product attention layers:
let $x_i$ be the embedding\footnote{An embedding layer is a linear layer that projects a one-hot representation of words into a lower-dimensional space.} of the $i$-th token in an input sequence;
this is multiplied by three weight matrices to obtain: 1) a query vector, $q_i = W_q x_i$; 2), 
 a key vector $k_i = W_k x_i$;
 and, 3) a value vector, $v_i = W_v x_i$. The output of the attention layer is computed, parallelizing along the input position $i$, 
 through 
$$
\mbox{softmax}\left(\frac{q k^{'}}{\sqrt{d_k}}\right) v,
$$
 a weighted average of the components of the value vector, where the average is 
computed as a normalized dot product between the key and query vectors. Thus, the attention layer produces activations for every token  that contains information not only about the token itself, but also a combination of other relevant tokens weighted by the attention weights.

Since transformer-based models are more amenable to parallelization,
 they have been trained over massive datasets in the NLP domain, leading to architectures such as Bidirectional 
Encoder Representations for Transformers (BERT) \parencite{devlin2018bert}
or the series of Generative
pre-trained Transformer (GPT) models \parencite{radford2018improving, radford2019language, brown2020language}. 

\paragraph{Generative models} 
The models from the previous paragraph belong to the discriminative family of models. Discriminative models directly learn the conditional distribution $p(y|x)$ from the data.
Generative models, as opposed to discriminative ones, take a training set, consisting of samples from a distribution $p_{data}(x)$, and learn to represent an estimation of that distribution, resulting in another probability distribution, $p_{model}(x)$. Then, one could fit a distribution to the data by performing MLE,
$$
\max_{\theta} \sum_{i=1}^n \log p_{model} (x_i | \theta),
$$
or MAP estimation if a prior over the parameters $\theta$ is also placed. Fully visible belief networks \parencite{10.5555/2998828.2998922} are a class of models that can be optimized this way. They are computationally tractable since they decompose the probability of any given $d-$dimensional input as $p(x | \theta) = \prod_{i=1}^d p(x_i | x_1 , \ldots x_{i-1}, \theta)$. Current architectures that fall into this category include WaveNet \parencite{oord2016wavenet} and pixel recurrent neural networks \parencite{pmlr-v48-oord16}. 

\paragraph{Generative adversarial networks} GANS  perform density estimation in high-dimensional spaces formulating a game between two players, a generator and a discriminator \parencite{goodfellow2014generative}. They belong to the family of generative models; however, GANs do not explicitly model a distribution $p_{model}$, i.e., they cannot evaluate it, only generate samples from it.
GANs define a probabilistic graphical model containing observed variables $x$ (the input data, like an image or text) and latent variables $z$. Then, both players can be represented as two parameterized functions via NNs. Thus, the generator will be of the form $f_G(z, \theta_G)$, i.e., a NN that takes as input a latent vector and is parameterized through weights $\theta_G$. Note that the last layer will depend on the shape and range of the data $x$. Likewise, the discriminator will be a function $f_D(x, \theta_D)$ receiving a (fake or real) sample $x$ and outputting a probability for each of these two classes. Therefore, the final activation function will be 
the sigmoid function. Each network has its own objective function
and, consequently, both networks would play a minimax game.
Now, both players update their weights sequentially, typically using SGD or any of its variants.



\paragraph{Classical approaches.}

In principle, we could think of using 
with deep NNs the approach in Section \ref{sanchez}. However,
 large scale problems bring in two major
computational issues: first, the 
evaluation of the gradient of the loss wrt the parameters
requires going through all observations
 becoming too expensive when $n$ is large;
second, estimation of the gradient component
for each point requires a much longer backpropagation recursion through the various levels of the
deep network, 
entailing again a very high computational 
expense. 

Fortunately, these computational demands are mitigated
through the use of classical stochastic gradient descent
(SGD)
methods \parencite{robbins}
to perform the estimation \parencite{bottou2010large}. SGD is the current workhorse of large-scale optimization and 
allows training deep NNs over large datasets by mini-batching: rather than going through the whole batch at each stage, just pick a small sample
(mini batch) of observations and do the corresponding
gradient estimation by backpropagation. This is reflected in the Algorithm \ref{alg:sgd}, which departs from an
initial $\theta$.

\begin{algorithm}[!ht]
\begin{algorithmic}[1]
\While{stopping criterion not met}
  \State Sample a size $l$ minibatch 
      $((x^{(1)}, y^{(1)}),..., (x^{l},y^{l})) $
              from training set. 
\State Compute a gradient estimate
      $g_k = \frac{1}{l} \sum_{i=1}^l \nabla_{\theta} f_i(\theta_k) + \nabla_{\theta} h(\theta_k)$
\State    Update $\theta_{k+1} $ = $\theta_k -\epsilon _k g_k$ 
\State    $k=k+1$ 
 \EndWhile
\end{algorithmic}
 \caption{Stochastic gradient descent}\label{alg:sgd}
\end{algorithm}

\noindent The standard Robbins-Monro conditions require
that $\sum _k \epsilon_k = \infty$ and 
$\sum _k \epsilon _k^2 < \infty $ for convergence to the optimum.

Recent work has explored ways to 
speed up convergence towards the local optimum, 
leading to several SGD variants, including the 
 addition of a momentum, as in AdaGrad, Adadelta or Adam \parencite{kingma2014adam,duchi2011adaptive,zeiler2012adadelta}. The essence of these methods is to take into account not only a moving average of the gradients, but also an estimate of its variance, so as to dynamically adapt the learning rate. 
Let us finally mention 
several techniques to improve generalization and convergence of neural networks, such as dropout \parencite{srivastava2014dropout}, batch normalization \parencite{ioffe2015batch} or weight initialization \parencite{glorot2010understanding}.


\subsection{Examples}\label{sec:nn_examples}
We illustrate learning with deep NNs with two examples portraying
different architectures and application domains. Code for the experiments was done using the \texttt{pytorch} library \parencite{paszke2017automatic} and is released at \url{https://github.com/vicgalle/nn-review}.

\paragraph{CNNs for image recognition.}\label{kkvision}

We describe an image classification task with a standard CNN, VGG-19, showcasing the superior performance when compared to non-convolutional and non-deep approaches in this application domain. As benchmark, we use the CIFAR-10 dataset \parencite{krizhevsky2014cifar},
which consists of 60000 32x32 colour images in 10 classes. As baselines, we use a linear multinomial regression model, and a three hidden layer NN (MLP) with 200 units each with ReLU activations.  {As Bayesian features for this experiment, note the use of Gaussian prior over parameters, which benefits performance compared to the vanilla MLE. Notice also that, in CNNs, we are also imposing an implicit prior, which consists in restricting the set of linear layers to be convolutions, a particular case of linear operator.}

All models are trained for 200 epochs\footnote{An epoch is defined as a pass over the full training dataset.} and minibatches of 128 samples, using SGD with learning rate of 0.1 and  momentum of 0.9.
The learning rate is adapted with the cosine annealing scheme in  \textcite{loshchilov2016sgdr}. We place  {independent} Gaussian priors over all parameters (thus equivalent to $\ell_2$ regularization) and use SWA on top of the SGD optimizer, to make predictions using an ensemble of   {100 posterior samples}, in a Bayesian way.

\begin{figure}[hbt]
\centering
  \includegraphics[width=1.\linewidth]{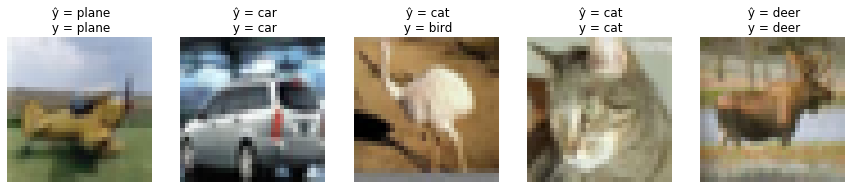}
  \caption{ {Five random samples from the CIFAR-10 dataset and their corresponding predictions and true labels using the VGG architecture. Notice how the convolutional network mistakes an ostrich (bird class) for a cat.}}
  \label{fig:cifar}
\end{figure}

Figure \ref{fig:cifar} presents five random images from the test set
and Table \ref{tab:cnn} displays results. Notice how the linear model performs poorly with just a test accuracy of 38\%, whereas increasing the flexibility of the models critically improves 
accuracy. Indeed, the MLP performs slightly better.
However, by imposing strong priors about the dataset, such as translation equivariance thanks to the convolutional layers and pooling, state-of-the-art results are achieved with VGG-19.

\begin{table}[ht]
\caption{Results over the CIFAR-10 test set.}
\centering
\begin{tabular}{ll}
Model & Test acc. \\
\hline
Linear &  $38.10\%$ \\
MLP &  $50.03\%$\\
VGG-19 &  $93.29\%$ 
\end{tabular}
\label{tab:cnn}
\end{table}

\paragraph{Transformer and recurrent models for sentiment analysis.}
This is an example with text to undertake sentiment analysis, classifying movie reviews with architectures tailored to NLP tasks. As benchmark, we use the IMBD movie review dataset 
from \parencite{maas-EtAl:2011:ACL-HLT2011}, in which a review in raw text must be classified into one of two classes: positive or negative
sentiment. 

The recurrent model consists of a LSTM network with two layers, each with hidden dimension of 256 plus a dropout of 0.5 serving as a regularizer. We also consider a simple Transformer-based model, with two encoder layers, of hidden dimension 10 plus similar dropout. For both models, the input sequence is represented as a list of one-hot encoded vectors, representing the presence of a word from a vocabulary of the 5000 most common tokens. This representation is embedded into a space of 100 dimensions
(16 in the Transformer case) by means of an affine transformation, before applying architecture-specific layers. Both models are trained using the Adam optimizer with a constant learning rate of 0.001.
We also consider a bigger, transformer-based model consisting of the recent RoBERTa architecture \parencite{liu2019roberta}, 
initially pretrained on a big corpus of unsupervised raw  English text, and then fine tuned for two epochs on the IMDB training set. 

Table \ref{tab:examples} shows two random examples from the IMDB dataset.
Results over the full test set are displayed in Table \ref{tab:nlp}. Notice how the Transformer-based models are superior to the recurrent baseline, and the extra benefits thanks to the usage of transfer learning.

{\footnotesize
\begin{table}[ht]
\caption{Two random review samples and their corresponding predictions and true labels from the RoBERTa model.}
\centering
\begin{tabularx}{\textwidth}{Xll}
Text input & Prediction & True label \\
\hline
 {\scriptsize \texttt{HOW MANY MOVIES ARE THERE GOING TO BE IN WHICH
AGAINST ALL ODDS, A RAGTAG TEAM BEATS THE BIG GUYS
WITH ALL THE MONEY?!!!!!!!! There's nothing new in
"The Big Green". If anything, you want them to
lose. Steve Guttenberg used to have such a good
resume ("The Boys from Brazil", "Police Academy",
"Cocoon"). Why, OH WHY, did he have to do these
sorts of movies during the 1990s and beyond?! So,
just avoid this movie. There are plenty of good
movies out there, so there's no reason to waste
your time and money on this junk. Obviously, the
"green" on their minds was money, because there's
no creativity here. At least in recent years,
Disney has produced some clever movies with Pixar.} } &   Negative & Negative   \\
\hline
 {\scriptsize \texttt{When I first heard that the subject matter for
Checking Out was a self orchestrated suicide
party, my first thought was how morbid, tasteless
and then a comedy on top of that. I was skeptical.
But I was dead wrong. I totally loved it. The
cast, the funny one liners and especially the
surprise ending. Suicide is a delicate issue, but
it was handled very well. Comical yes, but tender
where it needed to be. Checking Out also deals
with other common issues that I believe a lot of
families can relate with and it does with tact and
humor. I highly recommend Checking Out. A MUST
SEE. I look forward to its release to the public.} } &   Positive & Positive   \\

\end{tabularx}
\label{tab:examples}
\end{table}
}

\begin{table}[h]
\caption{Results over the IMDB test set.}
\centering
\begin{tabular}{ll}
Model & Test acc. \\
\hline
LSTM &  $81.99\%$ \\
Simple Transformer &  $87.49\%$ \\
RoBERTa &  $94.67\%$\\
\end{tabular}
\label{tab:nlp}
\end{table}

\section{Challenges}

The reflections motivated by the case study in Section 1.2, using classical Bayesian approaches, and the review in Section 1.3, have led us to enumerate the following three major challenges, object of study in this thesis.

\subsection{Challenge 1. Large scale Bayesian inference}


MCMC algorithms such as the ones presented in Section \ref{s:MCMC} or \ref{bayeshallow} 
have become standard in Bayesian inference \parencite{french}.
However, they entail a significant computational burden in large datasets. Indeed, 
computing the corresponding acceptance probabilities 
 requires iterating over the whole dataset, which often does not even
 fit into memory. Thus, they 
  do not scale well in big data settings. 
 As a consequence, several approximations have been proposed.

\paragraph{Stochastic Gradient Markov chain Monte Carlo.}\label{bayesdeep} 

SG-MCMC methods are based on the discretization of 
stochastic differential equations that have the desired target 
distribution as its limit. \textcite{ma2015complete} provide a
complete framework that encompass many earlier proposals and
facilitate such discretization, as well as a practical tool for
devising new samplers and testing the correctness of proposed samplers.
We aim at drawing samples from the
posterior $p(\theta |D) \propto \exp(-U(\theta ))$,
with potential function
$U(\theta ) = -\sum _{x\in D} \log p(x|\theta ) + \log p(\theta )$. Define also auxiliary variables $r$,
with $z=( \theta, r )$, and sample from $p(z|D) \propto  \exp(-H(z))$, with hamiltonian
$H(z) = H(\theta , r) = U(\theta ) + g(\theta , r)$, such that
$\exp(-g(\theta , r))dr = constant$. 
Marginalizing the auxiliary variables gives us the desired distribution on $\theta $.

In general, all continuous Markov processes that one might consider for sampling can be written
as a stochastic differential equation (SDE) of the form:
\begin{equation}
dz = f(z)dt +
\sqrt{ 2D(z)}dW(t),
\end{equation}
where $f(z)$ denotes the deterministic drift, frequently
related to $\nabla H(z)$,
$W(t)$  is a $d$-dimensional Wiener process, and 
$D(z)$ is a positive semidefinite diffusion matrix. Note, though,
that some care must be taken to choosing $f(z)$ and $D(z)$ 
to yield the desired stationary distribution.
\parencite{ma2015complete} propose a recipe for constructing SDEs with the correct stationary distribution through 
$f(z) = - [D(z) + Q(z)] \nabla H(z) + \Gamma (z)$, 
and $\Gamma _i (z) = \sum _{j=1}^d 
\frac{\partial  }{\partial z_j}(D_{ij} (z) + Q_{ij} (z) )
$
where $ Q(z)$ is a skew-symmetric curl matrix (representing the deterministic traversing effects seen
in HMC procedures) and the diffusion matrix $D(z)$
determines the strength of the Wiener process-driven diffusion.
When 
$D(z)$ is positive semidefinite and $Q(z)$ is skew-symmetric, 
 the convergence of the above dynamics to the desired 
distribution follows; moreover both matrices can be adjusted to attain faster convergence to
the posterior distribution. \parencite{ma2015complete} show that by properly choosing the matrices  one can recover numerous samplers such
as SGLD \parencite{welling2011bayesian} or a corrected SG-HMC \parencite{chen2014stochastic}.

In practice, simulation actually relies on a discretization of the SDE, leading to a (full-data) update rule
\begin{equation}
z_{t+1} \leftarrow z_t - \epsilon_t \left[ ( D(z_t) + Q(z_t) )
\nabla H(z_t) + \Gamma (z_t)\right]
+ N (0, 2\epsilon _t D(z_t)).
\end{equation} 
Calculating $\nabla H(z)$ entails evaluating the gradient of $U(\theta )$
which, with deep NN models, 
 becomes very intensive computationally as it relies on a sum
over all data points. Instead, 
we use a sampled data subset $S' \subset S$, with the corresponding potential for these data being 
$U_1 (\theta ) = -\frac{|S'|}{|S|} \sum _{x \in S'}
\log p(x|\theta ) + \log p(\theta )$, leading 
to the approximation 
\begin{equation}
z_{t+1} \leftarrow z_t - \epsilon_t \left[ ( D(z_t) + Q(z_t) )
\nabla H (z_t) + \Gamma (z_t) \right]
+ N (0, \epsilon _t (2 D(z_t)- \epsilon_t \hat {B_t}))
\end{equation} 
where $\hat {B_t}$ is an estimate of the variance of the error.
This provides the  stochastic gradient—or minibatch— variant of the sampler. Note also that all of these approaches could be combined with recent heuristics to improve the convergence of SG-MCMC methods, such as the adoption of cyclical step sizes to explore more efficiently the posterior  \parencite{7926641}.

\paragraph{Variational Bayes.} 

Variational inference (VI) \parencite{blei2017variational} tackles the 
 approximation of  $p(\theta | D)$ with a tractable parameterized
 distribution $q_{\phi}(\theta |D)$. The goal is to find parameters $\phi$ so that the distribution 
$q_{\phi}(\theta |D )$  (referred to as variational guide
or variational approximation)  is as close as possible to the actual posterior, with closeness typically measured through 
the Kullback-Leibler 
divergence $KL(q_{\phi } || p)$, reformulated into the ELBO
\begin{equation}\label{eq:elbo}
\mbox{ELBO}(q) = \mathbb{E}_{q_{\phi}(\theta |D)} \left[ \log p(D,\theta ) - \log q_{\phi}(\theta |D)\right],
\end{equation}
the objective to be optimized,
usually through SGD techniques. 

A standard choice
for $q_{\phi}(\theta |D )$ is a factorized Gaussian 
distribution $\mathcal{N}(\mu_{\phi}(D), \sigma_{\phi}(D))$,
with  mean and covariance matrix defined through a
 deep NN conditioned on the observed data $D$.
 Note though that 
other distributions can be adopted as long as they 
 are easily sampled and their log-density and entropy evaluated. 
A problem is that these approximations often 
underestimate the uncertainty. Some developments
partly mitigate this  issue
by  enriching the variational family include normalizing flows \parencite{rezende2015variational} or the use of implicit distributions \parencite{huszar2017variational}.

In Chapter 2 we will further review these approaches and propose two frameworks to further scale up Bayesian inference in challenging settings. The first is based on augmenting an SG-MCMC sampler with multiple, parallel chains, incorporating interactions between particles so they can explore the posterior more efficiently. The second is a novel variational approximation that serves to automatically adapt the hyperparameters of any SG-MCMC sampler.


\subsection{Challenge 2. Security of machine learning}
\label{sec:chall2}

As described, over the last decade
an increasing number of processes are being automated through 
deep NN algorithms, being 
essential that these are robust and reliable
if we are to
trust operations based on their output. State-of-the-art
algorithms, as those described above, perform extraordinarily well on standard data,  but have been
shown to be vulnerable to adversarial examples, data instances targeted at
fooling them \parencite{goodfellow2014explaining}.
The presence of adaptive adversaries has
been pointed out in areas such as spam detection \parencite{zeager2017adversarial}
and computer vision \parencite{goodfellow2014explaining}, among many others. 
In those contexts, algorithms should acknowledge the presence of possible adversaries
to protect from their data manipulations.
As a fundamental underlying hypothesis, NN
based systems rely on using 
independent and identically distributed (iid) data for both training and operations. However, security aspects in deep
learning, part of the emergent field of
adversarial machine learning (AML),
question such hypothesis, given the
presence of adaptive adversaries ready to  intervene in the problem 
to modify the data and obtain a benefit. In addition, time series models, such as the introduced in Section \ref{sec:dlms} can also be subject to adversarial attacks \parencite{9063523,10.5555/3016100.3016102}, so it is of great interest to develop model-agnostic defences.

As a motivating example, vision algorithms (Section
\ref{kkvision}) are at the core of many AI 
applications such as autonomous driving systems (ADSs) \parencite{rumanos}. 
The simplest and most notorious attack examples to
such algorithms  
consist of modifications of images in such a way that the alteration becomes insignificant to the human eye, yet drives a model trained on millions of images to misclassify the modified ones,
with potentially relevant security consequences.
With a relatively simple CNN model, we are able to accurately predict 99\% of the handwritten digits in the MNIST data set.
However, if we attack those data with the fast gradient sign method \parencite{szegedy2013intriguin},
accuracy gets reduced to 62\%. Fig.~\ref{fig:digits} provides an example of an original MNIST image and a perturbed one: to our eyes both images look like a 2, but the classifier rightly identifies a 2 in the first case, whereas it suggests a 7 after the perturbation. 
%
%
%
\begin{figure}[hbt]
\centering
  \includegraphics[width=.6\linewidth]{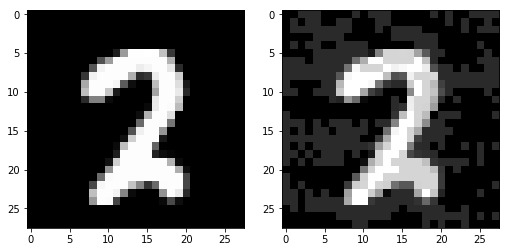}
  \caption{Left: original image, correctly classified as a 2. Right: slightly perturbed image, wrongly classified as a 7.}
  \label{fig:digits}
\end{figure}
Stemming from the pioneering work in adversarial classification 
\parencite{dalvi2004adversarial}, the prevailing paradigm in AML models
the confrontation between learning-based systems and adversaries through game theory. 
This entails common knowledge assumptions 
\parencite{gameTheoryACriticalIntroduction2004} which are 
questionable in security 
applications as adversaries try to conceal information. 
As \parencite{fan2019selective} points out, there is a need for a  framework that guarantees robustness of ML against adversarial manipulations in a principled manner. 

The usual approach for robustifying models against these examples is {\em adversarial training} (AT) \parencite{madry2018towards} and its 
variants, based on solving a 
bi-level optimisation problem whose objective function is the empirical risk of a model under worst case data perturbations. 
However, recent pointers urge modellers to depart from using 
norm based approaches \parencite{carlini2019evaluating} and develop more realistic attack models.

AML is a  difficult area which rapidly evolves and leads to an 
arms race in which the community alternates cycles of proposing attacks and  of implementing defences that deal with them. However, as mentioned, it is 
based on game theoretic ideas and strong common
knowledge conditions. The challenge is thus to develop defence mechanisms that are sufficiently scalable to the increasingly complex ML models using in real life.
Chapter 3 examines these issues, providing a scalable defence procedure inspired by ARA \parencite{adversarialRiskAnalysis2009,banks2015adversarial}.


\subsection{Challenge 3. Large scale competitive decision making}

In real life scenarios, decision makers rarely have to take a single action. Instead, multiple decisions must be made, with the former actions making a causal impact on the later ones. For example, in the case study from Section \ref{sec:dlms}, the decision maker might desire to plan ahead for a year the budget allocations of all the advertising channels. Thus, we have to delve into the realm of sequential decision making \parencite{french,DIEDERICH200113917}. There are many paradigms to tackle this problem, such as optimal control theory \parencite{kirk2004optimal}. Since this thesis is also focused on Machine Learning, we adopt the framework of Reinforcement Learning (RL) to deal with sequential decisions while making the decision maker able to learn from her experiences \parencite{sutton2012reinforcement,kaelbling1996reinforcement}. RL differs from supervised learning in not needing labelled input/output pairs be presented, and in not needing sub-optimal actions to be explicitly corrected. Instead the focus is on finding a balance between exploration (of uncharted experiences) and exploitation (of current knowledge). Of course, RL can also benefit from recent developments in deep, neural architectures to further enhance its efficiency in complex and highly dimensional environments \parencite{mnih2015human}.

However, RL typically only deals with one agent (the decision maker), taking actions against her environment, which is assumed to be stationary. Realistic settings have to take into account the presence of other rational agents. These can be potential collaborators to cooperate, but also could be adversaries. For instance, in the case study from Section \ref{sec:dlms} we could have modeled competitors also taking decisions that in a way, also affect the expected sales of the supported franchise. Under this dynamic, non-stationary environment, traditional RL techniques dramatically fail, and it is a necessity to develop novel frameworks that acknowledge the presence of other players. This is the field of multi-agent Reinforcement Learning (MARL), which usually takes grounding in game-theoretical approaches \parencite{marl_over,lanctot2017unified}. 

Game Theory has also its own drawbacks, whereas Adversarial Risk Analysis (ARA) offers a more realistic view, leveraging Bayesian ideas \parencite{Banks}. The challenge is thus how to adapt the ARA methodology into the sequential learning nature of RL, with low computational requirements in order to allow for scalability.
Chapter 4 deals with some of these issues.

\section{Thesis structure}

The three challenges explained above are the content of the following chapters. Chapter 2 describes two novel approaches for large scale Bayesian inference in complex models. Chapter 3 surveys the state of adversarial classification and presents an original approach to robustifying classifiers inspired by Adversarial Risk Analysis. Chapter 4 takes a deep dive into reinforcement learning, proposing a framework to support a decision maker against adversaries and then studying a realistic application in data sharing markets.

Finally, Chapter 5 ends up with several conclusions and avenues for further work. Figure \ref{fig:thesis} shows how this thesis is structured.

\begin{figure}[h]
\centering
\includegraphics[scale=0.8]{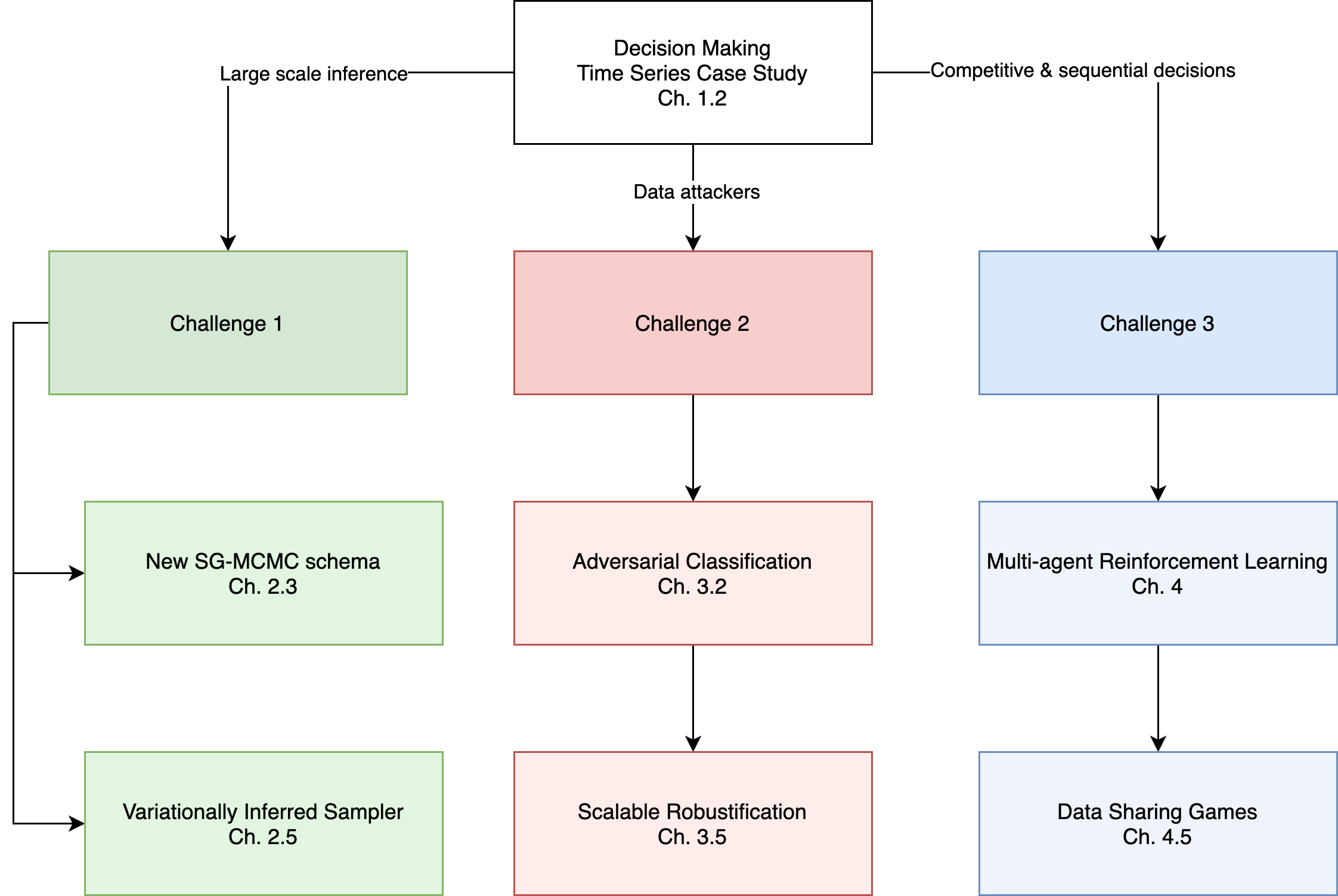}
\caption{The structure of this thesis}\label{fig:thesis}
\end{figure}

\begin{subappendices}

\section{MCMC Convergence Diagnostics}\label{app:GR}

In order to asses the convergence of the MCMC scheme described in Section \ref{s:MCMC}, we used the Gelman-Rubin convergence statistic $\hat{R}$. We report its value for each latent dimension of our best performing model, the RF variant, in Table \ref{tab:GR}. All values are under 1.1, confirming correct convergence. In addition, we display the trace plots for each variable in Figure \ref{fig:traces}.

\begin{table}[H]
\centering
{\footnotesize 
\begin{tabular}{|l|c|c|c|c|c|c|c|}
\hline
coefficient & \texttt{y\_AR} & \texttt{OOH} & \texttt{TV} & \texttt{ONLINE} & \texttt{SEARCH} & \texttt{RADIO} & \texttt{Hols} \\
\hline 
$\hat{R}$ & 0.9997814 & 0.9997621 & 1.00001 & 0.9998523 & 1.000033 & 1.000522 & 0.9999129 \\
\hline
coefficient & \texttt{AVG\_Temp} & \texttt{AVG\_Rain} & \texttt{Unemp\_Ix} & \texttt{CC\_IX} &  \texttt{Price\_IX}& \texttt{Sport\_EV}  & \\
\hline
$\hat{R}$ & 0.999768 & 0.9997925 & 1.030499 & 1.039316 & 1.005669 & 1.000532 & \\
\hline
\end{tabular}\caption{Gelman-Rubin statistic results}\label{tab:GR}
}
\end{table}

\begin{figure}[h]
\centering
\includegraphics[scale=0.6]{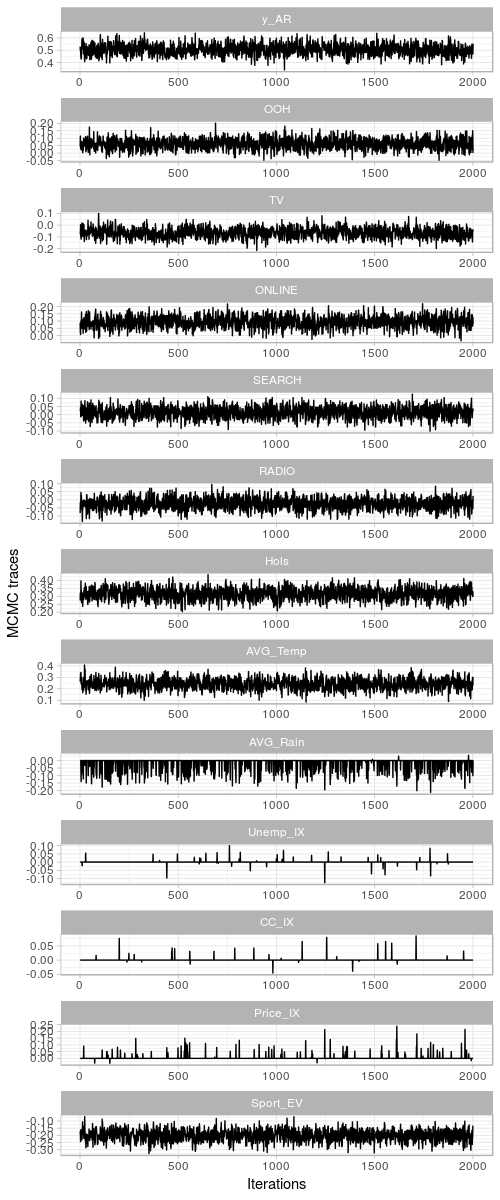}
\caption{MCMC traces after a burn-in of 2000 iterations.}\label{fig:traces}
\end{figure}

\end{subappendices}

\chapter{Large Scale Bayesian Inference}\label{cha:bayes}
\section{Introduction}

This chapter presents two developments at the intersection of approximate Bayesian inference and complex machine learning models, such as neural networks. The first one proposes a unifying perspective of two different Bayesian inference algorithms, Stochastic Gradient Markov Chain Monte Carlo (SG-MCMC) and Stein Variational Gradient Descent (SVGD), leading to improved and efficient novel sampling schemes. We also show that SVGD combined with a noise term can be framed as a multiple chain SG-MCMC method. Sections \ref{sec:back} to \ref{sec:experiments_sgmcmc} are devoted to this study.

On the other hand, the second part of this chapter introduces a complementary framework to boost the efficiency of Bayesian inference in probabilistic models  by embedding a Markov chain sampler within a variational posterior approximation. We call this framework “variationally inferred sampling” (VIS). Its strengths are its ease of implementation and the automatic tuning of sampler parameters, leading to a faster mixing time through automatic differentiation.
Sections \ref{sec:main} to \ref{sec:exps_vis} tackle this problem.

We start by giving a brief introduction to both approaches. Then, a discussion follows with the details of each of the two. 

\subsection{SG-MCMC with repulsive forces: motivation }

Bayesian computation lies at the heart of many machine learning models in both academia and industry \parencite{bishop2006pattern}. Thus, it is of major importance to develop efficient approximation techniques that tackle the intractable integrals that arise in large scale Bayesian inference and prediction problems \parencite{gelman2013bayesian}, as stated earlier in challenge 1. 

Recent developments in this area include variational based approaches such as \emph{Automatic Differentiation Variational Inference} (ADVI) \parencite{blei2017variational}, and \emph{Stein Variational Gradient Descent} (SVGD)  \parencite{liu2016stein} or sampling approaches such as \emph{Stochastic Gradient Markov Chain Monte Carlo} (SG-MCMC) \parencite{ma2015complete}. While variational techniques enjoy faster computations, they rely on optimizing a family of posterior approximates that may not contain the actual posterior distribution, potentially leading to severe bias and underestimation of uncertainty, \textcite{pmlr-v80-yao18a} or \textcite{riquelme2018failure}. SG-MCMC methods have been used in a wide range of real-world tasks, such as in computer vision settings \parencite{7780980} or in recommendation systems via matrix factorization \parencite{7952555}.

There has been recent interest in bridging the gap between variational Bayes and MCMC techniques, see e.g. \parencite{zhang2018}, to develop new scalable approaches for Bayesian inference, e.g., \parencite{carbonetto2012}.
In sections 2.2-2.4, we draw on a similitude between the SG-MCMC and SVGD approaches to propose a novel family of very efficient sampling algorithms. 

Any competing MCMC approach should verify the following list of properties, as our proposal will do:
\begin{itemize}
    \item \emph{scalability}. For this, we resort to SG-MCMC methods since at each iteration they may be approximated to just require a minibatch of the dataset,
    \item \emph{convergence to the actual posterior}, and
    \item \emph{flexibility}. Since we provide a parametric formulation of the transition kernel, it is possible to adapt other methods such as \emph{Hamiltonian Monte Carlo} \parencite{neal2011mcmc} or the Nos\'e-Hoover thermostat method \parencite{ding2014bayesian}.
\end{itemize}
There has been recent interest in developing new dynamics for SG-MCMC samplers with the aim of exploring the target distribution more efficiently. \textcite{chen2014stochastic} proposed a stochastic gradient version of HMC, whereas \textcite{ding2014bayesian} did the same, leveraging for the Nos\'e-Hoover thermostat dynamics. \textcite{chen2016bridging} adapted ideas from stochastic gradient optimization by proposing the analogue sampler to the Adam optimizer. A relativistic variant of Hamiltonian dynamics was introduced by \textcite{abbati2018adageo}. Two of the most recent derivations of SG-MCMC samplers are \textcite{zhang2019cyclical}, in which the authors propose a policy controlling the learning rate which serves for better preconditioning; and \textcite{gong2019meta}, in which a meta-learning approach is proposed to learn an efficient SG-MCMC transition kernel.
While the previous works propose new kernels which can empirically work well, they focus on the case of a single chain. We instead consider the case of several chains in parallel, and the bulk of our contribution focuses on how to develop transition kernels which exploit interactions between parallel chains. Thus, our framework can be seen as an orthogonal development to the previous listed approaches (and could be actually combined with them).

On the theoretical side, \textcite{chen2018unified} study another connection between MCMC and deterministic flows. In particular, they explore the correspondence between Langevin dynamics and Wasserstein gradient flows.
We instead formulate the dynamics of SVGD as a particular kind of MCMC dynamics, which enables the use of the Fokker-Planck equation to show in a straightforward manner how our samplers are valid. \textcite{liu2017stein} started to consider similitudes between SG-MCMC and SVGD, though in this work we propose the first hybrid scheme between both methods.

Our contributions are summarized as follows. First, we provide a unifying hybrid scheme of SG-MCMC and SVGD algorithms satisfying the previous list of requirements; second, and based on the previous connection, 
we develop new SG-MCMC schema that include repulsive forces between particles and momentum acceleration. Usually, when using multiple parallel chains in SG-MCMC, the chains are treated independently. We instead introduce an interaction between particles via a repulsive force, in order to make the particles do not collapse into the same point of the posterior.
Finally, we show how momentum-based extensions of SGD can be incorporated into our framework, leading to new samplers which benefit from both momentum acceleration and repulsion between particles for better exploration of complex posterior landscapes, therefore providing a very competitive scheme. 

Figure \ref{fig:diagram} depicts our proposal. Starting with SGD, one can add a carefully crafted noise term to arrive at the simplest SG-MCMC sampler, SGLD. Then, one can add repulsion between particles to speed-up the mixing time, leading to SGLD+R, our first contribution (Section \ref{sec:framework}). On top of that, we can further augment the latent space with momentum variables, leading to the Adam+NR sampler, our second proposed sampler (\ref{sec:momentum}).

After an overview of posterior approximation methods in Section \ref{sec:back}, we propose our framework in Section \ref{sec:framework}.
Section \ref{sec:experiments_sgmcmc} discusses relevant experiments showcasing the benefits of our proposal. 

\begin{figure}[!h]
    \centering
\includegraphics[width=0.4\textwidth]{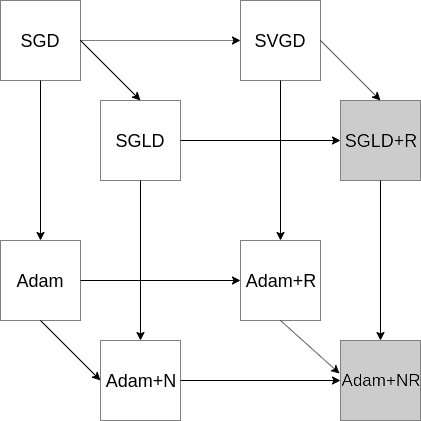}
    \caption{Relationships between samplers. In light gray, our proposed samplers.}\label{fig:diagram}
\end{figure}

\subsection{A motivation for variationally inferred samplers (VIS)}
Bayesian inference and prediction in large, complex models, such as in 
deep neural networks or stochastic processes, remains an elusive problem \parencite{blei2017variational,10.1214/17-BA1082,insua2012bayesian,alquier2020approximate}.
%
%
%
Variational approximations (e.g., automatic differentiation variational inference (ADVI) \parencite{kucukelbir2017automatic}) tend to be biased and 
underestimate uncertainty \parencite{riquelme2018failure}. 
On the other hand, depending on the target distribution,
Markov Chain Monte Carlo (MCMC) \parencite{andrieu2010particle} 
methods, such 
as Hamiltonian Monte Carlo (HMC) \parencite{neal2011mcmc}),   
tend to be exceedingly slow  \parencite{van2018simple} {in large scale settings with large amounts of data points and/or parameters}. For this reason, in recent years, there has been increasing interest in developing more efficient posterior approximations \parencite{nalisnick2016approximate,salimans2015markov,tran2015variational} and inference techniques that aim to be as general and flexible as possible {so that they can be easily used with any probabilistic model} \parencite{wood2014new, ge2018t}.

It is well known that the performance of a sampling method depends
heavily on the parameterization used \parencite{papaspiliopoulos2007general}. This work proposes
a framework to automatically tune the parameters of a 
MCMC sampler {with the aim of adapting the shape of the posterior, thus~boosting  Bayesian inference efficiency. 
 We deal with a case in which the latent variables or parameters are continuous. Our framework can also be regarded as a principled way to enhance the flexibility of  variational posterior approximation in search of an optimally tuned MCMC sampler;} thus the proposed name of our framework is {the variationally inferred sampler} (VIS).

{

The idea of preconditioning the posterior distribution to speed up the mixing time of a MCMC sampler has been explored recently in \parencite{hoffmanneutra,PhysRevLett.121.260601}, where a parameterization was learned before sampling via HMC. Both papers extend seminal work in \parencite{parno2014transport} by learning an efficient and expressive deep, non-linear transformation instead of a polynomial regression. However, they do not account for tuning the parameters of the sampler, as introduced in Section \ref{sec:main}, where a fully, end-to-end differentiable sampling scheme is proposed.

The work presented in \parencite{rezende2015variational} introduced a general
framework for constructing more flexible variational distributions, called normalizing flows. These transformations are one of the main techniques used to improve the flexibility of current variational inference (VI) approaches and have recently pervaded the approximate Bayesian inference
literature with developments such as continuous-time normalizing flows \parencite{chen2018continuoustime} (which extend an initial simple variational posterior with a discretization of Langevin dynamics) or Householder flows for mixtures of Gaussian distributions \parencite{LIU201943}. However, they require a generative adversarial network (GAN) \parencite{goodfellow2014generative} to learn the posterior,
which can be unstable in high-dimensional spaces. We overcome this problem with our novel formulation; moreover, our framework is also compatible with different optimizers, rather than only those derived from Langevin dynamics \parencite{mandt2017stochastic}. Other recent proposals create more flexible variational posteriors based on implicit approaches typically requiring a GAN, as presented in \parencite{huszar2017variational} and including unbiased implicit variational inference (UIVI)
\parencite{pmlr-v89-titsias19a} or semi-implicit variational inference (SIVI) \parencite{yin2018semi}. Our variational approximation is also implicit but uses a sampling algorithm to drive the evolution of the density, combined with a Dirac delta approximation to derive an efficient variational approximation, as reported through extensive experiments in Section \ref{sec:exps_vis}.

Closely related to our framework is the work presented in \parencite{hoffman2017learning}, where a variational autoencoder (VAE) is learned using HMC. We~use a similar compound distribution as the variational approximation, yet 
our approach allows  any stochastic gradient MCMC to be embedded, 
 as well as facilitating the tuning of sampler parameters via gradient descent.
Our work also relates to the recent idea of 
sampler amortization \parencite{feng2017learning}. 
A common problem with these approaches is that they incur in an additional error---the amortization gap \parencite{cremer2018inference}---which we alleviate by evolving a set of particles through a stochastic process in the latent space after learning a good initial distribution,
meaning that 
the initial approximation bias can be significantly reduced.
A recent related article was presented in~\parencite{pmlr-v97-ruiz19a},
which also defined a compound distribution. 
However, our focus is on efficient approximation using the reverse KL 
divergence, 
which allows sampler parameters to be tuned and achieves superior results. Apart from optimizing this kind of divergence, 
the~main point is that we can compute the gradients of sampler parameters 
(Section \ref{sec:tuning}), whereas in~\parencite{pmlr-v97-ruiz19a} the authors only consider a parameterless sampler: 
thus, our framework allows for greater flexibility, helping the user
to tune sampler hyperparameters.
In the Coupled Variational Bayes (CVB)~\parencite{dai2018coupled} approach,
optimization is in the dual space, whereas
we optimize the standard
evidence lower bound (ELBO). Note that even if the optimization was exact, the solutions would coincide, and it is not clear yet what happens in the truncated optimization case,
other than performing empirical experiments on given datasets. We thus feel that there is room for implicit methods that perform optimization in the primal space 
(besides this, they are easier to implement). Moreover,
the previous dual optimization approach requires the use of an additional neural network (see the paper on the Coupled Variational Bayes (CVB) approach
 or \parencite{fang2019implicit}). This adds a large number of parameters and requires another architecture decision. With VIS, we do not need to introduce an auxiliary network, since we perform a ``non-parametric'' approach by back-propagating instead through 
several iterations of SGLD. 
Moreover, the lack of an auxiliary network simplifies the design choices.
}


{ Thus, our contributions include a flexible and consistent variational approximation to the posterior,
embedding an initial variational approximation within a stochastic process;} 
an analysis of its key properties; 
    the provision of several strategies for ELBO
    optimization using the previous 
    approximation; and finally, 
    an illustration of its power through relevant complex examples.


\section{Background}\label{sec:back}

Consider a probabilistic model $p({x}| {z})$ and a prior distribution $p({z})$ where ${x}$ denotes an observation and ${z} = ( z_1, \ldots, z_d) $ an unobserved $d-$dimensional latent variable or parameter, depending on the context. We are interested in performing inference regarding the unobserved variable ${z}$, by approximating its posterior distribution
$$
p({z} | {x}) = \frac{ p({z})p({x}| {z}) }{ \int p({z})p({x}| {z}) d{z} } = \frac{ p({z})p({x}| {z}) }{ p({x}) } = \frac{p({z},{x})}{p({x})}.
$$
Except for reduced classes of distributions like \textit{conjugate priors}, \textcite{raiffa1961applied}, the integral $p({x}) = \int p({z})p({x}| {z}) d{z}$ is analytically intractable; no general explicit expressions of the posterior are available. Thus, several techniques have been proposed to perform approximate posterior inference.

\subsection{Inference as sampling}\label{sec:infassamp}

Hamiltonian Monte Carlo (HMC, \textcite{neal2011mcmc}) is an effective sampling method for models whose probability is point-wise computable and differentiable. 
HMC requires the exact simulation of a certain dynamical system which can be cumbersome in high-dimensional or large data settings. When this is an issue, \textcite{welling2011bayesian} proposed a formulation of a continuous-time Markov process that converges to a target distribution $p({z} | {x})$. It is based on the Euler-Maruyama discretization of Langevin dynamics
\begin{eqnarray}\label{eq:sgld}
{z}_{t+1} \leftarrow {z}_{t} + \epsilon_t \nabla \log p({z}_t,{x})  + \mathcal{N}({0}, 2\epsilon_t I),
\end{eqnarray}
where $\epsilon_t$ is the step size. The previous iteration uses the gradient evaluated at one data point ${x}$, but we can use the full dataset or mini-batches. Several extensions of the original Langevin sampler have been proposed to increase mixing speed, see for instance \parencite{li2016preconditioned,li2016high,li2019communication}.

\textcite{ma2015complete} proposed a general formulation of a continuous-time Markov process that converges to a target distribution $\pi({z}) \propto \exp (-H({z}))$. It is based on the Euler-Maruyama discretization of the generalized Langevin dynamics:
\begin{align}\label{eq:sgmcmc}
\begin{split}
&{z}_{t+1} \leftarrow {z}_{t}  -\epsilon_t \left[ ({D}({z}_t) + {Q}({z}_t)) \nabla H({z}_t) + {\Gamma}({z}_t) \right] + \mathcal{N}({0}, 2\epsilon_t {D}({z}_t)),
\end{split}
\end{align}
with ${D}({z})$ being a diffusion matrix; ${Q}({z})$, a curl matrix; and ${\Gamma}({z})_i = \sum_{j=1}^d \frac{\partial}{\partial {z}_j} ({D}_{ij}({z}) + {Q}_{ij}({z})) $ is a correction term which amends the bias.

To obtain a valid SG-MCMC algorithm, we simply have to choose the dimensionality of ${z}$ (e.g., if we augment the space with auxiliary variables as in HMC), and the matrices ${D}$ and ${Q}$. For instance, the popular Stochastic Gradient Langevin Dynamics (SGLD) algorithm, the first SG-MCMC scheme in (\ref{eq:sgld}), is obtained when ${D} = {I}$ and ${Q} = {0}$. In addition, the Hamiltonian variant can be recovered if we augment the state space with a $d-$dimensional momentum term ${m}$, leading to an augmented latent space $ \bar{{z}} = ({z}, {m})$. Then, we set ${D} = {0}$ and ${Q} = \begin{pmatrix}
{0} & -{I} \\
{I} & {0}
\end{pmatrix}$.

\subsection{Inference as optimization}\label{sec:iasopt}


Variational inference \parencite{kucukelbir2017automatic} tackles the problem of approximating the posterior $p({z} | {x})$ with a tractable parameterized distribution $q_{{\lambda}}({z}|{x})$. The goal is to find parameters ${\lambda}$ so that the variational distribution $q_{{\lambda}}({z}|{x})$ (also referred to as the variational guide or variational approximation) is as close as possible to the actual posterior. Closeness is typically measured through the
Kullback-Leibler 
divergence $KL(q || p)$, which is reformulated into the \emph{evidence lower bound} (ELBO) 
\begin{equation}\label{eq:elbo_bayes}
\mbox{ELBO}(q) = \mathbb{E}_{q_{{\lambda}}({z}|{x})} \left[ \log p({x},{z}) - \log q_{{\lambda}}({z}|{x})\right].
\end{equation}
To allow 
for greater flexibility, typically a deep non-linear model conditioned on observation ${x}$ defines the mean $\mu_{{\lambda}}({x})$ and covariance matrix $\sigma_{{\lambda}}({x})$ of a
Gaussian distribution $q_{{\lambda}}({z}|{x}) \sim \mathcal{N}(\mu_{{\lambda}}({x}), \sigma_{{\lambda}}({x}))$.
Inference is then performed using a gradient-based maximization routine, leading to variational parameter updates
\begin{align*}
{\lambda}_{t+1} = {\lambda}_t +  \epsilon_t \nabla_{{\lambda}} \mbox{ELBO}(q),
\end{align*}
where $\epsilon_t$ is the learning rate. Stochastic gradient descent (SGD) \parencite{hoffman2013stochastic}, or some variant of it, such as Adam \parencite{kingma2014adam}, are used as optimization algorithms.

On the other hand, SVGD \parencite{liu2016stein}, frames posterior sampling as an optimization process, in which a set of $L$ particles $\lbrace {z}_i \rbrace_{i=1}^L$ is evolved iteratively via a velocity field ${z}_{i, t+1} \leftarrow {z}_{i, t} + \epsilon_t \phi({z}_{i,t})$, where $\phi : \mathbb{R}^d \rightarrow \mathbb{R}^d$ is a smooth function characterizing the perturbation of the latent space.
Let $q$ be the particle distribution at iteration $t$ and $q_{\left[ \epsilon \phi \right]}$ the distribution after update ($t+1$). Then, the optimal choice of the velocity field $\phi$ can be framed through the optimization problem $\phi^* = \arg\max_{\phi \in \mathcal{F}} \lbrace -\frac{d}{d\epsilon} \mbox{KL}( q_{\left[ \epsilon \phi \right]} \| p) \rbrace$, with $\phi$ chosen to maximize the decreasing rate on the Kullback-Leibler (KL) divergence between the particle distribution and the target, and $\mathcal{F}$, some proper function space. When $\mathcal{F}$ is a \emph{reproducing kernel Hilbert space} (RKHS), \textcite{liu2016stein} showed that the optimal velocity field leads to
\begin{align}\label{eq:svgd}
\begin{split}
&{z}_{i, t+1} \leftarrow {z}_{i, t} - \epsilon_t \frac{1}{L}  \sum_{j=1}^L \left[ k({z}_{j, t}, {z}_{i, t})\nabla H({z}_{j,t}) + \nabla_{{z}_{j, t}}  k({z}_{j, t}, {z}_{i, t})  \right],
\end{split}
\end{align}
where the RBF kernel $k({z}, {z}') = \exp (-\frac{1}{h}  \| {z} - {z}' \|^2 )$ is typically adopted. Observe that in (\ref{eq:svgd}) the gradient term $\nabla_{{z}_{j, t}}  k({z}_{j, t}, {z}_{i, t})$ acts as a repulsive force that prevents particles from collapsing. 



\subsection{The Fokker-Planck equation}\label{sec:fp}
Consider a stochastic differential equation (SDE) of the form $d{z} = \mu_t({z})dt + \sqrt{2 D_t({z})}dB_t$. The distribution $q_t({z})$ of a population of particles evolving according to the previous SDE from some initial distribution $q_0({z})$ is governed by the Fokker-Planck partial differential equation (PDE) \parencite{risken-fpe-1989},
$$
\frac{\partial}{\partial t} q_t({z}) = -\frac{\partial}{\partial {z}} \left[ \mu_t({z}) q_t({z})\right] + \frac{\partial^2}{\partial {z}^2} \left[ D_t({z})q_t({z})\right].
$$
Deriving the Fokker-Planck equation from the SDE of a potential SG-MCMC sampler is of great interest since we can check whether the target distribution is a stationary solution of the PDE (and thus, the sampler is consistent); and to compare if two a priori different SG-MCMC samplers result in the same trajectories.

\section{Stochastic Gradient MCMC with Repulsive Forces}\label{sec:framework}

We use the framework from \textcite{ma2015complete} in an augmented state space ${z} = \left( {z}_{1}, {z}_{2}, \ldots,{z}_{L}\right) \in \mathbb{R}^{Ld}$ to obtain a valid posterior sampler that runs multiple ($L$) Markov chains with interactions. This version of SG-MCMC is given by the equation
\begin{equation}\label{eq:general}
{z}_{t+1} \leftarrow {z}_t -\epsilon_t \left[ ({{D_K}} + {Q_K}){\nabla} + {\Gamma_K} \right] + {\eta}_t,
\end{equation}
with ${\eta}_t \sim \mathcal{N}({0}, 2\epsilon_t {{D_K}})$.
Now, ${z}_t = \left({z}_{1,t} \ldots  {z}_{L,t} \right)^\top$ is an $Ld$-dimensional vector defined by the concatenation of $L$ particles; ${\nabla} \in \mathbb{R}^{L \times d \times 1}$ so that $({\nabla})_{i,:} = \nabla H({z}_{i,t})$\footnote{Though ${\nabla} \in \mathbb{R}^{L d \times 1}$ to allow multiplication by ${D_K} + {Q_K}$, we reshape it as ${\nabla} \in \mathbb{R}^{L\times d \times 1}$ to better illustrate how it is defined.}; ${D_K} \in \mathbb{R}^{Ld\times Ld}$ is an expansion of the diffusion matrix $D$, accounting for the distance between the particles; ${Q_K} \in \mathbb{R}^{Ld\times Ld}$ is the curl matrix, which is skew-symmetric and might be used if a Hamiltonian variant is adopted; and ${\Gamma_K}$ is the correction term from the framework of \textcite{ma2015complete}. Note that ${D_K}$, ${Q_K}$ and ${\Gamma_K}$ can depend on the state ${z}_t$, but we do not make it explicit to simplify notation.

In matrix form, the update rule (\ref{eq:svgd}) for SVGD can be expressed as
\begin{equation}\label{eq:svgd_mat}
\overline{{z}}_{t+1} \leftarrow \overline{{z}}_t -\frac{\epsilon_t}{L}\left( \overline{{K}} \overline{{\nabla}} + \overline{{\Gamma}} \right)
\end{equation}
where $\overline{{K}} \in \mathbb{R}^{L \times L}$ so that $(\overline{{K}})_{ij} = k({z}_i, {z}_j)$, $\overline{{\nabla}} \in \mathbb{R}^{L\times d}$ and $\overline{{z}}_t \in \mathbb{R}^{L\times d}$. Casting the later matrix as a tensor ${\nabla} \in \mathbb{R}^{L \times d \times 1}$ and the former one as a tensor ${K} \in \mathbb{R}^{(L \times d) \times (L \times d)}$ by broadcasting along the second and fourth axes, we may associate ${K}$ with the SG-MCMC's diffusion matrix ${D}$ over an $Ld-$dimensional space. 

The big matrix ${D_K}$ in Eq. (\ref{eq:general}) is defined as a permuted block-diagonal matrix consisting of $d$ repeated kernel matrices $\overline{{K}}$:
$$
{D_K} = \left[
\begin{array}{cccc}
\overline{{K}} &  &  &  \\
 & \overline{{K}} &  &  \\
 &  & \ddots &   \\
 &  &  & \overline{{K}}  \\
\end{array}
\right] {P},
$$
with ${P}$ being the $Ld \times Ld$ permutation matrix
$$
{P} = \left[
\begin{array}{c|c|c|c}
\begin{matrix}
1 & & &\\
 & & &\\
 & & &\\
  & & &\\
\end{matrix} &\begin{matrix}
 & & & \\
1 & & &\\
 & & & \\
  & & & \\
\end{matrix}  & \ddots & \begin{matrix}
 & & &\\
 & & &\\
 & & &\\
1 & & &\\
\end{matrix} \\
\hline
\begin{matrix}
 &1 & &\\
 & & &\\
 & & &\\
  & & &\\
\end{matrix} &\begin{matrix}
 & & & \\
 & 1& &\\
 & & & \\
  & & & \\
\end{matrix}  & \ddots & \begin{matrix}
 & & &\\
 & & &\\
 & & &\\
 &1 & &\\
\end{matrix} \\
\hline
\ddots &\ddots & \ddots & \ddots \\
\hline
\begin{matrix}
 & & &1\\
 & & &\\
 & & &\\
  & & &\\
\end{matrix} &\begin{matrix}
 & & & \\
 & & &1\\
 & & & \\
  & & & \\
\end{matrix}  & \ddots & \begin{matrix}
 & & &\\
 & & &\\
 & & &\\
& & &1\\
\end{matrix} \\
\end{array}
\right].
$$
The permutation matrix ${P}$ rearranges the block-diagonal kernel matrix to match with the dimension ordering of the state space ${z}_t = \left({z}_{1,t} \ldots  {z}_{L,t} \right)^\top$.
With this convention, ${D_K} {\nabla}$ is equivalent to $\overline{{K}} \overline{{\nabla}}$, only differing in the shape of the resulting matrix. This allows us to frame SVGD plus the noise term as a valid scheme within the SG-MCMC framework of \parencite{ma2015complete}, using the $Ld$-dimensional augmented state space.



From this perspective, \eqref{eq:svgd_mat} can be seen as a special case of \eqref{eq:general} with curl matrix ${Q_K} = {0}$ and no noise term. We refer to this perturbed variant of SVGD as \emph{Parallel SGLD plus repulsion} (SGLD+R):
\begin{equation}\label{eq:psvgd_mat}
{z}_{t+1} \leftarrow {z}_t -\frac{\epsilon_t}{L}\left( {D_K} {\nabla} + {\Gamma_K} \right) + {\eta}_t, \qquad {\eta}_t \sim \mathcal{N}({0}, 2\epsilon_t {D_K}/L).
\end{equation}
Since ${D_K}$ is a definite positive matrix (constructed from the RBF kernel), we may use Theorem 1 from \textcite{ma2015complete} to derive the following:

\begin{proposition}
SGLD+R (or its general form, Eq. \eqref{eq:general}) has $\pi({z}) = \prod_{l=1}^L \pi({z}_l)$ as stationary distribution, and the proposed discretization is asymptotically exact as $\epsilon_t \rightarrow 0$.
\end{proposition}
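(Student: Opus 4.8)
The plan is to invoke Theorem~1 (the ``complete recipe'') of \textcite{ma2015complete}, which guarantees that the Euler--Maruyama discretization
$$z_{t+1} \leftarrow z_t - \epsilon_t\left[(D(z_t)+Q(z_t))\nabla H(z_t) + \Gamma(z_t)\right] + \mathcal{N}(0, 2\epsilon_t D(z_t))$$
admits $\pi(z)\propto\exp(-H(z))$ as stationary distribution and is asymptotically exact as $\epsilon_t\to0$, provided $D(z)\succeq0$, $Q(z)$ is skew-symmetric, and $\Gamma_i = \sum_j \partial_{z_j}(D_{ij}+Q_{ij})$. The whole argument then reduces to exhibiting the right bookkeeping so that \eqref{eq:psvgd_mat} is literally an instance of this recipe on the augmented $Ld$-dimensional space.

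First I would fix the target. Since the desired stationary law is the product $\pi(z)=\prod_{l=1}^L\pi(z_l)$, I set the augmented potential to the separable sum $H(z)=\sum_{l=1}^L H(z_l)$, so that the stacked per-particle gradient tensor $(\nabla)_{i,:}=\nabla H(z_{i,t})$ of \eqref{eq:general} is exactly $\nabla H$, and $\exp(-H(z))\propto\prod_l\pi(z_l)$ as required. Next I would match coefficients: take $D(z) = D_K/L$, $Q(z)=0$, and $\Gamma(z)=\Gamma_K/L$. The noise covariance $2\epsilon_t D_K/L$ in \eqref{eq:psvgd_mat} then coincides with the recipe's $2\epsilon_t D$, and the drift $-\tfrac{\epsilon_t}{L}(D_K\nabla+\Gamma_K)$ coincides with $-\epsilon_t(D\nabla H + \Gamma)$, so the two iterations are identical term by term.

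The two structural hypotheses remain. That $Q=0$ is skew-symmetric is immediate. For positivity, the key observation is that $D_K$ is, up to the reordering encoded by the permutation matrix $P$, the Kronecker product $\overline{K}\otimes I_d$: the permuted block-diagonal construction merely re-expresses $\overline{K}\otimes I_d$ in the particle-major coordinate ordering of $z=(z_1,\ldots,z_L)^\top$. Since the RBF Gram matrix $\overline{K}$ with $(\overline{K})_{ij}=k(z_i,z_j)$ is symmetric positive semidefinite (being a valid kernel matrix) and $I_d\succeq0$, their Kronecker product is symmetric PSD, and relabeling by $P$ preserves this; hence $D_K\succeq0$, and so is $D_K/L$.

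The step I expect to be the main obstacle is verifying that the $\Gamma_K$ appearing in the update is exactly the divergence $\Gamma_i=\sum_j\partial_{z_j}D_{ij}$ demanded by the recipe, rather than merely ``a correction term''. Here I would exploit the fact that, because $D_K$ depends on $z$ only through the kernel entries, this divergence reproduces precisely the SVGD repulsion $\nabla_{z_j}k(z_j,z_i)$ of \eqref{eq:svgd}; confirming this identity through the block/permutation structure --- and correctly propagating the $1/L$ factors through both the drift and the divergence --- is the delicate bookkeeping on which correctness hinges. Once this is checked, stationarity follows directly from the recipe, and asymptotic exactness as $\epsilon_t\to0$ is the standard consequence that the discretization's invariant law converges to that of the continuous SDE, which by the same theorem (equivalently, by checking that $\pi$ annihilates the Fokker--Planck operator of Section~\ref{sec:fp}) is the product target.
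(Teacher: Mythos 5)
Your proposal is correct and takes essentially the same route as the paper, which proves the proposition by directly invoking Theorem~1 of \textcite{ma2015complete} on the augmented $Ld$-dimensional space, with $Q_K = 0$ skew-symmetric and positive semidefiniteness of $D_K$ inherited from the RBF Gram matrix $\overline{K}$. Your extra bookkeeping --- the Kronecker-product reading of $D_K$ up to the permutation $P$, the $1/L$ rescaling, and the check that $\Gamma_K$ equals the recipe's divergence term --- merely makes explicit what the paper asserts by construction, since there $\Gamma_K$ is defined as the correction term $\frac{\partial}{\partial z} D_K$ of that framework (a fact the paper uses again in its alternative Fokker--Planck proof).
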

\noindent Having shown that SVGD plus a noise term can be framed as an SG-MCMC method, we now propose a particular sampler. 
Algorithm \ref{alg:alg1} shows how to set it up. The step sizes $\epsilon_t$ decrease to $0$ using the \textcite{robbins} conditions, given by $\sum_{t=1}^\infty \epsilon_t = \infty, \sum_{t=1}^\infty \epsilon_t^2 < \infty$. However, in practical situations we can consider a small and constant step size, see Section \ref{sec:experiments_sgmcmc}. 

\begin{algorithm}[!h] %
\caption{Bayesian Inference via SGLD+R}  
\label{alg:alg1}
\begin{algorithmic}[1]
\State {\bf Input:} A target distribution with density function $\pi({z}) \propto \exp (-H({z}))$; a prior distribution $p({z})$. 
\State {\bf Output:} A set of particles $\{{z}_i\}_{i=1}^{ML}$ that approximates the target distribution.  
\State Sample initial set of particles from prior: ${z}_1^0, {z}_2^0, \ldots {z}_L^0 \sim p({z})$.
\For{each iteration $t$}
\State 
\begin{align} \label{eq:psvgd_alg}
\begin{split}
&{z}_i^{t+1}  \gets  {z}_i^t  - \epsilon_t \frac{1}{L}\sum_{l=1}^L\big[  k({z}_l^t, {z}_i^t)  \nabla_{{z}_l^t} H({z}_l^t) + \nabla_{{z}_l^t} k({z}_l^t, {z}_i^t)\big] + {\eta}_i^t
\end{split}
\end{align}
\State where ${\eta}_i^t$ is the noise for particle $i$ defined as in Eq (\ref{eq:psvgd_mat}).
\State After  burn-in period, start collecting particles: $ \{{z}_i\}_{i=1}^{NL} \gets \{{z}_i\}_{i=1}^{(N-1)L} \cup \{ {z}_1^{t+1}, \ldots,  {z}_L^{t+1} \} $ 
\EndFor
\end{algorithmic}
\end{algorithm}

\paragraph{Complexity}  Our proposed method is amenable to sub-sampling, as the mini-batch setting from SG-MCMC can be adopted: the main computational bottleneck lies in the evaluation of the gradient $\nabla_{{z}} H(z)$, which can be troublesome in a big data setting as $- H(z) = \log p(z) + \sum_{i=1}^N \log p({x}_i | {z}) + \mbox{constant terms wrt} z$.
We may then  approximate the true gradient with an unbiased estimator taken along a minibatch of datapoints $\Omega \subset \lbrace 1, 2, \ldots, N \rbrace$ in the usual way
$$
-\nabla_{{z}} H(z) \approx \nabla_{{z}} \log p({z}) + \frac{N}{| \Omega |} \sum_{i \in \Omega} \nabla_{{z}} \log p({x}_i | {z}).
$$
\noindent As with the original SVGD algorithm, the complexity of the update rule (\ref{eq:svgd_mat}) is $\mathcal{O}(L^2)$, with $L$ being the number of particles, since we need to evaluate kernels of signature $k(z_i, z_j)$. Using current state-of-the-art automatic differentiation frameworks, such as \texttt{jax}, \textcite{jax2018github}, we can straightforwardly compile kernels using \emph{just-in-time} compilation, \textcite{frostig2018compiling}, at the cost of a negligible overhead compared to parallel SGLD for moderate values of $L \sim 50$ particles.

If many more particles are  used, one could approximate the expectation in (\ref{eq:svgd_mat}) using subsampling at each iteration, as proposed by the authors of SVGD, or by using more sophisticated approaches from the molecular dynamics literature, such as the \textcite{barnes1986hierarchical} algorithm, to arrive at an efficient $\mathcal{O}(L \log L)$ computational burden at a negligible approximation error. 

\subsection{Relationship with SVGD}\label{sec:relationship}

We study in detail the behaviour of SVGD and SGLD+R. To do so, we first derive the Fokker-Planck equation for the SGLD+R sampler.
\begin{proposition}
The distribution $q_t({z})$ of a population of particles evolving according to SGLD+R is governed by the PDE
$$
\frac{\partial}{\partial t} q_t({z}) = -\frac{\partial}{\partial {z}} \left[ (D_K \nabla \log \pi({z}) + \Gamma_K) q_t({z})\right] + \frac{\partial^2}{\partial {z}^2} \left[ D_K q_t({z})  \right].
$$
The target distribution $\pi({z})$ is a stationary solution of the previous PDE.
\end{proposition}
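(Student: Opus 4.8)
The plan is to treat SGLD+R as the Euler--Maruyama discretization of a stochastic differential equation of the \textcite{ma2015complete} type and then read off its Fokker--Planck equation directly from Section \ref{sec:fp}. Concretely, I would first observe that the update \eqref{eq:psvgd_mat} is the discretization, with effective time step $\delta t = \epsilon_t / L$, of the SDE
\[
d{z} = \left( D_K \nabla \log \pi({z}) + \Gamma_K \right) dt + \sqrt{2 D_K}\, dW(t),
\]
where I have used $\pi({z}) \propto \exp(-H({z}))$, so that $\nabla \log \pi = -\nabla H$, together with the fact that SGLD+R sets $Q_K = {0}$. The noise covariance $\mathcal{N}({0}, 2\epsilon_t D_K / L)$ is exactly what Euler--Maruyama prescribes for a diffusion matrix $D_K$ at this time step, and the $1/L$ factor merely rescales time, leaving the stationary law unchanged. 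Matching the drift $\mu_t({z}) = D_K \nabla \log \pi({z}) + \Gamma_K$ and diffusion $D_t({z}) = D_K$ to the Fokker--Planck template of Section \ref{sec:fp} then yields the stated PDE immediately.

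For the second claim, I would substitute $q_t({z}) = \pi({z})$ into the right-hand side and show it vanishes. Writing the operator component-wise over the $Ld$-dimensional state with index summation, the diffusion term expands via the product rule as
\[
\sum_{i}\frac{\partial}{\partial z_i}\!\left[\sum_{j}\Big((\partial_j D_{K,ij})\pi + D_{K,ij}\,\partial_j \pi\Big)\right].
\]
Using $\partial_j \pi = \pi\,\partial_j \log \pi$ and the definition $\Gamma_{K,i} = \sum_j \partial_j D_{K,ij}$, this collapses to $\sum_i \partial_i\big[(\Gamma_{K,i} + \sum_j D_{K,ij}\partial_j\log\pi)\,\pi\big]$, which is precisely the divergence of the drift $\mu$ multiplied by $\pi$. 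Hence the diffusion term cancels the advection term $-\partial_z[\mu\,\pi]$ and $\partial_t \pi = 0$, so $\pi$ is stationary. Because the single-chain target factorizes as $\pi({z}) = \prod_{l=1}^L \pi({z}_l)$ and $D_K$ is block structured through the kernel matrix, this identity holds over the full augmented space.

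The routine part is the Fokker--Planck read-off; the delicate part is the second step, where the explicit correction $\Gamma_K$ must be shown to exactly cancel the spurious drift generated by the \emph{state dependence} of $D_K$ (the RBF kernel entries themselves depend on ${z}$), so careful index bookkeeping in $\mathbb{R}^{Ld}$ is essential. I would also remark that this recovers, via the PDE route, the stationarity already asserted in the preceding proposition through Theorem 1 of \textcite{ma2015complete}; the general form \eqref{eq:general} with $Q_K \neq {0}$ follows identically once one notes that the additional curl contribution $\sum_{ij}\partial_i\partial_j[Q_{K,ij}\pi]$ vanishes by the skew-symmetry of $Q_K$.
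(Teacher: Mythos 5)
Your proposal is correct and follows essentially the same route as the paper: read off the Fokker--Planck equation by matching the SGLD+R update to the drift $D_K \nabla \log \pi({z}) + \Gamma_K$ and diffusion $D_K$, then verify stationarity of $\pi$ by expanding with the product rule and invoking the definition $\Gamma_K = \frac{\partial}{\partial {z}} D_K$ to cancel the advection term against the diffusion term. The only difference is presentational --- you expand the diffusion term in index notation and show it equals the divergence of the drift times $\pi$, whereas the paper expands both terms separately and observes they coincide --- and your added remarks (the $1/L$ time rescaling and the vanishing of the $Q_K$ contribution by skew-symmetry) are correct but not part of the paper's argument.
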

\begin{proof}
The first part is a straightforward application of the Fokker-Planck equation from Section \ref{sec:fp}. For the last part, we need to show that
\begin{align*}
\frac{\partial}{\partial t} q_t({z}) = 0 = &-\frac{\partial}{\partial {z}} \left[ ({D_K} \nabla \log \pi({z}) + {\Gamma_K}) \pi({z})\right] +  \frac{\partial^2}{\partial {z}^2} \left[ {D_K} \pi({z})  \right].
\end{align*}
To see so, we expand each term in the rhs:
\begin{align*}
& \frac{\partial}{\partial {z}} \left[ ({D_K} \nabla \log \pi({z}) + {\Gamma_K}) \pi({z})\right] = \\
&= \pi({z})\frac{\partial}{\partial {z}} \left[ {D_K} \nabla \log \pi({z}) + {\Gamma_K} \right] +  \left[ {D_K} \nabla \log \pi({z}) + {\Gamma_K} \right] \frac{\partial}{\partial {z}} \pi({z}) = \\
&= \pi({z})\frac{\partial}{\partial {z}}{D_K} \nabla \log \pi({z})+ \pi({z}){D_K} \nabla^2 \log \pi({z}) + \pi({z}) \frac{\partial}{\partial {z}} {\Gamma_K} + {D_K} \nabla \log \pi({z}) \nabla \pi({z})  + {\Gamma_K} \nabla \pi({z}) = \\
&= {\Gamma_K} \nabla \pi({z}) + \pi({z})\frac{\partial}{\partial {z}}{D_K}\nabla \log \pi({z}) +  
\pi({z}) \frac{\partial}{\partial {z}} {\Gamma_K} +
{D_K}(\pi({z})\nabla^2 \log \pi({z}) + \nabla \log \pi({z}) \nabla \pi({z})).
\end{align*}
The other term expands to
\begin{align*}
&\frac{\partial^2}{\partial {z}^2} \left[ {D_K} \pi({z})  \right] = \\
&= \frac{\partial}{\partial {z}} \left[ \frac{\partial}{\partial {z}} {D_K} \pi({z}) + {D_K} \nabla \pi({z}) \right] = \\
&= \frac{\partial^2}{\partial {z}^2}{D_K} \pi({z}) + \pi({z}) \nabla \log \pi({z}) \frac{\partial}{\partial {z}}{D_K} + \frac{\partial}{\partial {z}}  {D_K} \nabla \pi({z}) + {D_K} \nabla^2 \pi({z}) = \\
&= \frac{\partial}{\partial {z}}  {D_K} \nabla \pi({z}) +
\pi({z}) \nabla \log \pi({z}) \frac{\partial}{\partial {z}}{D_K} + \frac{\partial^2}{\partial {z}^2}{D_K} \pi({z})+ 
{D_K}(\pi({z})\nabla^2 \log \pi({z}) + \nabla \log \pi({z}) \nabla \pi({z})).
\end{align*}
Taking into account that by the definition of the correction term, $\Gamma_K = \frac{\partial}{\partial {z}} D_K$, the previous expansions are equal so they cancel each other in the rhs of the PDE.
\end{proof}
This last result is an alternative proof of our Proposition 1, without having to resort to the framework of \textcite{ma2015complete} as was the case there. It is of independent interest for us here, since we can establish a complementary result for the case of SVGD as follows.
\begin{proposition}
The distribution $q_t({z})$ of a population of particles evolving according to SVGD is governed by
$$
\frac{\partial}{\partial t} q_t({z}) = -\frac{\partial}{\partial {z}} \left[ (D_K \nabla \log \pi({z}) + \Gamma_K) q_t({z})\right] .
$$
In general, the target distribution $\pi({z})$ is not a stationary solution of the previous PDE in general.
\end{proposition}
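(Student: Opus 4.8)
The plan is to recognize that SVGD is exactly the deterministic (noise-free) counterpart of the SGLD+R dynamics of the previous proposition, and then to reuse the bulk of that proof. First I would read off the governing SDE: comparing the SVGD update \eqref{eq:svgd_mat} with the SGLD+R update \eqref{eq:psvgd_mat}, the two drift vectors coincide while SVGD simply drops the Gaussian noise term ${\eta}_t$. Hence the associated stochastic process is $d{z} = \mu_t({z})\,dt + \sqrt{2 D_t({z})}\,dB_t$ with drift $\mu_t({z}) = D_K \nabla \log \pi({z}) + \Gamma_K$ (using $\nabla H = -\nabla \log \pi$) and vanishing diffusion, $D_t({z}) \equiv 0$. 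Substituting $D_t \equiv 0$ directly into the Fokker--Planck equation of Section \ref{sec:fp} annihilates the second-order term and leaves only the first-order transport (continuity) term, which is precisely the claimed PDE. This half is routine.

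For the stationarity claim I would substitute $q_t = \pi$ into the SVGD PDE and show the right-hand side does not vanish. The key observation is that the drift appearing here is identical to the one in Proposition 2, so I can recycle the algebraic identity established in that proof. There, using $\Gamma_K = \frac{\partial}{\partial {z}} D_K$, it was shown that the drift and diffusion contributions cancel, which is equivalent to the identity
$$
\frac{\partial}{\partial {z}}\left[(D_K \nabla \log \pi({z}) + \Gamma_K)\,\pi({z})\right] = \frac{\partial^2}{\partial {z}^2}\left[D_K\, \pi({z})\right].
$$
Evaluating the SVGD PDE at $q_t = \pi$ therefore gives
$$
\frac{\partial}{\partial t} q_t({z})\Big|_{q_t = \pi} = -\frac{\partial}{\partial {z}}\left[(D_K \nabla \log \pi({z}) + \Gamma_K)\,\pi({z})\right] = -\frac{\partial^2}{\partial {z}^2}\left[D_K\, \pi({z})\right].
$$
In other words, the very quantity that SGLD+R cancels by means of its explicit diffusion term is exactly what is left unbalanced under the deterministic SVGD flow.

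It then remains to argue that $\frac{\partial^2}{\partial {z}^2}[D_K\, \pi({z})]$ is generically nonzero. Since $D_K$ is the positive-definite, state-dependent matrix built from the (non-constant) RBF kernel and $\pi$ is an arbitrary smooth target, this second derivative does not vanish except in degenerate situations. To make the phrase \emph{in general} precise I would exhibit one explicit witness --- for example a single-particle, one-dimensional standard Gaussian target, where the term reduces to an elementary nonzero expression --- which already refutes stationarity for that instance and hence establishes the claim. I expect this final non-vanishing step to be the only genuine obstacle: deriving the PDE and evaluating at $\pi$ are mechanical once Proposition 2 is in hand, whereas characterizing precisely when the leftover diffusion term vanishes, and confirming that these cases are non-generic, is where the actual content lies.
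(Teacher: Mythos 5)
Your proposal is correct and follows essentially the same route as the paper: the PDE is obtained by dropping the diffusion term in the Fokker--Planck equation, and non-stationarity of $\pi({z})$ is traced to the now-absent second-order term $\frac{\partial^2}{\partial {z}^2}\left[ D_K\, q_t({z})\right]$. Your explicit evaluation of the residual as $-\frac{\partial^2}{\partial {z}^2}\left[ D_K\, \pi({z})\right]$ via the identity from Proposition 2, together with the suggestion of a concrete Gaussian witness, merely makes precise the ``in general'' claim that the paper's proof states without elaboration.
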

\begin{proof}
As before, the first part is a straightforward application of the Fokker-Planck equation from Section \ref{sec:fp}. For the last part, note that the difference with Proposition 2 is that the term $\frac{\partial^2}{\partial {z}^2} \left[ D_K q_t({z})  \right]$ is absent now in the PDE, which prevents $\pi({z})$ from being a stationary solution in general.
\end{proof}
\noindent The term $\dfrac{\partial^2}{\partial {z}^2} \left[ D_K q_t({z})  \right]$ encourages the entropy in the distribution $q_t({z})$. By ignoring it, the SVGD flow achieves stationary solutions that underestimate the variance of the target distribution. On the other hand, SGLD+R, performs a correction, leading to the desired target distribution. The next example highlights this fact in a relatively simple setting.

\paragraph{Example.} Consider a standard bi-dimensional Gaussian target, $\pi({z}) \sim \mathcal{N}(0, I)$. The initial distribution of particles is $p({z}) = q_0({z}) \sim \mathcal{N}([3,3], \mbox{diag}([0.25, 0.25]))$. We let both samplers run for $T = 200$ iterations using $L = 6$ particles, and plot their trajectories in Figure \ref{fig:comp}. Note that since SGLD+R is a valid sampler it explores a greater region of the target distribution, in comparison with SVGD, which underestimates the extension of the actual target. This phenomenon was predicted by Propositions 2 and 3. We also attach a table reporting estimates of the target mean, $\mu$, and marginal standard deviations, $\sigma_x$ and $\sigma_y$, respectively. Notice how SGLD+R estimates are closer to the ground truth values for the standard Gaussian target in this example.

\begin{figure}[!ht]
    \centering
    \includegraphics[width=7cm]{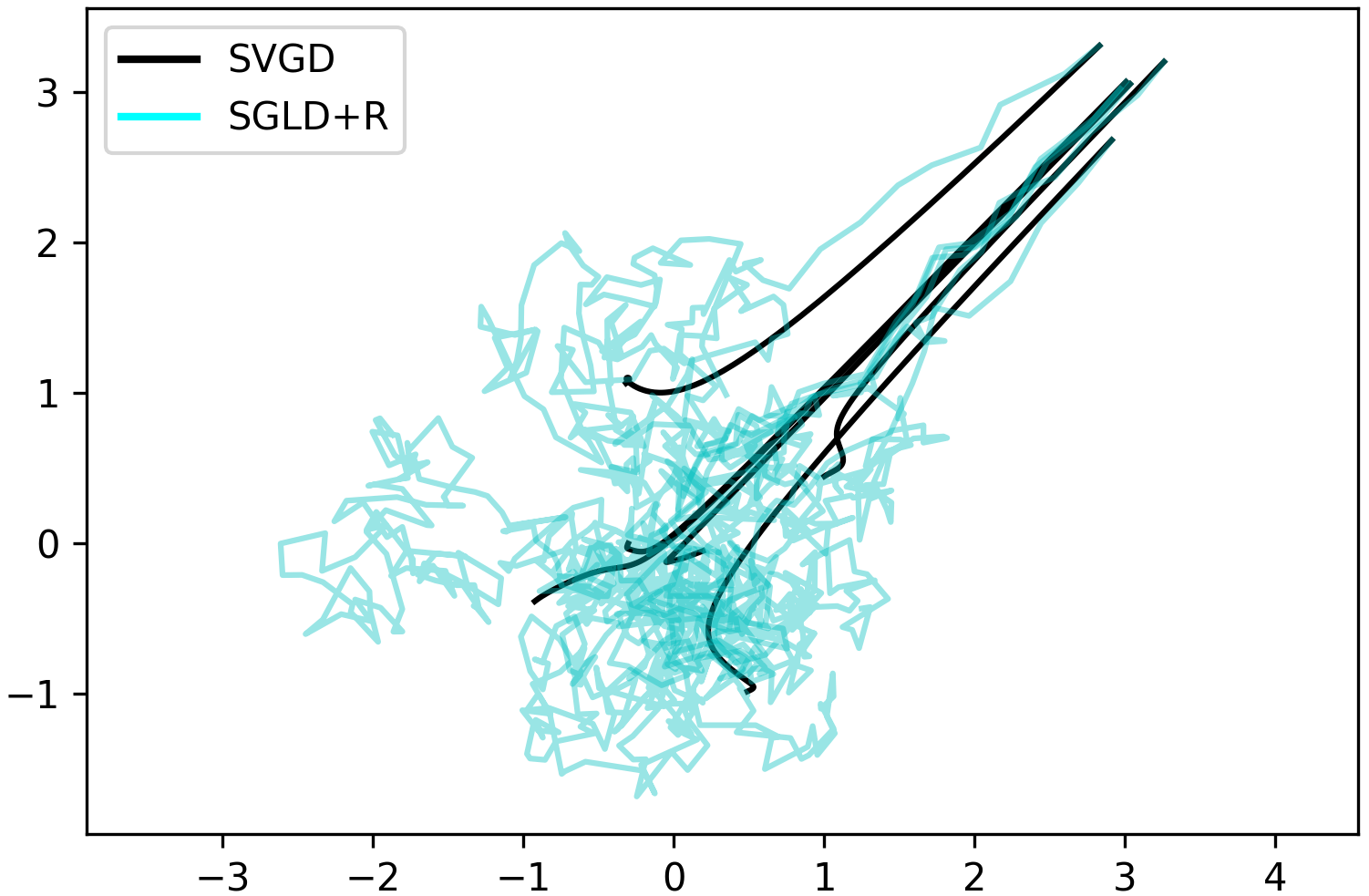}
    \qquad
    \begin{tabular}[b]{cccc}\hline
      \textbf{Estimates} & \textbf{SVGD} & \textbf{SGLD+R} & \textbf{Target} \\ \hline
      $\mu$ &  $0.18$ &$\textbf{0.08}$  & 0\\
      $\sigma_{x}$ & $0.70$ & $\textbf{0.90}$ & 1\\
      $\sigma_{y}$  & $0.74$ & $\textbf{0.87}$ & 1\\
    \hline
    \end{tabular}
    \captionlistentry[table]{A table beside a figure}
    \caption{Trajectories of the compared samplers. The table depicts estimates of quantities of interest for the standard Gaussian target.}
    \label{fig:comp}
  \end{figure}

\subsection{Momentum-based extensions of SG-MCMC samplers with repulsion}\label{sec:momentum}

SGLD+R can be seen as an extension of SGD to incorporate repulsion between particles and the noise term. It is possible to adapt recent developments from the stochastic optimization literature such as Adam, \textcite{kingma2014adam}, and propose their equivalent samplers with repulsion.

\paragraph{Momentum} This extension of SGD can be cast as an analogy to the \emph{momentum} in physics, keeping track of previous gradient to prevent oscillations which could slow down learning, \textcite{qian1999momentum}. We can frame SGD with momentum as
\begin{align*}
    {z}_{t+1} &= {z}_t - \epsilon_t {m}_t \\
    {m}_{t+1} &= {m}_t - \epsilon_t \nabla \log p({z}_t),
\end{align*}
where the ${m}_t$ are auxiliary variables.
The previous learning rule can be adapted to our framework by considering an augmented space $\bar{{z}} = ({z}, {m})$. Then, assuming the distribution of ${m}_t$ to be a standard Gaussian, the gradient of the log-density is equal to $-{m}_t$. Next, we set ${D} = {0}$ and ${Q} = \begin{pmatrix}
{0} & -{I} \\
{I} & {0}
\end{pmatrix}$ in (\ref{eq:sgmcmc}), arriving at the HMC sampler discussed in Section \ref{sec:infassamp}. 

From this point, we can further augment the latent space, considering $L$ particles $(\bar{z}_1, \ldots, \bar{z}_L)$, to arrive at the SGDm+R sampler. If we rearrange the latent space as $({z}_1, \ldots, {z}_L, {m}_1, \ldots, {m}_L)$, we may consider as ${Q_K}$ matrix the following one,
$$
{Q_K} = \left[
\begin{array}{c|c}
\begin{matrix}
{0} \\
\end{matrix}  & \begin{matrix}
 -\textbf{K} \\
\end{matrix} \\
\hline
\begin{matrix}
\textbf{K}  \\
\end{matrix} & \begin{matrix}
 {0} \\
\end{matrix}
\end{array}
\right],
$$
where $\textbf{K} = \begin{bmatrix}
k({z}_1, {z}_1) & \ldots & k({z}_1, {z}_L) \\
\ldots & \ldots & \ldots \\
k({z}_L, {z}_1) & \ldots & k({z}_L, {z}_L) \\
\end{bmatrix}$ is the kernel matrix from SVGD. It is straightforward to see that ${Q_K}$ is skew-symmetric. Thus we can apply again Proposition 1 to show that SGDm+R is a valid SG-MCMC sampler. The scheme is described in Algorithm \ref{alg:alg2}.

\begin{algorithm}[ht] %
\caption{Bayesian Inference via SGDm+R}  
\label{alg:alg2}
\begin{algorithmic}[1]
\State {\bf Input:} A target distribution with density function $\pi({z}) \propto \exp (-H({z}))$ and priors $p({z})$ and $p({m})$.
\State {\bf Output:} A set of particles $\{{z}_i\}_{i=1}^{ML}$ that approximates the target distribution.  
\State Sample initial set of particles from prior: ${z}_1^0, {z}_2^0, \ldots {z}_L^0 \sim p({z})$.
\State Sample initial set of moments from prior: ${m}_1^0, {m}_2^0, \ldots {m}_L^0 \sim p({m})$.
\For{each iteration $t$}
\State 
\begin{align*} 
{z}_i^{t+1}  &\gets  {z}_i^t - \epsilon_t \frac{1}{L}\sum_{l=1}^L\big[  k({z}_l^t, {z}_i^t)  {m}_l^t + \nabla_{{z}_l^t} k({z}_l^t, {z}_i^t)\big] \\
{m}_i^{t+1}  &\gets  {m}_i^t - \epsilon_t \frac{1}{L}\sum_{l=1}^L\big[  k({z}_l^t, {z}_i^t)  \nabla_{{z}_l^t} H({z}_l^t) + \nabla_{{z}_l^t} k({z}_l^t, {z}_i^t)\big] 
\end{align*}
\State After a burn-in period, start collecting particles: $ \{{z}_i\}_{i=1}^{NL} \gets \{{z}_i\}_{i=1}^{(N-1)L} \cup \{ {z}_1^{t+1}, \ldots,  {z}_L^{t+1} \} $ 
\EndFor
\end{algorithmic}
\end{algorithm}
\noindent Next, we show that a similar augmentation can be used to lift more complex optimization schemes to SG-MCMC samplers, one of our contributions.

\paragraph{Adam} The Adam stochastic optimization algorithm has become a \emph{de facto} scheme for optimizing complex non-linear models. In addition to keeping an estimate of the average of past gradients, as in momentum, Adam also keeps track of its variances.
To frame this method in our setting, note that the averages for the gradient can be expressed as
$$
{m}_t = \beta_1 {m}_{t-1} + (1 - \beta_1) \nabla \log p ({z}_t),
$$
where $\beta_1, \beta_2 \in (0, 1)$ are hyperparameters; and, similarly, for the gradient variances,
$$
{v}_t = \beta_2 {v}_{t-1} + (1 - \beta_2) \mbox{diag}(\nabla \log p ({z}_t)_i^2).
$$
Then, the latent state ${z}$ evolves according to
$$
{z}_{t+1} = {z}_t - \epsilon_t {m}_t / \sqrt{{v}_t}.
$$
To frame it in our scheme, with the benefits of interaction between particles,  consider an augmented space as in the momentum case, $({z}_1, \ldots, {z}_L, {m}_1, \ldots, {m}_L)$, and adopt ${Q_K}$ as before. The difference is that the log-density is now given by $H({z}, {m}) = \prod_{l=1} ^L \left[ \log p({z}_l) + {m}_l^{'} {M_l}^{-1} {m}_l \right]$, where ${M_l}$ is the mass matrix for chain $l$, in our case given by
$$
{M_l} = 
\begin{bmatrix}
\sqrt{v_{1,l}} & & \\
 & \ddots & \\
  & & \sqrt{v_{d,l}} 
\end{bmatrix}.
$$
Since $\nabla_{{m}_l} H({z},{m}) = {m}_l / \sqrt{{v}_l}$, we achieve the same effect of the original Adam, but on a per-chain basis. We call this sampler \emph{Adam plus noise and repulsion}, Adam+NR.

\subsection{Experiments}\label{sec:experiments_sgmcmc}

This Section describes the experiments developed to empirically test the proposed scheme. The simple example in Section \ref{sec:relationship} compared SVGD and SGLD+R. Here, we focus on confronting SGLD+R with the non-repulsive variant. First, we deal with two synthetic distributions, which offer a moderate account of complexity in the form of multimodality. In our second group of experiments, we explore a more challenging setting, testing a deep Bayesian model over several benchmark real data sets. 

Code for the different samplers is open sourced at \url{https://github.com/vicgalle/sgmcmc-force}. We rely on the library \texttt{jax} \parencite{jax2018github} as the main package, since it provides convenient automatic differentiation features with \emph{just-in-time} compilation, which is extremely useful in our case for the efficient implementation of SG-MCMC transition kernels.

\paragraph{Synthetic distributions.} 
The goal of this experiment is to see how well the samples generated through our framework approximate some quantities of interest, which can be analytically computed since the distributions are known. We thus test our proposed scheme with:

\begin{itemize}
\setlength\itemsep{-0.2em}
\item \textbf{Mixture of Exponentials (MoE)}. Two exponential distributions with different scale parameters $\lambda_1 = 1.5, \lambda_2=0.5$ and mixture proportions $\pi_1 = 1/3, \pi_2 = 2/3$. The pdf is
$$ 
p(z) = \sum_{i=1}^2 \pi_{i}\lambda_i \exp(-\lambda_i z).
$$
The exact value of the first and second moments can be computed using the change of variables formula
\begin{equation*}
\mathbb{E} \left[z^n \right] = \sum_{i=1}^2 \pi_{i}\frac{n!}{\lambda_i^n},
\end{equation*}
with $n\in \mathbb{N}$. 
Since $z>0$, to use the proposed scheme, we reparameterize using the $\log$ function. The pdf of  $y = \log(z)$ can be computed using
\begin{equation*}
p(y) = p(\log^{-1}(y))\left|\dfrac{\partial}{\partial y} \log^{-1}(y))\right|.
\end{equation*}
\item \textbf{Mixture of 2D Gaussians (MoG)}. A grid of $3 \times 3$ equally distributed isotropic 2D Gaussians, see Figure \ref{fig:mog}(d) for its density plot.  We set $\Sigma = \mbox{diag} (0.1, 0.1)$ and place the nine Gaussians centered at the following points:
\begin{align*}
\lbrace (-2,-2), (-2, 0), (-2, 2), (0, -2), \\(0, 0), (0, 2), (2, -2), (2, 0), (2, 2)  \rbrace.
\end{align*}
\end{itemize}
We compare two sampling methods, SGLD with $L$ parallel chains and our proposed scheme, SGLD+R. Note that the main difference between these two sampling algorithms is that for the former ${D_K} = {I}$, whereas the latter accounts for repulsion between particles and, therefore, $D_K$ is as  in Eq. (\ref{eq:psvgd_mat}). Tables \ref{tab:ess} and \ref{tab:ess2} report the effective sampling size metrics \parencite{kass1998markov} for each method using $L=10$ particles. Note that while ESS/s are similar, the repulsive forces in SGLD+R make for a more efficient exploration, resulting in much lower estimation errors. Figures \ref{fig:moe} and \ref{fig:mog} confirm this fact. In addition, even when increasing the number of particles $L$, SGLD+R achieves lower errors than SGLD (see Fig. \ref{fig:moe100}).

\begin{table}[H]
\centering{\small
\scalebox{0.95}{
\begin{tabular}{l|cc|cc}
\hline
& \multicolumn{2} {c} {ESS} & \multicolumn{2} {|c}{ESS/s}  \\
\textbf{Distribution} & {\bfseries SGLD} & {\bfseries SGLD+R} & {\bfseries SGLD}& {\bfseries SGLD+R}  \\
\hline
MoE& $ 44.3 $ & $ \pmb{59.1} $ & $ 51.5 $ & $ \pmb{61.0} $ \\
MoG& $151.3$ &  $ \pmb{169.5}$ &  $\pmb{36.3}$ &  $32.5$  \\
\hline
\end{tabular}
}
}
\caption{Effective sample size results for the two synthetic distributions task}\label{tab:ess}
\end{table}

\begin{table}[H]
\centering{\small
\scalebox{0.95}{
\begin{tabular}{l|cc}
\hline
& \multicolumn{2} {|c}{Error of $\mathbb{E}\left[ X \right]$} \\
\textbf{Distribution} & {\bfseries SGLD}& {\bfseries SGLD+R}  \\
\hline
MoE&  $0.39$  & $\pmb{0.14}$\\
MoG&  $1.42$ & $ \pmb{1.19} $ \\
\hline
\end{tabular}
}
}
\caption{Error results for the two synthetic distributions task}\label{tab:ess2}
\end{table}

For the computation of the error of $\mathbb{E}\left[ X \right]$ in Table \ref{tab:ess}, we sample for 500 iterations after discarding the first 500 iterations as burn-in, and we collect samples every 10 iterations to reduce correlation between samples. For the MoE case, we used 10 particles, whereas for the MoG task, we used 20 particles given the bigger number of modes.

\begin{figure}[ht]
\begin{center}
\minipage{0.45\textwidth}
\hspace{-0.5em}
  \includegraphics[width=\linewidth]{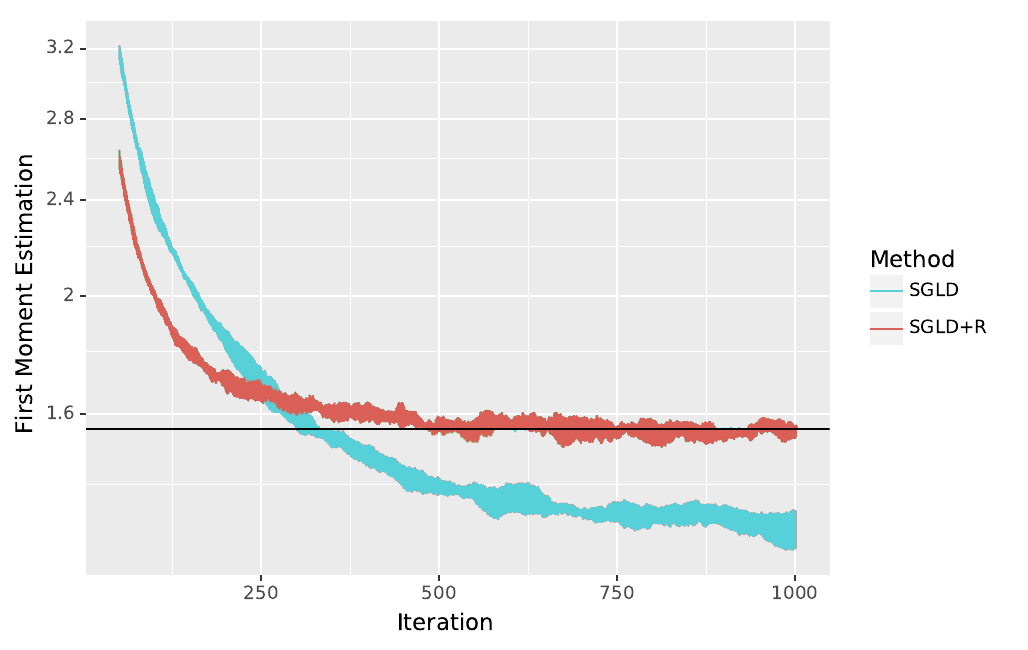}
\endminipage\hfill
\minipage{0.45\textwidth}
  \includegraphics[width=\linewidth]{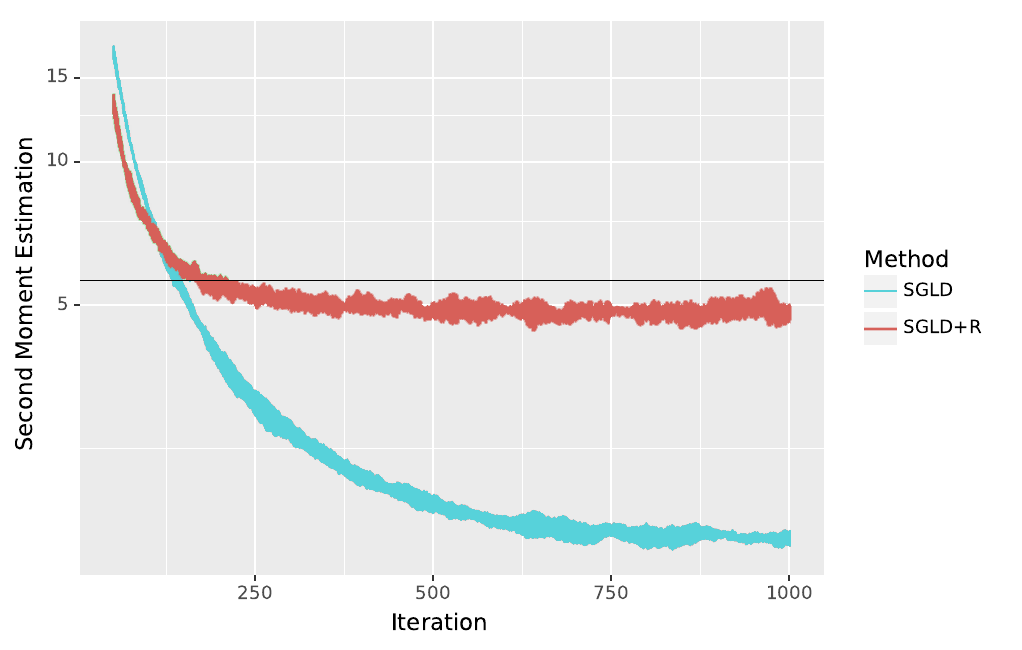}
\endminipage
\end{center}
\caption{Evolution of estimation, MoE experiment. Curves plotted for 5 simulations. 10 particles used at each simulation. Black line depicts exact value to be estimated. Left: Estimation of $\mathbb{E}\left[X\right]$. Right: Estimation of $\mathbb{E}\left[X^2\right]$.} \label{fig:moe}
\end{figure}

\begin{figure}[ht]
\begin{center}
\minipage{0.23\textwidth}
\hspace{-0.5em}
  \includegraphics[width=\linewidth]{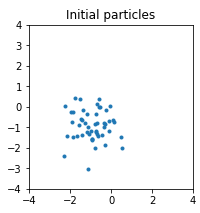}
\endminipage\hfill
\minipage{0.23\textwidth}
  \includegraphics[width=\linewidth]{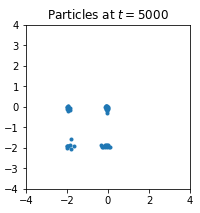}
\endminipage\hfill
\minipage{0.23\textwidth}
  \includegraphics[width=\linewidth]{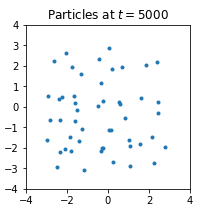}
\endminipage\hfill
\minipage{0.23\textwidth}
  \includegraphics[width=\linewidth]{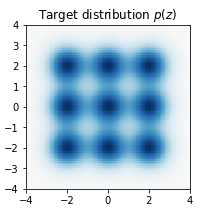}\label{fig:3x3gauss}
\endminipage
\end{center}
\caption{Evolution of  particles during the MoG experiment. (a) Prior particles. (b) SGLD dyn. (c) SGLD+R dyn. (d) MoG density.}\label{fig:mog}
\end{figure}

\begin{figure}[ht]
\begin{center}
\minipage{0.45\textwidth}
\hspace{-0.5em}
  \includegraphics[width=\linewidth]{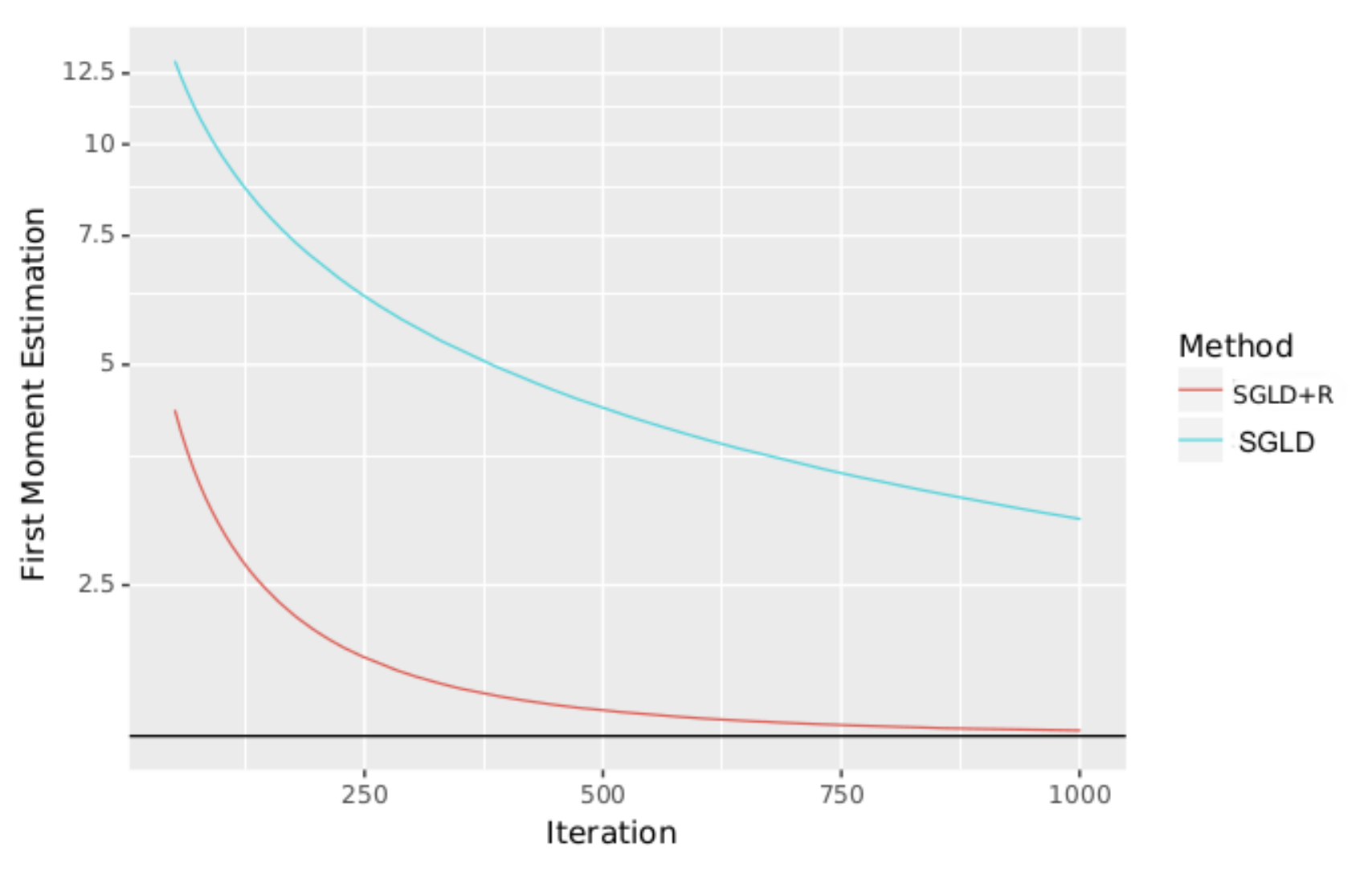}
\endminipage\hfill
\minipage{0.45\textwidth}
  \includegraphics[width=\linewidth]{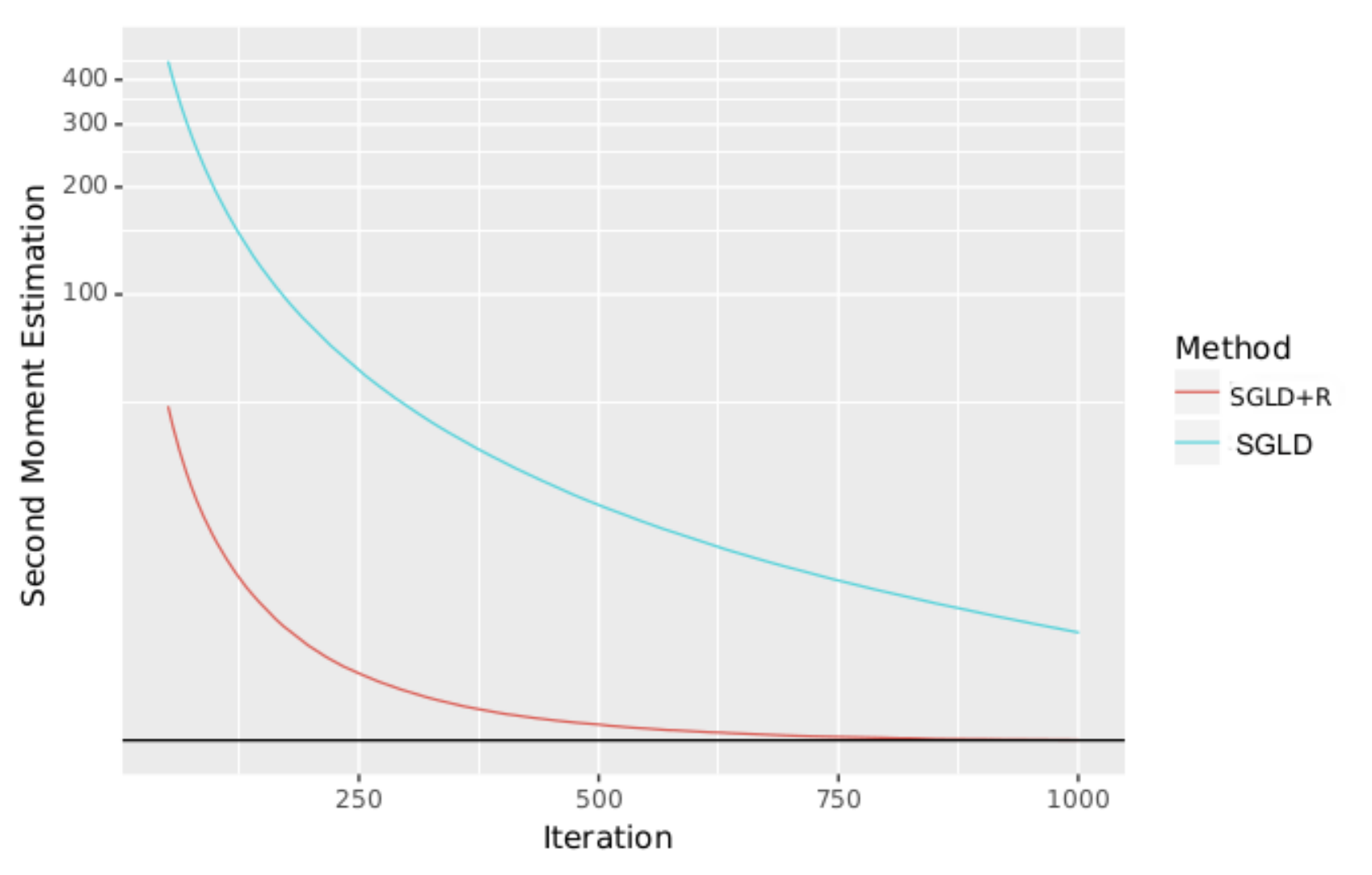}
\endminipage
\end{center}
\caption{Evolution of estimation during the MoE experiment. 100 particles are used. Black line depicts the exact value to be estimated. Left: Estimation of $\mathbb{E}\left[X\right]$. Right: Estimation of $\mathbb{E}\left[X^2\right]$.}\label{fig:moe100}
\end{figure}

\paragraph{Bayesian Neural Network.} 
We test the proposed scheme in a suite of regression tasks using a feed-forward neural network with one hidden layer with 50 units and ReLU activation functions. The goal of this experiment is to check that the proposed samplers scale well to real-data settings and complex models such as Bayesian neural networks. The datasets are taken from the UCI repository \parencite{Lichman:2013}. We use minibatches of size 100. 
As before, we compare SGLD and SGLD+R, reporting the average root mean squared error and log-likelihood over a test set in Tables \ref{tab:bnn} and  \ref{tab:bnn2}. We observe that SGLD+R typically outperforms SGLD. During the experiments, we noted that, in order to reduce computation time, during the last half of training we could disable the repulsion between particles without incurring in performance cost.

The learning rate $\epsilon$ was chosen from a grid $\{1e-5, \ldots, 1e-3 \}$ validated on another fold. The number of iterations was set to 2000 in every experiment. As before, to make predictions we collect samples every 10 iterations after a burn-in period. 20 particles were used for each of the tested datasets.

\begin{table}[h]
\centering{\small
\scalebox{0.95}{
\begin{tabular}{l|cc}
\hline
& \multicolumn{2} {c} {Avg. Test LL}  \\
\textbf{Dataset} & {\bfseries SGLD} & {\bfseries SGLD+R}  \\
\hline
Boston&  $ -2.551\pm 0.018$ & $ -2.575 \pm 0.007$ \\
Kin8nm&  $0.826 \pm 0.005$ & $0.831 \pm 0.006$  \\
Naval&  $3.379\pm 0.011$ & $ \pmb{3.428 \pm 0.019} $  \\
Protein&   $-2.991 \pm 0.000 $ & $\pmb{-2.987 \pm 0.001} $ \\
Wine&  $ -0.765 \pm 0.008 $ & $ \pmb{-0.750 \pm 0.007}$   \\
Yacht&  $-1.211 \pm 0.020   $ & $-1.172 \pm 0.026 $  \\
\hline
\end{tabular} 
}
}
\caption{Log-Likelihood results for the BNN experiments}\label{tab:bnn}
\end{table}

\begin{table}[H]
\centering{\small
\scalebox{0.95}{
\begin{tabular}{l|cc}
\hline
& \multicolumn{2} {c} {Avg. Test RMSE}  \\
\textbf{Dataset} & {\bfseries SGLD} & {\bfseries SGLD+R}  \\
\hline
Boston& $ 2.392 \pm 0.018$ & $ \pmb{2.295 \pm 0.017}$  \\
Kin8nm& $0.104 \pm  0.001  $ & $0.104 \pm 0.001$  \\
Naval& $0.008 \pm 0.000$ & $0.008 \pm 0.000$  \\
Protein& $4.810 \pm 0.003$ & $\pmb{4.794 \pm 0.003} $  \\
Wine& $0.522 \pm 0.004$ & $\pmb{0.514 \pm 0.004}$  \\
Yacht& $0.942 \pm 0.015$ & \pmb{$0.894 \pm 0.029 $}   \\
\hline
\end{tabular} 
}
}
\caption{Root Mean Squared Error results for the BNN experiments}\label{tab:bnn2}
\end{table}


\paragraph{Adversarial robustness.}

The last set of experiments aims to compare the different momentum-based approaches from Section \ref{sec:momentum}. We study a slightly more complex task than the ones in the previous experiment, by studying the robustness of a deep neural network against adversarial examples \parencite{goodfellow2014explaining} in the MNIST digit recognition dataset \parencite{MNIST}. Note that the Chapter \ref{cha:adv} is devoted to adversarial robustness in greater depth.

We follow a similar setting to \parencite{li2017dropout}, as we hypothesize that the uncertainty from Bayesian neural networks helps against adversarial attacks. First, we generate an attacked dataset using the \emph{fast gradient sign method} (FGSM) from \textcite{goodfellow2014explaining}. This attack generates an adversarial perturbation through
$$
x' = x - \eta\,\mbox{sign}(\nabla_x \max_y \log p(y|x)),
$$
for a step size $\eta$ that measures attack strength.

The model used for attack generation is a fully-connected network with three hidden layers and 1000 units each, with ReLU activations. Then, we test these perturbations on the same network, but trained using SGLD, SGLD+R and Adam+NR, respectively. Figure \ref{fig:attacks} depicts the attack evaluation curves for these inference techniques. As can be seen, Adam+NR in clearly superior in the low attack strength regime compared to the non-momentum counterpart. However, when the attack strength increases, the accuracy of both methods decay to similar levels.

\begin{figure}[h]
    \centering
\includegraphics[width=0.5\textwidth]{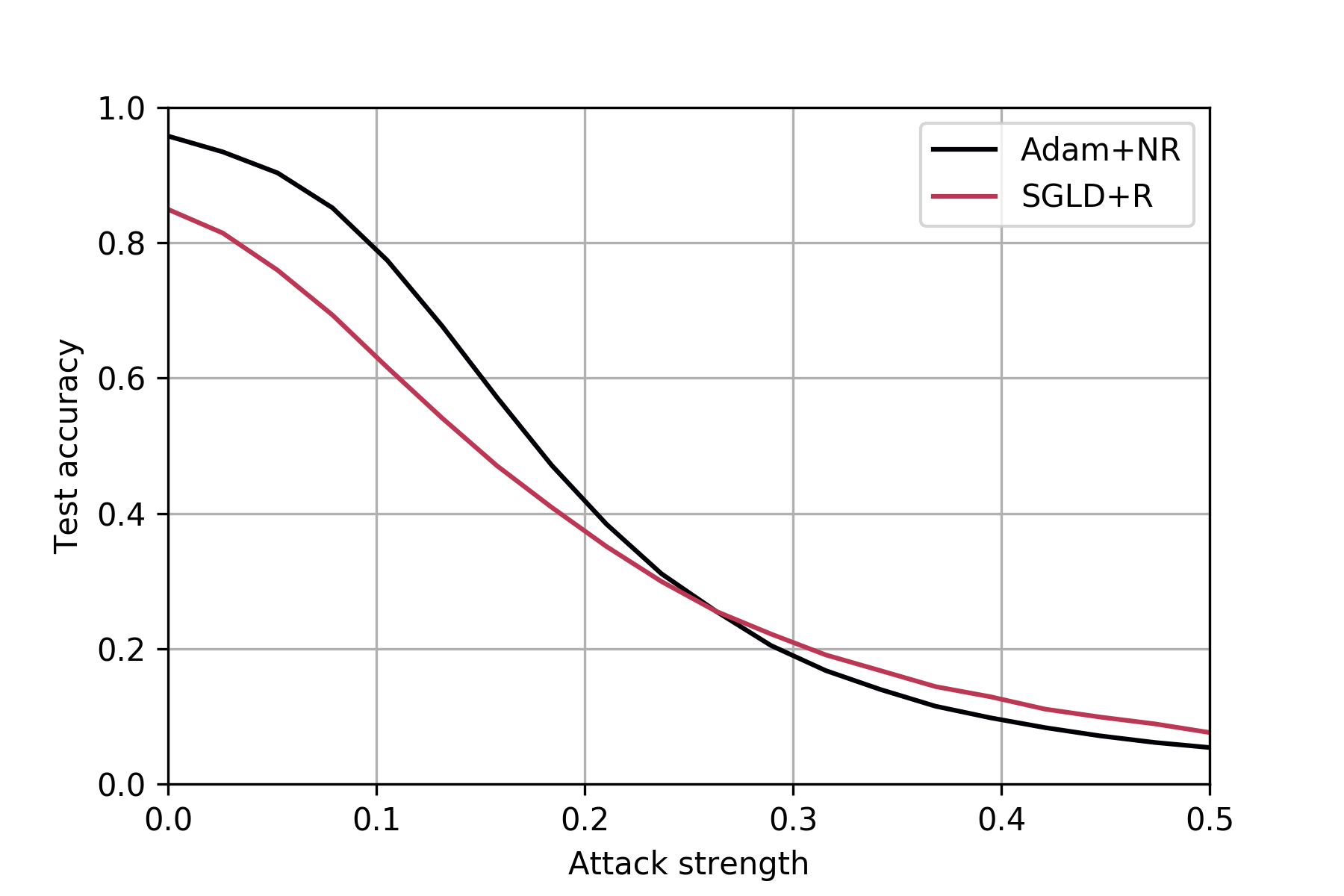}
    \caption{Accuracy for various values of attack strengths under the FGSM attack.}\label{fig:attacks}
\end{figure}



\section{A Variationally Inferred Sampling Framework}\label{sec:main}

Having reviewed the theory behind SG-MCMC methods, and proposed a new  transition kernel involving multiple chains with repulsion in Section \ref{sec:framework}, we now turn to our attention to variational inference (VI), proposing a method to further accelerate any SG-MCMC sampler.

In standard VI, the variational approximation 
is analytically tractable and typically chosen as a factorized Gaussian, as mentioned above. {However, it is important to note that 
other distributions can be adopted as long as they 
are easily sampled and their log-density and entropy values
computed. Moreover, in the rest of this section, we focus on the Gaussian case, the usual choice in the Bayesian deep learning community.}
Stemming from {this variational approximation}, we introduce several elements to construct the VIS.

Our first major modification of standard VI proposes the use of a more
flexible distribution, approximating the posterior by embedding a sampler through
\begin{equation}\label{eq:q}
q_{\phi,\eta}(z|x) = \int Q_{\eta, T}(z|z_0)q_{0,\phi}(z_0|x)dz_0,
\end{equation}
where $q_{0,\phi} (z | x)$ is the initial and tractable density
$q_{\phi} (z | x)$
(i.e., the starting state for the sampler). 
We designate this as the \emph{refined variational approximation}.
The conditional distribution $Q_{\eta, T}(z|z_0)$ refers
to a stochastic process parameterized by $\eta$ and 
used to evolve the original density $q_{0,\phi}(z|x)$
for $T$ periods, so as to achieve greater flexibility. Specific 
forms for $Q_{\eta, T}(z|z_0)$
will be described later in this section.
{Observe that when $T=0$, no refinement steps are performed and the refined variational approximation coincides with the original one; on the other hand, as 
 $T$ increases, the approximation will be closer to the exact posterior, assuming that $Q_{\eta, T}$ is a valid MCMC sampler
 in the sense of \parencite{ma2015complete}}.

We next maximize a refined ELBO objective, replacing in Equation (\ref{eq:elbo_bayes}) the 
original $q_{\phi }$ 
by $q_{\phi, \eta}$
\begin{equation}\label{eq:elbo_bayes2}
\mbox{ELBO}(q_{\phi,\eta}) = \mathbb{E}_{q_{\phi, \eta}(z|x)} \left[ \log p(x,z) - \log q_{\phi, \eta}(z|x)\right]
\end{equation}
\textls[-5]{This is done to optimize the divergence $KL(q_{\phi,\eta}(z|x) ||  p(z|x))$. {{The first term of Equation (\ref{eq:elbo_bayes2})}}
requires only being able to sample from $q_{\phi,\eta}(z|x)$; however, the second
term, which is the entropy
\linebreak $-\mathbb{E}_{q_{\phi,\eta}(z|x)} \left[ \log q_{\phi,\eta}(z | x) \right]$, also requires the evaluation of the evolving, implicit density.
\linebreak As a consequence, performing~variational inference with the refined variational approximation can be regarded as using the original variational guide while optimizing an alternative, tighter ELBO, as~Section~\ref{sec:rewriting}~shows. }

The above facilitates a framework for learning the sampler parameters $\phi, \eta$ using gradient-based optimization, with the help of automatic differentiation \parencite{baydin2017automatic}.
For this, the~approach operates in two phases.
First, in a refinement phase, the sampler parameters are learned in an optimization loop that maximizes the ELBO with the new posterior. After~several iterations, the second phase,
focused on inference, starts.
We allow the tuned sampler to run for
sufficient iterations, as in SG-MCMC samplers.
This is expressed algorithmically in Algorithm \ref{alg:vis}.

\begin{algorithm}[!ht]
\begin{algorithmic}[1]
\State \textbf{Refinement phase}:
\hspace{0.5cm}\While{not convergence}
  \State Sample an initial set of particles, $z_0 \sim q_{0,\phi}(z|x)$.
\State Refine the particles through the sampler, $z_T \sim Q_{\eta, T}(z|z_0)$.
\State  Compute the ELBO objective from Equation (\ref{eq:elbo_bayes2}).
\State   Perform automatic differentiation on the objective wrt parameters $\phi, \eta$ to update~them.
 \EndWhile
 \State 
 \State \textbf{Inference phase}:
 \State \hspace{0.cm}Once good sampler parameters $\phi^*, \eta^*$ are learned:
 \State \hspace{0.6cm} Sample an initial set of particles, $z_0 \sim q_{0,\phi^*}(z|x)$.
 \State \hspace{0.6cm} Use the MCMC sampler $z_T \sim Q_{\eta^*, T}(z|z_0)$ as $T \rightarrow \infty$.
\end{algorithmic}
 \caption{Variationally inferred sampler}\label{alg:vis}
\end{algorithm}

\noindent Since the sampler can be run for a different number of steps depending on the phase, we use the following notation when necessary: VIS-$X$-$Y$ denotes $T = X$ iterations during the refining phase and $T=Y$ iterations during the inference phase.


Let us specify now the key elements.

\subsection*{The Sampler $Q_{ \eta, T}(Z|Z_0)$ } \label{sec:grad}

{ As the latent variables $z$ are continuous}, 
we evolve the original density $q_{0,\phi}(z|x)$ through a stochastic diffusion process \parencite{pavliotis2014stochastic}. To make it tractable, we discretize the Langevin dynamics using the Euler--Maruyama scheme, arriving at the stochastic gradient Langevin dynamics (SGLD) sampler (2). 
We then follow the process $Q_{\eta,T} (z | z_0)$,
which represents $T$ iterations of the MCMC sampler. 

As an example, for the SGLD sampler $z_t = z_{t-1} + \eta \nabla \log p(x, z_{t-1}) + \xi_{t},$ where $t$ iterates from 1 to $T$. In this case, the only parameter
is the learning rate $\eta$ and the noise is $\xi_t \sim \mathcal{N}(0, 2\eta I)$. 
The initial variational distribution $q_{0, \phi}(z|x)$ is a Gaussian parameterized by a deep neural network (NN). Then, after $T$ iterations of the sampler $Q$ are parameterized by $\eta$, we arrive at $q_{\phi, \eta}$. 

An alternative arises by ignoring the noise $\xi$ \parencite{mandt2017stochastic}, thus refining the initial variational approximation using only the stochastic gradient descent (SGD).
 Moreover, we can use Stein variational gradient descent (SVGD) \parencite{liu2016stein} or a stochastic version \parencite{gallego2018stochastic} to apply repulsion between particles and promote more extensive explorations of the latent space, such as any of the samplers with repulsion developed in Section \ref{sec:framework}.

\subsection*{Approximating the Entropy Term}\label{sec:approx}

We propose four approaches for the ELBO optimization 
which take structural advantage of the refined variational approximation.

    \paragraph{Particle Approximation (VIS-P).} 
    
    In this approach, we approximate the posterior $q_{\phi,\eta}(z|x)$ by a mixture of Dirac deltas (i.e., we approximate it with a finite set of particles), by sampling $z^{(1)}, \ldots, z^{(M)} \sim q_{\phi,\eta}(z|x)$ and setting 
    $$
    q_{\phi,\eta}(z|x) = \frac{1}{M} \sum_{m=1}^M \delta(z - z^{(m)}).
    $$
    
    { In this approximation, the entropy term in Eq. (\ref{eq:elbo_bayes2}) 
    is set to zero. Consequently,  the sample converges to the 
    maximum posterior (MAP).} This may be undesirable when training generative models, as the generated samples usually have little diversity. Thus, in subsequent computations, we add to the refined ELBO the entropy of the initial variational approximation, $\mathbb{E}_{q_{0,\phi}(z|x)} \left[ \log q_{0,\phi}(z | x) \right]$, which
    serves as a regularizer alleviating the previous problem. When using SGD as the sampler, the resulting ELBO is tighter than that without refinement,
    as will be shown in Section \ref{sec:rewriting}.

    
    \paragraph{MC Approximation (VIS-MC).}
     Instead of performing the full marginalization in Equation (\ref{eq:q}), we approximate it with  { $q_{\phi,\eta}(z_T,\ldots, z_0|x) = \prod_{t=1}^T q_\eta(z_t | z_{t-1}) q_{0,\phi}(z_0|x)$; i.e., we consider the joint distribution for the refinement. However, in inference we only keep the $z_T$ values}. The entropy for each factor in this 
    approximation is straightforward to compute. For 
    example, for the SGLD case, we have
    {\bf
    $$ 
    z_t = z_{t-1} + \eta \nabla \log p(x, z_{t-1}) + \mathcal{N}(0, 2\eta I),\qquad  t=1, ..., T.
    $$
    }
This approximation tracks a better estimate of the entropy than 
    VIS-P, as we are not completely discarding it; rather, for each $t$, we marginalize out the corresponding $z_t$ using one sample.
          \paragraph{Gaussian Approximation (VIS-G).} 
          This approach is targeted at settings in which it could be helpful to have a posterior approximation that places density over the whole
        $z$ space. In the specific case of using SGD as the inner kernel, we have
\begin{align*}
z_0 &\sim q_{0,\phi}(z_0|x) = \mathcal{N}(z_0 | \mu_\phi(x), \sigma_\phi(x))\\
z_t &= z_{t-1} + \eta \nabla \log p(x, z_{t-1}), \qquad t=1,\ldots,T.
\end{align*}
By treating the gradient terms as points, the refined variational approximation can be computed as
$ q_{\phi,\eta}(z|x) = \mathcal{N}(z | z_T, \sigma_\phi(x))$. Observe 
that there is an implicit dependence on $\eta$ through $z_T$.

      \paragraph{Fokker--Planck Approximation (VIS-FP).} 
      Using the Fokker--Planck equation, we derive 
    a deterministic sampler via iterations of the form
\begin{equation}\label{eq:fppp}
z_{t} = z_{t-1} + \eta (\nabla \log p(x, z_{t-1}) - \nabla \log q_t (z_{t-1})),\qquad  t=1, ..., T{.}
\end{equation}
Then, we approximate the density $q_{\phi,\eta}(z|x)$ using a mixture of Dirac deltas. A detailed derivation of this approximation is given in Appendix \ref{app:fp}.

\subsection*{Back-Propagating through the Sampler}\label{sec:tuning}

In standard VI, the variational approximation $q(z|x;\phi)$ is parameterized by $\phi$. The~parameters are learned employing SGD, or variants such as Adam \parencite{kingma2014adam}, using the gradient $\nabla_{\phi} \mbox{ELBO}(q)$. We have shown how to embed a sampler inside the variational guide. 
It~is therefore also possible to compute a gradient of the objective with respect to the sampler parameters $\eta$ (see Section \ref{sec:grad}). For instance, we can compute a gradient
$\nabla_{\eta} \mbox{ELBO}(q)$
with respect to the learning rate $\eta$ from the SGLD or SGD processes to search for an optimal step size at every VI iteration. This is an additional step apart from using the gradient $\nabla_{\phi} \mbox{ELBO}(q)$ which is used to learn a good initial sampling distribution.

\subsection{Analysis of VIS}

Below, we highlight key properties of the proposed framework.

\subsubsection*{Consistency}

The VIS framework is geared towards SG-MCMC samplers, where we can compute the gradients of sampler hyperparameters to speed up mixing time (a common major drawback in MCMC \parencite{graves2011automatic}).
After back-propagating for a few iterations through the SG-MCMC sampler and learning a good initial distribution, one can resort to the learned sampler in the second phase, so standard consistency results from SG-MCMC apply as $T \rightarrow \infty$ \parencite{brooks2011handbook}.

\subsubsection*{Refinement of ELBO}\label{sec:rewriting}

 Note that, for a refined guide using the VIS-P approximation and $M=1$ samples, the refined objective function can be written as 
$$
 \mathbb{E}_{q(z_0|x)} \left[ \log p(x, z_0 + \eta \nabla \log p(x,z_0) ) - \log q(z_0 | x)\right]
$$
noting that $z = z_0 + \eta \nabla \log p(x,z_0)$ when using SGD for $T=1$ iterations.
This is equivalent to the refined ELBO in Eq. (\ref{eq:elbo_bayes2}). Since we are perturbing the latent variables in the steepest direction, we show easily that, for a moderate $\eta$, the previous bound is tighter than
$\mathbb{E}_{q(z_0|x)} \left[ \log p(x, z_0  ) - \log q(z_0 | x)\right]$, the one for the original variational guide~$q(z_0 | x)$. This~reformulation of ELBO is also convenient since it provides a clear way of implementing our refined variational inference framework in any probabilistic 
programming language (PPL) supporting algorithmic differentiation.

Respectively, for the VIS-FP case, we find that its 
deterministic flow follows the same trajectories as SGLD: 
based on standard results of MCMC samplers \parencite{murray2008notes}, we have 
$$
KL(q_{\phi,\eta}(z|x) ||  p(z|x)) \leq KL(q_{0, \phi}(z|x) ||  p(z|x)).
$$

A similar reasoning applies to the VIS-MC approximation; however, it does not hold for VIS-G since it assumes that the posterior is Gaussian.

\subsubsection*{Taylor Expansion}\label{sec:taylor}

This analysis applies only to VIS-P and VIS-FP.
As stated in Section \ref{sec:rewriting},  within the VIS framework, optimizing the ELBO resorts to the performance of $\max_z \log p(x, z + \Delta z)$, where~$\Delta z$ is one iteration of the sampler; i.e., $\Delta z = \eta \nabla \log p(x, z)$ in the SGD case (VIS-P), \linebreak or  $\Delta z = \eta \nabla (\log p(x, z) - \log q(z))$ in the VIS-FP case. 
For notational clarity, we consider the case $T=1$, 
although a similar analysis
follows in a straightforward manner if more refinement steps are performed.

Consider a first-order Taylor expansion of the refined objective 
$$
\log p(x, z + \Delta z) \approx \log p(x, z) + (\Delta z)^\intercal \nabla \log p(x, z).
$$

Taking gradients 
with respect to the latent variables $z$, we arrive at
$$
\nabla_z \log p(x, z + \Delta z) \approx \nabla_z \log p(x,z) + \eta \nabla_z \log p(x,z)^\intercal \nabla_z^2 \log p(x,z),
$$
where we have not computed the gradient through the $\Delta z$ term (i.e., we treated it as a constant for simplification). Then, the refined gradient can be deemed to be the original gradient plus a second order correction. Instead of being modulated by a constant learning rate, this correction is adapted by the chosen sampler. The experiments in Section \ref{sec:exp} 
show that this is beneficial for the optimization as it 
typically takes fewer iterations than the original variant to achieve lower losses. 

By further taking gradients through the $\Delta z$ term, we may tune the sampler parameters such as the learning rate as 
presented in Section \ref{sec:tuning}. Consequently, the next subsection describes two 
differentiation modes.


\subsubsection*{Two Automatic Differentiation Modes for the Refined ELBO Optimization}\label{sec:AD}

For the first variant, 
remember that the original variant can be 
rewritten (which we term Full AD)~as
\begin{equation}
     \mathbb{E}_q \left[ \log p(x, z + \Delta z) - \log q(z + \Delta z | x) \right].
\end{equation}

We now define a stop gradient operator $\bot$ (which corresponds to \texttt{detach} in Pytorch or \linebreak \texttt{stop\_gradient}~in~tensorflow)  that sets the gradient of its operand to zero---i.e.,~$\nabla_x \bot (x) = 0$---whereas in a forward pass, it acts as the identity function---that is, $\bot (x) = x$. With this, a variant  of the ELBO objective (which we term Fast AD) is
\begin{equation}
    \mathbb{E}_q \left[ \log p(x, z + \bot (\Delta z)) - \log q(z + \bot(\Delta z) | x) \right].
\end{equation}

 Full AD ELBO enables a gradient to be computed
 with respect to the sampler parameters inside $\Delta z$ at the cost of a slight increase in computational burden.
 On the other hand, the~Fast AD variant may be useful in numerous scenarios, as illustrated in the experiments.

\paragraph{Complexity.} 
Since we need to back propagate through $T$ iterations of an SG-MCMC scheme, using~standard results of meta-learning and automatic differentiation \parencite{franceschi2017forward}, the time complexity of our more intensive approach (Full-AD) is $\mathcal{O}(mT)$, where $m$ is the dimension of the hyperparameters (the learning rate of SG-MCMC and the latent dimension). Since~for most use cases, the hyperparameters lie in a low-dimensional space, the approach is
therefore~scalable.







\subsection{Experiments}\label{sec:exps_vis}


{The following experiments showcase the power the VIS framework as well as illustrating the the impact of various parameters on its performance, guiding their 
choice in practice. We 
also present a comparison with standard VIS
and other recent variants, showing that the increased computational complexity of computing
gradients through sampling steps is worth the gains in flexibility.
Moreover, the proposed framework is compatible with other structured inference techniques, such as the sum--product algorithm, as well as serving to support other tasks such as  classification}.

Within the spirit of reproducible research, 
the code for VIS has been released at 
\url{https://github.com/vicgalle/vis}. 
The VIS framework is implemented with Pytorch \parencite{paszke2017automatic}, although we have also released a notebook for the first experiment using Jax to highlight the simple implementation of VIS.
In any case, we emphasize that the approach facilitates 
rapid iterations over a large class of models. 

\subsubsection*{Funnel Density}

We first tested the framework on a synthetic yet complex target distribution. This~experiment assessed whether VIS is
suitable for modeling complex distributions. The~target bi-dimensional density was defined through
\begin{align*}
    z_1 &\sim \mathcal{N}(0, 1.35) \\
    z_2 &\sim \mathcal{N}(0, \exp(z_1)).
\end{align*}

\textls[-5]{We adopted the usual diagonal Gaussian distribution
as the variational approximation. 
For~VIS, we used the VIS-P approximation and refined it for $T = 1$ steps using SGLD. Figure~\ref{fig:funnel} {top} shows the trajectories of the lower bound for up to 50 iterations of variational optimization with Adam: our refined version achieved a tighter bound. The {bottom} figures present  contour curves of the learned variational approximations. Observe that the VIS variant was placed closer to the mean of the true distribution and was more disperse than the original variational approximation, illustrating the fact that the refinement step helps in attaining more flexible posterior approximations.}

\begin{figure}[ht]
\begin{center}
\minipage{0.45\textwidth}
\hspace{-0.5em}
  \includegraphics[width=\linewidth]{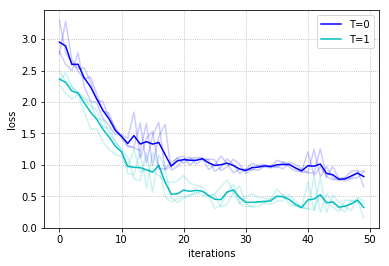}
\endminipage\hfill
\minipage{0.45\textwidth}
  \includegraphics[width=\linewidth]{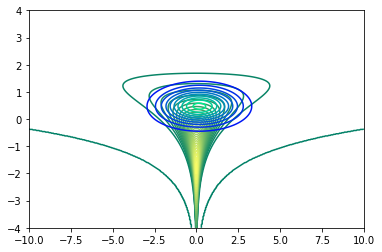}
\endminipage
\minipage{0.45\textwidth}%
  \includegraphics[width=\linewidth]{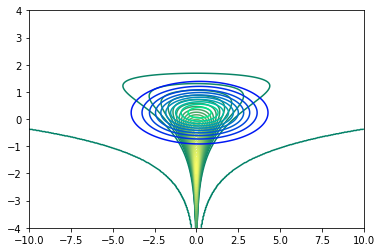}
\endminipage
\end{center}
\caption{{Top: Evolution of the negative evidence lower bound (ELBO) loss objective over  50 iterations. Darker lines depict means along different seeds (lighter lines). Bottom left: Contour curves (blue--turquoise) of the variational approximation with no refinement ($T=0$) at iteration 30 (loss of $1.011$). Bottom right: Contour curves (blue--turquoise) of  refined variational approximation ($T=1$) at iteration 30 (loss of $0.667$). Green--yellow curves denote target density.}} \label{fig:funnel}
\end{figure}

\subsubsection*{State-Space  Markov Models}

We tested our variational approximation on two state-space models: one for discrete data and another for continuous observations. These experiments also demonstrated that the framework is compatible with standard inference techniques such as the sum--product scheme from the Baum--Welch algorithm or Kalman filter.
In both models, we performed inference on their 
parameters $\theta$.
All the experiments in this subsection used the Fast AD version (Section \ref{sec:AD}) as it was not necessary to further tune the sampler parameters to obtain competitive results. Full model implementations can be found in Appendix \ref{app:ss}, based on \texttt{funsor} \parencite{obermeyer2019functional}, a PPL on top of the \texttt{Pytorch} autodiff framework.

\emph{Hidden Markov Model} (HMM): The model equations are
\begin{equation}\label{snow}
p(x_{1:\tau} , z_{1:\tau}, \theta) = \prod_{t=1}^\tau p(x_t|z_t,\theta_{em})p(z_t|z_{t-1},\theta_{tr})p(\theta),
\end{equation}
where each conditional is a categorical distribution taking 
five different classes.  The prior is $p(\theta) = p(\theta_{em})p(\theta_{tr})$ based on two Dirichlet distributions that sample the observation and state transition probabilities, respectively. 

\emph{Dynamic Linear Model} (DLM): The model
equations are as in (\ref{snow}), 
although the conditional distributions are now Gaussian and the parameters $\theta$ refer to the observation and transition variances. 

 For each model, we generated a synthetic dataset and used the refined variational approximation with $T = 0, 1, 2$. For the original variational approximation to the parameters $\theta$, we used a Dirac delta. Performing VI with this approximation corresponded to MAP estimation using 
 the Baum--Welch algorithm in the HMM case \parencite{rabiner1989tutorial} and
 the Kalman filter in the DLM case \parencite{zarchan2013fundamentals},
  as we marginalized out the latent variables $z_{1:\tau}$. We used the VIS-P variant since it was sufficient  to show performance gains in this case.
 
 Figure \ref{fig:ss} shows the results. The first row reports the experiments related to the HMM, the~second row those for the DLM. We report the evolution of the log-likelihood during inference  in all graphs; the first column reports the number of ELBO iterations, and the second column portrays 
 clock times as the optimization takes place. They confirm that VIS ($T>0$) achieved better results than standard VI ($T=0$) for a comparable amount of time. {Note also  that there was not as much gain when changing from $T=1$ to $T=2$ as there is from $T=0$ to $T=1$, suggesting the need to carefully 
 monitor this parameter. Finally, the~top-right graph for the case $T=0$ is shorter as it requires less clock time.}

\begin{figure}[h]

\minipage{0.48\textwidth}
  \includegraphics[width=\linewidth]{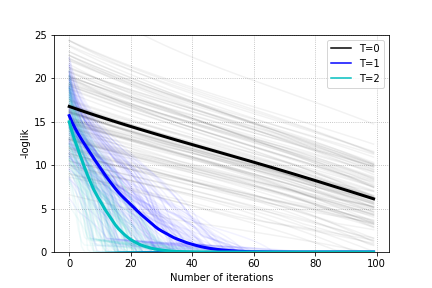}
\endminipage
\minipage{0.48\textwidth}
  \includegraphics[width=\linewidth]{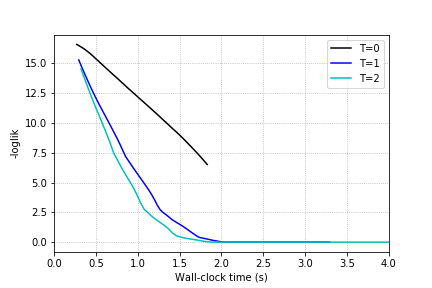}
\endminipage\hfill

\minipage{0.48\textwidth}%
  \includegraphics[width=\linewidth]{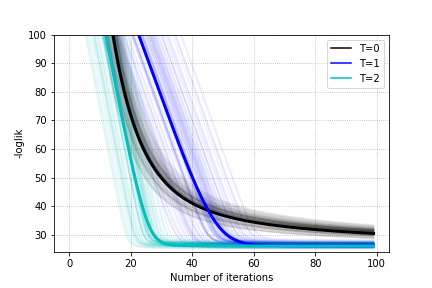}
 \endminipage
 \minipage{0.48\textwidth}%
  \includegraphics[width=\linewidth]{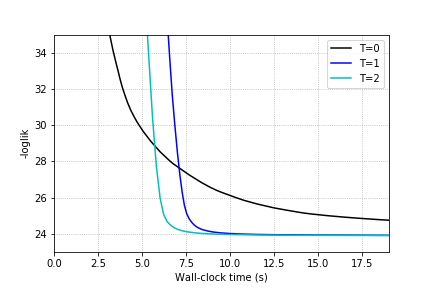}
\endminipage
\caption{Results of ELBO optimization for state-space models. Top-left (Hidden Markov Model (HMM)): Log-likelihood against the number of ELBO gradient iterations. Top-right (HMM): Log-likelihood against clock time. Bottom-left (Dynamic Linear Model (DLM)): Log-likelihood against number of ELBO gradient iterations. Bottom-right (DLM): Log-likelihood against against clock time.}\label{fig:ss}%
\end{figure}

\paragraph{Prediction with an HMM.} 
With the aim of assessing whether ELBO optimization helps in attaining better auxiliary scores, results in a prediction task are also reported. We generated a synthetic time series of alternating values of 0 and 1 for $\tau=105$ timesteps. We trained the previous HMM model on the first 100 points and report in Table \ref{tbl:preds} the accuracy of the predictive distribution $p(y_t)$ averaged over the final five time-steps. We also report the predictive entropy as it helps in assessing the confidence of the model in its predictions, as a strictly proper scoring rule \parencite{gneiting2007strictly}. To guarantee the same computational budget time and a fair comparison, the model without refinement was run for 50 epochs (an epoch was a full iteration over the training dataset), whereas the model with refinement was run for 20 epochs. It can be observed that the refined model achieved higher accuracy than its counterpart. In addition,
it was more correctly confident in its predictions.
\begin{table}[!h]

\caption{Prediction metrics for the HMM.}\label{tbl:preds}
\begin{tabular}{c@{\hskip 1.2in}c@{\hskip 1.1in}c@{\hskip 1in}c}
\toprule
   & ${T=0}$                             & ${T=1}$   \\ 
 \midrule
    accuracy          & $0.40$ &  $0.84$ \\
    predictive entropy          & $1.414$ &  $1.056$ \\
    logarithmic score   & $-1.044$ & $-0.682$ \\
 \bottomrule
\end{tabular}
\end{table}

\paragraph{Prediction with a DLM.}

We tested the VIS framework on Mauna Loa monthly $CO_2$ time-series data \parencite{keeling2005atmospheric}. We used the first 10 years as a training set, and we tested over the next two years. We used a DLM composed of a local linear trend plus a seasonal block of periodicity 12. 
Data were standardized 
to {a mean of zero and standard deviation of one}. To guarantee the same computational budget time, the model without refining was run for 10 epochs, whereas the model with refinement was run for 4 epochs.  Table \ref{tbl:preds_dlm}
reports the mean absolute error (MAE) and predictive entropy. 
In addition, we computed the interval score 
 \parencite{gneiting2007strictly}, as a strictly proper scoring rule. As can be seen, for similar clock times, the refined model not only achieved a lower MAE, but also its predictive intervals were narrower than the non-refined counterpart. 

\begin{table}[!h]
\centering
\caption{Prediction metrics for the DLM.}\label{tbl:preds_dlm}
\begin{tabular}{c@{\hskip 1.1in}c@{\hskip 1in}c@{\hskip 1in}c}
\toprule
   & ${T=0}$                             & ${T=1}$   \\ 
 \midrule
    MAE          & $0.270$ &  $0.239$ \\

    predictive entropy          & $2.537$ &  $2.401$ \\
    interval score ($\alpha=0.05$) & $15.247$ & $13.461$\\
 \bottomrule
\end{tabular}
\end{table}

\subsubsection*{Variational Autoencoder}

The third batch of experiments showed that VIS 
was competitive with respect to other algorithms from the recent literature, including unbiased implicit variational inference (UIVI~\parencite{pmlr-v89-titsias19a}), semi-implicit variational inference (SIVI~\parencite{yin2018semi}),  variational contrastive divergence (VCD~\parencite{pmlr-v97-ruiz19a}), 
and~the HMC variant from~\parencite{hoffman2017learning}, showing that our framework can outperform those approaches in similar experimental settings. 

 To this end, we tested the approach with a variational autoencoder (VAE) model \parencite{kingma2013auto}. 
The VAE defines a conditional distribution $p_{\theta}(x | z)$, generating an observation $x$ from a latent variable $z$ using {parameters $\theta$}. For this task, our interest 
was in modeling the $28 \times 28$ image distributions 
underlying the MNIST 
\parencite{MNIST} and the fashion-MNIST \parencite{xiao2017/online} datasets. To perform inference (i.e., to learn
the parameters $\theta$) the VAE introduces a variational approximation $q_{\phi}(z | x)$. In the standard setting, this distribution is Gaussian; we instead used the refined variational approximation comparing various values of $T$. We used the VIS-MC approximation (although we achieved similar results
with VIS-G) with the Full AD variant given in Section \ref{sec:AD}.

For the experimental setup, we reproduced the setting in \parencite{pmlr-v89-titsias19a}. For $p_{\theta}(x | z)$, we used a factorized Bernoulli distribution parameterized by a two layer feed-forward network with 200~units in each layer and relu activations, except for a final sigmoid activation. As a variational approximation $q_{\phi}(z | x)$, we used a Gaussian with mean and (diagonal) covariance matrix parameterized by
two distinct neural networks with the same structure as previously used, except for sigmoid activation for the mean and a softplus activation for the covariance matrix.

Results are reported in Table \ref{tbl:vae}. To guarantee 
 fair comparison, we trained the VIS-5-10 variant for 10 epochs, whereas all the other variants were trained for 15 epochs (fMNIST) or 20 epochs (MNIST), so that the VAE's performance was comparable to that reported in \parencite{pmlr-v89-titsias19a}. Although VIS was trained for fewer epochs, by increasing the number $T$ of MCMC iterations, we dramatically improved the test log-likelihood. In terms of computational complexity, the~average time per epoch using $T=5$ was 10.46 s, whereas with no refinement ($T=0$), the time was 6.10 s (which was the reason behind our decision to train the refined variant for fewer epochs): a moderate increase in computing time may be worth the dramatic increase in log-likelihood while not introducing new parameters into the model, except for the learning rate $\eta$.

 \begin{table}[!ht]
\centering

\caption{Test log-likelihood on binarized MNIST and fMNIST. Bold numbers indicate the best results. UIVI: unbiased implicit variational inference; SIVI: semi-implicit variational inference; VAE: variational autoencoder; VCD: variational contrastive divergence; HMC-DLGM: Hamiltonian Monte Carlo for Deep Latent Gaussian Models; VIS: variationally inferred sampler.}\label{tbl:vae} %
\begin{tabular}{c@{\hskip 0.9in}c@{\hskip 0.8in}c@{\hskip 0.9in}c}
\toprule
\textbf{Method}   & \textbf{MNIST}                             & \textbf{fMNIST}   \\ \midrule
 \multicolumn{3}{c}{Results from \parencite{pmlr-v89-titsias19a}}       \\
 \midrule
    UIVI          & $-94.09$ &  $-110.72$ \\
    SIVI          & $-97.77$ &  $-121.53$ \\
    VAE          & $-98.29$ &  $-126.73$ \\
\midrule
 \multicolumn{3}{c}{ Results from \parencite{pmlr-v97-ruiz19a}}       \\
 \midrule
    VCD          & $-95.86$ &  $-117.65$ \\
    HMC-DLGM & $-96.23$ & $-117.74$ \\ 
\midrule
    \multicolumn{3}{c}{ This paper}       \\
    \midrule
    VIS-5-10     & {${-82.74 \pm {0.19}}$} & ${-105.08 \pm 0.34}$  \\
    VIS-0-10     & $-96.16 \pm 0.17$ & $-120.53 \pm 0.59$  \\
    VAE (VIS-0-0)              & $-100.91 \pm 0.16$ & $-125.57 \pm 0.63$ \\
 \bottomrule
\end{tabular}
\end{table}
Finally, as a visual inspection of the VAE reconstruction 
quality trained with VIS, Figures \ref{fig:reco} and \ref{fig:reco2}, respectively, display 10 random samples of each dataset.
\begin{figure}[h]

\includegraphics[width=\linewidth]{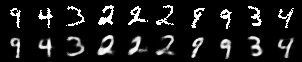}

  \caption{Top: original images from MNIST. Bottom: reconstructed images using VIS-5-10 at 10~epochs.}\label{fig:reco}
\end{figure}
\unskip
\begin{figure}[h]

\includegraphics[width=\linewidth]{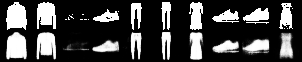}

  \caption{Top: original images from fMNIST. Bottom: reconstructed images using VIS-5-10 at 10~epochs.}\label{fig:reco2}
\end{figure}

\subsubsection*{Variational Autoencoder as a Deep Bayes Classifier}\label{sec:exp}
In the final experiments, we investigated whether VIS can deal with more general probabilistic graphical models and also perform well in other inference tasks such as classification.
We explored the flexibility of the proposed scheme to solve inference problems in an experiment with a classification task in a high-dimensional setting 
with the MNIST dataset.
More concretely, we extended the VAE model, conditioning it on a discrete variable $y \in \mathcal{Y} = \lbrace 0, 1, \ldots, 9 \rbrace$, leading to a conditional VAE (cVAE). The cVAE defined a decoder distribution $p_\theta(x | z, y)$ on an input space $x \in \mathbb{R}^D$ given a class label $y \in \mathcal{Y}$, latent variables $z \in \mathbb{R}^d$ 
\textls[-15]{and parameters $\theta$. Figure \ref{fig:deep_bayes} depicts the corresponding {probabilistic graphic model}. Additional details regarding the model architecture and hyperparameters are given in Appendix~\ref{sec:detail}.}

\begin{figure}[h]
\centering
\includegraphics[width=0.25\linewidth]{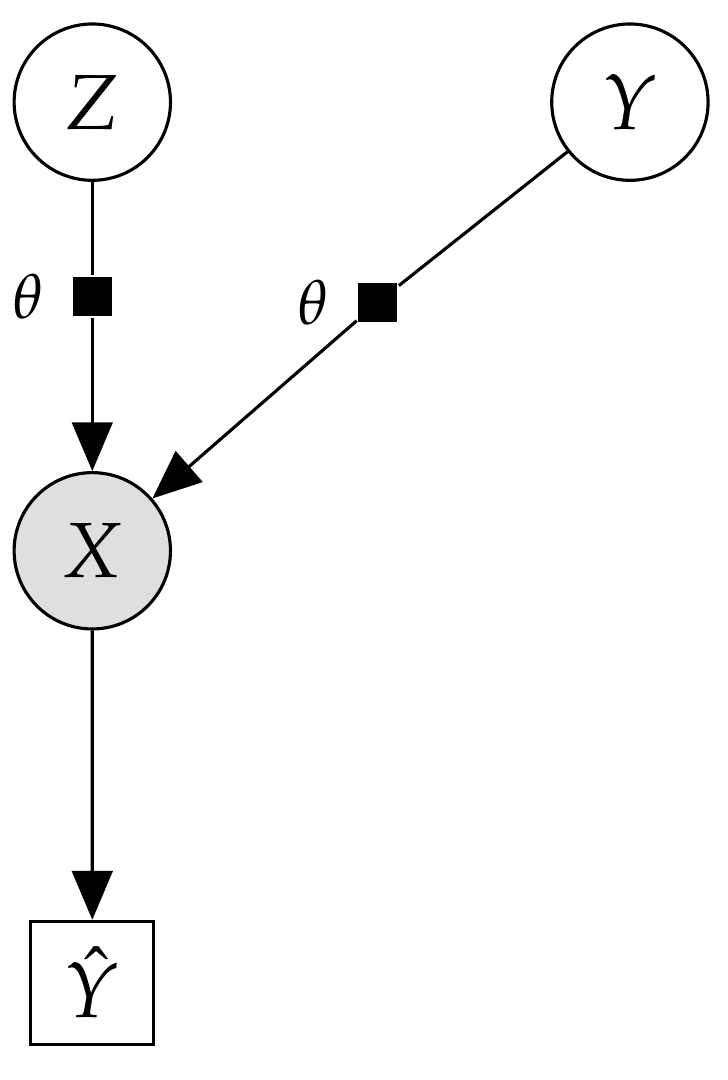}
  \caption{{Probabilistic graphical model} for the deep Bayes classifier.}\label{fig:deep_bayes}
\end{figure}

To perform inference, a variational posterior was learned as an encoder $q_\phi(z|x,y)$ from a prior $p(z) \sim \mathcal{N}(0, I)$.
Leveraging the conditional structure on $y$, we used the generative model as a classifier using the Bayes rule,
\begin{equation}\label{eq:mc_cvae}
p(y|x) \propto p(y)p(x|y) = p(y) \int p_\theta(x|z,y)q_\phi(z|x,y)dz  \approx  \frac{1}{M} \sum_{m=1}^M p_\theta (x | z^{(m)}, y)p(y), 
\end{equation}
where we used $M$ Monte Carlo samples $z^{(m)} \sim q_\phi(z|x,y)$. In the experiments, we set $M = 5$. Given a test sample $x$, the label $\hat{y}$ with the highest probability $p(y|x)$ is predicted.

For comparison, we performed several experiments changing $T$ in 
the transition distribution $Q_{\eta, T}$ of 
the refined variational approximation. 
The results are given in Table \ref{tab1}, which reports
the test accuracy at 
end of {the refinement phase}. Note that we are comparing different values of $T$ depending on their use in {refinement or inference} phases (in the latter, the model and variational parameters were kept frozen). The model with $T_{ref} = 5$ was trained for 10~epochs, whereas the other settings were for 15 epochs, to give all settings a similar training time.  Results were averaged over three runs with different random seeds. In all settings, we used the VIS-MC approximation for the entropy term. From the results, it is clear that the effect of using the refined variational approximation (the cases when $T > 0$) is crucially beneficial to achieve higher accuracy. The effect of learning a good initial distribution and inner learning rate by using the gradients $\nabla_{\phi} \mbox{ELBO}(q)$ and $\nabla_{\eta} \mbox{ELBO}(q)$ has a highly positive impact in the accuracy obtained.

On a final note, we have not included the case of only using an SGD or an SGLD sampler (i.e., without learning an initial distribution $q_{0, \phi} (z|x)$) since the results were much worse than those in Table \ref{tab1} for a comparable computational budget. This strongly suggests that, for inference in high-dimensional, continuous latent spaces, learning a good initial distribution through VIS may accelerate mixing time
dramatically.

\begin{table}[H]
\centering

\caption{Results on  digit classification task using a deep Bayes classifier.}\label{tab1}

\begin{tabular}{c@{\hskip 1.4in}c@{\hskip 1.3in}c@{\hskip 1.4in}c}
\toprule
${T_{ref}}$ &  ${T_{inf}}$ & \textbf{Acc. (Test)} \\
\midrule
0 & 0  & $96.5 \pm 0.5$ \% \\ 
0 & 10 &  $97.7 \pm 0.7$ \%\\
5 & 10 & $ {99.8 \pm 0.2}$ \% \\
\bottomrule
\end{tabular}
\end{table}

\section{Summary}

In this chapter, we have delved into current approaches for scalable Bayesian inference. Two families of methods compound the state of the art: SG-MCMC and VI approaches. We extended SG-MCMC by proposing a new sampler that takes several parallel chains, but not independently; rather, we added interaction between them, in the form of a repulsive force, in order to make the particles do not collapse into the same point of the posterior. Regarding variational approaches, we proposed a new variational approximation in the form of an SG-MCMC sampler, whose hyperparameters can be tuned using gradient optimization techniques. Experiments confirm that both approaches are effective and realizable. In addition, these algorithmic contributions are also usable with state space models, as confirmed in the corresponding experiments. Thus, we believe both contributions will be welcomed by Bayesian practitioners and researchers that want to work with contemporary, complex models.

\begin{subappendices}

\section{Fokker-Planck Approximation (VIS-FP)}\label{app:fp}
  
  The Fokker--Planck equation is a PDE 
  that describes the temporal evolution of the density of a random variable under a (stochastic) gradient flow \parencite{pavliotis2014stochastic}. For a given SDE 
  
$$
dz = \mu(z,t)dt + \sigma(z,t)dB_t,
$$
the corresponding Fokker--Planck equation is
$$
\frac{\partial}{\partial t} q_t(z) = -\frac{\partial}{\partial z}\left[ \mu(z,t)q_t(z)\right] + \frac{\partial^2}{\partial z^2} \left[ \frac{\sigma^2(z,t)}{2} q_t(z) \right].
$$
We are interested in converting the SGLD dynamics to a deterministic gradient flow. 

\begin{Proposition}
The SGLD dynamics, given by the SDE
$$
dz = \nabla \log p(z)dt + \sqrt{2}dB_t,
$$
have an equivalent deterministic flow, written as the ODE 
$$
dz = (\nabla \log p(z) - \nabla \log q_t (z))dt.
$$
\end{Proposition}
\begin{proof}
Let us write the Fokker--Planck equation for the respective flows. For the Langevin SDE, it is 
$$
\frac{\partial}{\partial t} q_t(z) = - \frac{\partial}{\partial z} \bigg[ \nabla \log p(z) q_t(z) \bigg] + \frac{\partial^2}{\partial z^2} \bigg[ q_t(z) \bigg].
$$
On the other hand, the Fokker--Planck equation for the deterministic gradient flow is given by
$$
\frac{\partial}{\partial t} q_t(z) = - \frac{\partial}{\partial z} \bigg[ \nabla \log p(z) q_t(z)\bigg] + \frac{\partial}{\partial z} \bigg[ \nabla \log q_t(z) q_t(z)\bigg].
$$
The result immediately follows, since $ \frac{\partial}{\partial z} \left[ \nabla \log q_t(z) q_t(z)\right] = \frac{\partial^2}{\partial z^2} \left[ q_t(z) \right]$.
\end{proof}
Given that both flows are equivalent, we restrict our attention to the deterministic flow. Its discretization leads to iterations of the form
\begin{equation}\label{eq:deterministic_flow}
z_{t} = z_{t-1} + \eta (\nabla \log p(z_{t-1}) - \nabla \log q_{t-1} (z_{t-1})).
\end{equation}
In order to tackle the last term, we make the following particle approximation. Using a variational formulation, we have 
\begin{align*}
    - \nabla \log q(z) = \nabla \left( - \frac{\delta}{\delta q} \mathbb{E}_q \left[ \log q\right] \right).
\end{align*}

Then, we smooth the true density $q$ convolving it with a kernel $K$, typically the rbf 
kernel, $K(z, z') = \exp \lbrace - \gamma \| z - z' \|^2 \rbrace$, where $\gamma$ is the bandwidth hyperparameter, leading to
\begin{align*}
    \nabla \left( - \frac{\delta}{\delta q} \mathbb{E}_q \left[ \log q\right] \right) &\approx
     \nabla \left( - \frac{\delta}{\delta q} \mathbb{E}_q \left[ \log (q\ast K )\right] \right) \\
     &= \nabla \log (q \ast K) - \nabla \left( \frac{q}{(q \ast K)} \ast K \right).
\end{align*}

If we consider a mixture of Dirac deltas, $q(z) = \frac{1}{M} \sum_{m=1}^M \delta(z - z_m)$, then the approximation is given 
by
$$
- \nabla \log q(z) \approx - \frac{\sum_k \nabla_{z_m} K(z_m, z_n)}{\sum_n K(z_m, z_n)}
- \sum_l \frac{\nabla_{z_m} K(z_m, z_l)}{\sum_n K(z_n, z_l)},
$$
which can be inserted into Equation (\ref{eq:fppp}). Finally, note that
it is possible to back-propagate through this equation; i.e., the gradients of $K$ can be explicitly computed.

\section{Experiment Details}\label{sec:detail}

\subsection{State-Space Models}\label{app:ss}

\paragraph{Initial experiments.}\label{app:hmm}
For the HMM, both the observation and transition probabilities are categorical distributions, taking values in the domain $\lbrace 0, 1, 2, 3, 4 \rbrace$.

The equations of the DLM are 
\begin{align*}
    z_{t+1} &\sim \mathcal{N}(0.5z_t + 1.0,\sigma_{tr}) \\
    x_{t} &\sim \mathcal{N}(3.0z_t + 0.5, \sigma_{em}).
\end{align*}
with $z_0  = 0.0$.


\paragraph{Prediction task in a DLM.}
The DLM model comprises a linear trend component plus a seasonal block with a period of 12. The trend is specified as
\begin{align*}
x_t &= z_{level,t} + \epsilon_t \qquad \epsilon_t \sim \mathcal{N}(0, \sigma_{obs}) \\
z_{level,t} &= z_{level,t-1} + z_{slope,t-1} + \epsilon'_t \qquad \epsilon'_t \sim \mathcal{N}(0, \sigma_{level}) \\
z_{slope,t} &= z_{slope,t-1} + \epsilon''_t \qquad \epsilon''_t \sim \mathcal{N}(0, \sigma_{slope}).
\end{align*}

With respect to the seasonal component,
we specify it through
\begin{align*}
x_t &= F z_t + v_t \qquad v_t \sim \mathcal{N}(0, \sigma_{obs})\\
z_t &= G z_{t-1} + w_t \qquad w_t \sim \mathcal{N}(0, \sigma_{seas})
\end{align*}
where $F$ is a $12$-dimensional vector
$( 1,0,\ldots, 0,0)$ 
and $G$ is the $12\times 12$ matrix 
\begin{equation*}
G = \begin{bmatrix}
0 & 0 & \ldots & 0 & 1 \\
1 &	0 & & 0 & 0 \\
0 & 1 & & 0 & 0 \\
 & & \ddots & & \\
 0 & 0 & & 1 & 0
\end{bmatrix}.
\end{equation*}
\\

Further details are in \textcite{west1998bayesian}.

\subsection{VAE}

\paragraph{Model details.}

The prior distribution $p(z)$ for the latent variables $z \in \mathbb{R}^{10}$ is a standard factorized Gaussian. The decoder distribution $p_\theta(x|z)$ and the encoder distribution (initial variational approximation) $q_{0,\phi}(z|x)$ are parameterized by two feed-forward neural networks, as 
detailed in Figure \ref{fig:arch_vae}.

\paragraph{Hyperparameter settings.}
The optimizer Adam is used in all experiments, with 
la earning rate of $\lambda=0.001$. We~also set $\eta = 0.001$. We train for 15 epochs (fMNIST) and 20 epochs (MNIST) to achieve a performance 
similar to the VAE in \parencite{pmlr-v89-titsias19a}. For the VIS-5-10 setting, we train only for 10 epochs to allow a fair computational comparison in terms of similar computing times.

\subsection{cVAE}


\paragraph{Model details.}

The prior distribution $p(z)$ for the latent variables $z \in \mathbb{R}^{10}$ is a standard factorized Gaussian. The decoder distribution $p_\theta(x|y,z)$ and the encoder distribution (initial variational approximation) $q_{0,\phi}(z|x,y)$ are parameterized by two feed-forward neural networks whose details can be found in Figure \ref{fig:arch}.
Equation (\ref{eq:mc_cvae}) is approximated with one MC sample from the variational approximation in all experimental settings, {as it allowed fast inference times while offering better results.}

\begin{figure}[h]
\RecustomVerbatimEnvironment{Verbatim}{BVerbatim}{}
\inputminted[fontsize=\scriptsize]{python}{img/arch_vae.tex}
\caption{Model architecture for the VAE.}
\label{fig:arch_vae}
\end{figure}
\unskip
\begin{figure}[h]

\RecustomVerbatimEnvironment{Verbatim}{BVerbatim}{}
\inputminted[fontsize=\scriptsize]{python}{img/arch.tex}
\caption{Model architecture for the cVAE.}
\label{fig:arch}
\end{figure}

\paragraph{Hyperparameter settings.}
The optimizer Adam was used in all experiments, with a learning rate of $\lambda=0.01$. We set the initial $\eta = 5 \times 10^{-5}$.

\end{subappendices}

\chapter{Adversarial Classification}\label{cha:adv}

\section{Introduction}



Classification is a major research area in machine learning with important applications  in security and cybersecurity, including fraud detection \parencite{bolton2002statistical}: phishing detection \parencite{rakesh}, terrorism \parencite{terror} or cargo screening \parencite{cargo}. An increasing number of processes are being automated through classification algorithms, being essential that these are robust to trust key operations based on their output. State-of-the-art classifiers perform extraordinarily well on standard data, but they have been shown to be vulnerable to adversarial examples, that is, data instances targeted at fooling the underlying algorithms. \textcite{comiter} provides an excellent introduction from a policy perspective, pointing out the potentially enormous security impacts that such attacks may have over systems for filter content, predictive policing or autonomous driving, to name but a few. 

Most research in classification has focused on obtaining more accurate algorithms, largely ignoring the eventual presence of adversaries who actively manipulate data to fool the classifier in pursue of a benefit. Consider 
spam detection: as classification algorithms are incorporated to such task,
spammers learn how to evade them. Thus, rather than sending their spam messages in standard language, they 
 slightly modify spam words (frequent in spam messages but not 
so much in legitimate ones), misspell them or 
change them with synonyms; or they add good words (frequent in legitimate emails but not in spam ones) to fool the detection system. 

Consequently, classification algorithms in critical AI-based systems must be robust against adversarial data manipulations. To this end, they have to take into account possible modifications of input data
due to adversaries.~The subfield of
classification that seeks for algorithms with robust behaviour against adversarial perturbations is known as adversarial classification (AC)
and  was pioneered by \textcite{dalvi2004adversarial}.
Stemming from their work, the prevailing paradigm 
when modelling the confrontation between classification systems and adversaries has been game theory, see recent reviews by \textcite{BIGGIO2018317} and \textcite{doi:10.1002/widm.1259}. This entails well-known common knowledge hypothesis \parencite{Antos,gameTheoryACriticalIntroduction2004} according to which agents share information about their beliefs and preferences. From a fundamental point of view, this is 
not sustainable in  application areas such as security or cybersecurity,
as participants try to
hide and conceal information.  

After reviewing key developments in game-theoretic approaches to AC in Sections \ref{sec:ac_acr} and \ref{sec:ac_ac}, we cover novel techniques based on adversarial risk analysis (ARA, \textcite{AMLARA}) in Sections  \ref{sec:ac_acra} and \ref{sec:scalable}. Their key advantage is that they do not assume strong common knowledge hypothesis concerning belief and preference sharing, as with standard game theoretic approaches to AML. In this, we unify, expand
and improve upon earlier work in \textcite{naveiro2018adversarial} and
\textcite{gallego2020protecting}.
Our focus is on binary classification
 problems in face only of {\em exploratory attacks}, defined to have influence over operational data but not over training ones. In addition, we restrict our attention to attacks affecting only malicious instances, the so-called \textit{integrity-violation attacks}, the usual context in most security scenarios. We assume that the attacker will, at least, try to modify every malicious instance before the
classifier actually observes it. 
Moreover, attacks will be assumed to be {\em deterministic},
in that we can predict for sure the results of their 
application over a given instance.
\textcite{AdversarialMachineLearning2011} and \textcite{Barreno2006} provide taxonomies of attacks against classifiers. 

We first consider approaches in which learning about the adversary is performed in the
operational phase, studying how to robustify generative classifiers against attacks. In certain applications, these  could be very demanding from a computational perspective; for those cases, we present in Section \ref{sec:scalable} an  
approach in which adversarial aspects are incorporated in the training phase.
Section \ref{sec:conEx} illustrates the proposed framework
with spam detection and image processing examples.

\section{Attacking Classification Algorithms}\label{sec:ac_acr}

\subsection{Binary Classification Algorithms}\label{sec2.1}

In binary classification settings, an agent that we call classifier ($C$, she)
may receive instances belonging to one of two possible  classes denoted, in our context, as malicious ($y=y_1$) or innocent ($y=y_2$).  Instances have  features $x \in \mathbb{R}^d$ whose distribution informs about their class $y$.  Most classification approaches can be typically broken down in two separate stages, the training and operational phases \parencite{bishop2006pattern}.
%
%

The first one is used to learn the distribution $p_C(y|x)$, modelling the classifier's beliefs about the  instance class $y$ given its features $x$. 
Frequently, a distinction is introduced 
between {\em generative} and {\em discriminative} models. 
   In the first case, models $p_C(x|y)$ and $p_C (y)$ are learnt from training data; based on them,
$p_C(y|x)$ is deduced through Bayes theorem. Typical examples include 
Naive Bayes \parencite{rish2001empirical} and (conditional) variational autoencoders 
 \parencite{kingma2014semi}.
%
In discriminative cases, $p_C(y|x)$ is directly learnt from data.
Within these, an important group of methods uses  
a parameterised function
$f_\beta : \mathbb{R}^d \rightarrow \mathbb{R}^2 $
so that the prediction is given through 
$p_C(y|x,\beta) = \mbox{softmax} (f_\beta (x))[y]$: when $f_\beta (x) = \beta' x$, we recover the logistic regression
model 
\parencite{mccullagh1989generalized}; 
if $f_\beta$ is  a sequence of linear transformations alternating certain nonlinear activation functions,   
 we obtain a feed-forward neural network \parencite{bishop2006pattern}. 
 Learning depends then on the underlying
methodology adopted.
\begin{itemize}
\item In frequentist approaches, training data $\mathcal{D}$ is 
typically used to construct a (possibly regularised) maximum likelihood estimate $\hat{\beta}$, and $p_C(y | \hat{ \beta }, x )$ is employed for classification. Parametric
differentiable models are amenable to training with \emph{stochastic gradient descent}
(SGD) \parencite{bottou2008tradeoffs}  
using a minibatch 
of samples at each iteration. 
This facilitates, e.g.,  training 
deep neural networks with large amounts of
high-dimensional data as 
with images or text data \parencite{10.5555/3086952}.
\item In Bayesian approaches, a prior $p_C (\beta)$ is used to compute the posterior $p_C (\beta | \mathcal{D} )$, given the data $\mathcal{D}$, and the predictive distribution
\begin{equation}\label{MIERDA}
p_C (y | x, \mathcal{D})= \int p_C(y | \beta , x ) p_C (\beta | \mathcal{D} ) d\beta 
\end{equation} 
is used to classify. Given current technology,
in complex environments  
we are sometimes only able to approximate the posterior mode $\hat {\beta}$,
then using $p_C(y | \hat {\beta} , x )$.
\end{itemize}
In any case, and whatever the learning approach adopted, we shall use the notation $p_C (y |x)$. 

The second stage is {\em operational}.
The agent makes class assignment decisions based on $p_C (y |x)$. 
This may be formulated through the influence diagram (ID)
\parencite{evaluatingInfluenceDiagrams1986} in Figure \ref{fig:classification}. {In such a diagram, square nodes describe decisions; circle nodes, uncertainties; hexagonal nodes refer to the associated utilities. Arcs pointing to decision nodes are dashed and represent information available when the corresponding decisions are made. Arcs pointing to chance and value nodes suggest conditional dependence.}
\begin{figure}[H]
\centering
\includegraphics[scale=1]{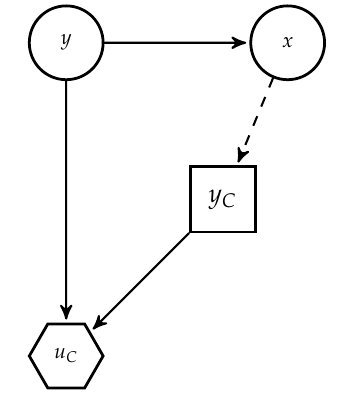}
\caption{Classification as an influence diagram.} \label{fig:classification}
\end{figure}
 The classifier guess $y_c$ for an observed instance $x$ provides her with a utility $u_C (y_c, y_i)$ when the
 actual class is $y_i$ and she aims at maximising expected utility
 through  
\begin{equation}\label{amarinha}
 \arg\max_{y_c} \sum_{i=1}^2
u_C (y_c , y_i ) p _C (y_i | x ).
\end{equation}
An important example of utility functions is the 0-1 utility: $u_C (y_c, y_i) = \mathbb{I}(y_c = y_i)$, where $\mathbb{I}$ is the indicator function. This leads to deciding based on maximising the
 predictive probability of correct~classification
\begin{equation}\label{seen}
\arg\max_{y_c} \sum_{i=1}^2 \mathbb{I}(y_c = y_i) p_C(y_i|x) = \arg\max_{y_c} p_C(y_c|x).
\end{equation}
In general, the 
utility is characterised as
a $2 \times 2$ matrix, whose $ij$-th entry 
represents the utility that the classifier
receives for classifying an instance of true label $j$ as of being of class $i$. This essentially balances the relative importance of false positives and negatives. Consider the case of autonomous driving systems. 
In this case, proper classification of identified objects is of
major 
importance. Misclassification errors increase the 
likelihood of incorrect forecasts of an object’s behaviour and an inaccurate assessment of the environment. To minimise the risk of accidents, the system should be very sensitive and react safely when there is uncertainty about the situation. Regrettably, such systems are prone to false-positive emergency identifications that lead to unnecessary reactions. 


Recall, anyway, that there are classification techniques, such as those based on SVMs, that rather than breaking classification in training and operational stages, directly learn a function that maps features $x$ into labels $y$. Should we like to apply the methodologies described below to this type of classifiers, we would need to produce estimates of $p_C(y|x)$ using their outputs. This can be achieved using calibration techniques such as 
\parencite{platt1999probabilistic} scaling.

\subsection{Attacks to Binary Classification Algorithms} \label{sec:att_class}

Consider now another agent called adversary ($A$, he).
He aims at fooling $C$ and make her err in classifying instances to gain some benefit. 
$A$ applies an attack $a$ to the features $x$ leading to $x'=a(x)$, the actual observation received by $C$, which does 
not observe the originating instance $x$.
For notational convenience, we sometimes write $x=a^{-1} (x')$.
Upon observing $x'$, $C$ needs to determine the instance  class. 
As we next illustrate, an adversary unaware classifier
may incur in gross mistakes if she classifies based on features $x'$, instead of the original ones.

\paragraph{Example.} Attacks to spam detection systems will be illustrated with experiments carried out with the UCI Spam Data Set \parencite{spambase1999}.
This set contains data from 4601 emails, out of which $39.4 \%$ are spam. For classification purposes, we represent each email through 54 binary variables indicating the presence (1) or absence (0) of 54 designated words in a dictionary.

Table \ref{tab:cleanVSattack} presents the performance
of four standard classifiers ({\em  Naive Bayes, logistic regression, neural net} and {\em  random forest}) based on a 0--1 utility function, against tainted and untainted data. The neural network model is a two layer one. The logistic regression is applied with L1 regularisation; {this  is equivalent to performing maximum a posteriori estimation in a logistic regression model with a Laplace prior \parencite{park2008bayesian}}. 
Means and standard deviations of accuracies are estimated via repeated hold-out validation over ten repetitions~\parencite{kim2009estimating}.
%

\begin{table}[H]
\caption{Accuracy comparison (with precision) 
 	of four classifiers on clean
 	(untainted) and attacked (tainted) data.}
	\centering
	\begin{tabular}{ccccc}
		\toprule
		\textbf{Classifier} & \textbf{ Acc. Unt.} & \textbf{Acc. Taint.}  \\
		\midrule
		Naive Bayes   & $0.891 \pm 0.003$ & $0.774\pm 0.026$  \\
		Logistic Reg.  & $0.928 \pm 0.004$ & $0.681 \pm 0.009$    \\  
		Neural Net & $0.905 \pm 0.003$ &      $0.764 \pm 0.007$       \\
		Random Forest  & $0.946 \pm 0.002$  &    $0.663 \pm 0.006$       \\
		\bottomrule
	\end{tabular}%
	
	\begin{tabular}{@{}c@{}} 
\multicolumn{1}{p{\textwidth -.88in}}{\footnotesize Observe the important loss in accuracy of the four classifiers, showcasing a major degradation in performance of adversary unaware classifiers when facing attacks.}
\end{tabular}
	\label{tab:cleanVSattack}%
\end{table}


\section{Adversarial Classification: Game-Theoretic Approaches}\label{sec:ac_ac}

As exemplified, an adversary unaware classifier may be fooled into issuing wrong classifications leading to severe performance deterioration. Strategies to mitigate this problem are thus needed. These may be based on 
building models of the attacks likely to be undertaken by the adversaries and enhancing classification algorithms to be robust against such attacks.

For this, the ID describing the classification problem (Figure \ref{fig:classification}) is augmented to incorporate adversarial decisions, leading to a biagent influence diagram (BAID) \parencite{Banks},
 Figure \ref{fig:jointProblem}. { In it, grey nodes refer to elements solely affecting $A$'s decision; white nodes to issues solely pertaining to $C$'s decision; striped nodes affect both agents' 
decisions}. 
We only describe the new elements. 
First, the adversary decision is represented through node $a$ (the chosen attack). The impact of the data transformation over $x$ implemented by $A$ is described through node $x'$, the data actually observed by the classifier; the 
corresponding node is deterministic 
(double circle) as we assume deterministic attacks.
Finally, the utility of $A$ is represented with node $u_A$,
with form  $u_A(y_c, y)$, when $C$ says $y_c$ and the actual label is $y$.
 We assume that attack implementation has negligible
costs.
As before, $C$ aims at maximising her 
expected utility; $A$ also aims at maximising his expected utility 
trying to confuse the classifier (and, consequently, reducing her 
expected utility). 
\begin{figure}[H]
\centering
\includegraphics[scale=1]{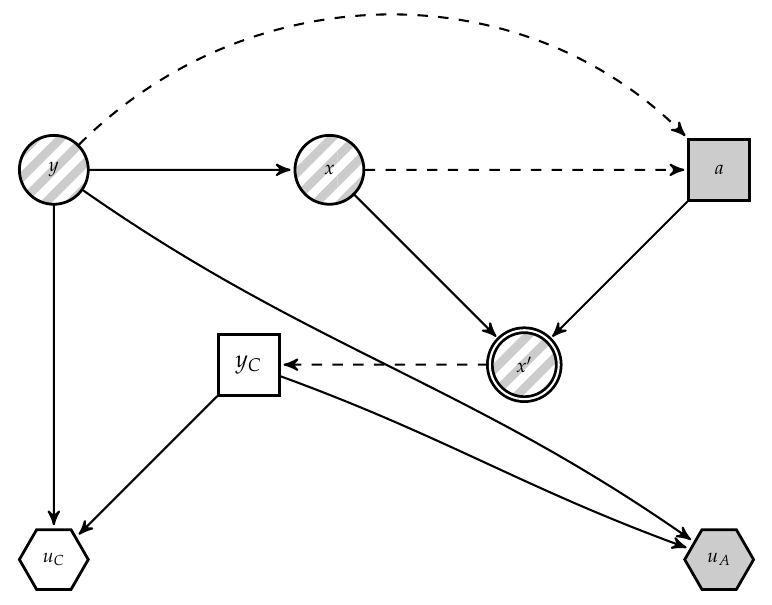}
\caption{Adversarial classification as a biagent influence diagram.}\label{fig:jointProblem}
\end{figure}

\subsection{Adversarial Classification. The Pioneering Model.
}
\textcite{dalvi2004adversarial} provided a pioneering approach to enhance classification algorithms when an adversary is present, calling it adversarial classification (AC).
 Because of its importance,
we briefly review it, using our notation. 
The authors view the problem as a game between a classifier $C$ and an adversary $A$,
using the following forward myopic proposal.

\begin{enumerate}
\item \textit{C} first assumes that data is untainted and computes her optimal classifier through (\ref{amarinha}).
\textcite{dalvi2004adversarial} focuses on  a utility sensitive Naive Bayes algorithm \parencite{elkan2001foundations}.
\item Then, assuming that \textit{A} has complete information about the classifier's elements (a common knowledge
assumption)
and that $C$ is not aware of his presence, the authors 
compute $A$'s optimal attack. 
To that end, they propose solving the integer programming  problem
\begin{align} \label{dalviA}
    \begin{split}
        \min & \left \lbrace \sum_{X_i \in \mathcal{X}_C} \sum_{x'_i \in \mathcal{X}_i} \mathcal{C}(x_i, x'_i) \delta _{i, x'_i} \right \rbrace\qquad {\rm s.t.} \\
        & \sum_{X_i \in \mathcal{X}_C} \sum_{x'_i \in \mathcal{X}_i} \left ( \log \frac{p_C(x_i|y_1)}{p_C(x_i|y_2)} - \log \frac{p_C(x'_i|y_1)}{p_C(x'_i|y_2)} \right ) \delta _{i, x'_i} \geq gap(x) .
    \end{split}
\end{align}
where $\mathcal{X}_C$ is the set of features the classifier uses for making her decision and $X_i$, the $i$-th feature, with original value $x_i \in \mathcal{X}_i$, assumed to be discrete. 
 $\delta_{i, x'_i}$ is a binary variable adopting value 1 when feature $X_i$ is changed from $x_i$ to $x'_i$, being $\mathcal{C}(x_i, x'_i)$ the cost of such change. 
It is easy to see that problem \eqref{dalviA} reflects the fact that the adversary tries to minimise the cost of modifying an instance, provided that such modification induces a change in the classification decision.

%
%

\item Subsequently, the classifier, assuming that $A$ implements the previous attack (again a common knowledge assumption) and that the training data is untainted, deploys her optimal classifier against it:
she chooses $y_c$ maximising $\sum_{i=1}^2 u_C(y_c, y_i) p_C(y_i |x')$, her posterior expected utility given that she observes the possibly modified instance $x'$. This is equivalent to optimising 
\begin{eqnarray}\label{dalviCK}
u_C (y_c, y_1) p_C(x' |y_1) p_C(y_1) + u_C (y_c, y_2) p_C(x' |y_2) p_C(y_2).
\end{eqnarray}
Estimating all these elements is straightforward, except for $p_C(x' \vert y_1)$. Again, appealing to a common knowledge assumption, the authors assume that the classifier, who knows all the adversary's elements, can solve 
\eqref{dalviA} and compute $x' = a(x)$, for each $x$ the adversary may receive.~Thus
\begin{eqnarray*} 
p_C(x' |y_1) = \sum_{x \in \mathcal{X}'} p_C (x \vert y_1) p_C (x' \vert x, y_1)
\end{eqnarray*}
where $\mathcal{X}'$ is the set of possible instances leading to the observed one and $p_C(x' \vert x, y_1) = 1$ if $a(x) = x'$ and 0 otherwise.
%
%
\end{enumerate}
The procedure could continue for more stages.
However, \textcite{dalvi2004adversarial} considers sufficient to use these three.

As presented (and the authors actually stress this
in their paper), very strong common knowledge assumptions are made: all parameters of both players are known to each other. Although standard in game theory, such  assumption is unrealistic in the security scenarios  
typical of AC.


\subsection{Other Adversarial Classification Game-Theoretic Developments}\label{sec:other_AC}

In spite of this, stemming from 
\textcite{dalvi2004adversarial}, AC has been predated by game-theoretic approaches, as reviewed in \textcite{biggio2014security} or \textcite{li2014feature}.
Subsequent attempts have focused on analysing attacks over classification algorithms and assessing their robustness against them, 
under various assumptions about the adversary. Regarding attacks, these have been classified as {\em white box}, when
the adversary knows every aspect of the defender's system
like the data used, the algorithms and the entire feature space; 
{\em black box}, that assume limited capabilities for the adversary, e.g., he is able to send membership queries to the classification system as in
\textcite{adversarialLearning2005}; and, finally,
{\em gray box}, which are in between the previous ones, as in \textcite{zhou2012adversarial}
where the adversary, who has no knowledge about the data and the algorithm used
seeks to push his malicious instances
as innocent 
ones, thus assuming that he is able to estimate such instances and has knowledge about the feature space. 

Of special importance in the AC field, mainly 
within the deep learning community, are the so called {\em adversarial 
examples} \textcite{goodfellow2014explaining} which may be formulated in game-theoretic terms as 
optimal attacks to a deployed classifier, requiring, in principle,
precise knowledge about the model used by the classifier.
To create such examples, $A$ finds the best attack which 
leads to perturbed data instances obtained from solving
problem   
$
\min_{\| \delta \| \leq \epsilon} \widehat{c}_A(h_{\theta} (a(x)), y),
$
 with $a(x) = x + \delta$, a  perturbation of the original data instance $x$; 
$h_{\theta} (x)$, the output of a predictive model with parameters $\theta$;
and
$\widehat{c}_A(h_{\theta} (x), y)$ the adversary's cost when instance $x$ of class $y$ is classified as of being of  class $h_\theta (x)$. This cost is usually taken to be $-\widehat{c}_D(h_{\theta} (x), y)$, where $c_D$ is the defender's cost.
The Fast Gradient Signed Method  (FGSM, \textcite{goodfellow2014explaining}) and related attacks in the literature \parencite{vorobeichikantar} assume that the attacker
has precise knowledge of the underlying model and parameters of the involved classifier, 
  debatable in most security settings.

A few methods have been proposed to robustify classification algorithms in adversarial settings. Most of them have focused on application-specific domains, as \textcite{Kocz2009FeatureWF} on spam detection. \textcite{Vorobeychik:2014:ORC:2615731.2615811} study the impact of randomisation schemes over classifiers against adversarial attacks proposing an optimal randomisation scheme as best defence.
 To date, 
\emph{adversarial training} (AT) \parencite{madry2018towards}
is one of the most promising defence techniques:
 it trains the defender model using attacked samples,
solving the~problem
\begin{equation*}
    \min_{\theta} \mathbb{E}_{(x,y) \sim \mathcal{D}} \left[ \max_{\| \delta_x \| \leq \epsilon} \widehat{c}_D(h_{\theta} (a(x)), y) \right],
\end{equation*}
thus minimising the empirical risk of the model under worst case perturbations of the data $\mathcal{D}$.
AT can be formulated as a zero-sum game. The inner maximisation problem is solved through project gradient descent (PGD) with iterations 
$
x_{t+1} = \Pi_{B(x)} (x_t - \alpha \nabla_x  \widehat{c}_A(h_{\theta} (x_t), y)),
$
where $\Pi$ is a projection operator ensuring that the perturbed input falls within an acceptable boundary $B(x)$,  and $\alpha$ is an intensity hyperparameter referring to the attack strength. After $T$ PGD iterations, set $a(x) = x_T$ and optimise with respect to $\theta$.   Other attacks that use gradient information are deepfool \textcite{moosavi2016deepfool}, yet \textcite{madry2018towards} argue that the PGD attack is the strongest one based only on gradient information from the target model. However, there is evidence that it is not sufficient for full defence in neural models since it is possible to perform attacks using global optimisation routines, such as the \emph{one pixel attack} from \textcite{su2019one} or \textcite{gowal2018ibp}.


Other approaches have focused on improving the game theoretic model in \textcite{dalvi2004adversarial}. However, to our knowledge, none has been able to overcome the above mentioned unrealistic common knowledge assumptions, as may be seen in recent reviews by \textcite{BIGGIO2018317} and \textcite{doi:10.1002/widm.1259}, who point out the importance of this issue. As an example, \textcite{kantarciouglu2011classifier} use a Stackelberg game in which both players know each other's payoff functions. Only \textcite{grosshans2013bayesian} have attempted to relax common knowledge assumptions in adversarial regression settings, reformulating the corresponding problem as a Bayesian game.

\section{Adversarial Classification: Adversarial Risk Analysis Approaches}\label{sec:ac_acra}

Given the above mentioned issue, we provide ARA solutions to AC.
We focus first on modelling the adversary's problem in
the operation phase. We present the classification problem faced by $C$ as a Bayesian decision analysis problem in Figure \ref{fig:classifierProblem}, derived from Figure \ref{fig:jointProblem}. In it, $A$'s decision appears as random to the classifier, since she does not know how the adversary will attack
the data. {For notational convenience, when necessary we distinguish between random variables and realisations using upper and lower case letters, respectively; in particular, we denote by $X$ the random variable referring to the original instance (before the attack) and $X'$ that referring to the possibly attacked instance. 
$\hat{z}$ will indicate an estimate of $z$.}
\begin{figure}[H]
\centering
\includegraphics[scale=1]{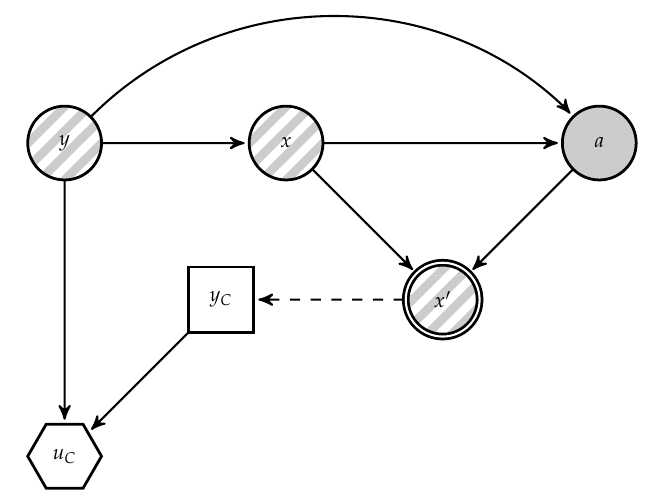}
\caption{Classifier problem.}
\label{fig:classifierProblem}
\end{figure}

An adversary unaware classifier would classify the observed instance $x'$ based on
\begin{equation}\label{valdokk}
\arg\max_{y_c} \sum_{i=1}^2  u_C (y_c , y_i ) p_C (y_i | X = x ').
\end{equation}
This leads to performance degradation as reflected in Section \ref{sec:att_class}.
In contrast, an adversary aware classifier would use
\begin{equation}\label{valdosurf}
\arg\max_{y_c} \sum_{i=1}^2  u_C (y_c , y_i ) p_C (y_i | X' = x ').
\end{equation}
In the adversary unaware case, $p_C (y_i | X = x ')$ is easily estimated using the training data. However, estimating 
the probabilities    $p_C (y_i | X' = x ')$ is harder, as it entails modelling how the adversary will modify the original instance $x$, which is unobserved. Moreover,  recall that common knowledge is not available, so we actually lack $A$'s beliefs and preferences. ARA helps us in modelling our uncertainty about them. In doing so, 
 robustness is typically enhanced.
We discuss two strategies depending on whether we use generative or discriminative classifiers as base models. 



\subsection{The Case of Generative Classifiers}

Suppose first that a generative classifier is required. As training data is clean by assumption, we can estimate $p_C (y)$ (modelling the classifier's beliefs about the {class} distribution) and $p_C (X=x|y)$ (modelling her beliefs about the feature distribution given the {class} when $A$ is not present). In addition, assume that when $C$ observes $X' = x'$, she can estimate the set $\mathcal{X}'$ of original instances $x$ potentially leading to the observed $x'$.  As
later discussed,
in most applications this will typically be a very large set.
When the feature space is endowed with a metric $d$,
an approach to approximate ${\cal X}'$ would be to consider
$
{\cal X}'= \{ x : d(x,x')<\rho \}
$ 
 for a certain threshold $\rho$. We will now survey the approach presented in \textcite{naveiro2018adversarial}.



Given the above, when observing $x'$ the classifier should choose the class with maximum posterior expected utility (\ref{valdosurf}).
%
%
Applying Bayes formula, and ignoring the denominator, 
which is irrelevant for optimisation purposes, 
she must find the class 
\begin{eqnarray}\label{fene}
   y^*_c(x') &=& \argmax_{y_c} \sum_{i=1}^2 u_C(y_c, y) p_C (y_i) p_C (X'=x'|y_i) \nonumber \\
    &=& \argmax_{y_c} \sum_{i=1}^2  u_C(y_c, y_i) p_C (y_i) 
    \left[ \sum_{x \in \mathcal{X}'} p_C (X' = x'\vert X=x, y_i) p_C (X=x \vert y_i)\right] .   
\end{eqnarray}
In such a way, $A$'s modifications are taken into account through the probabilities $p_C(X' = x'\vert X=x, y)$.  
At this point, recall that the focus is restricted to integrity violation attacks.
Then, $p_C(X' = x'\vert X=x, y_2) = \delta(x'-x)$ and problem
 (\ref{fene}) becomes 
\begin{eqnarray} \label{pis}
   &\argmax_{y_c} \bigg[ u_C(y_c, y_1) p_C(y_1)  \sum_{x \in \mathcal{X}'} p_C(X' = x'\vert X=x, y_1) p_C(X=x \vert y_1) \nonumber \\
   & + u_C(y_c, y_2) p_C(y_2) p_C(X=x' \vert y_2)\bigg].
\end{eqnarray} 
%
%

Note that should we assume full common knowledge, we would know $A$'s beliefs and preferences and, therefore, we would be able to 
solve his problem exactly: when $A$ receives an instance $x$ from class $y_1$, we could compute the transformed instance. In this case, $p_C(X'|X=x,y_1)$ would be 1 just for the $x$ whose transformed instance coincides with
that observed by the classifier and 0, otherwise. Inserting this 
 in \eqref{pis}, we would recover Dalvi's formulation \eqref{dalviCK}.
However, common knowledge about beliefs and preferences does not hold. 
Thus, when solving $A$'s problem we have to take into account our uncertainty about his elements and, given that he receives an instance $x$ with label $y_1$, we will not be certain about the attacked output $x'$.
This will be reflected in our estimate $p_C(x' |x,y_1)$ which will not be 0 or 1 as in Dalvi's approach (stage 3). With this estimate, we would solve problem \eqref{pis}, summing $p_C(x|y_1)$ over all possible originating instances, with each element weighted by $p_C(x' |x,y_1)$.

To estimate these last distributions, we resort to $A$'s problem, assuming that this agent aims at modifying $x$ to maximise his expected utility by making $C$ classify malicious instances as innocent. The decision problem faced by $A$ is presented in Figure \ref{fig:adversaryProblem}, derived from Figure \ref{fig:jointProblem}. In it, $C$'s decision appears as an uncertainty to $A$. 
\begin{figure}[H]
\centering
\includegraphics[scale=1]{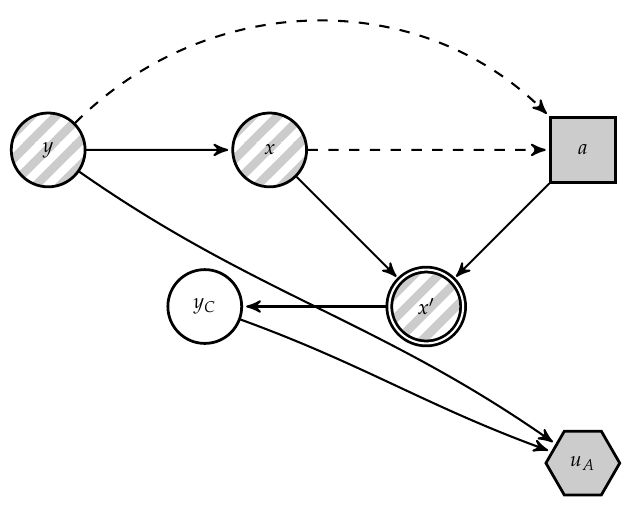}
\caption{Adversary problem.}\label{fig:adversaryProblem}\label{id}
\end{figure}
To solve the problem, we need $p_A(y^*_c(x')|x')$, which models $A$'s beliefs about $C$'s decision when she observes $x'$.  Let
$p$ be the probability $p_A( y_c^* (a (x)) = y_1 | a (x) )$
that $A$ concedes to $C$ saying that the instance is malicious when she observes $x' = a(x)$. 
Since $A$ will have uncertainty about it, let us model its density using $f_A (p | x' = a(x) )$ with expectation $p_{x' = a(x)}^A $. Then, upon observing an instance $x$ of class $y_1$, $A$ would choose the data transformation maximising his expected utility:

%

\begin{eqnarray}\label{att_prob_n}
x'(x,y_1) &\equiv& a^*(x, y_1) = \argmax_{z} \int \bigg[ u_A( y_c^*(z) = y_1,y_1) \cdot p \nonumber\\
&+& u_A(y_c^*(z) = y_2, y_1) \cdot (1-p)\, \bigg] f _A (p | z = a(x) ) \dd p \nonumber \\
&=& \argmax_{z}
\left[ u_A(y_1,y_1) - u_A(y_2, y_1) \right] p_{z = a(x)}^A   +  u_A(y_2, y_1),
\end{eqnarray}
where $u_A(y_i, y_j)$ is the attacker's utility when the defender classifies an instance of class $y_j$ as one of  class $y_i$. 


However, the classifier does not know the involved utilities $u_A$ and probabilities $ p_{z=a(x)} ^A $ from the adversary.
Let us model such uncertainty through a random utility function $U _A$ and a random expectation $ P_{z=a(x)}^A $. Then, we could solve for the random attack, optimising the random expected~utility
\begin{eqnarray*}
 X'(x,y_1) \equiv A^*(x, y_1) = \argmax_{z} \bigg(  \left[ U_A( y_1, y_1) - U_A( y_2, y_1) \right] P_{z=a(x)}^A   +    U_A( y_2, y_1) \bigg).
\end{eqnarray*}
We then use such distribution and make ({assuming that the set of attacks is discrete,
and similarly in the continuous case}) 
$p_C(x'|x,y_1) =Pr(X'(x,y_1) = x')$ which was the missing
ingredient in 
problem \eqref{pis}. Observe that it could be the case that $Pr(X'(x,y_1)=x)>0$,
i.e., the attacker does not modify the~instance.

%

Now, without loss of generality, we can 
associate utility $0$ with the worst consequence and $1$ with the best one, having the other consequences intermediate utilities. In $A$'s problem, his best consequence holds when the classifier accepts a malicious instance as innocent (he has opportunities to continue with his operations) while the worst consequence appears when the defender stops an instance (he has wasted effort in a lost opportunity),
 other consequences being intermediate. Therefore, we 
adopt $U_A(y_1, y_1) \sim \delta_{0}$ and $U_A(y_2, y_1) \sim \delta_{1}$. Then, the Attacker's random optimal attack would be
\begin{equation}\label{random_att_prob_n}
X'(x,y_1) \equiv A^*(x, y_1) = \argmax_{z} \bigg[\Big( 0 - 1 \Big) P_{z=a(x)}^A + 1 \bigg] 
     =  \argmin_{z}  P_{z=a(x)}^A .
\end{equation}
Modelling $P^A _{z=a(x)}$ is more delicate. 
It entails strategic thinking and 
could lead to a hierarchy of decision making problems, described in \textcite{rios2012adversarial} in a simpler context. 
A heuristic to assess it 
is based on using the probability $r=Pr _C (y_c^* (z) = y_1|z)$ that $C$ assigns to the instance received being malicious assuming that she observed $z$, with some uncertainty around it. 
As it is a probability, $r$ ranges in $[0,1]$ and we could make $P_{z=a(x)}^A \sim \beta e (\delta _1, \delta _2 )$, with mean $\delta _1 / (\delta _1 + \delta _ 2) = r$ and variance $ (\delta _1 \delta _2) / [(\delta _1 + \delta _2 )^2 (\delta _1 + \delta _2 + 1) ]=var $ as perceived. $var$ has to be tuned depending on the amount of knowledge $C$ has about $A$. Details on how to estimate $r$ are problem dependent.
%

In general, to approximate $p_C(x'|x,y_1)$ we use
Monte Carlo (MC) simulation drawing $K$ samples $\bigl(P_{z} ^{A,k} \bigr)$, $k = 1,\dots,K\, $
from  $P_{z} ^{A}$, finding
$ X'_k (x, y_1)=\argmin_{z}  P_{z} ^{A,k}
$ %
  and estimating $p_C(x'|x,y_1)$ using the proportion of times in which the result of the random optimal attack coincides with the instance actually observed by the defender:
\begin{eqnarray}
\widehat{p}_C ( x'\,|\,x, y_1) =  \frac {\# \{X'(x, y_1) =  x'\}} {K}. \label{mc_estimate}
\end{eqnarray}
It is easy to prove, using arguments in \textcite{10.5555/3172929}, that \eqref{mc_estimate} converges almost surely to $p_C (x'|x,y_1)$.
{In this, and other MC approximations considered, 
recall that the sample sizes  are essentially dictated by the required
    precision. 
    Based on the Central Limit Theorem \parencite{Chung},
    MC sums approximate integrals with probabilistic bounds
    of the order $\sqrt{\frac{var}{N}}$ where $N$ is the MC sum size.
    To obtain a variance estimate, we run a few iterations and estimate 
    the variance, then choose the  required size based on such bounds.}

Once we have 
an  approach to estimate the required probabilities, 
we implement the scheme described through
Algorithm \ref{alg:uno}, which reflects an initial training 
phase to estimate the classifier and an operational phase
which performs the above once a (possibly perturbated)
instance $x'$ is received by the classifier.

\begin{algorithm}[ht] %
\caption{General adversarial risk analysis (ARA) procedure for AC. Generative}  
\label{alg:uno}
\begin{algorithmic}[1]
\State {\bf Input:} Training data $\mathcal{D}$, test instance $x'$.
\State {\bf Output:} A classification decision $y_c^*(x')$.
\Train
\State Train a generative classifier to estimate $p_C(y)$ and $p_C(x|y)$
\EndTrain
\Operation
\State Read $x'$.
\State Estimate $p_C(x' |x,y_1)$ for all $x \in \mathcal{X}'$.
\State Solve
\begin{eqnarray*}
y_c^*(x') &=& \argmax_{y_C} \bigg[ u_C(y_C, y_1) \widehat{p}_C(y_1) \sum_{x\in \mathcal{X}'} \widehat{p}_C(x' |x,y_1) \widehat{p}_C(x|y_1) 
\\ &+& u_C(y_C, y_2) \widehat{p}_C(x'|y_2)\widehat{p}_C(y_2)\bigg].
\end{eqnarray*}
\State Output $y_c^*(x')$.
\EndOperation
\end{algorithmic}
\end{algorithm}

\section{Scalable Adversarial Classifiers}\label{sec:scalable}
The approach in Section \ref{sec:ac_acra} performs all relevant inference about the adversary during operations, and is only suitable for generative models, such as Naive Bayes. This could be too expensive computationally, especially in applications that require fast predictions
based on large scale deep models as motivated by the following image
processing problem.


\paragraph{Attacks to neural-based classifiers.}
Section \ref{sec:other_AC} discussed adversarial examples.  This kind of attack may harm intensely neural
network performance, such as that used in image classification tasks \parencite{szegedy2013intriguin}. It has been shown that simple one-pixel attacks can seriously affect performance \parencite{su2019one}.
As an example, we continue the discussion started in the introduction of this thesis, in Section \ref{sec:chall2}. Consider 
a relatively simple deep neural network (a multilayer perceptron model) \parencite{10.5555/3086952}, trained to predict the handwritten digits in the MNIST dataset \parencite{MNIST}. In particular, we used a 2 layer feed-forward neural network with relu activations and a final softmax layer to compute the predictions over the 10 classes. This requires simple extensions from binary to multi-class classification. This network accurately predicts 99\% of the digits. Figure \ref{fig:10samples} provides ten MNIST original 
samples (top row) and the corresponding images (bottom row) perturbed through FGSM,
which are misclassified. For example, the original 0 
(first column) is classified
as such; however, the perturbed one
is not classified as a 0 (more specifically, as an 8) even if it looks as such to the human~eye.


\begin{figure}[H]
\centering
  \includegraphics[scale=1.]{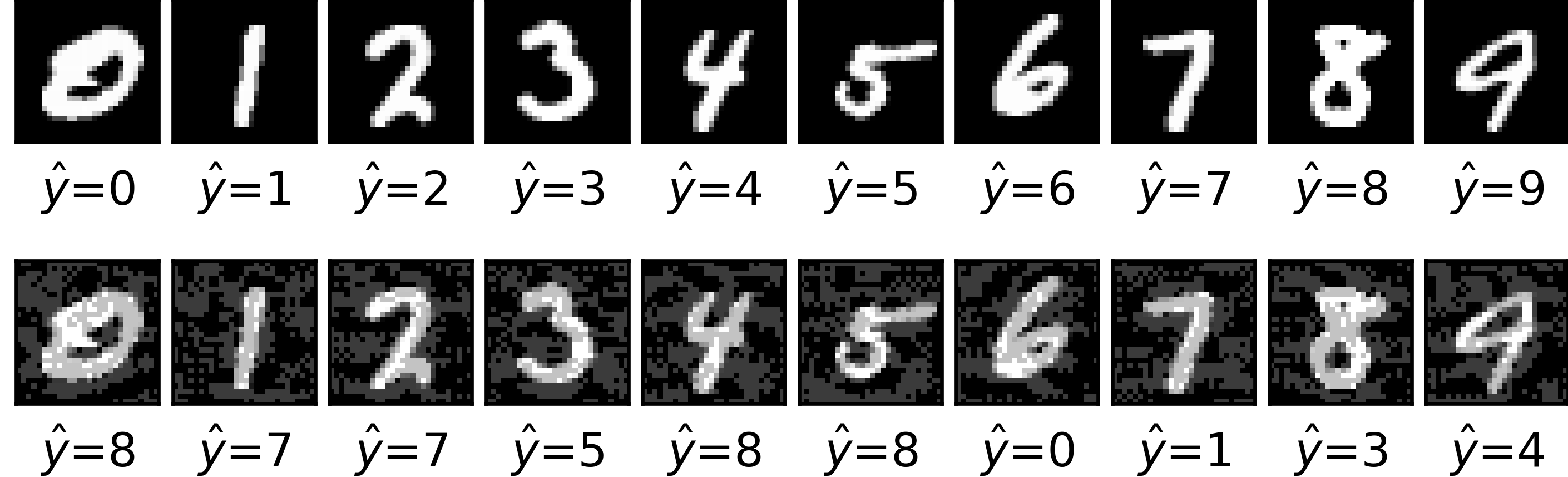}
  \caption{Ten MNIST examples (top) and their perturbations (bottom). Predicted class shown for each~example.}
  \label{fig:10samples}
\end{figure}
Globally, accuracy gets reduced to around 50\% (see Figure \ref{fig:comparison_mnist}a, curve NONE far right). This suggests a very important performance degradation due to the attack. Section \ref{sec:exp_scalabl} continues this example to evaluate the robustification procedure we introduce in Section  \ref{sec:uncs}. \\

The computational difficulties entailed by 
the approach in Section \ref{sec:ac_acra} stem from the following issue:
iterating over the set $\mathcal{X}'$ of potentially
originating instances.  
 If no assumptions about attacks are made, this set grows rapidly.
 For instance, in the spam detection example in Section \ref{sec:att_class}, 
 let $n$ be the number of words in the dictionary considered by
 $C$ to undertake the classification (54 in our spam case). If 
 we assume that the attacker modifies at most one word,
 the size of $\mathcal{X}'$ is $\mathcal{O}(n)$; if he modifies at most two words, it is $\mathcal{O}(n^2)$, $\dots$, and 
 if he modifies at most $n$ words, 
 the number of possible adversarial manipulations (and thus the size of $\mathcal{X}'$) is $2^n$.
 
   Even more extremely, in the 
 image processing example we would have to deal with
 very high-dimensional data (the MNIST images above consist of $28 \times 28$ pixels, each taking 256 possible values depending on the gray level). 
 This deems any enumeration over the set $\mathcal{X}'$ totally unfeasible. In order to tackle this issue,
 constraints over the attacks could be adopted, for example 
 based on distances.

 Therefore, as the key adversary modelling steps are taken during operations,
 the approach could be inefficient in applications requiring fast predictions.

 \subsection{Protecting Differentiable Classifiers}\label{sec:uncs}
 
Alternatively, learning about the adversary 
could be undertaken during the training phase as now 
presented. This provides faster predictions during operations and avoids the expensive step of sampling from $p_C(x \vert x')$.
A relatively weak assumption is made to achieve this: 
the model is probabilistic and can be differentiable in the $\beta$ parameters.
By this we understand classifiers with structural form 
$p_C(y|\beta,x)$ differentiable in $\beta$. 
A specially relevant form 
is 
\begin{eqnarray}\label{TGF}
p(y|\beta, x) = \mbox{softmax} (f_\beta (x))[y], & \text{where} & \mbox{softmax}(x)[j] = \frac{\exp{x_j}}{\sum_{i=1}^k \exp{x_i} }. 
\end{eqnarray}
This covers a large class of models. For example,
if $f_\beta$ is linear in inputs, 
we recover multinomial regression 
\parencite{mccullagh1989generalized}; if we take $f_\beta$ to be a sequence of linear transformations alternating non-linear activation functions, such as Rectified Linear Units (ReLU), we obtain a feed-forward neural network \parencite{10.5555/3086952}. 
These models have the benefit of being amenable to training using \emph{stochastic gradient descent} (SGD) \parencite{bottou2010large},
or any of its recent variants, such as Adam \parencite{kingma2014adam}, 
allowing for scaling to both wide and tall datasets. Then, from SGD we can obtain the posterior distribution of the model by adding a suitable noise term as in SG-MCMC samplers like stochastic gradient Langevin Dynamics (SGLD) \parencite{welling2011bayesian} or accelerated variants such as the ones introduced in chapter 2 of this thesis.

 We require  
 only sampling attacked instances from $p_C(x'|x)$, an attacker model. Depending on the type of data, this model can
 come through a discrete optimisation problem
 (as in the attacks of Section \ref{sec:att_class}) or a continuous optimisation problem
 (in which we typically resort to gradient information to obtain the most harmful perturbation 
 as in FGSM).
 These attacks require white-box access to the defender model, 
which, as mentioned, is usually unrealistic in security.
With continuous data, adversarial perturbations $x'$ 
are typically computed through the optimization problem
$$
x'= \arg\min_{x' \in B(x)} \log p(y|x',\beta),
$$ 
where $B(x)$ is some neighborhood of $x$ 
over which the attacker has influence on.
Exact solution of this problem is intractable in high-dimensional data.
Thus, attacks in the literature resort to approximations using gradient information. One of the most popular ones is FGSM,  
given by $x' = x -\epsilon\, \mbox{sign} \nabla_x \log p(y|x,\beta)$ where $\epsilon$ is a step size 
 reflecting attack intensity. Other attack examples 
include the Projected Gradient Descent (PGD) \parencite{madry2018towards} or the
\textcite{carlini2017towards} attack. 
These assume that the attacker has full knowledge of the target model,
which is unrealistic in many scenarios. 
AT using FGSM would correspond to sampling from a Dirac delta distribution centered at the FGSM update, 
that is, $p(x'|x) = \delta (x' - (x -\epsilon \mbox{ sign} \nabla_x \log p(y|x,\beta)))$. 

More realistically, based on ARA, we 
apportion two sources of uncertainty.



\paragraph{Defender uncertainty over the attacker model $p_C(x'|x)$.}

 The attacker modifies data in the operation phase.
 The defender has access only to training data $\mathcal{D}$; therefore, 
 she will have to simulate attacker's actions using such training set. Now, uncertainty can also come from the adversarial perturbation chosen. If the model is also differentiable wrt the input $x$ (as 
 with continuous data such as images or audio), instead of computing a single, optimal and deterministic perturbation, as in AT, we use SGLD to sample adversarial examples from regions of high adversarial loss, adding a noise term to generate uncertainty. Thus, we employ iterates of the form
\begin{equation}\label{eq:unc_attack}
x_{t+1} = x_t -\epsilon\, \mbox{sign} \nabla_x \log p_C(y|x_t,\beta) + \mathcal{N}(0, 2\epsilon),
\end{equation}
for $t=1,\ldots,T$, where $\epsilon$ is a step size. We can also consider uncertainty over the hyperparameters $\epsilon$ (say, from a rescaled Beta
distribution, since it is unreasonable to consider too high or too low learning rates) and the number $T$ of iterations (say, from a Poisson distribution). In addition, we can consider mixtures of different attacks, for instance by sampling a Bernoulli random variable and, then, choosing the gradient corresponding to either FGSM or another 
attack such as Carlini and Wagner's.

\paragraph{Attacker uncertainty over the defender model $p_C(y|x,\beta)$.}

It is reasonable to assume that the specific model architecture and parameters 
are unknown by the attacker. 
To reflect his uncertainty, he will instead perform attacks over
a model $p_C(y|x,\beta)$ with uncertainty  
over the values of the model parameters $\beta$, with continuous
support. 
This can be implemented through scalable Bayesian approaches
in deep models: the defended model is trained using SGLD, obtaining posterior samples via the iteration
\begin{equation}\label{eq:unc_model}
\beta_{t+1} = \beta_t + \eta \nabla_\beta (\log p(y|x, \beta) + \log p(\beta)) + \mathcal{N}(0, 2\eta I),
\end{equation}
with $\eta$ a learning rate, and $x$ sampled either from
the set $\mathcal{D}$ (untainted) or using an attacker model as in the previous point. We sample maintaining
a proportion 1:1 of clean and attacked data.  To see that this sampling method actually samples
from the  posterior $p(\beta| \mathcal{D})$,
where 
$\mathcal{D}$ designates a mixture of the clear and attacked dataset, just note that the Langevin SG-MCMC sampler for the posterior can be written as
$$
\beta_{t+1} = \beta_{t} + \eta \nabla \log p(\beta | \mathcal{D})  + \mathcal{N}(0, 2\eta I).
$$
Noting that $\log p(\beta | \mathcal{D}) = \log p(\mathcal{D} | \beta ) + \log p(\beta) - \log p(\mathcal{D})$ and taking gradients wrt to $\beta$ we have that $\nabla \log p(\beta | \mathcal{D}) = \nabla \log p(\mathcal{D} | \beta ) + \nabla \log p(\beta)$. An unbiased estimator of the gradient is obtained by replacing the whole dataset with just a single sample, or a minibatch, leading to the sampler in Eq. (\ref{eq:unc_model})
with $x \sim \mathcal{D}$. \\



The previous approaches incorporate some uncertainty about the attacker's elements to sample from $p_C(x' \vert x)$. A full ARA sampling from this distribution could be performed as well. 
Algorithm \ref{alg:large_ara} describes how to generate samples incorporating both types of uncertainty previously described. On the whole, it uses the first source to generate perturbations to robustly train the defender's model based on the second source.

\begin{algorithm}[!ht] %
\caption{Large scale ARA-robust training for AC}  
\label{alg:large_ara}
\begin{algorithmic}[1]
\State {\bf Input:} Defender model $p_C(y|x,\beta)$, attacker model $p_C(x'|x)$. 
\State {\bf Output:} A set of $k$ particles $\{\beta_i\}_{i=1}^{K}$  approximating the posterior distribution of the defender model learnt using ARA training.  
\For{$t=1$ to $T$}
\State Sample $x_1, \ldots,  x_K \sim p_C(x' | x)$ with 
(\ref{eq:unc_attack}).
\State $\beta_{i,t+1} = \beta_{i,t} + \eta \nabla_\beta (\log p_C(y|x, \beta) + \log p(\beta)) + \mathcal{N}(0, 2\eta )$ for each $i$ (SGLD)
\EndFor
\State Output $(\beta_{1,T},...,\beta_{K,T}))$
\end{algorithmic}
\end{algorithm}

Very importantly, its outcome tends to be more robust  than
the one we could achieve with just AT protection, since we incorporate some level of adversarial uncertainty. In the end, we collect $K$ posterior samples, $\lbrace \beta_k \rbrace_{k=1}^K$, and compute predictive probabilities for a new sample $x$ via marginalisation
through
$p_C(y|x) = \dfrac{1}{K} \sum_{k=1}^K p_C(y|x, \beta_k)$,
using the previous predictive probability to robustly classify the
received instance.


\section{Case Studies}
\label{sec:conEx}

We use the spam classification dataset from Section \ref{sec:att_class} 
and the MNIST dataset to illustrate the methods in Section \ref{sec:scalable}. As shown in Section \ref{sec:att_class}, simple attacks such as good/bad word insertions are sufficient to
critically affect the performance of spam detection algorithms. The small perturbations from Section \ref{sec:scalable} prove that unprotected image classification systems can easily be fooled by an adversary. 

\subsection{Robustified Classifiers in Spam Detection Problems}\label{sec:exp_scalable}

For the first batch of experiments, we use the same classifiers as in
 Section \ref{sec:scalable}. As a dataset, we use again the UCI Spam Data Set from that section. Once the models to be defended are trained, we perform attacks
over the instances in the test set, solving 
problem \eqref{random_att_prob_n} for each test spam email, removing the uncertainty that is not present from the adversary's point of view. 
We next evaluate the scalable approach in Section \ref{sec:scalable} under the two differentiable models
among the previous ones: logistic regression and neural network (with two hidden layers). 
As required, both models can be trained using SGD plus noise methods to obtain uncertainty estimates from the posterior as in (\ref{eq:unc_model}). 
Next, we attack the clean test set using the procedure 
in Section \ref{sec:att_class} and evaluate the performance of 
our robustification proposal. 
Since we are dealing with discrete attacks, we cannot use the uncertainty over attacks as in (\ref{eq:unc_attack}) and just resort to adding the attacker's uncertainty over the defender model as in (\ref{eq:unc_model}). To perform classification with this model, we evaluate the Bayesian predictive distribution using $5$ posterior samples obtained after SGLD iterations.
Results are in Table \ref{tab:rob_exps} which include as 
entries:
Acc.\ Unt. (accuracy over untainted data); 
Acc.\ Tai. (it.\ over tainted data);
Acc.\ Rob.\ Taint.\ (it.\ over tainted data after
our robustification adding uncertainties).
Note that the first two columns do not coincide with those
in Table \ref{tab:cleanVSattack}, as we have changed 
the optimisers to variants of SGD to be amenable to  
the robustified procedure in Section \ref{sec:uncs}.


\begin{table}[H]
\caption{Accuracy 
	of two classifiers on clean
	(untainted), and attacked (tainted) data, with and without robustification.}
	\centering
	\begin{tabular}{cccc}
		\toprule
		\textbf{Classifier }& \textbf{Acc. Unt.} & \textbf{Acc. Tai.}  & \textbf{Acc. Rob. Taint.} \\
		\midrule
		Logistic Reg.  & $0.931 \pm 0.007$    & $0.705 \pm 0.009$ &    $0.946 \pm 0.003$  \\  
		Neural Net  &      $0.937 \pm 0.005$     &     $0.636 \pm 0.009$    &     $0.960 \pm 0.002$  \\

		\bottomrule
	\end{tabular}%
	\label{tab:rob_exps}%
\end{table}%

Observe that the proposed robustification process
 indeed protects differentiable classifiers, 
 recovering from the degraded performance under attacked
 data. Moreover, in this example, the robustified classifiers
  achieve even higher accuracies than those attained by the original classifier over clean data. This is likely due to the fact that the presence of an adversary has a regularizing effect, being able to improve the original accuracy of the base algorithm and making it more robust. 

\subsection{Robustified Classifiers in Image Classification Problems}\label{sec:exp_scalabl}

Next, we show the performance of the scalable approach continuing with the digit recognition example from Section \ref{sec:scalable}. This batch of experiments aims to show that the ARA-inspired defence  can also scale to high-dimensional feature spaces and multiclass problems. The 
network architecture is shown in Section \ref{sec:scalable}. It is trained using SGD with momentum ($0.5$) for 5 epochs, with learning rate of $0.01$ and batch size of 32. The training set includes 50,000 digit images, and we report results over a 10,000 digit test set. As for uncertainties from Section \ref{sec:uncs}, we use both kinds.

\begin{figure}[!htb]
\begin{center}
\minipage{0.45\textwidth}
\hspace{-0.5em}
  \includegraphics[width=\textwidth]{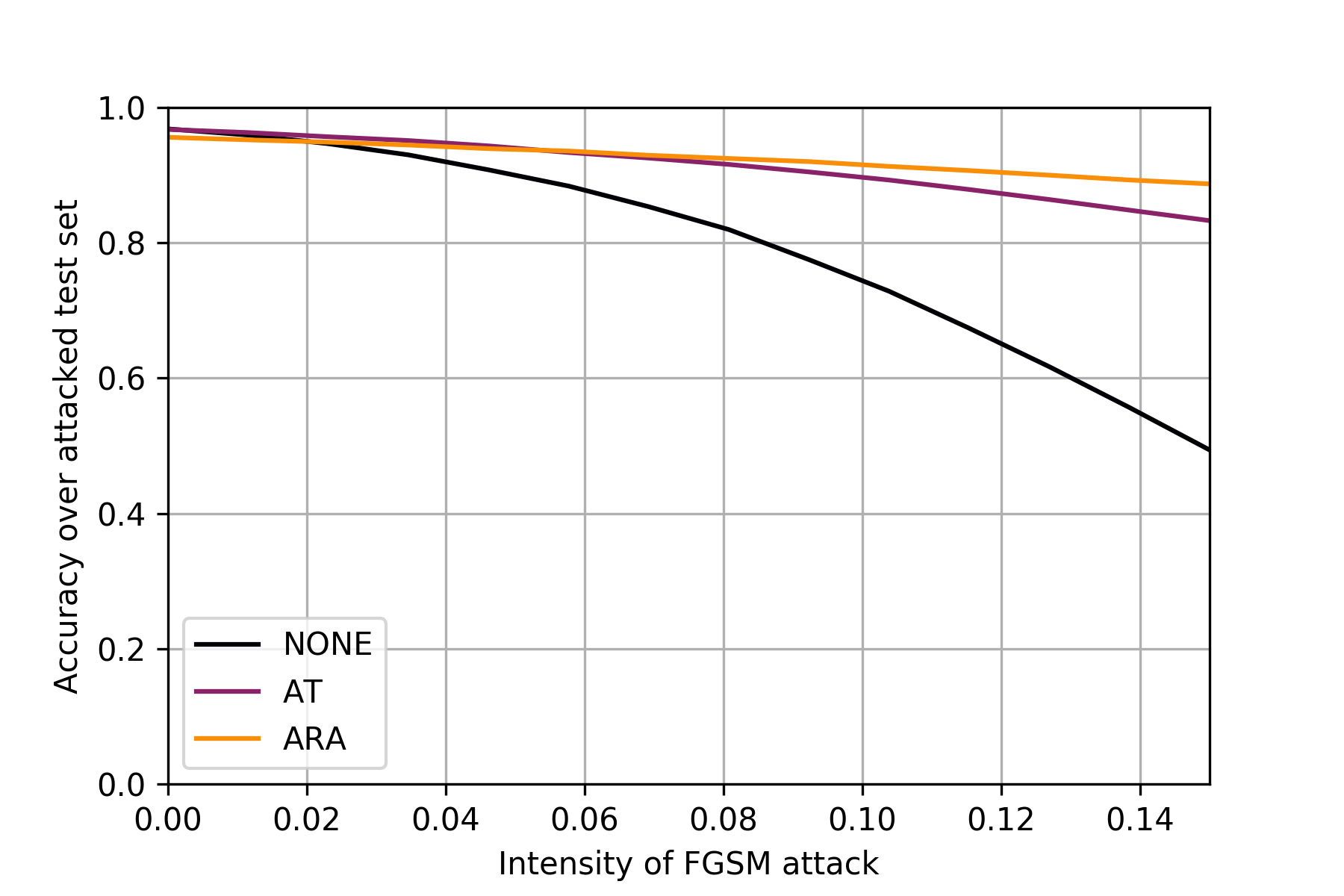}
   \caption{FGSM attack.}
\endminipage\hfill
\minipage{0.45\textwidth}
  \includegraphics[width=\textwidth]{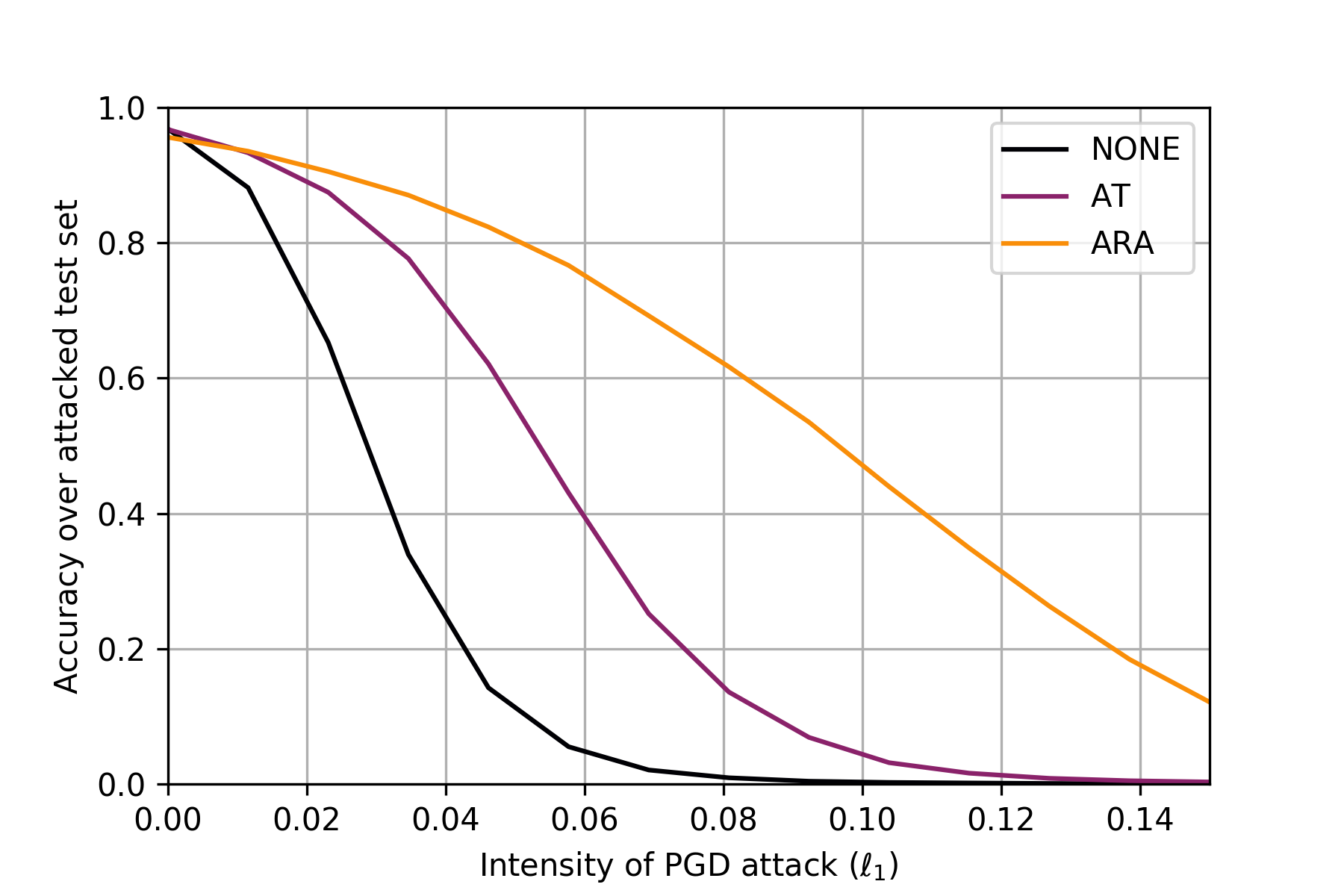}
   \caption{PGD attack under $\ell_1$ norm.}
\endminipage
\end{center}
\caption{Robustness of a deep network for MNIST under three defence mechanisms 
(none, AT, ARA). (a) depicts the security evaluation curves under the FGSM attack. (b) depicts the security evaluation curves under the PGD attack.}\label{fig:comparison_mnist}
\end{figure}

Figure \ref{fig:comparison_mnist} shows the \emph{security evaluation curves} \parencite{BIGGIO2018317} for three different defences (none, AT and ARA), using two attacks at test time: FGSM and PGD. Such curves depict the accuracy of the defender model at this task ($y$-axis), under different attack intensities $\alpha$ ($x$-axis). Note how the uncertainties provided by the ARA training method substantially improve the robustness of the neural network under both attacks. From the graphics, we observe that the ARA approach provides a greater degree of robustness as the attack intensities increase. This suggests that the proposed robustification framework can scale well to both tall and wide datasets and that the introduction of uncertainties by the ARA approach is highly beneficial to increase the robustness of the defended models.

\section{Summary}

In this chapter, we studied the important problem of developing defences that protect ML models against malicious and intentional attacks and increase their robustness. We have surveyed how game theoretical approaches provide a framework to develop defence mechanisms, however they are not realistic since they are pervaded by common knowledge assumptions. We adopted ideas from Adversarial Risk Analysis to model several sources of uncertainty that the defender model may face, and proposed an enhanced robustification method. Experiments in spam detection and image classification show the benefits of the introduced framework.

\chapter{Issues in Multi-agent Reinforcement Learning}\label{cha:ararl}
\section{Introduction}

This chapter presents a framework at the intersection of machine learning (reinforcement learning, RL, in particular) and game theory, with applications to security and the problem of data sharing. 
The first part introduces Threatened Markov Decision Processes (TMDPs) 
as  a framework to support an agent against potential opponents
in a RL context as well as schemes 
resulting in a novel learning approach to deal with TMDPs. Sections \ref{sec:background} to \ref{sec:exps_ararl} are dedicated to this new multiagent RL framework.
The second part formalizes the interactions between data producers and consumers as a dynamic game. We call it the data sharing game, leveraging the theory about the Iterated Prisoner's Dilemma and RL. Sections \ref{sec:ds} is dedicated to this issue.

\subsection{A motivation for adversarial RL}

The AML literature has predominantly focused on
the supervised setting \parencite{BIGGIO2018317}.
Our focus will be in reinforcement learning (RL) \parencite{sutton2012reinforcement}.
In it, an agent takes actions
sequentially to maximize a cumulative reward (utility), learning from
interactions with the environment \parencite{kaelbling1996reinforcement}. With the advent of deep learning, deep RL
has faced an incredible growth
\parencite{mnih2015human,silver2017mastering,chinorros}. 
However, such systems may be also targets of 
adversarial attacks \parencite{huang2017adversarial,lin2017tactics} 
and robust learning methods are thus
needed. 
A related field of interest is multi-agent RL \parencite{marl_over,leelee};
in it, multiple 
agents try to learn to compete or cooperate. Single-agent RL methods fail in
these settings, since they do not take into account 
the non-stationarity 
stemming from the actions of the other agents. 

The major contribution of this chapter is
to demonstrate how ARA facilitates dealing with 
secure RL by developing
a framework to model adversaries that interfere with the reward
generating processes. 
Unlike earlier work in multi-agent RL that focus on partial aspects
of learning, we present a general framework and provide extensive empirical evidence of its efficiency, flexibility and robustness.
In particular, we provide
various strategies to learn an opponent's policy, including a fictitious play approach, a level-$k$ thinking scheme and a model averaging procedure to update the most likely adversary. Moreover, we extend 
the approach to deep RL scenarios.
A variety of security scenarios serve to showcase the 
generality of our proposal, covering issues like robustness
against model misspecification, multiple opponents and extensions
to deep RL settings.



\section{Background in Reinforcement Learning}\label{sec:background}

Our focus in this chapter is on RL, widely studied as an efficient computational approach to deal with \textit{Markov decision processes} (MDP) \parencite{howard:dp}. These model a single agent (the decision maker, DM, she)
making decisions 
while interacting within an environment.  
They consist of a
tuple $\left( \mathcal{S}, \mathcal{A}, \mathcal{T}, R\right)$
where $\mathcal{S}$ is the state space with states $s$; $\mathcal{A}$,   the set of
actions $a$ available to the DM;
$\mathcal{T}: \mathcal{S} \times \mathcal{A} \rightarrow \Delta (\mathcal{S})$, the transition distribution, where $\Delta(X)$ denotes the set of
all distributions over a set $X$;
and, finally, $R : \mathcal{S} \times \mathcal{A}  \rightarrow \Delta(\mathbb{R}) $, the reward distribution modelling the utility that the agent perceives
from state $s$ and action $a$. 
The DM chooses her actions according to a policy $\pi: \mathcal{S} \rightarrow \Delta(\mathcal{A})$ 
with the aim of maximizing 
her long term discounted expected utility
$ 
 \mathbb{E}_{\tau} \left[  \sum_{t=0}^\infty \gamma^t R(a_t, s_t) \right] 
 $ 
where $\gamma \in (0,1)$ is a discount factor and $\tau = (s_0, a_0, s_1, a_1, \ldots)$ is a trajectory of states and actions.
  An efficient approach to solving MDPs is $Q$-learning \parencite{sutton2012reinforcement}; with it, the DM maintains a
 function
$Q : \mathcal{S} \times \mathcal{A} \rightarrow \mathbb{R}$ that estimates
her expected cumulative reward. This function is updated according to
\begin{align}\label{eq:ql}
Q(s,a) &:= (1 - \alpha) Q(s, a)  +  \alpha \left(r(s,a) + \gamma\max_{a'} Q(s', a') \right),
\end{align}
where $\alpha$ is a learning rate hyperparameter
and $s'$ designates the state 
after having chosen action $a$ in state $s$ and received the reward $r(s,a)$.

Our interest is in security settings in which other  agents
interfere with the DM's reward process. This renders the environment
non stationary making  $Q$-learning suboptimal \parencite{marl_over}. 
Thus, to support the agent,
we must be able to reason about and forecast the adversaries' behaviour.
Several opponent modelling principles have been proposed in the  literature,
as reviewed in \textcite{Albrecht2018AutonomousAM}. 
Since our concern is on the ability of our agent to predict 
her adversary's actions, we focus on methods
with this goal encompassing three approaches: policy reconstruction, type-based reasoning and recursive learning.

The first group of methods fully reconstruct the adversary's decision
making problem, typically assuming a parametric model fitted after 
after observing the adversary's behaviour. A dominant approach,
known as fictitious play  (FP) \parencite{brown1951iterative} models 
the other agents computing their frequencies of choosing various actions.
Secondly, 
type-based reasoning assume that the modelled agent belongs
to one of several fully specified types learning 
a distribution over such types, 
without explicitly including the ability of
other agents to reason about their opponents' decision making. 
Finally, explicit representations of the other agents' beliefs about their
opponents lead to an infinite hierarchy of decision making problems,
as illustrated in \textcite{rios2012adversarial} in a  simpler class of problems. Level-$k$ thinking  \parencite{stahl1994experimental} typically stops this potentially infinite regress at a level in which no more information is available, fixing the action prediction process at that depth with a non-informative probability distribution.

These modelling tools have been used in analytics 
research but, to the best
of our knowledge, their application to $Q$-learning in multi-agent settings remains 
largely unexplored. Relevant extensions 
have rather focused on modelling the whole system 
through Markov games. The three best-known solutions include minimax-$Q$ learning \parencite{littman1994markov}, where at each iteration a minimax 
problem is solved; Nash-$Q$ learning \parencite{hu2003nash}, 
which generalizes the previous algorithm to the non-zero sum case;
and friend-or-foe-$Q$ learning \parencite{littman2001friend}, in which the DM knows in advance whether her opponent is an adversary or a collaborator. Within the bandit literature, \textcite{auer1995gambling} introduced a non-stationary setting in which the reward process is affected by an adversary. Bayesian estimation of Q-values was studied in \textcite{RePEc:wly:apsmbi:v:27:y:2011:i:2:p:151-163} using Monte Carlo samples, but they do not take into account the presence of other agents. Our approach departs fundamentally from the above
work in that we consider the problem from a single DM's point of view and 
explicitly model the opponent using several strategies.

Our work relates to 
\textcite{lanctot2017unified}. Its authors propose a deep
cognitive hierarchy as an approximation to a response oracle 
to obtain policies that can exploit their adversaries. Ours
instead draws on the
level-$k$ thinking tradition, building opponent models that help the DM predict 
their behaviour. In addition, we provide a solution to 
the important issue
of choosing between different opponent cognition levels. 
\textcite{he2016opponent} also addressed opponent modelling 
in deep RL scenarios. However, the authors rely on using a particular neural network (NN) architecture over the deep $Q$-network model. Instead, we tackle the problem without assuming a NN architecture for the agent policies, though our proposed scheme can be adapted to that setting, Section 3.4. \textcite{foerster2018learning}
adopt a similar experimental setting, but their methods apply only
when both players get to know exactly their opponents' policy parameters or a maximum-likelihood estimator, a form of common knowledge. Our work
instead builds upon estimating the opponents' $Q$-function, not requiring direct knowledge of the opponent's internal parameters.

There are also links with iPOMDPs as in \textcite{gmytrasiewicz2005framework}, though the authors only address the planning problem, 
not addressing the case in which the DM also has to learn from the environment, corresponding to the RL scenario. In addition, we introduce a more simplified algorithmic apparatus (both in terms of implementation and complexity)
that performs well in examples and cover the case of mixtures of opponent types.

 \textcite{caballero2021identifying} takes grounding on level-$k$ thinking to compute optimal strategies in normal form games, 
 relying on solving exactly stochastic optimization programs. 
 As our framework also tackles MDPs, their approach would be infeasible in this case due to the large number of states. Therefore, we resort to a modified Q-learning scheme.

\textcite{pinto2017robust} proposes a method for deep RL based on policy gradients. However, it only works for zero-sum games. In addition, their experiments focuses on single-agent RL settings, that do not account for an adversary. 
 \textcite{wen2019probabilistic} uses level-$k$ thinking in deep RL, yet they only go up to level 2 and their framework 
do not extend easily to levels, unlike ours which 
is generic. Even at the same cognitive level, their framework is much more expensive computationally, since they rely on 
variational inference to estimate several latent variables. 

In summary, all of the proposed multi-agent $Q$-learning extensions are inspired by game theory
with its entailed common knowledge assumptions \parencite{gameTheoryACriticalIntroduction2004}, which are not realistic in the
security domains of interest to us. To mitigate this assumption, we consider the problem of
prescribing decisions to a single agent versus her opponents,
augmenting the MDP to account for potential adversaries  conveniently 
re-defining the $Q$-learning rule. This enables us to apply some of the previously mentioned modelling techniques
to the $Q$-learning setting, explicitly accounting
for the possible lack of information about the modelled opponent.
In particular, we propose to extend $Q$-learning from an ARA \parencite{adversarialRiskAnalysis2009} perspective. 

Our focus is on the case of a DM (agent $A$, she) facing a single opponent ($B$, he), though we provide an extension to a setting with multiple adversaries in Section \ref{sec:mul}.

\section{Threatened Markov Decision Processes}\label{sec:tmdps}

We propose an augmentation of a MDP
to account for the presence of adversaries who perform their
actions modifying state and reward dynamics, thus making the environment non-stationary.
\begin{definition}
A \emph{Threatened Markov Decision Process} (TMDP) is a tuple
$\left( \mathcal{S}, \mathcal{A}, \mathcal{B}, \mathcal{T}, R, p_A \right) $
in which $\mathcal{S}$ is the state space; $\mathcal{A}$, the set of
actions $a$ available to the supported agent $A$; $\mathcal{B}$, the set of
actions $b$ 
available to the adversary $B$, or threat actions; 
$\mathcal{T}: \mathcal{S} \times \mathcal{A} 
\times \mathcal{B} \rightarrow \Delta(\mathcal{S})$, the 
transition distribution; 
$R : \mathcal{S} \times \mathcal{A} \times \mathcal{B} \rightarrow
\Delta(\mathbb{R}) $, the reward distribution, the utility that 
the agent perceives from a given state $s$ and a pair
$(a,b)$ of actions; and $p_A (b | s)$ the
 distribution over the threats modeling 
 the DM's beliefs about
her opponent's move, for each state $s \in \mathcal{S}$.
\end{definition}
\noindent 
 To deal with TMDPs, we modify the standard $Q$-learning update rule (\ref{eq:ql}) by averaging over
 the likely actions of the adversary. This way the DM may anticipate potential threats within her
 decision making process and enhance the robustness of her decision making policy. Formally, replace (\ref{eq:ql})  by 
\begin{align}\label{eq:lr}
Q(s, a, b) := (1 - \alpha)Q(s, a, b) +  \alpha \left( r(s,a,b)
+ \gamma \max_{a'} \mathbb{E}_{p_A(b|s')} \left[ Q(s',a',b)  \right]  \right)
\end{align}
where $s'$ is the state reached after the DM and her adversary, respectively, adopt actions $a$ and $b$ when at state  $s$. Then, compute the 
expectation of $Q(s,a,b)$ taking into account the
uncertainty about  the opponent's action 
\begin{align}\label{eq:lr2}
Q(s,a) := \mathbb{E}_{p_A(b|s)} \left[ Q(s,a,b) \right], 
\end{align}
We use it to compute an $\epsilon-$greedy policy for the DM when the system is at state $s$: with probability $(1-\epsilon)$ choose action
$a^* = \argmax_a  \left[ Q(s,a)  \right] $; with probability $\epsilon$,
choose a random action uniformly. Appendix \ref{sec:p} proves the convergence
of the rule. Note that although in the experiments we focus on $\epsilon-$greedy strategies, other sampling methods can be straightforwardly used, such as a softmax policy to learn mixed strategies as described in Section 4.1.1.

Since common knowledge is not imposed,
we need to model the agent's uncertainty regarding the
adversary's policy through  $p_A (b | s)$.
 For this, we make the 
 assumption, standard in multi-agent RL, that both agents observe their opponent's actions and rewards
 after they have committed to them, proposing three approaches to forecast the opponent's policy $p_A (b | s)$:  
When the adversary is considered non-strategic
(he acts without awareness of the DM) we assess it based 
on FP; then, we provide a level-$k$ scheme;
finally, a method to combine different opponent models
is outlined, allowing us to deal with mixed behaviors.

\subsection{Non-strategic (level-0) opponents}\label{sec:non}

Consider first a stateless setting. In such case, 
the $Q$-function (\ref{eq:lr}) is written
  $Q(a_i,b_j)$, with $a_i \in \mathcal{A}$ the action chosen by
 the DM and $b_j \in \mathcal{B}$ the action chosen by the adversary, assuming that $\mathcal{A}$ and $\mathcal{B}$ are discrete action spaces.
 Then, the DM  computes the expected utility of action $a_i$ using the stateless version of \eqref{eq:lr2}
\[ \psi(a_i) = \mathbb{E}_{p_A(b)} [Q(a,b)] = \sum_{b_j \in \mathcal{B}} Q(a_i, b_j) p_{A}(b_j), \]
where $p_A (b_j)$ reflects $A$'s beliefs about her opponent's actions,
 and chooses the action $a_i \in \mathcal{A}$
maximizing $\psi(a_i)$.
 
 She needs to predict the action
 $b_j$ chosen by her opponent. A first option is to model her adversary
 using an approach inspired by FP, 
 estimating the probabilities with the empirical frequencies of the opponent past plays
 with $Q(a_i, b_j)$ updated according to the stateless version of Eq. (\eqref{eq:lr}).
We refer to this variant as
FP$Q$-learning.\footnote{Observe that although inspired by FP,
it is not the same scheme, since only one of the players (the DM) uses it to 
model her opponent, whereas in the standard FP algorithm \parencite{brown1951iterative} all players 
are assumed to adopt such scheme.}
We can re-frame it from a Bayesian perspective,
favouring its convergence if we have available relevant  
prior information about the adversary. 
Let $p_j = p_A(b_j)$ be the probability with which
the opponent chooses action $b_j$. 
If we adopt a Dirichlet prior
$(p_1 , \ldots, p_n) \sim \mathcal{D}(\alpha_1,\ldots,\alpha_n)$,
where $n$ is the number of actions available to the opponent, 
the posterior will be  
$\mathcal{D}(\alpha_1 + h_1,\ldots,\alpha_n + h_n)$, with  $h_i$ 
being the count for action $b_i$, $i=1,...,n$. 
If we denote its density as $f(p|h)$, the DM would choose the action $a_i$ maximizing her expected utility,
adopting the form
\begin{eqnarray*}
& \psi(a_i)  = \int \left[\sum_{b_j \in \mathcal{B}}
Q(a_i, b_j) p_j\right] f(p|h) dp 
= \sum_{b_j \in \mathcal{B}} Q(a_i, b_j) \mathbb{E}_{p|h}[p_j]
\propto  \sum_{b_j \in \mathcal{B}} Q(a_i, b_j) (\alpha_j + h_j).
\end{eqnarray*}

Generalizing this approach to account for states is straightforward
conceptually.
The $Q$-function adopts now the form $Q(s, a_i, b_j)$. The DM needs to assess the probabilities $p_A(b_j | s)$,
since it is natural to expect that her opponent behaves differently depending
on the state.
However, as $\mathcal{S}$ may be huge, even continuous, keeping track of $p_A(b_j|s)$ 
incurs in potentially prohibitive memory costs. 
We mitigate the 
problem by using 
Bayes rule, $p_{A}(b_j| s) \,\, \propto \,\, p(s| b_j)p(b_j)$, 
and the supported DM will choose her action at state $s$ by maximizing
\[ \psi_s(a_i) = \sum_{b_j \in \mathcal{B}} Q(s, a_i, b_j) p_{A}(b_j|s)
\propto 
\sum_{b_j \in \mathcal{B}} Q(s, a_i, b_j) p_{A}(s | b_j) p (b_j ).  \]
  \textcite{tang2017exploration} suggested an efficient method for keeping track of $p(s| b_j)$ using a hash table or bloom filters to maintain a count of the number of times that an agent visits each state, within single-agent RL to support a better exploration of the environment. We propose keeping 
  track of $n$ bloom filters, one for each
distribution $p(s|b_j)$, for tractable computation of the opponent's intentions
in the TMDP setting. This is transparently integrated
within the Bayesian paradigm, as we only need to store an additional array with the Dirichlet prior parameters $\alpha_i$, $i=1,\ldots, n$ for the $p(b_j)$ part. 

As a final comment, if we assume that the opponent has memory of the previous stage actions, we could straightforwardly extend the above scheme. However,
to mitigate memory requirements we use the concept of
mixtures of Markov chains \parencite{raftery1985model},
thus avoiding an exponential growth in 
the number of required parameters and linearly controlling 
model complexity.
For example,
in case the opponent belief model
is $p_{A}(b_t | a_{t-1}, b_{t-1}, s_t)$, so that the adversary recalls 
 the previous actions $a_{t-1}$ and $b_{t-1}$, we 
 factor it through a mixture 
\[
p_{A}(b_t | a_{t-1}, b_{t-1}, s_t) = w_1 p_{A}(b_t | a_{t-1})  + w_2 p_{A}(b_t | b_{t-1})
 + w_3 p_{A}(b_t | s_t),
 \] 
 with $\sum_i w_i = 1, w_i \geq 0, i=1,2,3$.
 
\subsection{Level-$k$ opponents}\label{sec:k}
When the opponent is
 strategic, he may model our supported DM as a level-0 thinker, thus making
 him a level-1 thinker. This chain can go up to infinity, so we will
 have to deal with modelling the opponent as a level-$k$ thinker, with $k$
 bounded by the computational or cognitive resources of the DM.
For this, a hierarchy of TMDPs is introduced with  
 $\emph{TMDP}_{i}^k$ referring
 to the TMDP that agent $i$ (the DM or the 
 adversary) needs to optimize,
 while considering its rival as a level-$(k-1)$ thinker,
 so that: 

\begin{itemize}
\item If the supported DM is a level-1 thinker, she optimizes  $ \emph{TMDP}_{A}^1 $. She models $B$ as a level-0 thinker
(e.g. as in Section \ref{sec:non}).
\item If she is a level-2 thinker, the DM optimizes 
$ \emph{TMDP}_{A}^2 $ modelling $B$ as a level-1 thinker:
this ``modelled" $B$ optimizes $ \emph{TMDP}_{B}^1 $, and while doing so,
he models the DM as level-0.
\item In general, we have a chain of TMDPs:
$$ \emph{TMDP}_{A}^k \rightarrow \emph{TMDP}_{B}^{k-1}
\rightarrow \emph{TMDP}_{A}^{k-2}  \rightarrow \cdots $$
\end{itemize}
Exploiting the fact that TMDPs correspond to repeated interaction settings
(and, by assumption, both agents observe all past
decisions and rewards), each agent may  estimate their
counterpart's $Q$-function, $\hat{Q}^{k-1}$: 
if the DM is optimizing $\emph{TMDP}_A^k$, she will keep her own 
$Q$-function (call it $Q_k$), and also an estimate
$\hat{Q}_{k-1}$ of her opponent's $Q$-function. This estimate may be
computed optimizing $\emph{TMDP}_B^{k-1}$ and so on until $k=1$.
Finally, the top level DM's policy is 
\[
\argmax_{a_{i_k}} Q_k(s, a_{i_k}, b_{j_{k-1}}),
\]
where $b_{j_{k-1}}$ is given by 
$
\argmax_{b_{j_{k-1}}} \hat{Q}_{k-1}(s, a_{i_{k-2}}, b_{j_{k-1}}),
$ and so on, until the induction basis (level-1) is reached 
in which the opponent may be modelled using the FP$Q$-learning approach 
in Section \ref{sec:non}.
Algorithm \ref{alg:l2ur} specifies the approach 
for a level-2 DM. 
The algorithm thus accounts for $Q_2$,  her $Q$-function,  and $\hat{Q}_1$,
that of her
opponent (who will be level-1). Figure
\ref{fig:lev2_scheme} provides a schematic view of the dependencies.

\begin{algorithm*}[!ht]
\begin{algorithmic}[1]
\Require $Q_2$, $\hat{Q}_1$, $\alpha_2, \alpha_1$ (DM and opponent $Q$-functions
and learning rates, respectively).
\State Observe  transition elements $(s, a, b, r_A, r_B, s')$ 
\State $\hat{Q}_1(s,b,a) := (1 - \alpha_1)\hat{Q}_1(s,b,a)  + \alpha_1 (r_B + \gamma \max_{b'} \mathbb{E}_{p_B(a'|s')} \left[ \hat{Q}_1(s',b', a') \right] )$ 
\State Compute $B$'s estimated $\epsilon-$greedy policy $p_A(b|s')$ from $\hat{Q}_1(s,b,a)$
\State $Q_2(s,a,b) := (1 - \alpha_2)Q_2(s,a,b) + \alpha_2 (r_A + \gamma \max_{a'} \mathbb{E}_{p_A(b'|s')} \left[ Q_2(s',a',b') \right]) $ 
\end{algorithmic}
\caption{Level-2 thinking update rule}
\label{alg:l2ur}
\end{algorithm*}

\begin{figure}[!ht]
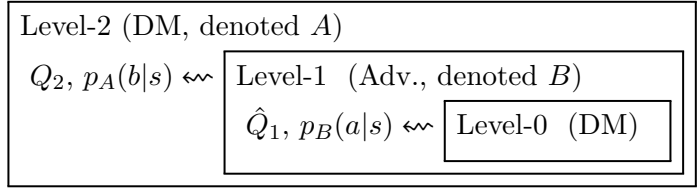

\centering
\stackinset{c}{.5in}{t}{.73in}{%
  \fboxrule=0pt\relax\framebox[2in][t]{%
  }}{\fboxrule=.75pt%
  \fbox{\stackunder{Level-2%
   \hspace*{\fill} (DM, denoted $A$) }%
    {
    $Q_2$,
    $p_A(b | s) \leftsquigarrow$
    \fbox{\stackunder{Level-1 \hspace*{\fill} (Adv., denoted $B$) }%
      {
      $\hat{Q}_1$,
    $p_B(a | s) \leftsquigarrow$
      \fbox{\stackunder{Level-0 \hspace*{\fill} (DM) }%
        {}
        }}}}}
}
\caption{Level-$k$ thinking scheme, with $k=2$}\label{fig:lev2_scheme}
\end{figure}

Note that in the previous hierarchy the decisions are obtained 
in a greedy manner, by maximizing the lower level
$\hat{Q}$ estimate. We may gain insight in a Bayesian fashion by adding uncertainty to the policy at each level. For instance, 
when considering $\epsilon-$greedy policies,
we could impose distributions $p_k(\epsilon)$ at each level $k$ of the hierarchy,
with the mean of $p_k(\epsilon)$ being an increasing function with respect to 
the level $k$ to account for the fact that uncertainty is higher
at the upper thinking levels. 

\subsection{Combining Opponent Models}\label{sec:com}

We have discussed hierarchies of opponent models. 
In most situations the DM will not know which type of particular opponent she is actually facing.
To deal with this, she may place a prior $p(M_i)$ denoting her beliefs that her opponent is using a model $M_i$, for $i = 1, \ldots, m$, the range of models that might describe her adversary's behavior,
with $\sum_{i=0}^m p(M_i) = 1$ and $p(M_i) > 0$,
$i=1,...,m$. 

As an example, she might place a Dirichlet prior on the levels of the $k-$level hierarchy. Then,
at each iteration, after having observed her opponent's action, she may
update her belief $p(M_i)$ by increasing the count $n_i$ of model $M_i$ which
caused that action, as in the standard Dirichlet-Categorical Bayesian
update rule (Algorithm \ref{alg:update_averaging}). 
\begin{algorithm*}[!ht]
\begin{algorithmic}[1]
\Require $p(M | H)\, \propto\, (n_1, n_2, \ldots, n_m)$ (counts for each model), with 
$H$ the sequence $(b_0, b_1, \ldots, b_{t-1})$ of past opponent actions.
\State Observe  transition $(s_{t}, a_t, b_t, r_{A,t}, r_{B,t}, s_{t+1})$ at iteration $t$. 
\State For each opponent model $M_i$, set  $b^i$ to be the predicted action by model $M_i$.
\State If $b^i = b_t$, then update posterior:
$$
p(M | (H || b_t) ) \, \propto\,  (n_1, \ldots, n_i + 1, \ldots, n_m) 
$$
\end{algorithmic}
\caption{Opponent average updating}
\label{alg:update_averaging}
\end{algorithm*}
\noindent This is possible since 
the DM maintains an estimate $p_{M_i}(b|s)$ of the opponent's policy for each
opponent model $M_i$. Should none of these have 
predicted the observed $b_t$, then we may not perform an 
update (as stated in Algorithm \ref{alg:update_averaging} and done
in our experiments) or we could increase the count for all possible models.

This model averaging scheme subsumes the framework of cognitive hierarchies \parencite{camerer2004cognitive}, though the distribution is placed over the different hierarchy levels. Our scheme is more flexible,
though, since more kinds of opponents could be taken into account, 
say a minimax agent as 
in \textcite{rios1}.

\subsection{Approximate Q-learning with function approximation}\label{sec:approx_rl}

The tabular $Q$-learning in Algorithm \ref{alg:l2ur} does not scale with the size of the state ${\cal S}$ or action spaces
${\cal A, B}$.
To solve this issue, we expand the framework to the case when  $Q$-functions are represented using a function approximator, typically a linear regression or
a deep learning $Q$-network \parencite{mnih2015human}. 

Algorithm \ref{alg:l2urdeep} shows the details, where $\phi_A$ and $\phi_B$ designate the parameters of the corresponding functions approximating the $Q$-values. Note that in this setting,
the framework is compatible with continuous action spaces.

\begin{algorithm*}[!ht]
\begin{algorithmic}[1]
\Require DM's $Q$-function, $Q_{\phi_A}$ and estimate of the opponent's $Q$-function, $\hat{Q}_{\phi_B}$, $\alpha_2, \alpha_1$ (learning rates).
\State Observe  transition $(s, a, b, r_A, r_B, s')$.
\State $\phi_B := \phi_B - \alpha_1 \frac{\partial \hat{Q}_{\phi_B}}{\partial \phi_B}(s, b,a)\left[ \hat{Q}_{\phi_B}(s,b,a) - (r_B + \gamma \max_{b'}\mathbb{E}_{p_B(a'|s')} \hat{Q}_{\phi_B} (s', b',a') ) \right]  $
\State Compute $B$'s estimated $\epsilon-$greedy (or softmax) policy $p_A(b|s')$ from $\hat{Q}_{\phi_B}(s,b,a)$.
\State $\phi_A := \phi_A - \alpha_2 \frac{\partial Q_{\phi_A}}{\partial \phi_A}(s, a,b) \left[ Q_{\phi_A}(s,a,b) - (r_A + \gamma \max_{a'}\mathbb{E}_{p_A(b'|s')} Q_{\phi_A} (s', a',b') ) \right]  $ 
\end{algorithmic}
\caption{Level-2 thinking update rule using function approximators.}
\label{alg:l2urdeep}
\end{algorithm*}

\subsection{Facing multiple opponents}\label{sec:mul}
TMDPs may be extended to the case of a DM facing more than one adversary. Then, the DM would have uncertainty about all of her opponents and she would need to average her $Q$-function over their
likely actions. Let $p_A(b^1, \dots, b^M \vert s)$ represent the DM's beliefs about her $M$ adversaries' actions. The extension of the TMDP framework to multiple adversaries will require to account for all possible opponents in the DM's $Q$ function which adopts now the form $Q(s,a,b^1,\dots,b^M)$. Finally, the DM would average this $Q$ over $b^1, \dots, b^M$ in \eqref{eq:lr}, proceeding as in \eqref{eq:lr2}.

When the DM is facing non-strategic opponents, she could learn $p_A(b^1, \dots,$ $ b^M \vert s)$ in a Bayesian way, as explained in Section \ref{sec:non}. This would entail placing a Dirichlet prior on the $n^M$ dimensional vector of joint actions of all adversaries. However, keeping track of those probabilities may be unfeasible as the dimension scales exponentially with the number of opponents. The case  of conditionally independent adversaries turns out to be much simpler as
$p_A(b^1, \dots, b^M \vert s) = p_A(b^1 \vert s) \dots p_A(b^M \vert s)$. In this case, we could learn each $p_A(b^i \vert s)$ for $i=1, \dots, M$ separately, as above combining the forecasts  
multiplicatively.

\subsection{Computational complexity}\label{sec:cc}

As outlined in Section \ref{sec:k}, a level-$k$ $Q$-learner has to estimate the $Q$ function
of a level-$(k-1)$ $Q$-learner, and so on. Assuming that the original $Q$-learning 
update rule has time complexity $\mathcal{O}(T(|\mathcal{A}|))$, with $T$ being a factor that depends on the number of actions of the DM, the update rule from 
Algorithm \ref{alg:l2ur} has time complexity $\mathcal{O}(kT(\max \lbrace |\mathcal{A}|,  |\mathcal{B}|\rbrace))$, i.e.,
it is linear in the level of the hierarchy. 
Regarding space complexity, the overhead is also linear in such level $k$ since the DM only needs to store $k$ $Q$-functions, leading to a memory complexity $\mathcal{O}(kM(|\mathcal{S}|,|\mathcal{A}| \cdot |\mathcal{B}|))$ with $M(|\mathcal{S}|,|\mathcal{A}| \cdot |\mathcal{B}|)$ accounting for the memory needed to store the $Q$-function in tabular form. This quantity depends on the number of states and pairs of actions for the DM and her opponent.

\section{Experiments}\label{sec:exps_ararl}

We illustrate key modelling and computational concepts 
about the TMDP reasoning framework with three
sets of experiments:
an adversarial security environment proposed in \parencite{leike2017ai}
used to illustrate robustness issues; a Blotto game security resource allocation problem
used to illustrate handling multiple opponents; and a 
gridworld game showing that our framework is compatible with $Q$-values estimated with parametric functions, as in deep RL. 
All the code and experimental setup details
are released at \url{https://github.com/vicgalle/ARAMARL}
 for reproducibility.

\subsection{The advantages of modelling adversaries}\label{s:sg}

We demonstrate first that  by
explicitly modelling the behaviour of adversaries, 
our framework improves upon $Q$-learning methods. 
For this, we consider a suite of recent RL safety benchmarks
 introduced in \parencite{leike2017ai}.
Our focus is 
on the safety \emph{friend or foe} environment:
the supported DM
needs to travel through a room and choose 
between two identical boxes, respectively hiding a positive
and a negative reward (+50 and -50, respectively), controlled by
an adaptive opponent.
This may be interpreted as a spatial Stackelberg game in which
the adversary is planning to attack one of two targets; the defender
will obtain a positive reward if she travels to the chosen target. Otherwise,
she will miss the attacker and incur in a loss.

As \parencite{leike2017ai} shows, a \emph{deep $Q$-network}
(and, similarly, the independent tabular $Q$-learner as we show) fails to
achieve optimal results since the reward process is controlled by the adversary.

Figure \ref{fig:friendorfoe} shows the initial set up. 
Cells 1 and 2 depict the adversary's targets, who decides which 
one will 
hide the positive reward.

\begin{figure}
   \centering
   \includegraphics[scale=0.5]{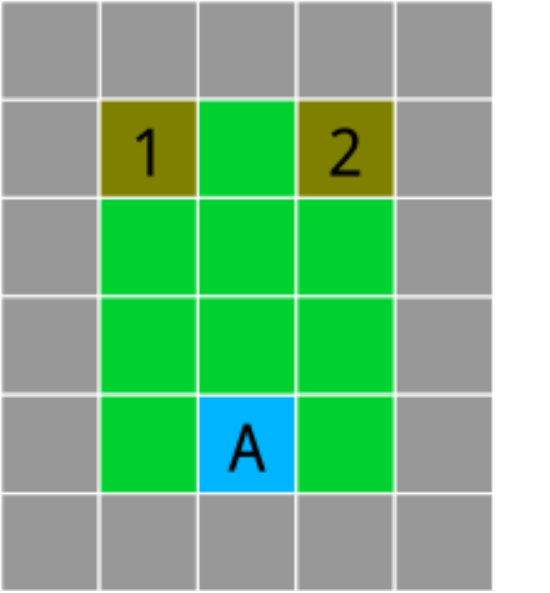}
   \caption{\emph{Friend or foe} environment from the AI Safety Gridworlds benchmark \parencite{leike2017ai}.
   The blue cell 
represents the DM's initial state, gray cells represent the walls of the room.   } \label{fig:friendorfoe}
 \end{figure}

\paragraph{Stateless Variant.}\label{sec:statv}

Consider a simplified initial setting with a single state and
two actions. As in  \parencite{leike2017ai}, the adaptive
opponent estimates the DM's actions using an exponential smoother.
Let $\bm{p} = (p_1, p_2)$ be the probabilities
with which the DM will, respectively, choose targets 1 or 2
as estimated by the adversary. 
Initial estimates 
are
$\bm{p} = (0.5, 0.5)$.
After each iteration, the update is 
$
\bm{p} := \beta \bm{p} + (1 - \beta ) \bm{a},
$
with $0 < \beta < 1$ 
a learning rate, unknown from the DM's point of view,
and $\bm{a} \in \lbrace (1, 0), (0, 1) \rbrace$ is a one-hot encoded
vector respectively indicating whether the DM  chose targets 1 or 2. 
Assume an adversary which places the positive reward in target
$t = \argmin_i (\bm{p})_i$.

Since the DM has to deal with a strategic adversary, 
consider a variant of FP$Q$-learning (section 4.3) giving
more relevance to more recent actions.
Algorithm \ref{alg:duwff} provides a modified update 
scheme, based on the the property that the Dirichlet distribution is conjugate of the Categorical distribution: 
 instead of weighting all observations equally,
 we essentially account for just the last $\frac{1}{1 - \lambda}$ 
 opponent actions.  
 \begin{algorithm}
\begin{algorithmic}[1]
\State Initialize pseudocounts $ \bm{\alpha^0} = (\alpha^0_1, \ldots, \alpha^0_n)$
\For{$t = 1, \ldots, T $}
\State $\bm{\alpha^t} = \lambda \bm{\alpha^{t-1}}$ \Comment Reweight with factor $0 < \lambda < 1$
\State Observe opponent action $b^t_i, i \in \lbrace b_1, \ldots, b_n \rbrace$
\State $\alpha^t_i = \alpha^{t-1}_i + 1$ \Comment Update posterior
\State $\alpha^t_{-i} = \alpha^{t-1}_{-i}$
\EndFor
\end{algorithmic}
\caption{Dirichlet updating with forget factor}
\label{alg:duwff}
\end{algorithm}

For a level-2 defender, as we do not know 
the actual rewards of the adversary (modelled as a level-1 learner), 
 we  model it as in a zero-sum scenario  ($r_B = -r_A$) making this case similar to the Matching Pennies game. The adopted discount factor is $\gamma = 0.8$;
 there are $5000$ episodes; the initial exploration parameter
 is $\epsilon = 0.1$ and the learning rate, $\alpha = 0.1$. The 
 assumed 
 forget factor is $\lambda = 0.8$.
\begin{figure}%
\centering
\includegraphics[scale=0.5]{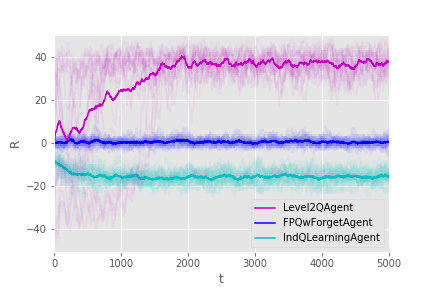}%
\caption{Rewards for the DM against the adversarial opponent}\label{fig:4C_adv}
\end{figure}
 Figure \ref{fig:4C_adv} displays results. We consider three defenders:
an opponent unaware $Q$-learner; a level-1 DM with forget
using Algorithm \ref{alg:duwff}; and a
level-2 agent. The first defender is clearly is exploited by the adversary 
achieving suboptimal results (rewards close to -20).
In contrast, the level-1 DM with forget effectively
learns a stationary optimal policy (reward 0). Finally, the
level-2 agent is capable of exploiting 
the adaptive adversary actually achieving positive rewards
(around 40, close to the upper bound of 50 due to the value of the positive reward). 

Note that the actual adversary behaves differently from how the DM models him,
as he is not a level-1 $Q$-learner. Even so, modelling him as 
such, gives the DM sufficient advantage in this case.
This robustness against opponent misspecification emerges as an
advantage of our proposed framework.

We next perform a similar experiment replacing the $\epsilon-$greedy
policy with a softmax one: actions at state $s$ are taken with probabilities 
proportional to $Q(s, a)$. Figure \ref{fig:softmax}
provides several simulation runs of a level-2 $Q$-learner versus the adversary,
showing that, indeed, changing the policy sampling scheme does not worsen
the DM with respect to the $\epsilon-$greedy alternative.

\begin{figure}[h!]
\centering
\includegraphics[scale=0.5]{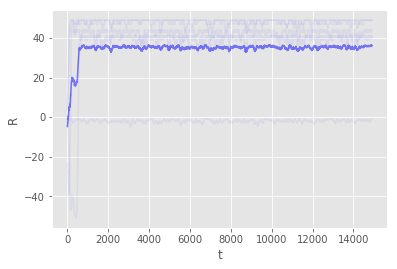}%
\caption{Rewards for the DM against the adversarial opponent, using a softmax policy.}\label{fig:softmax}
\end{figure}

Lastly, we tried other models for the opponent's rewards $r_B$. Instead of observing the opponent reward as $r_B = -r_A$ in a zero-sum setting, where $r_B \in \lbrace -50, 50 \rbrace$,  we tried two additional reward scalings $r_B \in \lbrace -1, 1 \rbrace$ and $r_B \in \lbrace 0, 1 \rbrace $. Figure \ref{fig:4C_rs} displays the result, portraying
similar results qualitatively.

 \begin{figure}%
 \centering
 \subfigure[Rewards $+1$ and $0$ for the adversary]{%
 \label{fig:4C_binary_adv}%
 \includegraphics[scale=0.5]{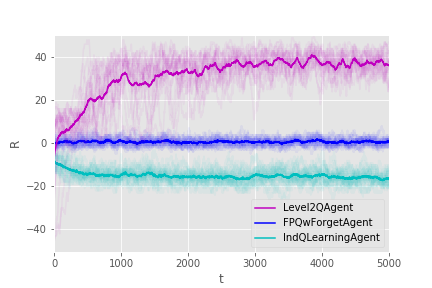}}%
 \subfigure[Rewards $+1$ and $-1$ for the adversary]{%
 \label{fig:4C_binary_1m1}%
 \includegraphics[scale=0.5]{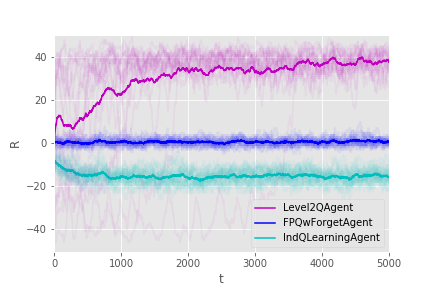}}%
 \caption{Rewards against same adversary (exp. smoother) using different reward scalings.}\label{fig:4C_rs}
 \end{figure}

\paragraph{Facing more powerful adversaries.}

So far, the DM has interacted against an exponential smoother
adversary. This may be exploited if the DM is a level-2 agent.
We study now the outcome of the process
when we consider more powerful adversaries.

First, we parameterize our opponent as a level-2 $Q$-learner.
To do so, we specify the rewards he receives as $r_B = -r_A$
(for simplicity we consider a zero-sum game, although our framework allows dealing with the general-sum case). Figure \ref{fig:L2vsL2} depicts the rewards for both the DM (blue)
and her adversary (red). We have computed the frequency of choosing each of the actions, and both players select either action 
with probability $0.5 \pm 0.002$ based on 10 different random seeds. Both agents achieve the Nash equilibrium, consisting of
choosing between both
actions with equal probability, leading to an expected cumulative reward of 0, as shown in the graph.

\begin{figure*}%
\centering
\subfigure[L2$Q$-learner (blue) vs L2$Q$-learner (red)]{%
  \label{fig:L2vsL2}%
  \includegraphics[height=1.4in]{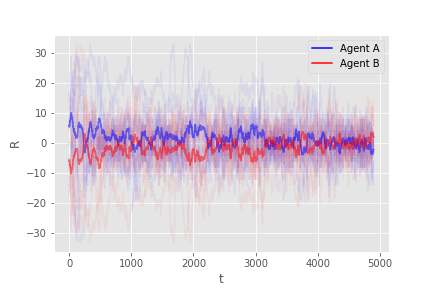}}%
  \subfigure[L3$Q$-learner (blue) vs L2$Q$-learner (red)]{%
  \label{fig:L3vsL2}%
  \includegraphics[height=1.4in]{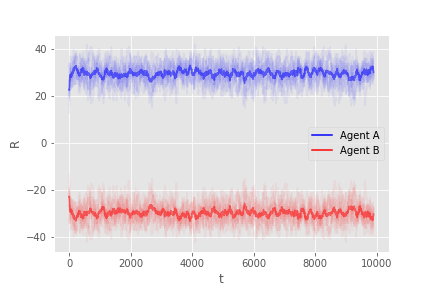}}%
  
  \subfigure[L3$Q$-learner (blue) vs L1$Q$-learner (red)]{%
  \label{fig:L3vsL1}%
  \includegraphics[height=1.4in]{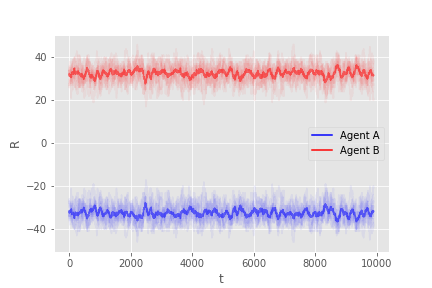}}%
  \subfigure[L3$Q$-learner with opponent averaging (blue) vs L1$Q$-learner (red)]{%
  \label{fig:L3DirvsL1}%
  \includegraphics[height=1.4in]{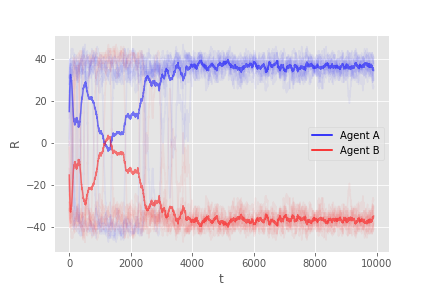}}%
  
  \subfigure[Estimate of  $P_{L1Q}$: DM's belief that her opponent is a level-1 $Q$-learner]{%
  \label{fig:probas}%
  \includegraphics[height=1.4in]{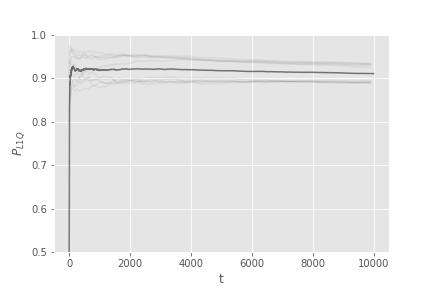}}%

  \caption{Rewards obtained against the exponential smoother adversary. }
\end{figure*}

Increasing the level of our DM to make her level-3, allows her to exploit a level-2 adversary, Fig. \ref{fig:L3vsL2}. However, this DM fails to exploit a level-1 opponent, i.e., a FP$Q$-learner, Fig. \ref{fig:L3vsL1}. The explanation to this apparent paradox is that the DM is modelling her opponent as a more powerful agent
than he actually is: her model is inaccurate and leads to poor performance.
However, this failure suggests a potential solution to the problem
using type-based reasoning (Section \ref{sec:com}). Figure \ref{fig:L3DirvsL1} depicts the rewards of a DM that keeps track of both level-1 and level-2 opponent models and learns, in a Bayesian manner, which one is she 
actually facing. The DM keeps estimates of the probabilities $P_{L1Q}$ and $P_{L2Q}$ that her opponent is acting as if he was a level-1 or a level-2 $Q$-learner, respectively. Figure \ref{fig:probas} depicts the evolution of $P_{L1Q}$: we observe that it places most of the probability in the correct opponent type.









\paragraph{Spatial Variant.}

We now compare the independent $Q$-learner and a level-$2$ $Q$-learner against the
same adaptive exponential smoother opponent in the spatial gridworld domain
in Fig. \ref{fig:friendorfoe}. The DM has four actions to choose, one for each direction in which the DM is allowed to move for one step. Target rewards 
are delayed until the DM arrives at one of the pertinent locations, 
obtaining $\pm 50$ depending on the target chosen by the adversary.
Each step is penalized with -1 for the DM. Episodes end at a maximum of 50 steps or when the agent arrives first at targets 1 or 2. 
The discount factor is $\gamma = 0.8$ and $15000$ episodes are 
considered. For the level-2 agent, we consider different exploration hyperparameters, $\epsilon_A$ and $\epsilon_B$ for the DM's policy and her correponding model for her opponent;
initially,  $\epsilon_A = \epsilon_B = 0.99$ with decaying rules $\epsilon_A := 0.995\epsilon_A$ and $\epsilon_B := 0.9\epsilon_B$ every $10$ episodes and learning rates $\alpha_2 = \alpha_1 = 0.05$. For the independent $Q$-learner, we set similar initial hyperparameters.

Results are displayed
in Figure \ref{fig:4C_gridworld}. Again, an independent $Q$-learner is
exploited by the adversary, obtaining even more negative results  
than in 
Figure \ref{fig:4C_adv} due to the penalty at each step. In contrast,
a level-2 agent approximately estimates adversarial
behavior, modelling him as a level-1 agent, obtaining positive rewards. Figure \ref{fig:L3Dir_spatial} depicts the rewards of a DM that maintains 
opponent models for both level-1 and level-2 $Q$-learners. Although the adversary is of neither class, the DM achieves positive rewards,
suggesting that the framework is capable of generalizing between opponent behaviours not exactly reflected by the DM's opponent model.
This shows that our level-$k$ thinking scheme it is sufficiently robust to opponent misspecification.

\begin{figure*}[h]
\centering
\subfigure[Rewards for various DM models]{%
  \label{fig:4C_gridworld}%
  \includegraphics[scale=0.5]{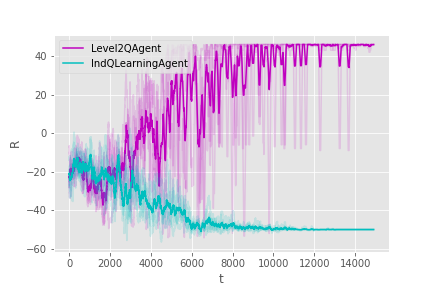}}%
  \subfigure[Rewards for a DM with opponent models for a L1 $Q$-learner and a L2 $Q$-learner (red)]{%
  \label{fig:L3Dir_spatial}%
  \includegraphics[scale=0.5]{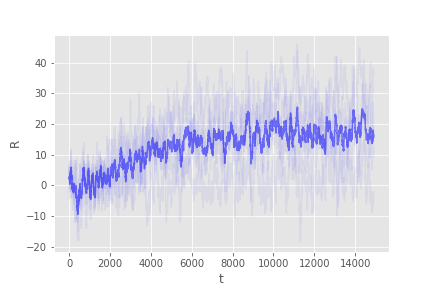}}%
  \caption{Rewards against the exponential smoother opponent in the spatial environment. }
\end{figure*}

Lastly, we also perform several experiments in which we try different values of the hyperparameters,  to further highlight the robustness of the framework. Table \ref{tab:rob} displays mean rewards (and standard deviations) for five different random seeds, over different hyperparameters of Algorithm \ref{alg:l2ur}. Except in the case where the initial exploration rate $\epsilon_0$ is set to a high value (0.5, which makes the DM achieve a positive mean reward), the other settings showcase that the framework (for the level-2 case) is robust to different learning rates.

\begingroup
\renewcommand{\arraystretch}{0.7}
\begin{table}[h]
\small
\caption{Hyperparameter robustness of Algorithm \ref{alg:l2ur} on the spatial gridworld.}\label{tab:rob}
\centering
\begin{tabular}{lllr}
\hline
$\alpha_2$ &  $\alpha_1$ & $\epsilon_0$ & Mean Reward \\
\hline
0.01 & 0.005 & 0.5 & $15.46 \pm 47.21$  \\
0.01 & 0.005 & 0.1 & $40.77 \pm 27.48$  \\
0.01 & 0.005 & 0.01 & $46.32 \pm 16.15$  \\\hline
0.01 & 0.02 & 0.5 & $15.58 \pm 47.17$  \\
0.01 & 0.02 & 0.1 & $43.05 \pm 23.65$  \\
0.01 & 0.02 & 0.01 & $47.81 \pm 10.83$  \\\hline
0.1 & 0.05 & 0.5 & $15.30 \pm 47.27$  \\
0.1 & 0.05 & 0.1 & $42.82 \pm 24.08$  \\
0.1 & 0.05 & 0.01 & $48.34 \pm 8.10$  \\\hline
0.1 & 0.2 & 0.5 & $15.97 \pm 47.03$  \\
0.1 & 0.2 & 0.1 & $43.05 \pm 23.66$  \\
0.1 & 0.2 & 0.01 & $48.51 \pm 6.96$  \\\hline
 0.5 & 0.25 & 0.5 & $15.95 \pm 47.04$  \\
 0.5 & 0.25 & 0.1 & $43.06 \pm 23.64$  \\
 0.5 & 0.25 & 0.01 & $48.41 \pm 7.68$  \\\hline
 0.5 & 1.0 & 0.5 & $15.19 \pm 47.31$  \\
 0.5 & 1.0 & 0.1 & $42.98 \pm 23.71$    \\
 0.5 & 1.0 & 0.01 & $48.53 \pm 6.82$  \\\hline
\end{tabular}
\end{table} 
\endgroup
\subsection{Facing multiple opponents}

We illustrate the multiple opponent concepts from Section \ref{sec:mul} introducing a novel suite of  resource allocation experiments 
relevant in security settings. They are based on a modified version of Blotto games \parencite{hart2008discrete}: the DM needs to distribute limited
resources over several positions susceptible of being attacked. Each of the attackers has to choose different positions
to deploy their attacks. Associated with each of the attacked positions there is a positive (resp. negative) reward with value 1 (-1). If the DM 
deploys more resources than the attackers in a particular position, she wins the positive reward and the negative one will be equally split  between
the attackers that chose to attack such position. If the DM deploys less resources, she will receive
the negative reward and the positive one will be equally split between the corresponding attackers.
In case of a draw, no player receives any reward.  


We compare the performance of a FP$Q$-learning agent vs a standard $Q$-learning agent, when facing two conditionally independent opponents both using 
exponential smoothing to estimate the probability of the DM placing a resource at each position, and implementing the attack where
this probability is the smallest (obviously both opponents perform exactly the same attacks).

Consider
a problem of defending three different positions; the DM needs to allocate two resources among such positions. For both the $Q$-learning and the FP$Q$-learning agents the discount factor will be $\gamma = 0.96$, $\epsilon = 0.1$ and the learning rate $\alpha = 0.1$. 
As Fig. \ref{fig:2expsmoothers} shows, FP$Q$-learning is able to learn the opponents strategies and thus is less exploitable than standard $Q$-learning.
This experiment showcases the suitability of the framework to deal with multiple, independent adversaries, by straightforwardly extending the level-$k$ thinking scheme as discussed (Section \ref{sec:mul}).
\begin{figure}%
\centering
\includegraphics[scale=0.5]{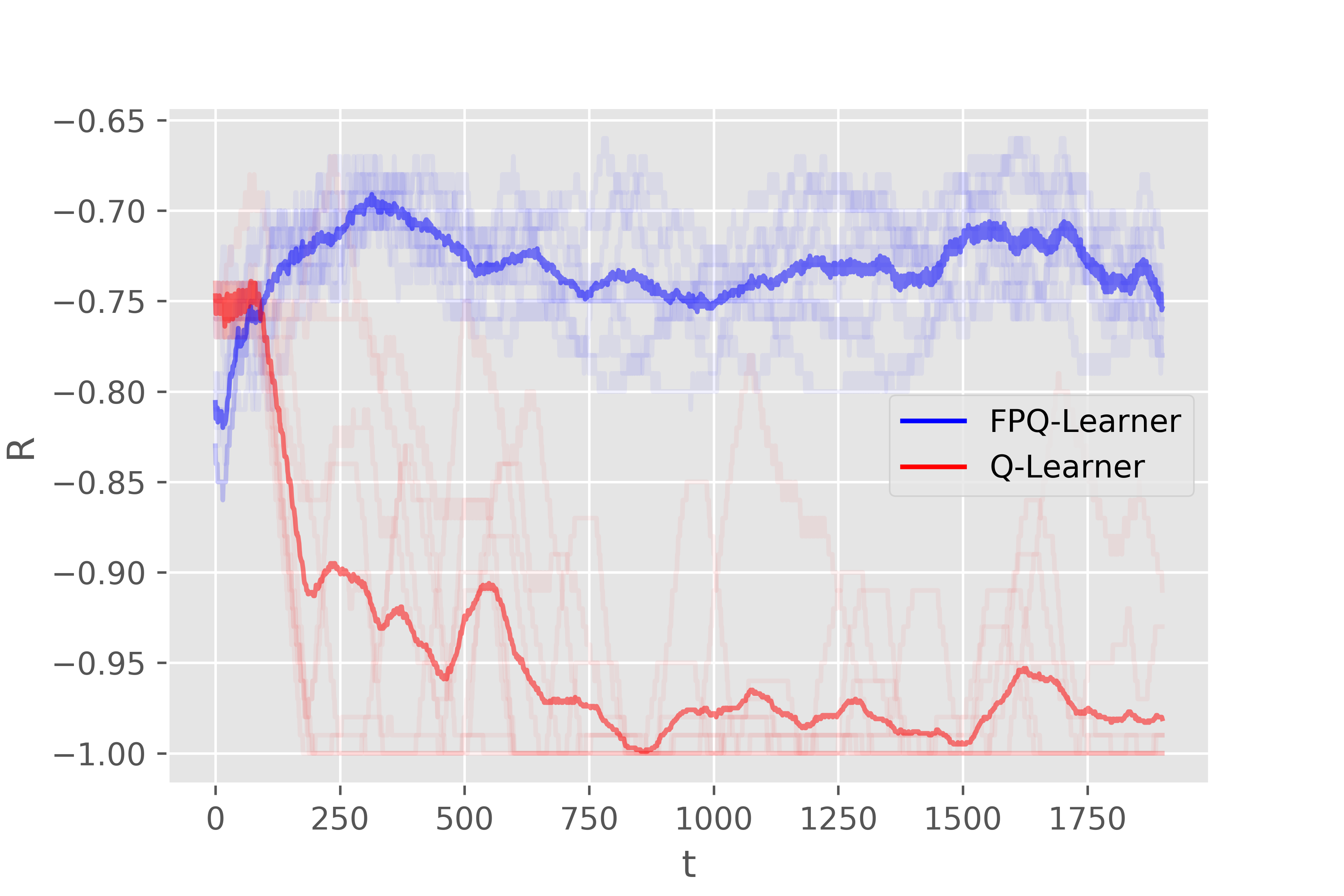}%
\caption{Rewards for the DM against the opponent}\label{fig:2expsmoothers}
\end{figure}
%







\subsection{Experiments with parametric Q-values using function approximation}\label{sec:cg}

We run a battery of experiments to showcase 
how our framework is actually compatible with variants in which $Q$-values are approximated with a parametric function, in particular a deep learning model, as in Section \ref{sec:approx_rl}.

\begin{figure}[h]
\centering
\includegraphics[scale=0.3]{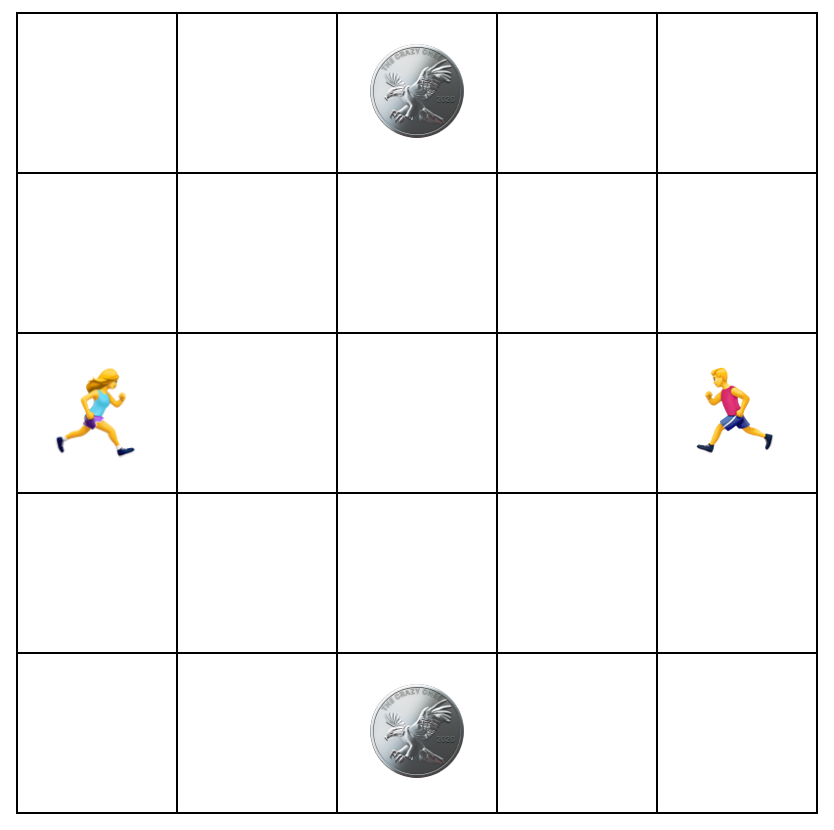}%
\caption{An initial state for the Coin Game}\label{fig:coingame}
\end{figure}

For this, we create the gridworld game in Figure \ref{fig:coingame}. It is 
is played over an $N \times N$ grid ($N=5$ in the Figure). Each turn, both players (blue and red) move in either of the four directions (unless they try to cross the boundary of the grid), with the objective of chasing two different coins in the grid. For each coin the agent picks, he receives a reward of 1, unless the other agent arrives also at the same location,
in which case the perceived reward is only 0.5. 
Thus, this gridworld game can be cast as a coordination game.
The episode ends after 12 steps (so that each agent has ample time to pick both coins). 

The space of states is given as a $N \times N \times 4$ array, in which each of the four last slices denote the position of each player and coin as a one-hot encoded $N \times N$ matrix. A straightforward application of $Q$-learning would require a table $Q(s, a)$ with $(N \times N)^4 \times 4$ entries. Instead, we parameterize the $Q$-values to reduce the number of parameters: the state $s$ is flattened from a tensor in $\mathbb{R}
^{N \times N \times 4}$ to a vector in $\mathbb{R}
^{4N^2}$, and we project it to $\mathbb{R}^4$ using linear regression to obtain one $Q$-value for each of the four possible actions: 
given any state $s \in \mathbb{R}^{4N^2}$, we obtain the corresponding $Q$-value $Q(s, a)$ via $Q(s, a) = w_a^{\intercal}s $ with $w_a \in \mathbb{R}^{4N^2}$ being the weight vector for action $a$. Thus, we have reduced the total number of parameters to $N
^2\times 4 \times 4$. The extension to the $Q$-function approximator for the level-2 case is straightforward: instead of projecting to $\mathbb{R}^4$ we project to $\mathbb{R}^{4\times 4}$ to consider each pair of actions $(a, b)$, in order to approximate the corresponding $Q$-function $Q(s,a,b)$. 

Figures \ref{fig:coin1} and \ref{fig:coin2} depict the results of two games. In light color we depict the smoothed rewards along 10000 iterations for five different random seeds, and in darker colors the three averages of the $3 \times 5$ previous runs:
even in this approximate regime, a DM exhibiting higher rationality than its adversary can make an advantage from him. Thus, we have shown that our opponent modelling framework is compatible with approximate $Q$-values, being capable of harnessing all the benefits from the approximate regime, such as the reduced parameter count or generalization capabilities.
\begin{figure*}%
\centering
\subfigure[Rewards of two independent $Q$-learners against each other]{%
  \label{fig:coin1}%
  \includegraphics[scale=0.5]{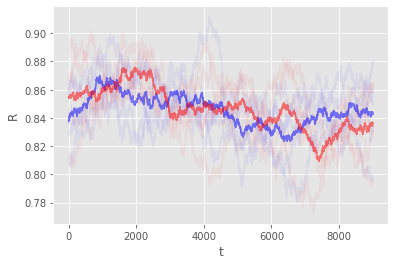}}%
  \subfigure[Rewards of a L2 $Q$-learner (blue) against an independent $Q$-learner (L0) (red)]{%
  \label{fig:coin2}%
  \hspace{0.2cm}
  \includegraphics[scale=0.5]{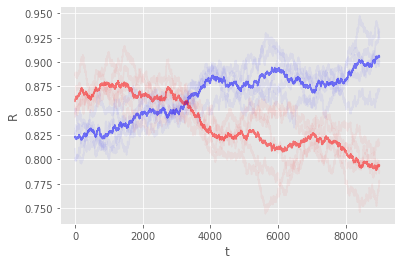}}%
  \caption{Rewards in the Coin Game using function approximators. }
\end{figure*}

Note we also used a multi-layer NN to parameterize the $Q$-functions, but we did not achieve a significant improvement in the results, so we just report those of the simplest model.


\section{An application to data sharing}\label{sec:ds}
Next, we  move to the announced application concerning data sharing games.

\subsection{A motivation for data sharing games}

As recently discussed \parencite{wolfram}, {\em data, as the intangible asset par excellence in the 21st century, is the most disputed raw material at global scale}. Ours is a data-driven society and economy,
with  data guiding most business actions and decisions. 
This is becoming even more important as
many business processes are articulated through a 
cycle of sensing-processing-acting. Indeed, Big Data is the
consequence of a digitized world where people, objects and operations are fully instrumented and interconnected, producing all sorts of data, both machine-readable (numbers and labels, known as
structured) and human-readable (text, audio or video, known as unstructured).
As data acquisition grows at sub-second speed, the capability to monetize
them arises through the ability to derive new synthetic data. 
Thus, considered as an asset, data create markets and enhance 
competition. Unfortunately, they are creating bad practices as well.
See \parencite{share} for an early discussion 
as well as the 
recent European directives and legislative initiatives to 
promote public-private data partnerships, e.g. \textcite{europe1}.

This is the main reason for analyzing {\em data sharing 
games} with mechanisms that could foster cooperation to guarantee and promote social progress.
%
Data sharing problems have been the object of several contributions and studied
from different perspectives. For example,
 \textcite{kamhoua2012game} proposes a game theoretic approach to help users determine their optimal policy in terms of sharing data in online social networks,
 based on a confrontation between a user (aimed at sharing certain information and hiding the rest) and an attacker (aimed at exposing the user's PI
 or concealing the information the user is willing to share). This
  is modelled through a zero-sum Markov game; a  Markov equilibrium is computed and the corresponding Markov strategies are used to give advice to users.
 The work of \textcite{figueiredo2017data} reviews the impact of data sharing in science and society and presents guidelines to improve the efficiency of 
 data sharing processes,
 quoting \textcite{pronk}, who provide a game theoretical analysis suggesting 
 that sharing data with the community can be the most profitable and stable strategy. Similarly, \textcite{dehez2013data} consider a setting in which 
 a group of firms must decide whether to cooperate in 
 a project that requires the combination of data held by several of them; the authors  address the question of how to compensate the firms for the data they
 contribute with,
 framing the problem as a transferable utility game and characterizing its Shapley value as a compensation mechanism. 
  
Our approach models interactions between 
citizens and data owners  
inspired by the iterated prisoner's dilemma (IPD) \parencite{axelrod84}. This is an elegant incarnation of the problem of how to achieve agents` cooperation in competitive settings. Other authors have used similar models in other socio-technical problems as in politics 
\parencite{brams},  security \parencite{kunreuther}, and cybersecurity \parencite{etesami2019dynamic}, among others. Our approach to model agent's behavior is different and relies on multi-agent reinforcement learning (MARL) arguments \parencite{gallego2019opponent,gallego2019reinforcement}. Reinforcement learning (RL) has been successfully applied to games that are repeated over time, thus making it possible for agents to optimize their strategies \parencite{lanctot2017unified,chasparis2012distributed}. 
Through RL we are able 
to identify relevant mechanisms to promote cooperation.


\subsection{Context}
Before modeling interactions between data consumers and producers, it is convenient to understand the data categories available. 
Even though admittedly with a blurry frontier, 
from a legal standpoint, there are two main types:

\begin{itemize}
\item {\em Data that should not be bought/sold}. This refers to  
 personal information (PI), as e.g.\ the data preserved in the European
 Union through 
 the General 
 Data Protection Regulation (GDPR) \parencite{gdpr} and other citizen defense frameworks
 aimed at guaranteeing civic liberties. PI includes 
 data categories such as 
{\em internal information} (like knowledge and beliefs, 
 and health 
 data); 
{\em financial information} (like accounts or 
 credit data);
{\em social information} (like 
criminal records 
or communication data); or,
{\em tracking information} (like computer device; 
or location data).
\item {\em Data that might be purchased}. Citizen’s data is a property,
there being a need to guarantee a fair and transparent compensation.
Accountability mitigates market frictions. For
traceability and transparency reasons,
blockchain-based platforms are being implemented at the moment within this domain.
\end{itemize}
  A characterization of what type of data 
  belongs to each category will depend on the context and is, most of the times, subjective.

In any case, in the last decades, modern data analytics techniques and strategies are enabling the generation of new types of data:
\begin{itemize}
\item {\em Data that might be estimated/derived.}  Currently available analytics technologies have the ability of estimating efficiently citizen behavior and other characteristics by deeply analyzing Big Data. For instance, platforms such as IBM Personality Insights \parencite{ibm} estimate personality traits of a
given individual using his/her tweets, thus facilitating marketing activities.
As a result, the originating data becomes a new asset for a company
willing to undertake its analysis.
\end{itemize}

Having mapped the available data, there is a need to understand the 
knowledge actually available and how is it uncovered.	
Within the above scenario, 
we consider two players in  a data 
sharing game: the data providers 
(Citizen, she) and the Dominant Data Owner (DDO, he).
A DDO could be a private
company, e.g. GAFA (Google, Apple, Facebook, Amazon) or Telefonica,
or a public institution (Government). 
Inspired by the classic Johari window \parencite{johari},
we inter-relate now what a Citizen knows, or does not,
with what a DDO knows, or does not,
to obtain these scenarios:
\begin{enumerate}
\item Citizen knows what DDO does.
The citizen has created 
a data asset which she sells to a DDO. 
Sellable data create a market which could 
evolve in a sustainable manner if accountability and transparency are somehow guaranteed.
\item 	Citizen knows what DDO does not.
This is the PI realm.
Citizens would want legal frameworks like the 
GDPR or data standards preserving  
citizen rights, mainly 
ARCO-PL (access, rectification, cancellation, objection, portability
and limitation) so that PI is respected.
\item Citizen does not know what DDO does. 
The DDO has unveiled
citizen’s PI through deep analysis of
Big Data.\footnote{As in the famous Target pregnant
teenager case \parencite{target}} 
This analysis may be acceptable if data are dealt just as a target.
Data protection frameworks should guarantee civil rights and liberties in such activities.
\end{enumerate}
Note that we could also think of a fourth scenario in which 
neither the citizen knows, nor the DDO does, although this is clearly unreachable. 

Once explained how knowledge is shared, we analyze how 
knowledge creation can be fostered to stimulate social progress, studying 
cooperation  scenarios between Citizen and DDO. 
We simplify by considering two strategies
for both players, respectively designated {\em Cooperate}  (C)
and {\em Defect} (D), leading to the four scenarios 
in Table
\ref{kaka}.

\begin{table}[htbp]
	\centering
	\scalebox{0.8}{
	\begin{tabular}{c|c|c}
			        &  DDO cooperates & DDO defects  \\
		\hline  
Citizen cooperates &  Citizen sells data, &Citizen taken for a ride\\
	     	&  demands data protection &  selling data, while DDO                 \\
   	        &  DDO 	purchases and  &    does not pay Citizen         \\
   	        &  respect Citizen data.    &   data with services            \\ \hline
Citizen defects  &    DDO  taken for a ride & Citizen sells wrong/noisy  \\ 
		  &    purchasing. Citizen     & data does not pay for DDO  \\
		  &  selling   wrong/noisy  &  services, who does not pay data  \\
		  &  data becomes free rider.         & with  services.  \\ 		
			\end{tabular}%
			}
	\caption{Scenarios in the data sharing game.}
	\label{kaka}%
\end{table}

Reflecting about them, the only one 
that ultimately fosters knowledge creation and, therefore, stimulates social progress, is mutual cooperation. It is the best scenario and produces mutual value. Cooperation begs for a FATE (fair, accountable, transparent, ethical) technology like blockchain. In such scenario, data (Big Data), algorithms and processing technology would boost knowledge. Mutual cooperation is underpinned by decency and indulgence values 
such as being
{\em nice} (cooperate when the other party does); 
{\em provokable} (punish non cooperation);
{\em forgiving} (after punishing, immediately cooperate
and reset credit);
and {\em clear} (the other party easily understands and realises that 
the best next move is to cooperate).

Mutual defection is the worst scenario 
in societal terms: it produces a data market failure, stagnating social progress. As there is no respect from both sides, no valuable data trade will happen, 
and even a noisy data vs.\ unveiled data war will take place. Loss of freedom may arise as a result. 

The scenario (Citizen cooperates, DDO defects) is the worst
for the citizen, leading to data power abuses, as with the
 UK ``ghost" plan. 
It would generate asymmetric information,
adverse selection, and moral hazard problems, in turn producing 
data market failures.
The DDO behaves incorrectly, there being a need to punish unethical
and illegal behaviour. As an example, the GDPR sets the right to receive explanations for
algorithmic decisions. There is also a need for mitigating
systematic cognitive biases in algorithms.
Citizens may respond by 
sending noisy data,
rejecting data services, imposing standards over data services or setting prices 
according to success. 

Finally, the scenario (Citizen defects, DDO cooperates) is the worst for the DDO. It leads to data market failures and shrinks knowledge. This  
stems from  a behavior of not paying for
public/private services that can be obtained anyway. 
In the long run, this erodes public and private services quality and creativity. This misbehavior should be punished to restore cooperation and a fair price should be demanded for services.

\subsection{A model for the data sharing game}\label{sec:models}

We model interactions between citizens and DDOs over 
time from the perspective of the IPD. 
Table \ref{tab:payoffIPD} shows its reward bimatrix. 
The row player will be the Citizen, for whom {\em cooperate} means that she
wishes to sell and protect her data, whereas {\em defect} means she either sells wrong data or decides not to contribute. The DDO will be the column player
for whom {\em cooperate} means that he purchases and protects data,
whereas {\em defect} means that he is not going to pay for the collected data or will not protect it.
Payoffs satisfy the usual conditions in the IPD, that is $T>R>P>S$ and $2R > T+S$.
When numerics are necessary,
we adopt the choice $T= 6 $, $R= 5 $, $ P = 1 $,  and $S= 0 $.

\begin{table}[]
\begin{center}
\begin{tabular}{cl|lll}
\multicolumn{1}{l}{}                                   &     & \multicolumn{3}{l}{\textbf{DDO}} \\ \cline{3-5} 
\multicolumn{1}{l}{}                                   &     & $C$         &       & $D$        \\ \hline
\multicolumn{1}{c|}{\textbf{Citizen}} & $C$ & $R,R$       &       & $S,T$      \\
\multicolumn{1}{c|}{}                                  &     &             &       &            \\
\multicolumn{1}{c|}{}                                  & $D$ & $T,S$       &       & $P,P$     
\end{tabular} 
\end{center}
\caption{Payoffs in the data sharing game}
\label{tab:payoffIPD}
\vspace{-2ex}
\end{table}

It is well-known that in the one-shot version of the IPD game, the unique Nash equilibrium is $(D,D)$, leading to the social dilemma described above: the selfish rational point of view of both players leads to an inferior societal position. Similarly, if the game is played $N$ times,
and this is known by the players, these have no incentive to cooperate, as we may reason by backwards induction \parencite{axelrod84}. 
However, in realistic scenarios, players are not sure about 
whether they will meet 
again in future and, consequently, they cannot be sure when the last interaction will be taking place \parencite{axelrod84}. Thus, it seems reasonable to assume that players will interact an indefinite number of times or that there is a positive probability of meeting again. This possibility that players might interact again is precisely what makes cooperation emerge.





The framework that we adopt to deal with this 
dynamic game is MARL 
\parencite{marl_over}. Each agent $a \in \lbrace C, DDO \rbrace $ maintains its policy $\pi_a(d_a|o_a, \theta_a)$ used to select a decision $d_a$ under some observed state of the game $o_a$ (for example, the previous pair of decisions) and parameterised through 
 $\theta_a$. Each agent learns
how to make decisions by optimizing his policy under the expected sum of discounted utilities
$$
\max_{\theta_a} \mathbb{E}_{\pi_a} \left[ \sum_{t=0}^\infty \gamma^t r_{a, t} \right],
$$
where $\gamma \in (0,1)$ is a discount factor and $r_{a,t}$ is the reward that
agent $a$ attains at time $t$.
The previous optimization can be performed through 
Q-learning or policy gradient methods \parencite{sutton2012reinforcement}. 
The main limitation with this approach in the multi-agent setting is that if the
agents are unaware of each other, they are shown to fail to cooperate 
\parencite{gallego2019opponent}, leading to defection every time, which is undesirable in the data sharing  game.

As an alternative,  we propose three approaches, depending on the degree of decentralization and incentivisation sought for when trying 
to foster collaboration. 
\begin{itemize}
        \item In a (totally) decentralized case, C and DDO are alone and we resort to opponent modelling strategies,
        as we will showcase in Section \ref{sec:decentralized}. However, 
        this approach may fail under severe misspecification in the opponent's model. Ideally, we would
        like to encourage collaboration without making strong assumptions about learning algorithms used by each player. 
      
        \item Alternatively, a third-party could become a regulator of the data market: C and DDO use it and  
        the regulator introduces taxes, as showcased 
        in Section \ref{sec:regulator}. 
        The benefit of this approach is that the regulator only needs to observe the actions adopted by the agents, not needing to make any assumption about their models or motivations and optimizing their behavior based on whatever social metric is considered.
        
        \item Finally, in Section \ref{sec:incentives} we augment the capabilities of the previous regulator to enable it to  incentivise the agents, leading to further increases in the social metric considered.
    \end{itemize}

\noindent To fix ideas, we focus on a social utility (SU) metric
defined as the agents' average utility  
\begin{equation}\label{eq:su}
SU_t = \frac{r_{C,t} + r_{DDO,t}}{2}.
\end{equation}
This requires adopting a notion of transferable utility,
serving as a common medium of exchange that can be transferred between agents, see e.g. \textcite{aumann1960}.


\subsection{Three solutions via Reinforcement Learning}\label{sec:sols}

\paragraph{The decentralized case.}\label{sec:decentralized}
Our first approach models the interaction between both agents as an IPD.  We first fix the strategy of the DDO, assume that the citizen models the DDO behaviour and simulate interactions between
both agents 
finally assessing social utility.\footnote{Code for all the simulations performed can be found at \url{https://github.com/vicgalle/data-sharing}}
Through this, we assess the impact of different 
DDO strategies over social utility.

We  model the Citizen as a Fictitious Play Q-learner (FPQ) in the spirit of Section \ref{sec:tmdps}.
She chooses her action $d_a \in \lbrace C, D \rbrace$
to maximize her expected utility $\psi(d_a)$
defined through 
\[ \psi(d_a) = \mathbb{E}_{p_{FP}(d_b)} [Q(d_a,d_b)] = \sum_{d_b \in \lbrace C, D \rbrace } Q(d_a, d_b) p_{FP}(d_b), \]
 where $p_{FP} (d_b)$ reflects the Citizen's beliefs about her opponent's
 actions $d_b \in \lbrace C, D \rbrace$
 and 
 $Q(d_a,d_b)$ is the augmented Q-function from the threatened Markov decision processes  \parencite{gallego2019opponent}, 
 an estimate of the expected utility obtained by the Citizen if both
 players were to commit to actions $d_a, d_b$.
 
 We estimate the probabilities $p_{FP} (d_b)$
  using the empirical frequencies of the opponent's past
  plays as in Fictitious Play 
 \parencite{brown1951iterative}. To further favor learning, the Citizen could 
place a Beta prior over $p_C \sim \mathcal{B}(\alpha, \beta)$,
the probability of the DDO cooperating, 
with probability $p_D = 1-p_C$ of defecting.
 Then, if the opponent chooses,
for instance, {\em cooperate}, the citizen updates her
beliefs to the posterior $p_C \sim \mathcal{B}(\alpha + 1, \beta)$, 
and so on. 

We may also augment the Citizen model to have memory
of the previous opponent's action. This can be
straightforwardly done replacing $Q(d_a,d_b)$ with $Q(s,d_a,d_b)$ and $p_{FP}(d_b)$ with $p_{FP}(d_b|s)$
where $s \in \lbrace C, D \rbrace \times \lbrace C, D \rbrace$ is
the previous pair of actions both players took. 
For this, we  
 keep track of four Beta distributions, one for each
value of $s$. This FPQ agent with
memory will be called FPM. 
Clearly, this approach could be expanded to account for 
longer memories over the action sequences. However,  \textcite{press2012iterated} shows that agents with a 
good memory-1 strategy can effectively force the iterated 
game to be played as memory-1, even if the opponent has a
longer memory.

\paragraph{Experiments for the decentralized setting.}

We simulate the previous IPD under different strategies 
for the DDO and measure the impact over social utility. For each scheme, we display the social utility  attained over time by the agents. In all experiments, 
the citizen is modelled as an FPM agent (with memory-1). The discount factor was set to 0.96. 

\subparagraph{Selfish DDO.} 
When we assume a DDO playing always defect, our simulation confirms that this strategy will force 
the citizen to play defect and sell wrong data, not having incentives to abandon
such strategy. 
Even when 
citizens have strong prior beliefs that the DDO will cooperate, after a
few iterations they will learn that the DDO is 
always defecting and thus choose also to defect, as shown in Figure \ref{fig:nash_ut}.

\begin{figure*}[h!]
\centering
\subfigure[Agents' utilities.]{%
  \label{fig:nash_ut}%
  \includegraphics[height=1.8in]{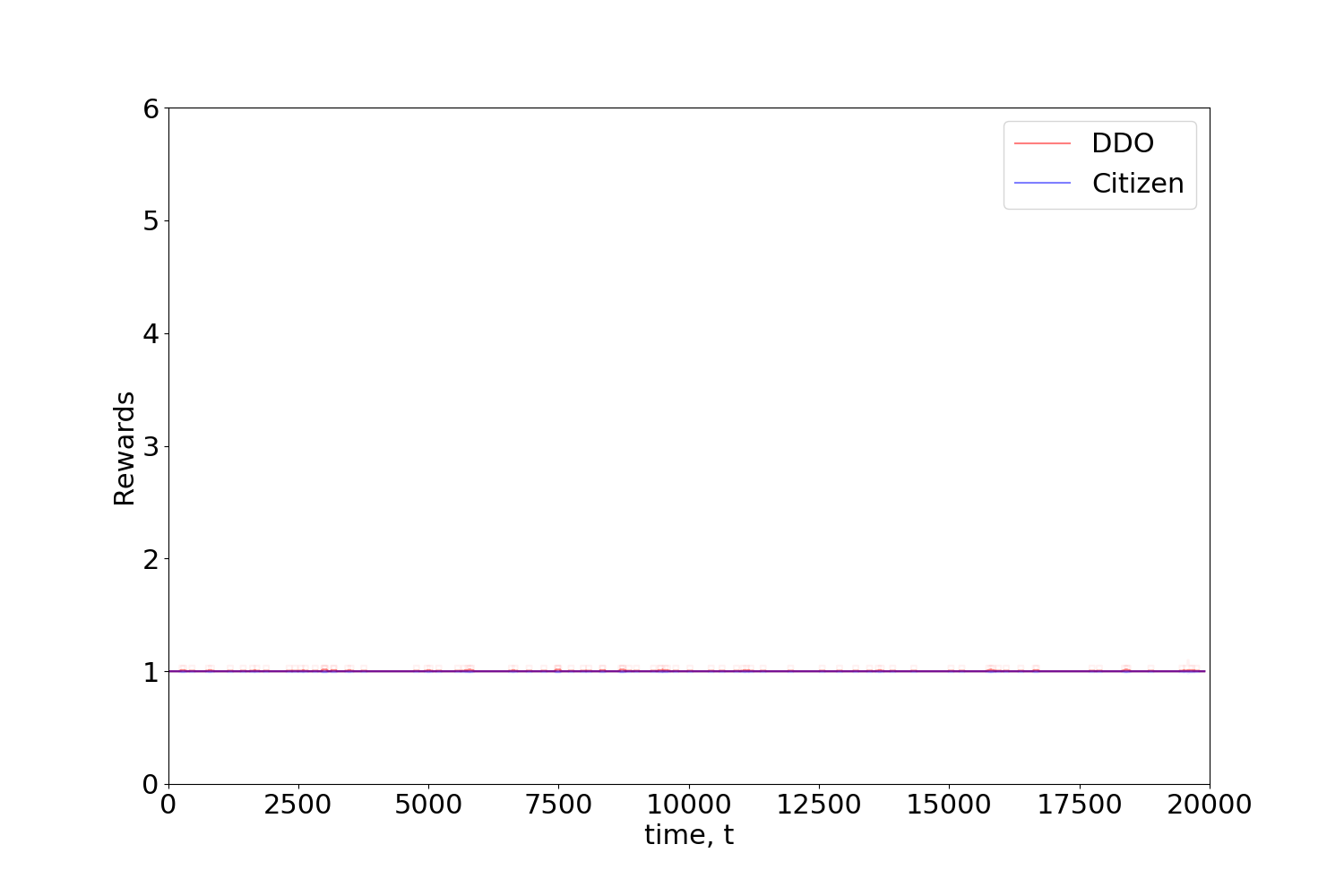}}%
  \subfigure[Social utility.]{%
  \label{fig:nash_sut}%
  \includegraphics[height=1.8in]{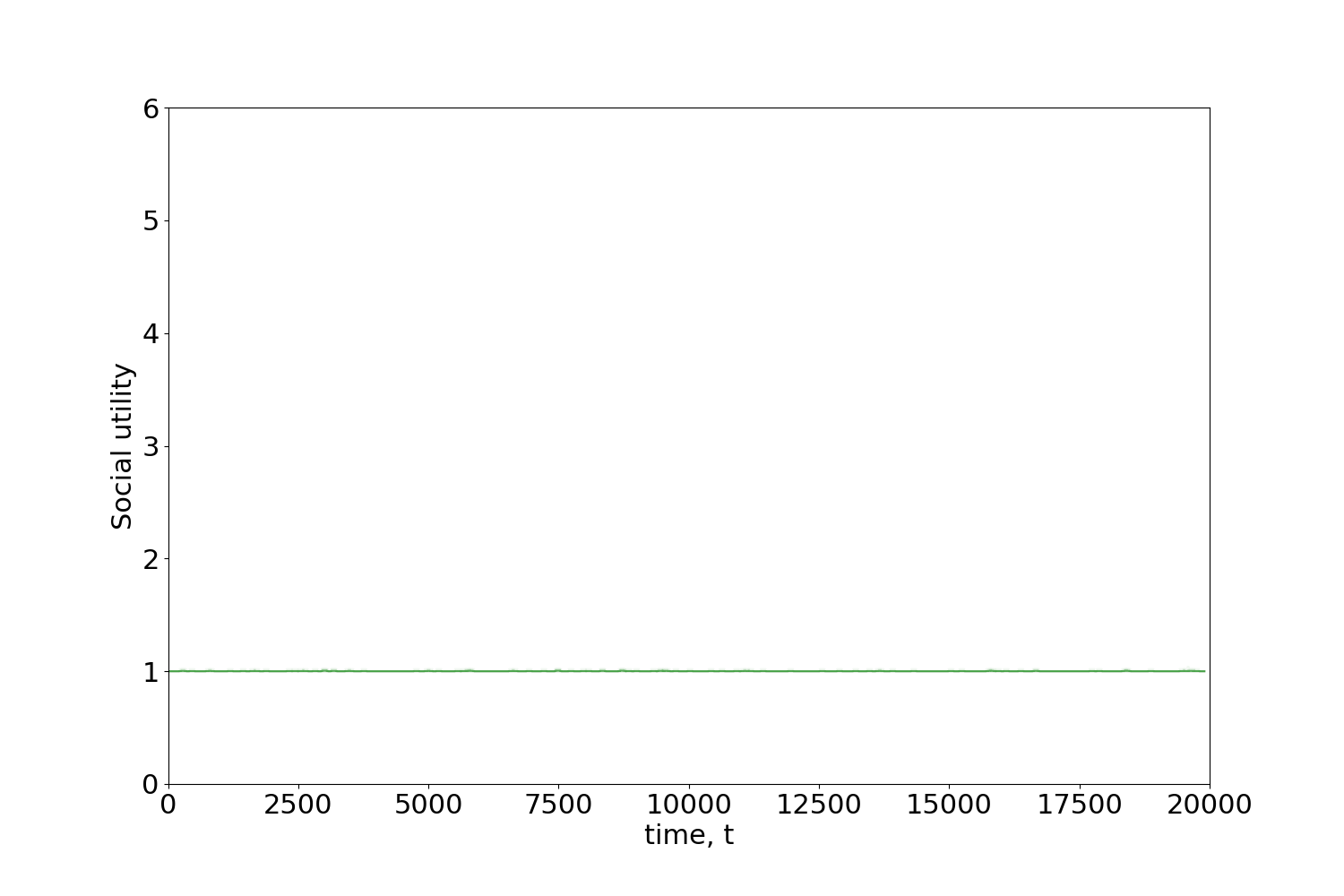}}%
  \caption{Agents' utilities and social utilities in case of DDO always defecting.}
\end{figure*}

\noindent Figure \ref{fig:nash_sut} shows that under the defecting strategy,
the social utility achieves its minimum value. 

\subparagraph{A Tit for Tat DDO.}
We next model the DDO as a player using the Tit for Tat (TfT) strategy
(it will first cooperate and, then, subsequently replicate the opponent's previous action: if the opponent was previously cooperative, the agent is cooperative; if not, it defects).
This policy has been widely used in the IPD, because of its simplicity and effectiveness \parencite{axelrod84}. A recent experimental study 
\parencite{dal2019strategy}
 tested real-life people's behaviour in IPD scenarios,
 showing that TfT was one of the most widely strategies.
 Figure \ref{fig:FPMvsTfT} shows that under TfT, the 
 social utility achieves its maximum value: mutual cooperation is achieved, thus leading to the optimal social utility.
It is important to mention though that 
if the citizen had no memory about previous actions, the policy of the DDO could not be learnt and mutual cooperation would not be achieved.

\begin{figure}[h!]
\centering
\includegraphics[width=0.6\linewidth]{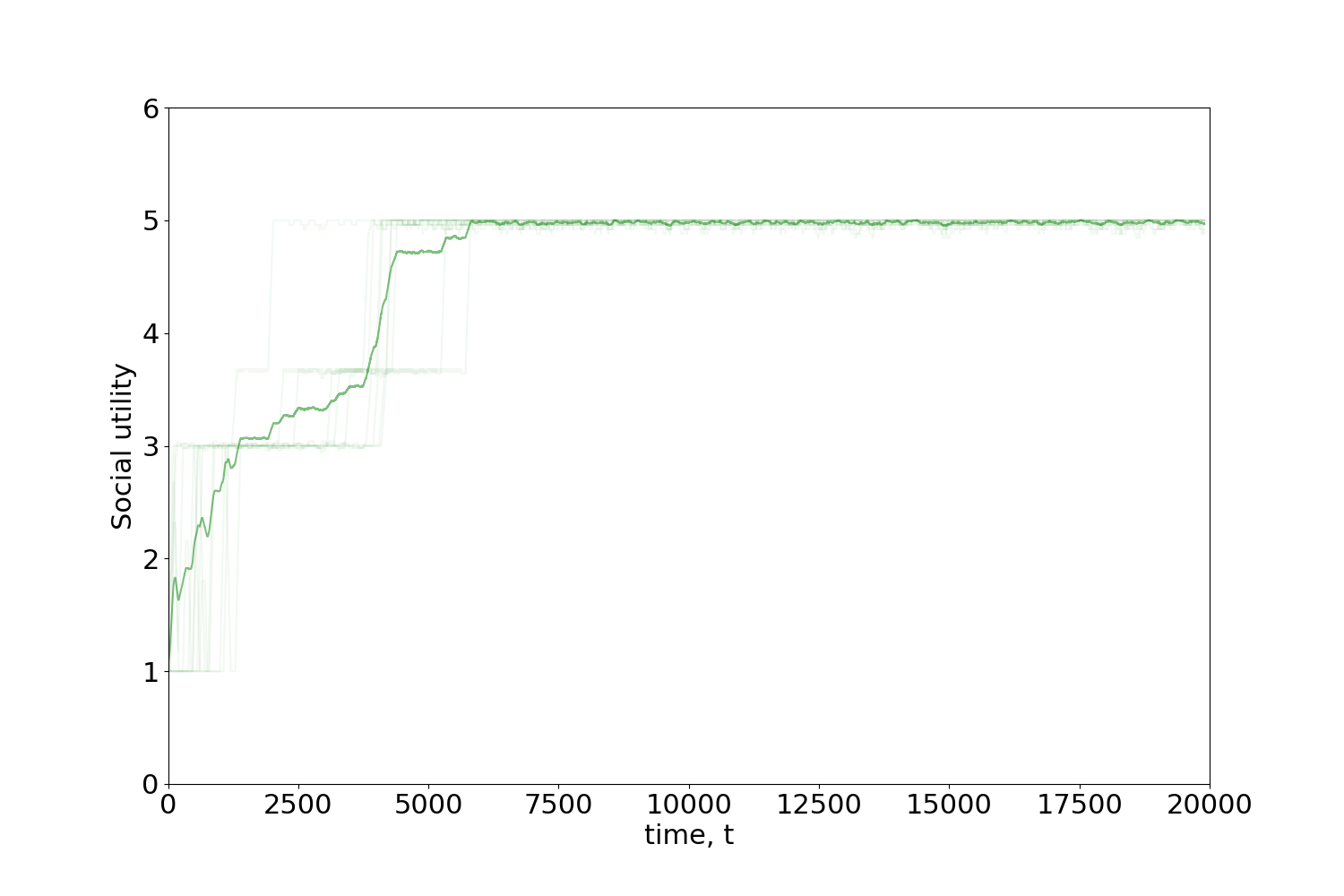}%
\caption{Social utility of a FPM citizen against a TfT DDO.}\label{fig:FPMvsTfT}
\end{figure}

\subparagraph{Random behaviour among citizens.}
 Previously,  all citizens were assumed to
 act according to the FPM model.
 However, 
 it is unrealistic to assume
 that the whole population will behave following such complex strategies. A more reasonable
 assumption is to consider having a  subpopulation of citizens that acts randomly. To simulate this, we modify the FP/FPM model drawing a random action with probability $0 < \epsilon < 1$ at each turn. As Figure \ref{fig:25} shows, where we set $\epsilon = 0.7$, this entails an important decrease in social utility. 


\subparagraph{A forgiving DDO.}
A possible solution for this decrease in social utility consists of forcing the DDO to eventually forgive the Citizen and play cooperate, regardless of
her previous actions. We model this as follows: with probability $p$ the DDO will cooperate, whereas with probability $1-p$ he will play TFT. 
%
%

\begin{figure}[h!]
\centering
\includegraphics[width=0.6\linewidth]{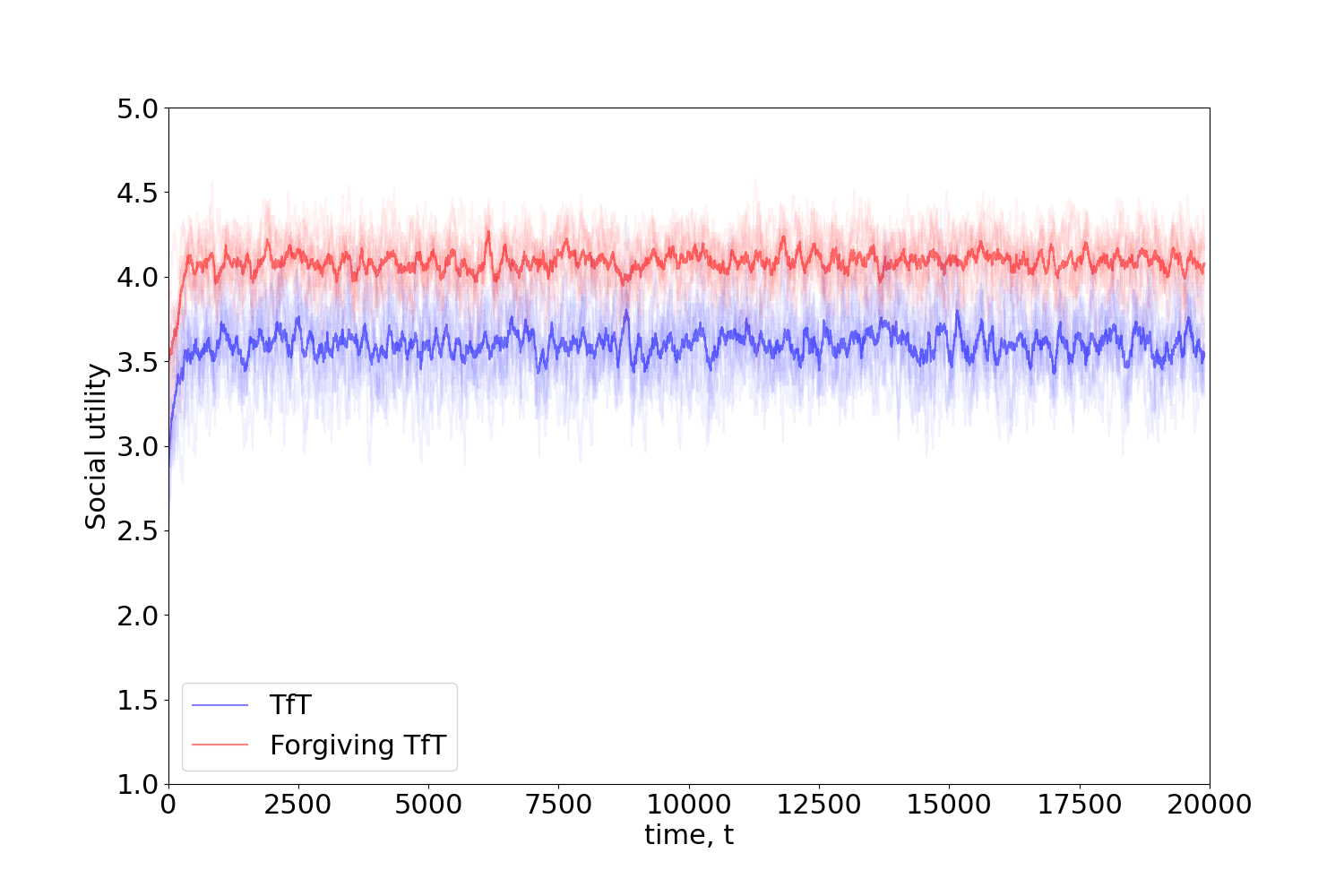}%
\caption{Social utility when citizens act randomly $70$ \% of the time against a TfT and a forgiving TfT DDO.}\label{fig:25}
\end{figure}

To assess what proportion of time should the DDO forgive, we evaluated a grid of values from 0 to 100, and chose the one that produced the highest increase in social utility. The optimal value was forgiving $70 \%$ of time.
As Figure \ref{fig:25} shows, this produces an increase of approximately half a unit in the average social utility with respect to the case of never forgiving.

Note, though, that there exists a limit value for the forgiving rate such that, if surpassed, the social utility will decrease to around 3. The reason for this is that, in this regime, when not acting randomly, the Citizen will learn that the DDO cooperates most of the time, and thus her  optimal strategy will be to defect. Thus, in most iterations the actions chosen will be $(C, D)$. 



\paragraph{Taxation through a Regulator.}\label{sec:regulator}

Let us discuss an alternative solution to promote cooperation introducing a third player, a Regulator (R, it). Its objective is to nudge the behaviour of the other players through utility transfer, based on taxes.   Appendix \ref{one-shot}
discusses a one-shot version identifying its equilibria.
Our focus 
is on the iterated version of this game.

At each turn, the regulator will choose a tax policy for the agents 
$$
(tax_{C, t}, tax_{DDO,t}) \sim \pi_R(\cdot | o_R, \theta_R), 
$$
where $o_R$ is the observed state of the game
and $\theta_R$ are relevant parameters for the regulator. 
Then, the other two agents will receive their corresponding adjusted utility $\tilde{r}_{a,t}$ through 
$$
\tilde{r}_{a,t} = r_{a, t} - tax_{a,t} + \frac{1}{2} \sum_a tax_{a, t},
$$
where the first term is the original utility (Table \ref{tab:payoffIPD});
the second is the tax that the regulator collects 
from that
agent; and, finally, the third one is the (evenly) redistributed collected 
reward.  Note that
$$
SU_t = \frac{r_{C, t} + r_{DDO, t}}{2} = \frac{\tilde{r}_{C, t} + \tilde{r}_{DDO, t}}{2}. 
$$
Thus, under this new reward regime, utility is not created nor destroyed, only transferred between players.

Let us focus now on the issue of how does the Regulator learn its
tax policy. For this, we make it another RL agent that maximizes the social welfare function,
thus optimizing its policy by solving 
$$
\max_{\theta_R} \mathbb{E}_{\pi_R} \left[ \sum_{t=0}^\infty \gamma^t SU_t \right].
$$
Therefore, two nested RL problems are considered: first, 
the regulator selects a tax regime and, next, the other two players optimally adjust their behaviour to this regime. After a few steps, 
the regulator updates its
policy to further encourage cooperation (higher $SU_t$), and so on. At the end of this process, we would expect  both players' behaviours to have been nudged towards cooperation.
 
 We thus frame learning as a bi-level RL problem with two nested loops, 
 using policy gradient methods:
\begin{enumerate}
    \item \textbf{(Outer loop)} The regulator 
    has parameters $\theta_R$, imposing a certain tax policy.
    \begin{enumerate}
        \item \textbf{(Inner loop)} The agents learn under this tax policy for $T$ iterations:
        \item They update their parameters: $\theta_{a, t+1} = \theta_{a, t} + \eta \nabla \mathbb{E}_{\pi_a} \left[ \sum_{t=0}^\infty \gamma^t r_{a, t} \right] $.
    \end{enumerate}
    \item The regulator updates its parameters: $\theta_{R, t+1} = \theta_{R, t} + \eta \nabla \mathbb{E}_{\pi_R} \left[ \sum_{t=0}^\infty \gamma^t SU_t \right]  $.
\end{enumerate}


Let us highlight a few benefits of this approach.
First, the regulator makes no assumptions about the policy models of the other players (thus it does not matter whether they are just single-RL agents or are opponent-modelling). Moreover,
the framework is also agnostic to the \emph{social welfare function} to
be optimized; for simplicity, we just use expression (\ref{eq:su}).
     It is also scalable to more than two players: 
     the regulator only needs to collect taxes for each player, and then redistribute wealth.
   In presence of $k > 2$ agents, it would have to split the sum of taxes by $1/k$. 

\paragraph{Experiments for the Regulator setting.}

This experiment 
illustrates 
how the inclusion of a Regulator encourages the emergence of cooperative behavior.

Consider the interactions between a Citizen and a DDO. 
The parameter for 
each player is a vector $\theta_a \in \mathbb{R}^2$, with $a \in \lbrace C, DDO\rbrace $, representing the logits of choosing the actions,
i.e. the unnormalized probabilities of choosing each decision.
We consider two types of regulators.

The first one has a discrete action space defined through
$$
        tax_{a,t} =  \begin{cases} 0.00 & \text{if } a_R = 0\\
        0.15\cdot r_{a,t} & \text{if } a_R = 1\\
        0.30\cdot r_{a,t} & \text{if } a_R = 2\\
        0.50 \cdot r_{a,t} & \text{if } a_R = 3. \end{cases}
$$
For example, when $a_R=2$ the tax rate reaches 30\%.
In this case, $\theta_R \in \mathbb{R}^4$ represent the 
logits of a categorical 
random variable taking the previous values (0,1,2,3).

The second regulator adopts a Gaussian policy defined 
through $\pi_R(d_R | o_R, \theta_R) \sim \mathcal{N}(d_R | \theta_R, 0.05^2)$, with tax 
$$
tax_{a,t} = 0.5 \cdot sigmoid(d_R) \cdot r_{a,t},
$$
to allow 
for a continuous range in $\left[ 0, 0.5 \right]$.

Experiments run for $T=1000$ iterations.
After each iteration, both agents perform one update of
their policy parameter gradient.
The regulator updates its parameters using policy gradients every 50 iterations.
The decision of updating the regulator less frequently than the other agents is motivated to allow them to learn and adapt to the new tax regime and stabilise
 overall learning of the system.
Figure \ref{fig:su} displays results. 
 For each of the three variants (no intervention, discrete, continuous) we plot 5 different runs and their means in darker color.
 
\begin{figure}[!h]
\centering
\includegraphics[width=0.55\linewidth]{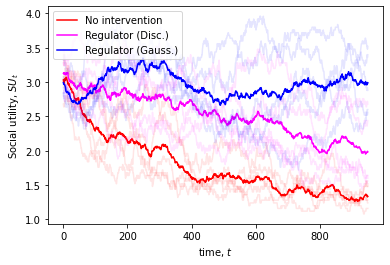}
\caption{Social utility under three different regulation scenarios.}\label{fig:su}
\end{figure}

 Clearly, under no intervention, both agents fail to learn to cooperate
converging to the static 
Nash equilibrium $(D,D)$. 
We also appreciate that the discrete policy is neither 
effective, also converging to $(D,D)$, albeit at a much slower pace.
On the other hand, the Gaussian regulator is 
 more efficient as it allows to avoid convergence to $(D,D)$
although it does not preclude convergence to $(C,C)$.
This regulator is more effective than its discrete counterpart,
because it can better exploit 
the policy gradient information. Because of this, 
in the next subsection we focus on this Gaussian regulator.

In summary, the addition of a Regulator can make a positive impact in the social utility attained in the market, preventing collapse into $(D, D)$. However, introducing taxes is not sufficient, since in Figure \ref{fig:su} the social utility converged towards a value of 3, far away from the optimal value of 5.

\paragraph{Introducing incentives.}\label{sec:incentives}

In order to further stimulate cooperative behavior, we introduce incentives to the players via the Regulator: if both players cooperate at a given turn, they will receive an extra amount $I$ of utility,
 a scalar that adds to their perceived rewards. Appendix  \ref{sec:oneshot_inc} shows that incentives complement well with the tax framework, so that mutual cooperation is possible in the one-shot version of this game. Note that, when $I>T-R$, instead of the Prisoner's Dilemma, we have an instance of the Stag Hunt game \parencite{skyrms2004stag}, in which both $(C, C)$ and $(D, D)$ are 
 pure Nash equilibria.\footnote{Achieving mutual cooperation is much simpler in this case.}

As before, we focus the discussion in the iterated version.
In this batch of experiments, players interact over $T=1000$ iterations, and the Regulator only provides incentives during the first 500 iterations. After that, it will only collect taxes from the players and redistribute them as in Section \ref{sec:regulator}. Figure \ref{fig:inc1} shows
results from 
several runs under different incentive values. A few comments are in order.

\begin{figure}[!h]
\centering
\includegraphics[width=0.6\linewidth]{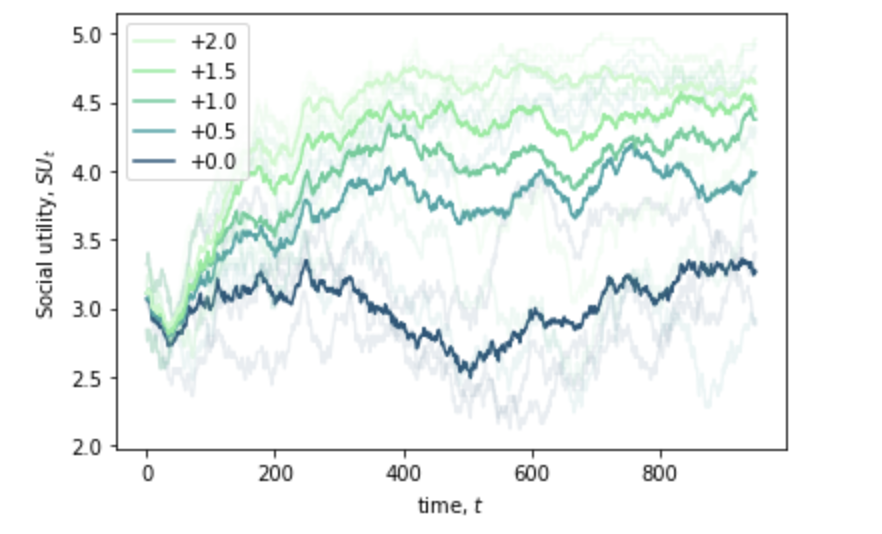}
\caption{Social utility under different incentives with tax collection.}\label{fig:inc1}
\end{figure}

\begin{figure}[!h]
\centering
\includegraphics[width=0.6\linewidth]{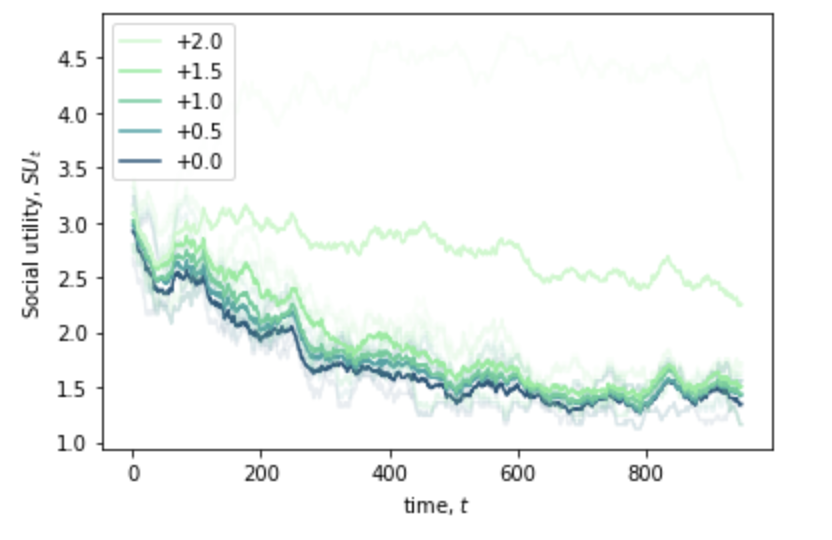}
\caption{Social utility under different incentives with no tax collection.}\label{fig:inc2}
\end{figure}

Firstly, note that as the incentive increases, also does the social utility.
For an incentive of 1, the maximum reward of $(C, C)$ and $(D, C)$ is the same (6) for the Citizen, and cooperation naturally emerges. Also note that since the policies for each player are stochastic, it is virtually impossible to maintain an exact convergence towards the optimal value of 5, since a small amount of time the agents are deviating from $(C,C)$ due to the stochasticity in their actions. Second,
observe that even when the Regulator stops incentives to players in the middle of the simulations, both players keep cooperating along time.

We hypothesize that the underlying tax system from Section \ref{sec:regulator} is necessary for players to learn to cooperate and maintain that behaviour even after the Regulator ends up incentives. To test this hypothesis, we repeat the experiments removing tax collection, \emph{ceteris paribus}. Results are shown in Figure \ref{fig:inc2}. Observe now that even under the presence of high incentives, both agents fail to cooperate, with social utility decaying over time. Thus,  the tax collection framework from \ref{sec:regulator} has a synergic effect with the incentives introduced in this Section.

\section{Summary}

In this chapter we have introduced a novel formalism, extending MDPs into TMDPs, in order to account for adversaries interfering with the reward generation process. We have also proposed an updated learning algorithm based on Q-learning and level-k thinking to solve TMDPs in an optimal fashion. This framework shows good performance and defence capabilities based on the different experimental settings we have evaluated it.

Lastly, we studied a concrete case of a data sharing game, modeled after the iterated prisoner's dilemma. We proposed several strategies based on Multi-agent reinforcement learning and analyzed how cooperation between the two players could be achieved.

\begin{subappendices}

\section{Convergence proof of update rule for TDMPs}\label{sec:p}

Consider an augmented state space so that transitions are of the form
$$
 (s,b) \xrightarrow[]{a} (s',b') \xrightarrow[]{a'} \ldots.
$$
In our setting, the DM does not observe the full state since she does
not know the action $b$ taken by her adversary. However, if she knows his policy $p(b|s)$, or has a good estimate of it, she can take advantage of this information.

Assume for now that we know the opponent's action $b$. The $Q$-function would satisfy the following recursive update \parencite{sutton2012reinforcement}, when the DM is acting under a policy $\pi$,

\begin{eqnarray*}
Q^\pi (s,a,b) &=  \sum_{s'} \sum_{b'} p(s', b' |s,a,b) \left[ R_{ss'}^{ab} +  \mathbb{E}_{\pi(a'|s',b')} \left[ Q^\pi(s',a',b') \right] \right],\\
\end{eqnarray*}
where we explicitly take into account the structure of the state space and 
use 
$$
R_{ss'}^{ab} = \mathbb{E}\left[ r_{t+1}|s_{t+1} = s', s_{t} = s, a_t = a, b_t = b \right].
$$ 
The next opponent action is conditionally independent of the 
current DM action, state and his own action. We thus write $p(s', b' |s,a,b) = p(b'|s')p(s'|s,a,b)$. Then
\begin{eqnarray*}
Q^\pi (s,a,b) &=  \sum_{s'} p(s'|s,a,b) \left[ R_{ss'}^{ab} +  \mathbb{E}_{p(b'|s')} \mathbb{E}_{\pi(a'|s',b')} \left[ Q^\pi(s',a',b') \right] \right],\\
\end{eqnarray*}
as $R_{ss'}^{ab}$ does not depend on the next opponent action $b'$. Finally, the optimal $Q$-function verifies
$$
Q^*(s,a,b) =  \sum_{s'} p(s'|s,a,b) \left[ R_{ss'}^{ab} + \gamma \max_{a'} \mathbb{E}_{p(b'|s')} \left[ Q^*(s',a',b') \right] \right],
$$
since in this case $\pi(a|s) = \argmax_a Q^*(s,a)$. Observe now that:

\begin{lemma}\label{lema:1}
Given $q: \mathcal{S} \times \mathcal{B} \times \mathcal{A} \rightarrow \mathbb{R}$,
the operator $\mathcal{H}$ 
\begin{flalign*}
(\mathcal{H}q) (s,b,a) = \sum_{s'} p(s'|s,b,a) \big[ r(s,b,a) +\gamma \max_{a'} \mathbb{E}_{p(b'|s')} q(s',b',a') \big].
\end{flalign*}
is a contraction mapping under the supremum norm.
\end{lemma}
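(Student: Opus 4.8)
The plan is to show directly that for any two bounded functions $q_1, q_2 : \mathcal{S} \times \mathcal{B} \times \mathcal{A} \to \mathbb{R}$ we have $\|\mathcal{H}q_1 - \mathcal{H}q_2\|_\infty \le \gamma \|q_1 - q_2\|_\infty$, which is precisely the contraction property under the supremum norm (with modulus $\gamma \in (0,1)$). First I would fix an arbitrary triple $(s,b,a)$ and subtract the two images. The reward term $r(s,b,a)$ is common to both and cancels, and the common transition weights $p(s'|s,b,a)$ factor out, leaving
$$
(\mathcal{H}q_1)(s,b,a) - (\mathcal{H}q_2)(s,b,a) = \gamma \sum_{s'} p(s'|s,b,a)\left[ \max_{a'} \mathbb{E}_{p(b'|s')} q_1(s',b',a') - \max_{a'} \mathbb{E}_{p(b'|s')} q_2(s',b',a') \right].
$$

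The heart of the argument is to bound the bracketed difference of the two nested $\max$–expectation operators. I would invoke the elementary inequality $\lvert \max_x f(x) - \max_x g(x)\rvert \le \max_x \lvert f(x) - g(x)\rvert$, applied with $f(a') = \mathbb{E}_{p(b'|s')} q_1(s',b',a')$ and $g(a') = \mathbb{E}_{p(b'|s')} q_2(s',b',a')$. This reduces the bracket to a quantity bounded by $\max_{a'} \lvert \mathbb{E}_{p(b'|s')}[q_1(s',b',a') - q_2(s',b',a')]\rvert$. Then, since $p(\cdot|s')$ is a probability distribution, the triangle inequality for expectations gives $\lvert \mathbb{E}_{p(b'|s')}[\,\cdot\,]\rvert \le \mathbb{E}_{p(b'|s')} \lvert q_1(s',b',a') - q_2(s',b',a')\rvert \le \|q_1 - q_2\|_\infty$, where the last step uses that the supremum norm dominates the integrand pointwise and that $\mathbb{E}_{p(b'|s')}[1] = 1$.

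Combining these, the bracket is bounded uniformly by $\|q_1 - q_2\|_\infty$ for every $s'$. Finally I would use $\sum_{s'} p(s'|s,b,a) = 1$ to conclude
$$
\lvert (\mathcal{H}q_1)(s,b,a) - (\mathcal{H}q_2)(s,b,a)\rvert \le \gamma \sum_{s'} p(s'|s,b,a)\, \|q_1 - q_2\|_\infty = \gamma \|q_1 - q_2\|_\infty.
$$
Taking the supremum over all $(s,b,a)$ on the left yields the claimed contraction. There is no genuine obstacle here — the only point requiring care is correctly handling the two nested operators ($\max_{a'}$ on the outside, $\mathbb{E}_{p(b'|s')}$ on the inside), ensuring the max-difference inequality is applied before the expectation bound rather than the reverse. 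Once the lemma is established, the Banach fixed-point theorem guarantees that iterating $\mathcal{H}$ converges to the unique fixed point $Q^*$, which is exactly the update rule \eqref{eq:lr}, thereby proving its convergence.
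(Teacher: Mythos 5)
Your proof is correct and follows essentially the same route as the paper's: cancel the common reward term, factor out $\gamma$, apply the elementary inequality $\lvert \max_x f(x) - \max_x g(x)\rvert \le \max_x \lvert f(x)-g(x)\rvert$ to the nested $\max$--expectation, bound the expectation by $\|q_1-q_2\|_\infty$, and conclude using $\sum_{s'} p(s'|s,b,a)=1$. The only cosmetic difference is that the paper passes the absolute value inside the sum over $s'$ first and takes a maximum over an auxiliary next-state variable, whereas you bound the bracket uniformly for each fixed $s'$ --- the same argument in a slightly different order.
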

\begin{proof}
We prove that $\| \mathcal{H}q_1 - \mathcal{H}q_2 \|_{\infty} \leq \gamma \| q_1 - q_2 \|_{\infty}$.
\begin{flalign*}
& \| \mathcal{H}q_1 - \mathcal{H}q_2 \|_{\infty} = \\
&= \max_{s,b,a}  \Big\vert \sum_{s'} p(s'|s,b,a) \big[ r(s,b,a) + \gamma \max_{a'} \mathbb{E}_{p(b'|s')} q_1(s',b',a') -r(s,b,a) - \gamma \max_{a'} \mathbb{E}_{p(b'|s')} q_2(s',b',a') \big]  \Big\vert = \\
&= \gamma \max_{s,b,a}   \Big\vert \sum_{s'} p(s'|s,b,a) \big[   \max_{a'} \mathbb{E}_{p(b'|s')} q_1(s',b',a')  -\max_{a'} \mathbb{E}_{p(b'|s')} q_2(s',b',a')\big]  \Big\vert \leq \\
&= \gamma \max_{s,b,a} \sum_{s'} p(s'|s,b,a)  \Big\vert  \max_{a'} \mathbb{E}_{p(b'|s')} q_1(s',b',a') - \max_{a'} \mathbb{E}_{p(b'|s')} q_2(s',b',a') \Big\vert \leq \\
&= \gamma \max_{s,b,a} \sum_{s'} p(s'|s,b,a)  \max_{a',z}  \Big\vert \mathbb{E}_{p(b'|z)} q_1(z,b',a') - \mathbb{E}_{p(b'|z)} q_2(z,b',a')  \Big\vert \leq \\
&= \gamma \max_{s,b,a} \sum_{s'} p(s'|s,b,a)  \max_{a',z,b'}  \Big\vert q_1(z,b',a') -  q_2(z,b',a')  \Big\vert =\\
&= \gamma \max_{s,b,a} \sum_{s'} p(s'|s,b,a)  \|  q_1 -  q_2 \|_{\infty} =  \gamma \|  q_1 -  q_2 \|_{\infty}.\hspace{7cm}\qedhere
\end{flalign*}
\end{proof}
Then, using the proposed learning rule (\ref{eq:lr}), we would converge
to the optimal $Q$ for each of the opponent actions. The proof follows directly from the standard $Q$-learning convergence proof, see e.g. \parencite{melo2001convergence}, and making use of Lemma 1.
However, at the time of making the decision, we do not know what action
he would take. Thus, we average over the possible opponent
actions, weighting each of the actions by $p(b|s)$, as in \eqref{eq:lr2}.

\section{One-shot game for the centralized case}\label{one-shot}

    


We model the one-shot version of the centralized case game as a
three-agent sequential game.
The regulator acts first choosing a tax policy; 
after observing it, the agents take their actions. 
Introducing a regulator can foster cooperation in the one shot game.

For simplicity, consider the following policy: the regulator will retain a percentage $x$ of the reward if the agent decides to defect, and 0 if it decides to cooperate. Then, the regulator will share evenly the amount collected between 
both agents. With this, given the regulator's action $x$, the payoff matrix 
is as in Table \ref{tab:payoffIPD2}, recalling that  $T > R > P >S$.
    
    \begin{table}[h!]
    \begin{center}
    \begin{tabular}{cl|lll}
    \multicolumn{1}{l}{}                                   &     & \multicolumn{3}{l}{\textbf{DDO}} \\ \cline{3-5} 
    \multicolumn{1}{l}{}                                   &     & $C$         &       & $D$        \\ \hline
    \multicolumn{1}{c|}{\textbf{Citizen}} & $C$ & $R,R$       &       & $S',T'$      \\
    \multicolumn{1}{c|}{}                                  &     &             &       &            \\
    \multicolumn{1}{c|}{}                                  & $D$ & $T',S'$       &       & $P,P$     
    \end{tabular} 
    \end{center}
    \caption{Utilities for the data sharing game}
    \label{tab:payoffIPD2}
    \vspace{-2ex}
    \end{table}
    
    Assume that if one agent defects and the other cooperates, the first one will receive a higher payoff, that is $T' > S'$, which means that $x < 1 - \frac{S}{T}$. Depending on $x$, three scenarios arise:
    \begin{enumerate}
        \item $T' > R > P > S' \iff x < 2 \left[ \frac{P-S}{T}\right]$. This is
        equivalent to the prisoner's dilemma. $(D,D)$ strictly dominates, thus being the unique Nash Equilibrium. 
        
        \item $R > T' > S' > P \iff x > 2 \left[ 1- \frac{R}{T}\right]$. In this case, $(C,C)$ strictly dominates, becoming the unique Nash Equilibrium. 
        
        \item $T' > R > S' > P \iff x \in \left( 2 \left[ \frac{P-S}{T}\right], 2 \left[ 1- \frac{R}{T}\right] \right) $. This is a coordination game. There are two possible Nash Equilibria with pure strategies $(C, D)$ and $(D, C)$.
    \end{enumerate}
    
    Moving backwards, consider the regulator's decision. Recall that
    R maximizes social utility. Again, three scenarios emerge:
   \begin{enumerate}
        \item $x < 2 \left[ \frac{P-S}{T}\right]$. The social utility is $P$.
        
        \item $x > 2 \left[ 1- \frac{R}{T}\right]$. The social utility is $R$.
        
        \item $x \in \left( 2 \left[ \frac{P-S}{T}\right], 2 \left[ 1- \frac{R}{T}\right] \right) $. The social utility is $\frac{S+T}{2}$.
    \end{enumerate}
    
    As $R > P$ and $R > \frac{T+S}{2}$ (as requested in the IPD),
    the regulator maximizes his payoff choosing $x > 2 \left[ 1- \frac{R}{T}\right]$. Therefore, $(x, C, C)$, with $x > 2 \left[ 1- \frac{R}{T}\right]$ is a subgame perfect equilibrium, and 
    we can foster cooperation in the one-shot version of the game.
    
\section{One-shot game for the centralized case plus incentives}\label{sec:oneshot_inc}

Under this scenario, we consider the reward bimatrix in Table \ref{tab:payoffIPD_inc}, 
where $I$ is the incentive introduced by the Regulator. 

 \begin{table}[h!]
    \begin{center}
    \begin{tabular}{cl|lll}
    \multicolumn{1}{l}{}                                   &     & \multicolumn{3}{l}{\textbf{DDO}} \\ \cline{3-5} 
    \multicolumn{1}{l}{}                                   &     & $C$         &       & $D$        \\ \hline
    \multicolumn{1}{c|}{\textbf{Citizen}} & $C$ & $R+I,R+I$       &       & $S,T$      \\
    \multicolumn{1}{c|}{}                                  &     &             &       &            \\
    \multicolumn{1}{c|}{}                                  & $D$ & $T,S$       &       & $P,P$     
    \end{tabular} 
    \end{center}
    \caption{Utilities for the data sharing game with incentives}
    \label{tab:payoffIPD_inc}
    \vspace{-2ex}
    \end{table}

Consider the case in which the agents take 
the $(C, D)$ pair of actions. In this case, they perceive rewards $(S, T)$.
After tax collection and distribution, it leads to
$(S - \frac{Sx}{2} +  \frac{Tx}{2}, T -  \frac{Tx}{2} +  \frac{Sx}{2})$, with $x$ being the tax rate collected by the Regulator. In order to ensure 
that $(C,C)$ is a Nash equilibrium, two conditions must hold:
\begin{itemize}
    \item $S - \frac{Sx}{2} +  \frac{Tx}{2} > P$, so that agents do not 
    switch from $(C,D)$ to $(D,D)$. This simplifies to $x > 2\frac{(P-S)}{T-S}$.
    \item $R + I > T -  \frac{Tx}{2} +  \frac{Sx}{2}$, so that 
    the agents do not switch from $(C, C)$ to $(C,D)$. This simplifies to $x > 2\frac{T - (R+I)}{T-S}$.
\end{itemize}

This shows that even if the gap between $T$ and $R$ is large, with the
aid of incentives both agents could reach mutual cooperation, also under a tax framework, since $I$ can grow arbitrarily to ignore the second restriction.

\end{subappendices}

\chapter{Conclusions}\label{cha:conclusions}
\section{Summary}

We end up this thesis by summarising results and suggesting a few challenges.
After several waves of popularity, NN models seem to 
have reached a definitive momentum because of the many relevant applications
based on them. Most work in NNs is based on the MLE tradition.
We have highlighted and illustrated the benefits of a Bayesian treatment of deep learning. Indeed, as we have described,
they provide improved uncertainty estimates;
they have enhanced generalization capabilities; 
they have enhanced robustness against adversarial attacks.
We also note that, although not studied in this thesis, 
they have improved capabilities for model calibration;
and the use of sparsity-inducing priors, could further induce 
improvements in learning. 
However, efficient Bayesian integration methods in 
deep NN are 
still to be found, this remaining a major challenge.
In particular their solution would facilitate the
development of probabilistic programming languages \parencite{gordon2014probabilistic,carpenter2017stan,wood2014new}
as the next step for differentiable programming,
leading to new tools for contemporary Bayesian 
statistics.

\subsection{Large Scale Bayesian Inference}\label{sec:conclusion_lsb}

Chapter 2 showed how to generate new SG-MCMC methods, such as SGLD+R and Adam+NR, consisting of multiple chains plus repulsion between  particles. Instead of a naive parallelization, in which a particle from a chain is agnostic to the others, we showed how it is possible to adapt another method from the literature, SVGD, to account for a better exploration of the space, avoiding between particle collapse. We also showed how momentum-accelerated extensions of SGD can be used as SG-MCMC samplers, while also being compatible with the extension to repulsive forces between parallel chains. Our 
experiments show that the proposed ideas improve efficiency when dealing 
with large scale inference and prediction problems in presence of many 
parameters and large data sets.

In the second part, we  proposed VIS, a flexible and efficient framework to perform
large-scale Bayesian inference in probabilistic models. The scheme benefits from useful properties and can be 
employed to efficiently perform inference with a wide class of models such as state-space time series, variational autoencoders {and variants such as the conditioned VAE for classification tasks}, defined through continuous, high-dimensional distributions.
The framework can be seen as a general 
approach to tuning MCMC sampler parameters, adapting the initial distributions and learning rate. 
Key to the success and applicability of  VIS  are the ELBO approximations based on the introduced refined variational approximation, which are computationally cheap but convenient. We note that both approaches developed in this Chapter could be combined, leading to en even better behaviour.

\subsection{Adversarial Classification}

Adversarial classification aims at enhancing classifiers to achieve robustness in presence of adversarial examples, as usually encountered in many security applications. The pioneering work of \textcite{dalvi2004adversarial} framed most later approaches to AC 
within the standard game theoretic paradigm, in spite of the unrealistic common knowledge assumptions about shared beliefs and preferences required, actually even questioned by those authors. After reviewing them, and analysing  their assumptions, we have presented two formal probabilistic approaches for AC based on ARA that mitigate 
such strong common knowledge assumptions.
They are general in the sense that application-specific assumptions are kept to a minimum. 
We have presented the framework in two different forms: in Section \ref{sec:ac_acra}, learning about the adversary is performed in the
operational phase, for generative classifiers. In Section \ref{sec:scalable}, adversarial aspects are incorporated in the training phase. Depending on the particular application, one of the frameworks could be preferred over the other. The first one allows us to make real time inference about the adversary, as it explicitly models his decision making process during operations; its adaptability is better as it does not need to be retrained every time we need to modify the adversary model. However, this comes at a high computational cost, and the harsh restriction of being only applicable to generative models. In applications in which there is a computational bottleneck, the second approach may be preferable, with possible changes in the adversary's behaviour incorporated via retraining.
This tension between the need to robustify algorithms against attacks (training phase, Section  \ref{sec:scalable}) and the fast adaptivity of attackers against defences (operational phase, Section \ref{sec:ac_acra}) is well exemplified in the phishing detection domain as discussed e.g. in \parencite{rakesh}.

\subsection{Adversarial aspects in Reinforcement Learning}\label{sec:final}
We have introduced TMDPs, a reformulation of MDPs to 
 support decision makers who confront opponents that interfere 
with the reward generating process in RL settings.
They have potential applications in security, cybersecurity and 
competitive marketing, to name but a few.
TMDPs aim at providing one-sided prescriptive support to a RL agent, maximizing her expected utility, taking into account potential negative actions adopted by an adversary. 
  The proposed learning rule is a contraction mapping and we may use RL convergence results, while gaining advantage from 
  opponent modelling within $Q$-learning. Indeed, we proposed a scheme to model adversarial behavior based on level-$k$ reasoning
and extended it by using type-based reasoning to account for uncertainty about the opponent's level. Finally, we have sketched how the framework could be extended to deep learning settings and to the multiple opponents case. Key features of our proposal are its generality
(it is model agnostic as it is compatible with tabular $Q$-learning or any function approximator for the $Q$-values) and its robustness, since it also offers 
protection against adversarial behaviour that is not exactly as described by the DM's opponent model. These significant benefits come at a reasonable cost, since the increase in complexity (both in time and space) is linear compared to unprotected, baseline vanilla $Q$-learners.

Empirical evidence is provided via extensive experiments, with encouraging results. In security settings, we see that by explicitly modelling a finite set of adversaries via the opponent averaging scheme, a supported DM can take advantage of her actual opponent, even when he is not explicitly modelled through a component from the finite mixture. This highlights the ability of our framework to generalize between different kinds of opponents. As take home lesson, we find that a level-2 $Q$-learner may effectively deal with a wide class of adversaries. However, maintaining a mixture of different adversaries is necessary if we consider a level-3 DM. As a rule of thumb, the supported DM may start at a low level in the hierarchy, and switch to a level-up temporarily, to check if the obtained rewards are higher. Otherwise, she would  continue on the initial, lower level.  
Indeed, the proposed scheme is
model agnostic, so we expect it to be usable in both shallow and deep multi-agent RL settings, such as the ones pioneered by \parencite{mnih2015human}. This is a desirable key property of our framework implying that it can be adopted in a wide array of relevant settings and configurations.



Regarding  data sharing games, it can be useful to recall that a defining trend in modern society is the abundance 
of data which opens up new opportunities, challenges
and threats. In the upcoming years, social progress will be essentially conditioned by the capacity of society to gather, analyze and understand data, as this will 
facilitate better and more informed decisions. 
Thus, to guarantee social progress,  efficient mechanisms
for data sharing are key. Obviously, such mechanisms
should not only facilitate the data sharing process, 
but must also guarantee the protection of the citizen's personal
information. As a consequence, the problem of data sharing not only
has importance from a socioeconomic perspective, but also from the
legislative point of view. This is well described in numerous recent legislative pieces from the 
EU, e.g.\ \parencite{europe1}, as well as in the concept of flourishing 
in a data-enabled society \parencite{allea}. 


We have studied the problem of data sharing
from a dynamic game theoretic perspective with two agents.
Within our setting,  mutual cooperation emerges as the strategy 
leading to the best social outcome, and it must be somehow promoted. We
have proposed modelling the confrontation between dominant data owners and citizens using two versions of the iterated prisoner dilemma via  
multi agent reinforcement learning: the decentralized case, in which both agents interact freely, and the centralized case, in which the interaction is regulated by an external agent/institution. In the first case, we have shown that there are strategies with which mutual cooperation is possible, and that a forgiving policy by the DDO can be beneficial in terms of social utility. In the centralized case, regulating the interaction between citizens and DDOs via an external agent could foster mutual cooperation through taxes and incentives.

\section{Further work}
Several avenues are open for further work. Here we discuss only several promising ones, following our three core chapters.

\subsection{Large Scale Bayesian Inference}
 First, with a very large particle regime (more than 100 particles) there is room to use approximating algorithms such as Barnes-Hutt to keep the computational cost tractable.
Secondly, we used the RBF kernel in all our experiments, but a natural 
issue to address would be to define a parameterized kernel $k_{\theta} (z_i, z_j)$ and learn the parameters $\theta$ on the go to optimize the ESS/s rate, using meta-learning approaches such as the one proposed in \textcite{gallego2019vis} for the SGLD case.

When dealing with shallow networks in comparatively small scale problems
\parencite{muller1998issues} dealing with acceptance Metropolis steps was crucial, for example,
when focusing on architecture selection; incorporating such steps
to the proposed approaches could be beneficial. 
If many more particles are  used, one could approximate the expectation in Eq.  (\ref{eq:svgd_mat}) using subsampling at each iteration, as proposed by the authors of SVGD, or by using more sophisticated approaches from the molecular dynamics literature, such as the \textcite{barnes1986hierarchical} algorithm, to arrive at an efficient $\mathcal{O}(L \log L)$ computational burden at a negligible approximation error.

Better estimates of the refined density and its gradient may be a fruitful line of research, such as the spectral estimator used in \textcite{shi2018spectral}. Another alternative is to use a deterministic flow (such as SGD or SVGD), keeping
track of the change in entropy at each iteration using the change of the variable formula, as in \textcite{duvenaud2016early}. However, this requires a costly Jacobian computation, making it unfeasible to combine with our  back-propagation through the sampler approach (Section \ref{sec:tuning}) for moderately complex problems. We leave this for future exploration. {Another interesting and useful line of further research would be to tackle the case in VIS in which the latent variables $z$ are discrete. This would entail adapting the automatic differentiation techniques to be able to back-propagate the gradients through the sequences of acceptance steps necessary in Metropolis--Hastings samplers.}

In order to deal with the implicit variational density in the VIS framework, it may be worthwhile to consider optimizing the Fenchel dual of the KL divergence, as
 in \parencite{fang2019implicit}. However, this requires the use of an auxiliary neural network, which may entail a large computational price compared with our simpler particle approximation.

Lastly, probabilistic programming offers powerful tools for Bayesian modeling.
A~PPL can be viewed as a programming language extended with random sampling and Bayesian conditioning capabilities, complemented with an inference engine that produces answers to inference, prediction and decision-making queries. Examples 
include WinBUGS~\parencite{lunn2000winbugs}, Stan \parencite{carpenter2017stan} or the recent Edward \parencite{tran2018simple} and Pyro \parencite{bingham2018pyro} languages. We plan to adapt VIS into several PPLs to facilitate the adoption of the framework.

\subsection{Adversarial Classification}
Our AC framework may be extended in several ways. First, we could adapt the proposed approach to situations in which there is repeated play of the AC game, introducing the possibility of learning the adversarial utilities and probabilities in a Bayesian way. Learning over opponent types has been explored with success in reinforcement learning scenarios, see Chapter 4. This could be extended to the classification setting. Besides exploratory ones, attacks over the training data \parencite{biggio2012poisoning} may be relevant in certain contexts. 
In addition, the extension to the case of attacks to innocent instances (not just integrity violation ones) seems feasible using the scalable framework. 
We have restricted our attention to deterministic attacks, that is, $a^*(x,y_1)$ will always lead to $x'$; extending our framework to deal with stochastic attacks would entail modelling $p(x' \vert a^*, x, y_1)$. 
Additional work should be undertaken concerning the
algorithmic aspects.  In our approach  we go through a simulation stage to forecast attacks and an optimisation stage to determine optimal classification. The whole process might be performed through a single stage, possibly based on augmented probability simulation \parencite{ekin2019augmented}.

We have also shown how the robustification procedure from Section \ref{sec:scalable} can be an efficient way to protect large-scale models, such as those trained using first-order methods. It is well-known that Bayesian marginalisation improves generalisation capabilities of flexible models since the ensemble helps in better exploring the posterior parameter space \parencite{wilson2020bayesian}. Our experiments suggest 
that this holds also in the domain of adversarial robustness. Thus, bridging the gap between large scale Bayesian methods and Game Theory, as  done in the ARA framework, suggests a powerful way to develop principled defences. To this end, strategies to more efficiently explore the highly complex, multimodal posterior distributions of neural models, such as the ones developed in Chapter 2, can be adopted.

Lastly, several application areas could benefit highly from
protecting their underlying ML models. Spam detectors were
the running example in Chapter 3. Malware and phishing detection are two
crucial cybersecurity problems in which the data distribution of computer programs is constantly changing, driven by attacker's interests in evading detectors. 

\subsection{Adversarial aspects in Reinforcement Learning}

Several lines of work are possible for further research. First of all, in the experiments, we have just considered DMs up to level-3,
though the extension to higher order adversaries is relevant. 
In addition, rather than trying to learn opponent $Q$-values, we could
use policy gradient methods \parencite{baxter2000direct} to expand our proposal. In the recent years, there has been a great deal of literature focusing on deep policy gradient methods, see e.g.  \textcite{pmlr-v48-mniha16,schulman2017ppo}, so it is natural to adapt our opponent modeling framework to account for these kinds of learning agents.
It also might be interesting to explore similar expansions to semi-MDPs, 
in order to perform hierarchical RL or allow for time-dependent rewards 
and transitions between states.

The developed framework could have several applications in the financial worlds, for instance, in mutual funds. Managing such a portfolio can be seen as making decisions over time, and with our framework we could take into account the presence of competing funds.

 Besides fostering cooperation,  the data sharing game may be seen as
  an instance of a two sided market \parencite{rochet2006two}. Therefore,
  the creation of intermediary platforms that facilitate the connection between dominant data owners and citizens to enable data sharing
  would be key to guarantee social progress.

\phantomsection
\addcontentsline{toc}{chapter}{\bibname}
\newrefcontext[sorting=nty]

\printbibliography

@Book{zbMATH06123712,
    Author = {Jon {Wakefield}},
    Title = {{Bayesian and frequentist regression methods.}},
    FJournal = {{Springer Series in Statistics}},
    Journal = {{Springer Ser. Stat.}},
    ISSN = {0172-7397},
    ISBN = {978-1-4419-0924-4/hbk; 978-1-4419-0925-1/ebook},
    Pages = {xix + 697},
    Year = {2013},
    Publisher = {New York, NY: Springer},
    Language = {English},
    DOI = {10.1007/978-1-4419-0925-1},
    MSC2010 = {62-01 62Jxx 62C10},
    Zbl = {1281.62014}
}

@book{Banks,
  title={{Adversarial Risk Analysis}},
  author={Banks, David and Rios, Jesus and Rios Insua, David},
  year={2015},
  publisher={Francis Taylor}
}

@book{Chung,
  title={{A Course in Probability Theory}},
  author={Chung, KL},
  year={2001},
  publisher={Academic Press}
}

@article{gallego2018stochastic,
  title={Stochastic gradient {MCMC} with repulsive forces},
  author={Gallego, Victor and Insua, David Rios},
  journal={NIPS Workshop on Bayesian Deep Learning},
  booktitle={Advances in Neural Information Processing Systems},
  year={2018}
}

@article{rakesh,
  title={An In-depth benchmarking and evaluation
  of phishing detection research for security needs}, 
    author={El Aassal, A. and Bakis, S. and Das, A. and Verma, R.},
      journal={IEEE Access},
      volume={8},
      pages={22170-22192},
  year={2020}
}

@article{wilson2020bayesian,
  title={Bayesian Deep Learning and a Probabilistic Perspective of Generalization},
  author={Wilson, Andrew Gordon and Izmailov, Pavel},
  journal={arXiv preprint arXiv:2002.08791},
  year={2020}
}

@article{gallego2020protecting,
  title={Protecting Classifiers From Attacks. A Bayesian Approach},
  author={Gallego, Victor and Naveiro, Roi and Redondo, Alberto and Insua, David Rios and Ruggeri, Fabrizio},
  journal={arXiv preprint arXiv:2004.08705},
  year={2020}
}

@article{platt1999probabilistic,
  title={Probabilistic outputs for support vector machines and comparisons to regularized likelihood methods},
  author={Platt, John},
  journal={Advances in large margin classifiers},
  volume={10},
  number={3},
  pages={61--74},
  year={1999},
  publisher={Cambridge, MA}
}

@book{kirk2004optimal,
  title={Optimal control theory: an introduction},
  author={Kirk, Donald E},
  year={2004},
  publisher={Courier Corporation}
}

@incollection{DIEDERICH200113917,
title = {Sequential Decision Making},
editor = {Neil J. Smelser and Paul B. Baltes},
booktitle = {International Encyclopedia of the Social and Behavioral Sciences},
publisher = {Pergamon},
address = {Oxford},
pages = {13917-13922},
year = {2001},
isbn = {978-0-08-043076-8},
doi = {https://doi.org/10.1016/B0-08-043076-7/00636-7},
url = {https://www.sciencedirect.com/science/article/pii/B0080430767006367},
author = {A. Diederich}
}

@book{gameTheoryACriticalIntroduction2004,
  title={{Game Theory: A Critical Introduction}},
  author={Hargreaves-Heap, Shaun and Varoufakis, Yanis},
  year={2004},
  publisher={Routledge}
}

@book{vorobeichikantar,
   title={{Adversarial Machine Learning}},
   author={Vorobeichyk, Yevgeny and Kantarcioglu, Murat},
   year={2019},
   publisher={Morgan Clayton}
}

@book{comiter,
  title={{Attacking Artificial Intelligence}},
  author={Comiter, Marcus},
  year={2019},
  publisher={Belfer Center Paper}
}

@article{evaluatingInfluenceDiagrams1986,
  title={{Evaluating Influence Diagrams}},
  author={Shachter, Ross D},
  journal={Operations Research},
  volume={34},
  pages={871--882},
  year={1986},
  publisher={INFORMS}
}

@article{cargo, 
 title={Is Screening Cargo Containers for Smuggled Nuclear Threats Worthwhile?},
  author={Merrick, Hason and McLay, Laura},
  journal={Decision Analysis},
  volume={7},
  number={2},
   pages={198-200},
  year={2010}
}

@article{terror,
  title={Classification Techniques Used to Faciliate Cyber Terrorism Investigation},
  author={Simanjuntak, Dani and Ipung, Harry and Lim, Carl
  and Nugroho, Albert},
  journal={Second International Conference on Advances in Computing, Control, and Telecommunication Technologies},
   pages={198-200},
  year={2010},
  publisher={Wiley Online Library}
}

@article{park2008bayesian,
  title={The {B}ayesian lasso},
  author={Park, Trevor and Casella, George},
  journal={Journal of the American Statistical Association},
  volume={103},
  number={482},
  pages={681--686},
  year={2008},
  publisher={Taylor \& Francis}
}

@article{ekin2019augmented,
  title={Augmented probability simulation methods for non-cooperative games},
  author={Ekin, Tahir and Naveiro, Roi and Torres-Barr{\'a}n, Alberto and R{\'\i}os-Insua, David},
  journal={arXiv preprint arXiv:1910.04574},
  year={2019}
}

@article{gowal2018ibp,
  author    = {Sven Gowal and
               Krishnamurthy Dvijotham and
               Robert Stanforth and
               Rudy Bunel and
               Chongli Qin and
               Jonathan Uesato and
               Relja Arandjelovic and
               Timothy A. Mann and
               Pushmeet Kohli},
  title     = {On the Effectiveness of Interval Bound Propagation for Training Verifiably
               Robust Models},
  journal   = {CoRR},
  volume    = {abs/1810.12715},
  year      = {2018},
  url       = {http://arxiv.org/abs/1810.12715},
  archivePrefix = {arXiv},
  eprint    = {1810.12715},
  timestamp = {Thu, 08 Nov 2018 10:57:46 +0100},
  biburl    = {https://dblp.org/rec/bib/journals/corr/abs-1810-12715},
  bibsource = {dblp computer science bibliography, https://dblp.org}
}

@article{bolton2002statistical,
  title={Statistical fraud detection: A review},
  author={Bolton, Richard J and Hand, David J},
  journal={Statistical science},
  pages={235--249},
  year={2002},
  publisher={JSTOR}
}

@article{kim2009estimating,
  title={Estimating classification error rate: Repeated cross-validation, repeated hold-out and bootstrap},
  author={Kim, Ji-Hyun},
  journal={Computational Statistics and Data Analysis},
  volume={53},
  number={11},
  pages={3735--3745},
  year={2009},
  publisher={Elsevier}
}

@article{kantarciouglu2011classifier,
  title={{Classifier evaluation and attribute selection against active adversaries}},
  author={Kantarc{\i}o{\u{g}}lu, Murat and Xi, Bowei and Clifton, Chris},
  journal={{Data Mining and Knowledge Discovery}},
  volume={22},
  pages={291--335},
  year={2011},
  publisher={Springer}
}

@article{biggio2012poisoning,
  title={Poisoning attacks against support vector machines},
  author={Biggio, Battista and Nelson, Blaine and Laskov, Pavel},
  journal={arXiv preprint arXiv:1206.6389},
  year={2012}
}

@article{biggio2014security,
  title={{Security evaluation of pattern classifiers under attack}},
  author={Biggio, Battista and Fumera, Giorgio and Roli, Fabio},
  journal={{IEEE Transactions on Knowledge and Data Engineering}},
  volume={26},
  pages={984--996},
  year={2014},
  publisher={IEEE}
}

@inproceedings{bottou2008tradeoffs,
  title={The tradeoffs of large scale learning},
  author={Bottou, L{\'e}on and Bousquet, Olivier},
  booktitle={Advances in neural information processing systems},
  pages={161--168},
  year={2008}
}

@book{10.5555/3172929, author = {Rubinstein, Reuven Y. and Kroese, Dirk P.}, title = {Simulation and the Monte Carlo Method}, year = {2016}, isbn = {1118632168}, publisher = {Wiley Publishing}, edition = {3rd} }

@inproceedings{moosavi2016deepfool,
  title={Deepfool: a simple and accurate method to fool deep neural networks},
  author={Moosavi-Dezfooli, Seyed-Mohsen and Fawzi, Alhussein and Frossard, Pascal},
  booktitle={Proceedings of the IEEE conference on computer vision and pattern recognition},
  pages={2574--2582},
  year={2016}
}

@article{su2019one,
  title={One pixel attack for fooling deep neural networks},
  author={Su, Jiawei and Vargas, Danilo Vasconcellos and Sakurai, Kouichi},
  journal={IEEE Transactions on Evolutionary Computation},
  volume={23},
  number={5},
  pages={828--841},
  year={2019},
  publisher={IEEE}
}

@inproceedings{elkan2001foundations,
  title={The Foundations of Cost-Sensitive Learning},
  author={Elkan, Charles},
  booktitle={International Joint Conference on Artificial Intelligence (IJCAI)},
  year={2001}
}

@article{gallego2019opponent,
    title={Opponent Aware Reinforcement Learning},
    author={Victor Gallego and Roi Naveiro and David Rios Insua and David Gomez-Ullate Oteiza},
    year={2019},
    eprint={1908.08773},
    archivePrefix={arXiv},
    primaryClass={cs.LG}
}

@incollection{bottou2010large,
  title={Large-scale machine learning with stochastic gradient descent},
  author={Bottou, L{\'e}on},
  booktitle={Proceedings of COMPSTAT'2010},
  pages={177--186},
  year={2010},
  publisher={Springer}
}

@article{naveiro2018adversarial,
  title={Adversarial classification: An adversarial risk analysis approach},
  author={Naveiro, Roi and Redondo, Alberto and Insua, David R{\'\i}os and Ruggeri, Fabrizio},
  journal={International Journal Approximate Reasoning},
  year={2019},
  pages= "https://doi.org/10.1016/j.ijar.2019.07.003"
}

@inproceedings{gallego2019reinforcement,
  title={Reinforcement learning under threats},
  author={Gallego, Victor and Naveiro, Roi and Insua, David Rios},
  journal={Proceedings of the AAAI Conference on Artificial Intelligence},
  volume={33},
  number={01},
  pages={9939--9940},
  year={2019}
}

@inproceedings{adversarialLearning2005,
 author = {Lowd, D. and Meek, C.},
 title = {{Adversarial learning}},
 booktitle ={{Proceedings of the Eleventh ACM SIGKDD International Conference on Knowledge Discovery in Data Mining}},
 series = {KDD '05},
 year = {2005},
 isbn = {1-59593-135-X},
 location = {Chicago, Illinois, USA},
 pages = {641--647},
 numpages = {7},
 acmid = {1081950},
 keywords = {adversarial classification, linear classifiers, spam},
}

@inproceedings{Barreno2006,
  title={{Can machine learning be secure?}},
  author={Barreno, Marco and Nelson, Blaine and Sears, Russell and Joseph, Anthony D and Tygar, J Doug},
  booktitle={{Proceedings of the 2006 ACM Symposium on Information, Computer and Communications Security}},
  pages={16--25},
  year={2006},
  organization={ACM}
}

@inproceedings{Kocz2009FeatureWF,
  title={{Feature Weighting for Improved Classifier Robustness}},
  author={Ko{\l}cz, Aleksander and Teo, Choon Hui},
  booktitle={{CEAS’09: Sixth Conference on Email and Anti-Spam}},
  year={2009}
}

@inproceedings{AdversarialMachineLearning2011,
 author = {Huang, Ling and Joseph, Anthony D. and Nelson, Blaine and Rubinstein, Benjamin I.P. and Tygar, J. D.},
 title = {{Adversarial machine learning}},
 booktitle = {{Proceedings of the 4th ACM Workshop on Security and Artificial Intelligence}},
 series = {AISec '11},
 year = {2011},
 isbn = {978-1-4503-1003-1},
 location = {Chicago, Illinois, USA},
 pages = {43--58},
 numpages = {16},
 acmid = {2046692},
 keywords = {adversarial learning, computer security, game theory, intrusion detection, machine learning, security metrics, spam filters, statistical learning},
}

@inproceedings{zhou2012adversarial,
  title={Adversarial support vector machine learning},
  author={Zhou, Yan and Kantarcioglu, Murat and Thuraisingham, Bhavani and Xi, Bowei},
  booktitle={Proceedings of the 18th ACM SIGKDD International Conference on Knowledge Discovery and Data Mining},
  pages={1059--1067},
  year={2012},
  organization={ACM}
}

@inproceedings{li2014feature,
  title={Feature cross-substitution in adversarial classification},
  author={Li, Bo and Vorobeychik, Yevgeniy},
  booktitle={{Advances in Neural Information Processing Systems}},
  pages={2087--2095},
  year={2014}
}

@inproceedings{Vorobeychik:2014:ORC:2615731.2615811,
 author = {Vorobeychik, Yevgeniy and Li, Bo},
 title = {{Optimal randomized classification in adversarial settings}},
 booktitle = {{Proceedings of the 2014 International Conference on Autonomous Agents and Multi-agent Systems}},
 series = {AAMAS '14},
 year = {2014},
 isbn = {978-1-4503-2738-1},
 location = {Paris, France},
 pages = {485--492},
 numpages = {8},
 acmid = {2615811},
 keywords = {adversarial classification, game theory},
}

@inproceedings{zeager2017adversarial,
  title={{Adversarial learning in credit card fraud detection}},
  author={Zeager, Mary Frances and Sridhar, Aksheetha and Fogal, Nathan and Adams, Stephen and Brown, Donald E and Beling, Peter A},
  booktitle={{Systems and Information Engineering Design Symposium (SIEDS), 2017}},
  pages={112--116},
  year={2017},
  organization={IEEE}
}

@inproceedings{kingma2014semi,
  title={Semi-supervised learning with deep generative models},
  author={Kingma, Durk P and Mohamed, Shakir and Rezende, Danilo Jimenez and Welling, Max},
  booktitle={Advances in neural information processing systems},
  pages={3581--3589},
  year={2014}
}

@incollection{Antos,
 author = {Antos, D. and Pfeffer, A.},
 title = {{Representing Bayesian Games without a Common Prior}},
 booktitle = {Proc. AAMAS 2010},
 editor = {van der Hoek, Kaninka, Luck and Sen},
 year = {2010},
  publisher = {IFAMAS}
}

@book{10.5555/3086952, author = {Goodfellow, Ian and Bengio, Yoshua and Courville, Aaron}, title = {Deep Learning}, year = {2016}, isbn = {0262035618}, publisher = {The MIT Press} }

@inproceedings{madry2018towards,
title={Towards Deep Learning Models Resistant to Adversarial Attacks},
author={Aleksander Madry and Aleksandar Makelov and Ludwig Schmidt and Dimitris Tsipras and Adrian Vladu},
booktitle={International Conference on Learning Representations},
year={2018},
url={https://openreview.net/forum?id=rJzIBfZAb},
}

@article{MNIST,
 author = {Le Cun, Y. and Cortes, C. and Burges, C.},
 title = {{The MNIST Database}},
 year = {1998},
 journal = {http://yann.lecun.com/exdb/mnist/},
 urldate = {2003},
}

@book{bishop2006pattern,
  title={Pattern recognition and machine learning},
  author={Bishop, Christopher M},
  year={2006},
  publisher={{Springer}}
}

@inproceedings{grosshans2013bayesian,
  title={Bayesian games for adversarial regression problems},
  author={Gro{\ss}hans, Michael and Sawade, Christoph and Br{\"u}ckner, Michael and Scheffer, Tobias},
  booktitle={International Conference on Machine Learning},
  pages={55--63},
  year={2013}
}

@inproceedings{rish2001empirical,
  title={An empirical study of the naive Bayes classifier},
  author={Rish, Irina and others},
  booktitle={IJCAI 2001 workshop on empirical methods in artificial intelligence},
  volume={3},
  number={22},
  pages={41--46},
  year={2001}
}

@book{mccullagh1989generalized,
  added-at = {2012-11-18T14:31:40.000+0100},
  author = {McCullagh, P. and Nelder, J.A.},
  biburl = {https://www.bibsonomy.org/bibtex/23a882b729f3f7333923cd713869d3f5f/peter.ralph},
  interhash = {1c22cedb7c518df1a6b0999d3f04f629},
  intrahash = {3a882b729f3f7333923cd713869d3f5f},
  isbn = {9780412317606},
  keywords = {GLM reference statistics},
  lccn = {99013896},
  publisher = {Chapman \& Hall},
  series = {Chapman and Hall/CRC Monographs on Statistics and Applied Probability Series},
  timestamp = {2012-11-18T14:31:40.000+0100},
  title = {Generalized Linear Models, Second Edition},
  url = {http://books.google.com/books?id=h9kFH2\_FfBkC},
  year = 1989
}

@inproceedings{carlini2017towards,
  title={Towards evaluating the robustness of neural networks},
  author={Carlini, Nicholas and Wagner, David},
  booktitle={2017 IEEE Symposium on Security and Privacy (SP)},
  pages={39--57},
  year={2017},
  organization={IEEE}
}

@misc{spambase1999,
  title={{Spambase Data Set}},
  author={Hopkins, Mark and Reeber, Erik and Forman, George and Suermondt, Jaap},
  howpublished = {\url{https://archive.ics.uci.edu/ml/datasets/Spambase}},
  year={1999}
}

@misc{Lichman:2013,
  title={{UCI Machine Learning Repository}},
  author={Lichman, Moshe},
  howpublished = {\url{http://archive.ics.uci.edu/ml}},
  year={2013}
}

@article{doi:10.1002/asmb.428,
author = {Campagnoli, Patrizia and Muliere, Pietro and Petrone, Sonia},
title = {Generalized dynamic linear models for financial time series},
journal = {Applied Stochastic Models in Business and Industry},
volume = {17},
number = {1},
pages = {27-39},
keywords = {dynamic linear models, conditionally Gaussian models, Kalman filter, stochastic regressors, stochastic volatility, GARCH models},
doi = {10.1002/asmb.428},
url = {https://onlinelibrary.wiley.com/doi/abs/10.1002/asmb.428},
eprint = {https://onlinelibrary.wiley.com/doi/pdf/10.1002/asmb.428},
year = {2001}
}

@article{raiffa1961applied,
	Author = {Raiffa, Howard and Schlaifer, Robert},
	Date-Added = {2013-07-20 14:31:53 +0000},
	Date-Modified = {2013-07-20 14:31:53 +0000},
	Publisher = {$\{$Harvard University Press$\}$},
	Title = {Applied Statistical Decision Theory (Harvard Business School Publications)},
	Year = {1961}}

@article{bagwell2007economic,
  title={The economic analysis of advertising},
  author={Bagwell, Kyle},
  journal={Handbook of industrial organization},
  volume={3},
  pages={1701--1844},
  year={2007},
  publisher={Elsevier}
}

@article{dorfman1954optimal,
  title={Optimal advertising and optimal quality},
  author={Dorfman, Robert and Steiner, Peter O},
  journal={The American Economic Review},
  volume={44},
  number={5},
  pages={826--836},
  year={1954},
  publisher={JSTOR}
}

@book{west1998bayesian,
  title={Bayesian forecasting and dynamic models},
  author={West, Mike and Harrison, Jeff},
  year={2006},
  publisher={Springer Science \& Business Media}
}

@article{mitchell1988bayesian,
	Author = {Mitchell, Toby J and Beauchamp, John J},
	Date-Added = {2018-02-21 15:58:05 +0000},
	Date-Modified = {2018-02-21 15:58:05 +0000},
	Journal = {Journal of the American Statistical Association},
	Number = {404},
	Pages = {1023--1032},
	Publisher = {Taylor \& Francis},
	Title = {Bayesian variable selection in linear regression},
	Volume = {83},
	Year = {1988}
	}

@book{koyck1954distributed,
	Author = {Koyck, Leendert Marinus},
	Date-Added = {2018-02-20 23:26:30 +0000},
	Date-Modified = {2018-02-20 23:26:30 +0000},
	Publisher = {North-Holland Publishing Company},
	Title = {Distributed Lags and Investment Analysis},
	Volume = {4},
	Year = {1954}}

@book{durbin2012time,
	Author = {Durbin, James and Koopman, Siem Jan},
	Date-Added = {2018-02-20 16:01:48 +0000},
	Date-Modified = {2018-02-20 16:01:48 +0000},
	Publisher = {OUP Oxford},
	Title = {Time Series Analysis by State Space Methods},
	Volume = {38},
	Year = {2012}}

@article{kalman1960new,
	Author = {Kalman, Rudolph Emil},
	Date-Added = {2018-02-20 15:50:31 +0000},
	Date-Modified = {2018-02-20 15:50:31 +0000},
	Journal = {Journal of basic Engineering},
	Number = {1},
	Pages = {35--45},
	Publisher = {American Society of Mechanical Engineers},
	Title = {A new approach to linear filtering and prediction problems},
	Volume = {82},
	Year = {1960}}

@article{wiener1949extrapolation,
	Author = {Wiener, Norbert},
	Date-Added = {2018-02-20 15:48:05 +0000},
	Date-Modified = {2018-02-20 15:48:05 +0000},
	Publisher = {MIT press Cambridge},
	Title = {Extrapolation, interpolation, and smoothing of stationary time series: with engineering applications},
	Year = {1949}}

@article{little1975brandaid,
	Author = {Little, John DC},
	Date-Added = {2018-02-20 11:59:54 +0000},
	Date-Modified = {2018-02-20 11:59:54 +0000},
	Journal = {Operations Research},
	Number = {4},
	Pages = {628--655},
	Publisher = {INFORMS},
	Title = {BRANDAID: A marketing-mix model, part 1: Structure},
	Volume = {23},
	Year = {1975}}

@article{vidale1957operations,
	Author = {Vidale, ML and Wolfe, HB},
	Date-Added = {2018-02-20 11:58:06 +0000},
	Date-Modified = {2018-02-20 11:58:06 +0000},
	Journal = {Operations research},
	Number = {3},
	Pages = {370--381},
	Publisher = {INFORMS},
	Title = {An operations-research study of sales response to advertising},
	Volume = {5},
	Year = {1957}}

@article{chasparis2012distributed,
  title={Distributed dynamic reinforcement of efficient outcomes in multiagent coordination and network formation},
  author={Chasparis, Georgios C and Shamma, Jeff S},
  journal={Dynamic games and applications},
  volume={2},
  number={1},
  pages={18--50},
  year={2012},
  publisher={Springer}
}

@article{etesami2019dynamic,
  title={Dynamic games in cyber-physical security: An overview},
  author={Etesami, S Rasoul and Ba{\c{s}}ar, Tamer},
  journal={Dynamic Games and Applications},
  volume={9},
  number={4},
  pages={884--913},
  year={2019},
  publisher={Springer}
}

@ARTICLE{chinorros,
	author = {Liu, Y. and Chen, Y. and Tang, T.},
	title = {Dynamic selective maintenance optimization for multi-state systems over a finite horizon: A deep reinforcement learning approach},
		journal = {European Journal of Operational Research},
	year = {2019},
	volume = {283},
	pages = {166-181},
	number = {1},
	month = {November},
	publisher = {Elsevier}
}

@ARTICLE{leelee,
	author = {Lee, H. and Lee, T, T.},
	title = {Multi-agent reinforcement learning algorithm to solve a partially-observable multi-agent problem in disaster response},
		journal = {European Journal of Operational Research},
	year = {2021},
	volume = {291},
	pages = {296-308},
	number = {1},
	month = {November},
	publisher = {Elsevier}
}

@article{george1997approaches,
  title={Approaches for Bayesian variable selection},
  author={George, Edward I and McCulloch, Robert E},
  journal={Statistica sinica},
  pages={339--373},
  year={1997},
  publisher={JSTOR}
}

@book{sarkka2013bayesian,
	Author = {S{\"a}rkk{\"a}, Simo},
	Date-Added = {2018-02-20 11:04:33 +0000},
	Date-Modified = {2018-02-20 11:04:33 +0000},
	Publisher = {Cambridge University Press},
	Title = {Bayesian Filtering and Smoothing},
	Volume = {3},
	Year = {2013}}

@article{scott2014predicting,
	Author = {Scott, Steven L and Varian, Hal R},
	Date-Added = {2018-02-20 10:05:12 +0000},
	Date-Modified = {2018-02-20 10:05:12 +0000},
	Journal = {International Journal of Mathematical Modelling and Numerical Optimisation},
	Number = {1-2},
	Pages = {4--23},
	Publisher = {Inderscience Publishers Ltd},
	Title = {Predicting the present with Bayesian structural time series},
	Volume = {5},
	Year = {2014}}

@incollection{scott2015bayesian,
	Author = {Scott, Steven L and Varian, Hal R},
	Booktitle = {Economic Analysis of the Digital Economy},
	Date-Added = {2018-02-20 10:01:52 +0000},
	Date-Modified = {2018-02-20 10:01:52 +0000},
	Pages = {119--135},
	Publisher = {University of Chicago Press},
	Title = {Bayesian variable selection for nowcasting economic time series},
	Year = {2015}}

@misc{scott2016bsts,
	Author = {Scott, Steven L},
	Date-Added = {2018-02-20 09:54:13 +0000},
	Date-Modified = {2018-02-20 09:54:13 +0000},
	Title = {bsts: Bayesian structural time series},
	Year = {2016}}

@article{petris2010r,
	Author = {Petris, Giovanni},
	Date-Added = {2018-02-20 09:52:41 +0000},
	Date-Modified = {2018-02-20 09:52:41 +0000},
	Journal = {Journal of Statistical Software},
	Number = {12},
	Pages = {1--16},
	Title = {An R package for dynamic linear models},
	Volume = {36},
	Year = {2010}}

@article{tellis2009generalizations,
	Author = {Tellis, Gerard J},
	Date-Added = {2018-02-20 09:24:35 +0000},
	Date-Modified = {2018-02-20 09:24:35 +0000},
	Journal = {Journal of Advertising Research},
	Number = {2},
	Pages = {240--245},
	Publisher = {Journal of Advertising Research},
	Title = {Generalizations about advertising effectiveness in markets},
	Volume = {49},
	Year = {2009}}

@article{tellis2007advertising,
	Author = {Tellis, Gerard J and Tellis, GJ and Ambler, T},
	Date-Added = {2018-02-20 09:22:18 +0000},
	Date-Modified = {2018-02-20 09:22:18 +0000},
	Journal = {The SAGE Handbook of Advertising},
	Pages = {264},
	Publisher = {SAGE},
	Title = {Advertising effectiveness in contemporary markets},
	Year = {2007}}

@article{assmus1984advertising,
	Author = {Assmus, Gert and Farley, John U and Lehmann, Donald R},
	Date-Added = {2018-02-20 09:20:34 +0000},
	Date-Modified = {2018-02-20 09:20:34 +0000},
	Journal = {Journal of Marketing Research},
	Pages = {65--74},
	Publisher = {JSTOR},
	Title = {How advertising affects sales: Meta-analysis of econometric results},
	Year = {1984}}

@incollection{petris2009dynamic,
	Author = {Petris, Giovanni and Petrone, Sonia and Campagnoli, Patrizia},
	Booktitle = {Dynamic Linear Models with R},
	Date-Added = {2018-02-20 09:18:54 +0000},
	Date-Modified = {2018-02-20 09:18:54 +0000},
	Pages = {31--84},
	Publisher = {Springer},
	Title = {Dynamic linear models},
	Year = {2009}}

@article{nerlove1962optimal,
	Author = {Nerlove, Marc and Arrow, Kenneth J},
	Date-Added = {2018-02-20 09:16:08 +0000},
	Date-Modified = {2018-02-20 09:16:08 +0000},
	Journal = {Economica},
	Pages = {129--142},
	Publisher = {JSTOR},
	Title = {Optimal advertising policy under dynamic conditions},
	Year = {1962}}

@article{clarke1976econometric,
	Author = {Clarke, Darral G},
	Date-Added = {2018-02-09 10:44:30 +0000},
	Date-Modified = {2018-02-09 10:44:30 +0000},
	Journal = {Journal of Marketing Research},
	Pages = {345--357},
	Publisher = {JSTOR},
	Title = {Econometric measurement of the duration of advertising effect on sales},
	Year = {1976}}

@article{brodersen2015inferring,
	Author = {Brodersen, Kay H and Gallusser, Fabian and Koehler, Jim and Remy, Nicolas and Scott, Steven L and others},
	Date-Added = {2018-02-06 22:21:33 +0000},
	Date-Modified = {2018-02-06 22:21:33 +0000},
	Journal = {The Annals of Applied Statistics},
	Number = {1},
	Pages = {247--274},
	Publisher = {Institute of Mathematical Statistics},
	Title = {Inferring causal impact using Bayesian structural time-series models},
	Volume = {9},
	Year = {2015}}

@article{naik1998planning,
	Author = {Naik, Prasad A and Mantrala, Murali K and Sawyer, Alan G},
	Date-Added = {2018-02-06 21:56:40 +0000},
	Date-Modified = {2018-02-06 21:56:40 +0000},
	Journal = {Marketing Science},
	Number = {3},
	Pages = {214--235},
	Publisher = {INFORMS},
	Title = {Planning media schedules in the presence of dynamic advertising quality},
	Volume = {17},
	Year = {1998}}

@article{little1979aggregate,
	Author = {Little, John DC},
	Date-Added = {2018-02-06 21:51:08 +0000},
	Date-Modified = {2018-02-06 21:51:08 +0000},
	Journal = {Operations research},
	Number = {4},
	Pages = {629--667},
	Publisher = {INFORMS},
	Title = {Aggregate advertising models: The state of the art},
	Volume = {27},
	Year = {1979}}

@article{bass2007wearout,
	Author = {Bass, Frank M and Bruce, Norris and Majumdar, Sumit and Murthi, BPS},
	Date-Added = {2018-02-06 21:49:43 +0000},
	Date-Modified = {2018-02-06 21:49:43 +0000},
	Journal = {Marketing Science},
	Number = {2},
	Pages = {179--195},
	Publisher = {INFORMS},
	Title = {Wearout effects of different advertising themes: A dynamic Bayesian model of the advertising-sales relationship},
	Volume = {26},
	Year = {2007}}

@article{luo2012does,
	Author = {Luo, Xueming and de Jong, Pieter J},
	Date-Added = {2018-02-06 20:00:45 +0000},
	Date-Modified = {2018-02-06 20:00:45 +0000},
	Journal = {Journal of the Academy of Marketing Science},
	Number = {4},
	Pages = {605--624},
	Publisher = {Springer},
	Title = {Does advertising spending really work? The intermediate role of analysts in the impact of advertising on firm value},
	Volume = {40},
	Year = {2012}}

@article{wiesel2011practice,
	Author = {Wiesel, Thorsten and Pauwels, Koen and Arts, Joep},
	Date-Added = {2018-02-06 20:00:05 +0000},
	Date-Modified = {2018-02-06 20:00:05 +0000},
	Journal = {Marketing Science},
	Number = {4},
	Pages = {604--611},
	Publisher = {Informs},
	Title = {Practice prize paper---Marketing's profit impact: quantifying online and off-line funnel progression},
	Volume = {30},
	Year = {2011}}

@article{alquier2020approximate,
  title={Approximate Bayesian Inference.},
  author={Alquier, P},
  journal={Entropy (Basel, Switzerland)},
  volume={22},
  number={11},
  year={2020}
}

@article{murray2008notes,
  title={Notes on the KL-divergence between a Markov chain and its equilibrium distribution},
  author={Murray, Iain and Salakhutdinov, Ruslan},
  year={2008},
  publisher={Citeseer}
}

@article{van2018simple,
  title={A simple introduction to Markov Chain Monte--Carlo sampling},
  author={Van Ravenzwaaij, Don and Cassey, Pete and Brown, Scott D},
  journal={Psychonomic bulletin \& review},
  volume={25},
  number={1},
  pages={143--154},
  year={2018},
  publisher={Springer}
}

@article{neal2011mcmc,
  title={MCMC using Hamiltonian dynamics},
  author={Neal, Radford M and others},
  journal={Handbook of Markov Chain Monte Carlo},
  volume={2},
  number={11},
  pages={2},
  year={2011}
}

@article{rabiner1989tutorial,
  title={A tutorial on hidden Markov models and selected applications in speech recognition},
  author={Rabiner, Lawrence R},
  journal={Proceedings of the IEEE},
  volume={77},
  number={2},
  pages={257--286},
  year={1989},
  publisher={Ieee}
}

@book{zarchan2013fundamentals,
  title={Fundamentals of Kalman filtering: a practical approach},
  author={Zarchan, Paul and Musoff, Howard},
  year={2013},
  publisher={American Institute of Aeronautics and Astronautics, Inc.}
}

@inproceedings{tran2015variational,
  title={The variational Gaussian process},
  author={Tran, Dustin and Ranganath, Rajesh and Blei, David M},
  booktitle={4th International Conference on Learning Representations, ICLR 2016},
  year={2016}
}

@inproceedings{salimans2015markov,
  title={Markov chain Monte Carlo and variational inference: Bridging the gap},
  author={Salimans, Tim and Kingma, Diederik and Welling, Max},
  booktitle={International Conference on Machine Learning},
  pages={1218--1226},
  year={2015}
}

@inproceedings{nalisnick2016approximate,
  title={Approximate inference for deep latent gaussian mixtures},
  author={Nalisnick, Eric and Hertel, Lars and Smyth, Padhraic},
  year={2016}
}

@inproceedings{rezende2015variational,
  title={Variational Inference with Normalizing Flows},
  author={Rezende, Danilo and Mohamed, Shakir},
  booktitle={International Conference on Machine Learning},
  pages={1530--1538},
  year={2015}
}

@article{keeling2005atmospheric,
  title={Atmospheric carbon dioxide record from Mauna Loa},
  author={Keeling, Charles D},
  year={2005},
}

@article{gneiting2007strictly,
  title={Strictly proper scoring rules, prediction, and estimation},
  author={Gneiting, Tilmann and Raftery, Adrian E},
  journal={Journal of the American Statistical Association},
  volume={102},
  number={477},
  pages={359--378},
  year={2007},
  publisher={Taylor \& Francis}
}

@inproceedings{ge2018t,
  author    = {Hong Ge and
               Kai Xu and
               Zoubin Ghahramani},
  title     = {Turing: a language for flexible probabilistic inference},
  booktitle = {International Conference on Artificial Intelligence and Statistics,
               {AISTATS} 2018, 9-11 April 2018, Playa Blanca, Lanzarote, Canary Islands,
               Spain},
  pages     = {1682--1690},
  year      = {2018},
  url       = {http://proceedings.mlr.press/v84/ge18b.html},
  biburl    = {https://dblp.org/rec/bib/conf/aistats/GeXG18},
}

@inproceedings{pmlr-v89-titsias19a,
  title={Unbiased Implicit Variational Inference},
  author={Titsias, Michalis K and Ruiz, Francisco},
  booktitle={The 22nd International Conference on Artificial Intelligence and Statistics},
  pages={167--176},
  year={2019}
}

@article{papaspiliopoulos2007general,
  title={A general framework for the parametrization of hierarchical models},
  author={Papaspiliopoulos, Omiros and Roberts, Gareth O and Sk{\"o}ld, Martin},
  journal={Statistical Science},
  pages={59--73},
  year={2007},
  publisher={JSTOR}
}

@inproceedings{li2016high,
  title={High-order stochastic gradient thermostats for Bayesian learning of deep models},
  author={Li, Chunyuan and Chen, Changyou and Fan, Kai and Carin, Lawrence},
  booktitle={Thirtieth AAAI Conference on Artificial Intelligence},
  year={2016}
}

@inproceedings{li2016preconditioned,
  title={Preconditioned stochastic gradient Langevin dynamics for deep neural networks},
  author={Li, Chunyuan and Chen, Changyou and Carlson, David and Carin, Lawrence},
  booktitle={Thirtieth AAAI Conference on Artificial Intelligence},
  year={2016}
}

@article{mandt2017stochastic,
  title={Stochastic gradient descent as approximate bayesian inference},
  author={Mandt, Stephan and Hoffman, Matthew D and Blei, David M},
  journal={The Journal of Machine Learning Research},
  volume={18},
  number={1},
  pages={4873--4907},
  year={2017},
  publisher={JMLR. org}
}

@article{cremer2018inference,
  title={Inference suboptimality in variational autoencoders},
  author={Cremer, Chris and Li, Xuechen and Duvenaud, David},
  journal={arXiv preprint arXiv:1801.03558},
  year={2018}
}

@article{feng2017learning,
  title={Learning to draw samples with amortized stein variational gradient descent},
  author={Feng, Yihao and Wang, Dilin and Liu, Qiang},
  journal={arXiv preprint arXiv:1707.06626},
  year={2017}
}

@inproceedings{liu2016stein,
  title={Stein variational gradient descent: A general purpose Bayesian inference algorithm},
  author={Liu, Qiang and Wang, Dilin},
  booktitle={Advances In Neural Information Processing Systems},
  pages={2378--2386},
  year={2016}
}

@article{hoffman2013stochastic,
  title={Stochastic variational inference},
  author={Hoffman, Matthew D and Blei, David M and Wang, Chong and Paisley, John},
  journal={The Journal of Machine Learning Research},
  volume={14},
  number={1},
  pages={1303--1347},
  year={2013},
  publisher={JMLR. org}
}

@article{kucukelbir2017automatic,
  title={Automatic differentiation variational inference},
  author={Kucukelbir, Alp and Tran, Dustin and Ranganath, Rajesh and Gelman, Andrew and Blei, David M},
  journal={The Journal of Machine Learning Research},
  volume={18},
  number={1},
  pages={430--474},
  year={2017},
  publisher={JMLR. org}
}

@article{blei2017variational,
  title={Variational inference: A review for statisticians},
  author={Blei, David M and Kucukelbir, Alp and McAuliffe, Jon D},
  journal={Journal of the American statistical Association},
  volume={112},
  number={518},
  pages={859--877},
  year={2017},
  publisher={Taylor \& Francis}
}

@article{xiao2017/online,
  author       = {Han Xiao and Kashif Rasul and Roland Vollgraf},
  title        = {Fashion-MNIST: a Novel Image Dataset for Benchmarking Machine Learning Algorithms},
  date         = {2017-08-28},
  year         = {2017},
  eprintclass  = {cs.LG},
  eprinttype   = {arXiv},
  eprint       = {cs.LG/1708.07747},
}

@inproceedings{tran2018simple,
  title={Simple, distributed, and accelerated probabilistic programming},
  author={Tran, Dustin and Hoffman, Matthew W and Moore, Dave and Suter, Christopher and Vasudevan, Srinivas and Radul, Alexey},
  booktitle={Advances in Neural Information Processing Systems},
  pages={7609--7620},
  year={2018}
}

@inproceedings{pmlr-v97-ruiz19a,
  title={A Contrastive Divergence for Combining Variational Inference and MCMC},
  author={Ruiz, Francisco and Titsias, Michalis},
  booktitle={International Conference on Machine Learning},
  pages={5537--5545},
  year={2019}
}

@article{baydin2017automatic,
  title={Automatic differentiation in machine learning: a survey},
  author={Baydin, At{\i}l{\i}m G{\"u}nes and Pearlmutter, Barak A and Radul, Alexey Andreyevich and Siskind, Jeffrey Mark},
  journal={The Journal of Machine Learning Research},
  volume={18},
  number={1},
  pages={5595--5637},
  year={2017},
  publisher={JMLR. org}
}

@inproceedings{riquelme2018failure,
  title={Failure modes of variational inference for decision making},
  author={Riquelme, Carlos and Johnson, Matthew and Hoffman, Matt},
  booktitle={Prediction and Generative Modeling in RL Workshop (AAMAS, ICML, IJCAI)},
  year={2018}
}

@article{gallego2021data,
      title={Data sharing games}, 
      author={Víctor Gallego and Roi Naveiro and David Ríos Insua and Wolfram Rozas},
      year={2021},
      eprint={2101.10721},
      archivePrefix={arXiv},
      primaryClass={cs.GT}
}

@article{angulo2018bayesian,
  title={Bayesian Factorization Machines for Risk Management and Robust Decision Making},
  author={Angulo, Pablo and Gallego, V{\'\i}ctor and G{\'o}mez-Ullate, David and Su{\'a}rez-Garc{\'\i}a, Pablo},
  journal={Mathematical and Statistical Methods for Actuarial Sciences and Finance},
  pages={51--55},
  year={2018},
  publisher={Springer}
}

@article{gallego2019dlms,
author = {Gallego, Víctor and Suárez-García, Pablo and Angulo, Pablo and Gómez-Ullate, David},
title = {Assessing the effect of advertising expenditures upon sales: A Bayesian structural time series model},
journal = {Applied Stochastic Models in Business and Industry},
volume = {35},
number = {3},
pages = {479-491},
keywords = {Bayesian structural time series, budget allocation, risk management, sales forecasting},
doi = {https://doi.org/10.1002/asmb.2460},
url = {https://onlinelibrary.wiley.com/doi/abs/10.1002/asmb.2460},
eprint = {https://onlinelibrary.wiley.com/doi/pdf/10.1002/asmb.2460},
abstract = {Abstract We propose a robust implementation of the Nerlove-Arrow model using a Bayesian structural time series model to explain the relationship between advertising expenditures of a countrywide fast-food franchise network with its weekly sales. Due to the flexibility and modularity of the model, it is well suited to generalization to other markets or situations. Its Bayesian nature facilitates incorporating a priori information reflecting the manager's views, which can be updated with relevant data. This aspect of the model will be used to support the decision of the manager on the budget scheduling of the advertising firm across time and channels.},
year = {2019}
}

@article{LIU201943,
title = "Variational inference with Gaussian mixture model and householder flow",
journal = "Neural Networks",
volume = "109",
pages = "43 - 55",
year = "2019",
issn = "0893-6080",
doi = "https://doi.org/10.1016/j.neunet.2018.10.002",
url = "http://www.sciencedirect.com/science/article/pii/S0893608018302879",
author = "GuoJun Liu and Yang Liu and MaoZu Guo and Peng Li and MingYu Li",
keywords = "Variational auto-encoder, Gaussian mixture model, Householder flow, Variational inference",
abstract = "The variational auto-encoder (VAE) is a powerful and scalable deep generative model. Under the architecture of VAE, the choice of the approximate posterior distribution is one of the crucial issues, and it has a significant impact on tractability and flexibility of the VAE. Generally, latent variables are assumed to be normally distributed with a diagonal covariance matrix, however, it is not flexible enough to match the true complex posterior distribution. We introduce a novel approach to design a flexible and arbitrarily complex approximate posterior distribution. Unlike VAE, firstly, an initial density is constructed by a Gaussian mixture model, and each component has a diagonal covariance matrix. Then this relatively simple distribution is transformed into a more flexible one by applying a sequence of invertible Householder transformations until the desired complexity has been achieved. Additionally, we also give a detailed theoretical and geometric interpretation of Householder transformations. Lastly, due to this change of approximate posterior distribution, the Kullback–Leibler distance between two mixture densities is required to be calculated, but it has no closed form solution. Therefore, we redefine a new variational lower bound by virtue of its upper bound. Compared with other generative models based on similar VAE architecture, our method achieves new state-of-the-art results on benchmark datasets including MNIST, Fashion-MNIST, Omniglot and Histopathology data a more challenging medical images dataset, the experimental results show that our method can improve the flexibility of posterior distribution more effectively."
}

@article{schulman2017ppo,
  abstract = {We propose a new family of policy gradient methods for reinforcement learning, which alternate between sampling data through interaction with the environment, and optimizing a "surrogate" objective function using stochastic gradient ascent. Whereas standard policy gradient methods perform one gradient update per data sample, we propose a novel objective function that enables multiple epochs of minibatch updates. The new methods, which we call proximal policy optimization (PPO), have some of the benefits of trust region policy optimization (TRPO), but they are much simpler to implement, more general, and have better sample complexity (empirically). Our experiments test PPO on a collection of benchmark tasks, including simulated robotic locomotion and Atari game playing, and we show that PPO outperforms other online policy gradient methods, and overall strikes a favorable balance between sample complexity, simplicity, and wall-time. },
  added-at = {2019-12-16T18:31:56.000+0100},
  author = {Schulman, John and Wolski, Filip and Dhariwal, Prafulla and Radford, Alec and Klimov, Oleg},
  biburl = {https://www.bibsonomy.org/bibtex/24bbcce6aa1c42ae7f61ef8cf5475aa85/lanteunis},
  ee = {http://arxiv.org/abs/1707.06347},
  interhash = {f57ff463a90dbafb77d55a25aea8355c},
  intrahash = {4bbcce6aa1c42ae7f61ef8cf5475aa85},
  journal = {CoRR},
  keywords = {DRLAlgoComparison ppo reinforcement_learning},
  timestamp = {2019-12-18T21:15:59.000+0100},
  title = {Proximal Policy Optimization Algorithms.},
  url = {http://dblp.uni-trier.de/db/journals/corr/corr1707.html#SchulmanWDRK17},
  volume = {abs/1707.06347},
  year = 2017
}

@InProceedings{pmlr-v48-mniha16, title = {Asynchronous Methods for Deep Reinforcement Learning}, author = {Mnih, Volodymyr and Badia, Adria Puigdomenech and Mirza, Mehdi and Graves, Alex and Lillicrap, Timothy and Harley, Tim and Silver, David and Kavukcuoglu, Koray}, booktitle = {Proceedings of The 33rd International Conference on Machine Learning}, pages = {1928--1937}, year = {2016}, editor = {Balcan, Maria Florina and Weinberger, Kilian Q.}, volume = {48}, series = {Proceedings of Machine Learning Research}, address = {New York, New York, USA}, month = {20--22 Jun}, publisher = {PMLR}, pdf = {http://proceedings.mlr.press/v48/mniha16.pdf}, url = { http://proceedings.mlr.press/v48/mniha16.html }, abstract = {We propose a conceptually simple and lightweight framework for deep reinforcement learning that uses asynchronous gradient descent for optimization of deep neural network controllers. We present asynchronous variants of four standard reinforcement learning algorithms and show that parallel actor-learners have a stabilizing effect on training allowing all four methods to successfully train neural network controllers. The best performing method, an asynchronous variant of actor-critic, surpasses the current state-of-the-art on the Atari domain while training for half the time on a single multi-core CPU instead of a GPU. Furthermore, we show that asynchronous actor-critic succeeds on a wide variety of continuous motor control problems as well as on a new task of navigating random 3D mazes using a visual input.} }

@article{banks2020adversarial,
  title={Adversarial risk analysis: An overview},
  author={David Banks and Victor Gallego and Roi Naveiro and David Rios Insua},
  journal={Wiley Interdisciplinary Reviews: Computational Statistics},
  pages={e1530},
  publisher={Wiley Online Library},
  year={2020}
}

@inproceedings{shi2018spectral,
  title={A Spectral Approach to Gradient Estimation for Implicit Distributions},
  author={Shi, Jiaxin and Sun, Shengyang and Zhu, Jun},
  booktitle={International Conference on Machine Learning},
  pages={4651--4660},
  year={2018}
}

@inproceedings{vaswani2017attention, 
  title={Attention is all you need},
  author={Vaswani, Ashish and Shazeer, Noam and Parmar, Niki and Uszkoreit, Jakob and Jones, Llion and Gomez, Aidan N and Kaiser, {\L}ukasz and Polosukhin, Illia},
  booktitle={Advances in neural information processing systems},
  pages={5998--6008},
  year={2017}
}

@article{kolen2001gradient,
  title={Gradient Flow in Recurrent Nets: The Difficulty of Learning LongTerm Dependencies},
  author={Kolen, John F and Kremer, Stefan C}, 
  year={2001},
  publisher={Wiley-IEEE Press}
}

@article{robbins,
  title={A stochastic approximation method},
  author={Robbins, Herbert and Monro, Sutton},
  journal={The annals of mathematical statistics},
  pages={400--407},
  year={1951},
  publisher={JSTOR}
}

@article{green,
  title={Reversible jump Markov chain Monte Carlo computation and Bayesian model determination},
  author={Green, Peter},
  journal={Biometrika},
  pages={711--732},
  volume={82},
  number={4},
  year={1995},
  publisher={JSTOR}
}

@inproceedings{goodfellow2014generative,
  title={Generative adversarial nets},
  author={Goodfellow, Ian and Pouget-Abadie, Jean and Mirza, Mehdi and Xu, Bing and Warde-Farley, David and Ozair, Sherjil and Courville, Aaron and Bengio, Yoshua},
  booktitle={Advances in neural information processing systems},
  pages={2672--2680},
  year={2014}
}

@inproceedings{ma2015complete,
  title={A complete recipe for stochastic gradient MCMC},
  author={Ma, Yi-An and Chen, Tianqi and Fox, Emily},
  booktitle={Advances in Neural Information Processing Systems},
  pages={2917--2925},
  year={2015}
}

@inproceedings{svgd,
author = {Liu, Qiang and Wang, Dilin},
title = {Stein Variational Gradient Descent: A General Purpose Bayesian Inference Algorithm},
year = {2016},
isbn = {9781510838819},
publisher = {Curran Associates Inc.},
address = {Red Hook, NY, USA},
abstract = {We propose a general purpose variational inference algorithm that forms a natural counterpart of gradient descent for optimization. Our method iteratively transports a set of particles to match the target distribution, by applying a form of functional gradient descent that minimizes the KL divergence. Empirical studies are performed on various real world models and datasets, on which our method is competitive with existing state-of-the-art methods. The derivation of our method is based on a new theoretical result that connects the derivative of KL divergence under smooth transforms with Stein's identity and a recently proposed kernelized Stein discrepancy, which is of independent interest.},
booktitle = {Proceedings of the 30th International Conference on Neural Information Processing Systems},
pages = {2378–2386},
numpages = {9},
location = {Barcelona, Spain},
series = {NIPS'16}
}

@article{hochreiter1997long,
  title={Long short-term memory},
  author={Hochreiter, Sepp and Schmidhuber, J{\"u}rgen},
  journal={Neural computation},
  volume={9},
  number={8},
  pages={1735--1780},
  year={1997},
  publisher={MIT Press}
}

@article{10.1214/17-BA1082,
author = {Nicholas G. Polson and Vadim Sokolov},
title = {{Deep Learning: A Bayesian Perspective}},
volume = {12},
journal = {Bayesian Analysis},
number = {4},
publisher = {International Society for Bayesian Analysis},
pages = {1275 -- 1304},
keywords = {Artificial intelligence, Bayesian hierarchical models, deep learning, LSTM models, machine learning, pattern matching, prediction, TensorFlow},
year = {2017},
doi = {10.1214/17-BA1082},
URL = {https://doi.org/10.1214/17-BA1082}
}

@ARTICLE{RePEc:wly:apsmbi:v:27:y:2011:i:2:p:151-163,
title = {A simulation‐based approach to stochastic dynamic programming},
author = {Polson, Nicholas G. and Sorensen, Morten},
year = {2011},
journal = {Applied Stochastic Models in Business and Industry},
volume = {27},
number = {2},
pages = {151-163},
abstract = {In this paper we develop a simulation‐based approach to stochastic dynamic programming. To solve the Bellman equation we construct Monte Carlo estimates of Q‐values. Our method is scalable to high dimensions and works in both continuous and discrete state and decision spaces while avoiding discretization errors that plague traditional methods. We provide a geometric convergence rate. We illustrate our methodology with a dynamic stochastic investment problem. Copyright © 2011 John Wiley & Sons, Ltd.},
url = {https://EconPapers.repec.org/RePEc:wly:apsmbi:v:27:y:2011:i:2:p:151-163}
}

@article{chung2014empirical,
  title={Empirical evaluation of gated recurrent neural networks on sequence modeling},
  author={Chung, Junyoung and Gulcehre, Caglar and Cho, KyungHyun and Bengio, Yoshua},
  journal={arXiv preprint arXiv:1412.3555},
  year={2014}
}

@article{AMLARA,
  title={Adversarial machine learning: Perspectives from adversarial risk analysis},
  author={Insua, David Rios and Naveiro, Roi and Gallego, Victor and Poulos, Jason},
  journal={arXiv preprint arXiv:2003.03546},
  year={2020}
}

@article{nn2022,
  title={Current advances in neural networks},
  author={Gallego, Victor and Insua, David Rios},
  journal={Annual Review of Statistics and Its Application},
  year={2022}
}

@article{krizhevsky2014cifar,
  title={The cifar-10 dataset},
  author={Krizhevsky, Alex and Nair, Vinod and Hinton, Geoffrey},
  journal={online: http://www. cs. toronto. edu/kriz/cifar. html},
  volume={55},
  pages={5},
  year={2014}
}

@incollection{gordon2014probabilistic,
  title={Probabilistic programming},
  author={Gordon, Andrew D and Henzinger, Thomas A and Nori, Aditya V and Rajamani, Sriram K},
  booktitle={Future of Software Engineering Proceedings},
  pages={167--181},
  year={2014}
}

@article{liu2019roberta,
  title={Roberta: A robustly optimized bert pretraining approach},
  author={Liu, Yinhan and Ott, Myle and Goyal, Naman and Du, Jingfei and Joshi, Mandar and Chen, Danqi and Levy, Omer and Lewis, Mike and Zettlemoyer, Luke and Stoyanov, Veselin},
  journal={arXiv preprint arXiv:1907.11692},
  year={2019}
}

@inproceedings{wood2014new,
  title={A new approach to probabilistic programming inference},
  author={Wood, Frank and Meent, Jan Willem and Mansinghka, Vikash},
  booktitle={Artificial Intelligence and Statistics},
  pages={1024--1032},
  year={2014},
  organization={PMLR}
}

@InProceedings{maas-EtAl:2011:ACL-HLT2011,
  author    = {Maas, Andrew L.  and  Daly, Raymond E.  and  Pham, Peter T.  and  Huang, Dan  and  Ng, Andrew Y.  and  Potts, Christopher},
  title     = {Learning Word Vectors for Sentiment Analysis},
  booktitle = {Proceedings of the 49th Annual Meeting of the Association for Computational Linguistics: Human Language Technologies},
  month     = {June},
  year      = {2011},
  address   = {Portland, Oregon, USA},
  publisher = {Association for Computational Linguistics},
  pages     = {142--150},
  url       = {http://www.aclweb.org/anthology/P11-1015}
}

@InProceedings{pmlr-v48-oord16, title = {Pixel Recurrent Neural Networks}, author = {Aaron Van Oord and Nal Kalchbrenner and Koray Kavukcuoglu}, booktitle = {Proceedings of The 33rd International Conference on Machine Learning}, pages = {1747--1756}, year = {2016}, editor = {Maria Florina Balcan and Kilian Q. Weinberger}, volume = {48}, series = {Proceedings of Machine Learning Research}, address = {New York, New York, USA}, month = {20--22 Jun}, publisher = {PMLR}, pdf = {http://proceedings.mlr.press/v48/oord16.pdf}, url = {http://proceedings.mlr.press/v48/oord16.html}, abstract = {Modeling the distribution of natural images is a landmark problem in unsupervised learning. This task requires an image model that is at once expressive, tractable and scalable. We present a deep neural network that sequentially predicts the pixels in an image along the two spatial dimensions. Our method models the discrete probability of the raw pixel values and encodes the complete set of dependencies in the image. Architectural novelties include fast two-dimensional recurrent layers and an effective use of residual connections in deep recurrent networks. We achieve log-likelihood scores on natural images that are considerably better than the previous state of the art. Our main results also provide benchmarks on the diverse ImageNet dataset. Samples generated from the model appear crisp, varied and globally coherent.} }

@article{oord2016wavenet,
  title={Wavenet: A generative model for raw audio},
  author={Oord, Aaron van den and Dieleman, Sander and Zen, Heiga and Simonyan, Karen and Vinyals, Oriol and Graves, Alex and Kalchbrenner, Nal and Senior, Andrew and Kavukcuoglu, Koray},
  journal={arXiv preprint arXiv:1609.03499},
  year={2016}
}

@inproceedings{10.5555/2998828.2998922,
author = {Frey, Brendan J. and Hinton, Geoffrey E. and Dayan, Peter},
title = {Does the Wake-Sleep Algorithm Produce Good Density Estimators?},
year = {1995},
publisher = {MIT Press},
address = {Cambridge, MA, USA},
abstract = {The wake-sleep algorithm (Hinton, Dayan, Frey and Neal 1995) is a relatively efficient method of fitting a multilayer stochastic generative model to high-dimensional data. In addition to the top-down connections in the generative model, it makes use of bottom-up connections for approximating the probability distribution over the hidden units given the data, and it trains these bottom-up connections using a simple delta rule. We use a variety of synthetic and real data sets to compare the performance of the wake-sleep algorithm with Monte Carlo and mean field methods for fitting the same generative model and also compare it with other models that are less powerful but easier to fit.},
booktitle = {Proceedings of the 8th International Conference on Neural Information Processing Systems},
pages = {661–667},
numpages = {7},
location = {Denver, Colorado},
series = {NIPS'95}
}

@book{banks2015adversarial,
  title={Adversarial risk analysis},
  author={Banks, David L and Aliaga, Jesus M Rios and Insua, David R{\'\i}os},
  year={2015},
  publisher={CRC Press}
}

@article{adversarialRiskAnalysis2009,
author = {Rios Insua, David and Rios, J. and Banks, D.},
title = {{Adversarial risk analysis}},
journal = {Journal of the American Statistical Association},
volume = {104},
pages = {841--854},
year = {2009},
}

@article{carlini2019evaluating,
  title={On Evaluating Adversarial Robustness},
  author={Carlini, Nicholas and Athalye, Anish and Papernot, Nicolas and Brendel, Wieland and Rauber, Jonas and Tsipras, Dimitris and Goodfellow, Ian and Madry, Aleksander and Kurakin, Alexey},
  journal={arXiv preprint arXiv:1902.06705},
  year={2019}
}

@article{kaelbling1996reinforcement,
  title={Reinforcement learning: A survey},
  author={Kaelbling, Leslie Pack and Littman, Michael L and Moore, Andrew W},
  journal={Journal of artificial intelligence research},
  volume={4},
  pages={237--285},
  year={1996}
}

@Article{math8111957,
AUTHOR = {Rios Insua, David and Naveiro, Roi and Gallego, Victor},
TITLE = {Perspectives on Adversarial Classification},
JOURNAL = {Mathematics},
VOLUME = {8},
YEAR = {2020},
NUMBER = {11},
ARTICLE-NUMBER = {1957},
URL = {https://www.mdpi.com/2227-7390/8/11/1957},
ISSN = {2227-7390},
ABSTRACT = {Adversarial classification (AC) is a major subfield within the increasingly important domain of adversarial machine learning (AML). So far, most approaches to AC have followed a classical game-theoretic framework. This requires unrealistic common knowledge conditions untenable in the security settings typical of the AML realm. After reviewing such approaches, we present alternative perspectives on AC based on adversarial risk analysis.},
DOI = {10.3390/math8111957}
}

@incollection{insuamuller,
  title={Feedforward Neural Networks for Nonparametric Regression},
  author={Rios Insua, David and Muller, Peter},
  booktitle={Practical Nonparametric amd Semiparametric Bayesian
  Statistics},
    year={1998},
  publisher={Springer}
}

@article{kingma2014adam,
  title={Adam: A method for stochastic optimization},
  author={Kingma, Diederik P and Ba, Jimmy},
  journal={arXiv preprint arXiv:1412.6980},
  year={2014}
}

@article{hessler,
  title={Artificial Intelligence in Drug Design},
  author={Hessler, Gerhard and Boringhaus, Karl-Heinz},
  journal={Molecules},
  volume={23},
  pages={2520},
  year={2018},
  publisher={Taylor \& Francis}
}

@article{rumanos,
  title={A survey of deep learning techniques for autonomous 
  driving},
  author={Grigoriescu, Sorin and Tranea, Bogdan and
  Cocias, Tiberiu and Macesan, Gigel},
  journal={Journal of Field Robotics},
  volume={37},
  pages={362--386},
  year={2020},
  publisher={Taylor \& Francis}
}

@article{LavineWest,
  title={A Bayesian method for classification 
  and discrimination},
  author={Lavine, Michael and West, Mike},
  journal={Canadian Journal of Statistics},
  volume={20},
  pages={451--461},
  year={1992},
  publisher={Taylor \& Francis}
}

@article{raftery,
  title={Accounting for model uncertainty in survival analysis improves predictive performance},
  author={Raftery, Adrian and Madigan, David
  and Volinsky, Chris},
  journal={Bayesian Statistics },
  volume={5},
    year={1996},
  publisher={Oxford University Press}
}

@book{french,
  title={Statistical Decision Theory},
  author={French, Simon and Rios Insua, David},
    year={2000},
  publisher={Wiley}
}

@book{gamerman,
  title={Markov chain Monte Carlo: stochastic simulation for Bayesian inference},
  author={Gamerman, Dani and Lopes, Hedibert},
    year={2006},
  publisher={CRC Press}
}

@article{lecun89,
  title={Backpropagation applied to handwritten zip code recognition},
  author={Le Cun, Y. and Boser, J. and 
  Denker, D. and Henderson, R and Howard, R. and HUbbard, W. and Jackel, L.},
  journal={Neural Computation},
  volume={1},
  pages={541-551},
  year={1989},
  publisher={Taylor \& Francis}
}

@article{devlin2018bert,
  title={Bert: Pre-training of deep bidirectional transformers for language understanding},
  author={Devlin, Jacob and Chang, Ming-Wei and Lee, Kenton and Toutanova, Kristina},
  journal={arXiv preprint arXiv:1810.04805},
  year={2018}
}

@article{radford2019language,
  title={Language models are unsupervised multitask learners},
  year=2019,
  author={Radford, Alec and Wu, Jeffrey and Child, Rewon and Luan, David and Amodei, Dario and Sutskever, Ilya}
}

@article{brown2020language,
  title={Language models are few-shot learners},
  author={Brown, Tom B and Mann, Benjamin and Ryder, Nick and Subbiah, Melanie and Kaplan, Jared and Dhariwal, Prafulla and Neelakantan, Arvind and Shyam, Pranav and Sastry, Girish and Askell, Amanda and others},
  journal={arXiv preprint arXiv:2005.14165},
  year={2020}
}

@inproceedings{cho2014learning,
  title={Learning Phrase Representations using RNN Encoder-Decoder for Statistical Machine Translation},
  author={Cho, Kyunghyun and van Merrienboer, Bart and G{\"u}l{\c{c}}ehre, {\c{C}}aglar and Bahdanau, Dzmitry and Bougares, Fethi and Schwenk, Holger and Bengio, Yoshua},
  booktitle={EMNLP},
  year={2014}
}

@article{radford2018improving,
  title={Improving language understanding by generative pre-training},
  author={Radford, Alec and Narasimhan, Karthik and Salimans, Tim and Sutskever, Ilya},
  year=2018
}

@INPROCEEDINGS{7926641,
  author={L. N. {Smith}},
  booktitle={2017 IEEE Winter Conference on Applications of Computer Vision (WACV)}, 
  title={Cyclical Learning Rates for Training Neural Networks}, 
  year={2017},
  volume={},
  number={},
  pages={464-472},
  doi={10.1109/WACV.2017.58}}

@ARTICLE{58337,
  author={P. J. {Werbos}},
  journal={Proceedings of the IEEE}, 
  title={Backpropagation through time: what it does and how to do it}, 
  year={1990},
  volume={78},
  number={10},
  pages={1550-1560},
  doi={10.1109/5.58337}}

@article{cruse2006neural,
  title={Neural networks as cybernetic systems},
  author={Cruse, Holk},
  journal={Brains, Minds, and Media},
  year={2006}
}

@article{fan2019selective,
  title={A selective overview of deep learning},
  author={Fan, Jianqing and Ma, Cong and Zhong, Yiqiao},
  journal={arXiv preprint arXiv:1904.05526},
  year={2019}
}

@article{loshchilov2016sgdr,
  title={Sgdr: Stochastic gradient descent with warm restarts},
  author={Loshchilov, Ilya and Hutter, Frank},
  journal={arXiv preprint arXiv:1608.03983},
  year={2016}
}

@article{lecun98,
  title={Gradient-based learning applied to document recognition},
  author={LeCun, Yann and Bottou, L{\'e}on and Bengio, Yoshua and Haffner, Patrick},
  journal={Proceedings of the IEEE},
  volume={86},
  number={11},
  pages={2278--2324},
  year={1998},
  publisher={Ieee}
}

@article{mckay,
  title={A practical Bayesian framework 
  for backprop networks},
  author={McKay, David},
  journal={Neural Computation},
  volume={4},
  pages={448-472},
  year={1992},
  publisher={Taylor \& Francis}
}

@article{buntineweigend,
  title={Bayesian back-propagation},
  author={Buntine, David and Weigend, Andreas},
  journal={Complex Systems},
  volume={5},
  pages={603--643},
  year={1991},
  publisher={Taylor \& Francis}
}

@article{obermeyer2019functional,
  author = {Obermeyer, Fritz and Bingham, Eli and Jankowiak, Martin and
            Phan, Du and Chen, Jonathan P},
  title = {{Functional Tensors for Probabilistic Programming}},
  journal = {arXiv preprint arXiv:1910.10775},
  year = {2019}
}

@article{cybenko1989approximation,
  title={Approximation by superpositions of a sigmoidal function},
  author={Cybenko, George},
  journal={Mathematics of control, signals and systems},
  volume={2},
  number={4},
  pages={303--314},
  year={1989},
  publisher={Springer}
}

@ARTICLE{menchero,
  author={A. {Menchero} and R. M. {Diez} and D. R. {Insua} and P. {Müller}},
  journal={Neural Computation}, 
  title={Bayesian Analysis of Nonlinear Autoregression Models Based on Neural Networks}, 
  year={2005},
  volume={17},
  number={2},
  pages={453-485},
  doi={10.1162/0899766053011537}
  }

@inproceedings{NIPS2012_c399862d,
 author = {Krizhevsky, Alex and Sutskever, Ilya and Hinton, Geoffrey E},
 booktitle = {Advances in Neural Information Processing Systems},
 editor = {F. Pereira and C. J. C. Burges and L. Bottou and K. Q. Weinberger},
 pages = {},
 publisher = {Curran Associates, Inc.},
 title = {ImageNet Classification with Deep Convolutional Neural Networks},
 url = {https://proceedings.neurips.cc/paper/2012/file/c399862d3b9d6b76c8436e924a68c45b-Paper.pdf},
 volume = {25},
 year = {2012}
}

@inproceedings{szegedy2015going,
  title={Going deeper with convolutions},
  author={Szegedy, Christian and Liu, Wei and Jia, Yangqing and Sermanet, Pierre and Reed, Scott and Anguelov, Dragomir and Erhan, Dumitru and Vanhoucke, Vincent and Rabinovich, Andrew},
  booktitle={Proceedings of the IEEE conference on computer vision and pattern recognition},
  pages={1--9},
  year={2015}
}

@article{simonyan2014very,
  title={Very deep convolutional networks for large-scale image recognition},
  author={Simonyan, Karen and Zisserman, Andrew},
  journal={arXiv preprint arXiv:1409.1556},
  year={2014}
}

@book{neal2012bayesian,
  title={Bayesian learning for neural networks},
  author={Neal, Radford M},
  volume={118},
  year={2012},
  publisher={Springer Science \& Business Media}
}

@book{minsky,
  title={Perceptrons},
  author={Minsky, M. and Papert, S.},
   year={1969},
  publisher={MIT Press}
}

@book{chollet,
  title={Deep Learning with Python},
  author={Chollet, F.},
   year={2018},
  publisher={Manning}
  }

@article{cirac,
  title = {Machine learning and the physical sciences},
  author = {Carleo, Giuseppe and Cirac, Ignacio and Cranmer, Kyle and Daudet, Laurent and Schuld, Maria and Tishby, Naftali and Vogt-Maranto, Leslie and Zdeborov\'a, Lenka},
  journal = {Rev. Mod. Phys.},
  volume = {91},
  issue = {4},
  pages = {045002},
  numpages = {39},
  year = {2019},
  month = {Dec},
  publisher = {American Physical Society},
  doi = {10.1103/RevModPhys.91.045002},
  url = {https://link.aps.org/doi/10.1103/RevModPhys.91.045002}
}

@article{rumelhart1986learning,
  title={Learning representations by back-propagating errors},
  author={Rumelhart, David E and Hinton, Geoffrey E and Williams, Ronald J},
  journal={nature},
  volume={323},
  number={6088},
  pages={533--536},
  year={1986},
  publisher={Nature Publishing Group}
}

@article{muller1998issues,
  title={Issues in Bayesian analysis of neural network models},
  author={M{\"u}ller, Peter and Insua, David Rios},
  journal={Neural Computation},
  volume={10},
  number={3},
  pages={749--770},
  year={1998},
  publisher={MIT Press}
}

@article{meza,
  title={Steepest descent},
  author={Meza, J.C.},
  journal={Wiley Interdisciplinary Reviews: Computational Statistics},
  volume={2},
  number={63},
  pages={719--722},
  year={2010},
  publisher={Wiley}
}

@article{maths,
  title={Mathematics of Deep Learning},
  author={Vidal, Rene and Bruna, Joan and Giryes, Raja and Soatto, Stefano},
  journal={arXiv preprint arXiv:1712.04741},
  year={2017}
}

@article{teach,
  title={A Statistician Teaches Deep Learning},
  author={Babu, Jogesh and Banks, David and Cho, Hyunsoon 
  and Han, David and Sang, Hailin and Wang, Shouyi},
    journal={arXiv preprint arXiv:2102.01194},
  year={2021}
}

@article{srivastava2014dropout,
  title={Dropout: a simple way to prevent neural networks from overfitting},
  author={Srivastava, Nitish and Hinton, Geoffrey and Krizhevsky, Alex and Sutskever, Ilya and Salakhutdinov, Ruslan},
  journal={The journal of machine learning research},
  volume={15},
  number={1},
  pages={1929--1958},
  year={2014},
  publisher={JMLR. org}
}

@article{szegedy2013intriguin,
  title={Intriguing properties of neural networks},
  author={Szegedy, Christian and Zaremba, Wojciech and Sutskever, Ilya and Bruna, Joan and Erhan, Dumitru and Goodfellow, Ian and Fergus, Rob},
  journal={arXiv preprint arXiv:1312.6199},
  year={2013}
}

@article{ioffe2015batch,
  title={Batch normalization: Accelerating deep network training by reducing internal covariate shift},
  author={Ioffe, Sergey and Szegedy, Christian},
  journal={arXiv preprint arXiv:1502.03167},
  year={2015}
}

@inproceedings{glorot2010understanding,
  title={Understanding the difficulty of training deep feedforward neural networks},
  author={Glorot, Xavier and Bengio, Yoshua},
  booktitle={Proceedings of the thirteenth international conference on artificial intelligence and statistics},
  pages={249--256},
  year={2010}
}

@article{huszar2017variational,
  title={Variational inference using implicit distributions},
  author={Husz{\'a}r, Ferenc},
  journal={arXiv preprint arXiv:1702.08235},
  year={2017}
}

@article{bahdanau2014neural,
  title={Neural machine translation by jointly learning to align and translate},
  author={Bahdanau, Dzmitry and Cho, Kyunghyun and Bengio, Yoshua},
  journal={arXiv preprint arXiv:1409.0473},
  year={2014}
}

@inproceedings{krogh1992simple,
  title={A simple weight decay can improve generalization},
  author={Krogh, Anders and Hertz, John A},
  booktitle={Advances in neural information processing systems},
  pages={950--957},
  year={1992}
}

@article{rosenblatt1958perceptron,
  title={The perceptron: a probabilistic model for information storage and organization in the brain.},
  author={Rosenblatt, Frank},
  journal={Psychological review},
  volume={65},
  number={6},
  pages={386},
  year={1958},
  publisher={American Psychological Association}
}

@article{kingma2013auto,
  title={Auto-encoding variational bayes},
  author={Kingma, Diederik P and Welling, Max},
  journal={arXiv preprint arXiv:1312.6114},
  year={2013}
}

@article{lunn2000winbugs,
  title={WinBUGS-a Bayesian modelling framework: concepts, structure, and extensibility},
  author={Lunn, David J and Thomas, Andrew and Best, Nicky and Spiegelhalter, David},
  journal={Statistics and computing},
  volume={10},
  number={4},
  pages={325--337},
  year={2000},
  publisher={Springer}
}

@article{bingham2018pyro,
  title={Pyro: Deep Universal Probabilistic Programming},
  author={Bingham, Eli and Chen, Jonathan P and Jankowiak, Martin and Obermeyer, Fritz and Pradhan, Neeraj and Karaletsos, Theofanis and Singh, Rohit and Szerlip, Paul and Horsfall, Paul and Goodman, Noah D},
  journal={arXiv preprint arXiv:1810.09538},
  year={2018}
}

@article{carpenter2017stan,
  title={Stan: A probabilistic programming language},
  author={Carpenter, Bob and Gelman, Andrew and Hoffman, Matthew D and Lee, Daniel and Goodrich, Ben and Betancourt, Michael and Brubaker, Marcus and Guo, Jiqiang and Li, Peter and Riddell, Allen},
  journal={Journal of statistical software},
  volume={76},
  number={1},
  year={2017},
  publisher={Columbia Univ., New York, NY}
}

@book{insua2012bayesian,
  title={Bayesian analysis of stochastic process models},
  author={Insua, David and Ruggeri, Fabrizio and Wiper, Mike},
  volume={978},
  year={2012},
  publisher={John Wiley \& Sons}
}

@article{hoffmanneutra,
  title={Neutra-lizing bad geometry in hamiltonian Monte Carlo using neural transport},
  author={Hoffman, Matthew and Sountsov, Pavel and Dillon, Joshua V and Langmore, Ian and Tran, Dustin and Vasudevan, Srinivas},
  journal={arXiv preprint arXiv:1903.03704},
  year={2019}
}

@article{parno2014transport,
  title={Transport map accelerated markov chain monte carlo},
  author={Parno, Matthew and Marzouk, Youssef},
  journal={arXiv preprint arXiv:1412.5492},
  year={2014}
}

@article{yin2018semi,
  title={Semi-Implicit Variational Inference},
  author={Yin, Mingzhang and Zhou, Mingyuan},
  journal={arXiv preprint arXiv:1805.11183},
  year={2018}
}

@misc{
chen2018continuoustime,
title={Continuous-Time Flows for Efficient Inference and Density Estimation},
author={Changyou Chen and Chunyuan Li and Liqun Chen and Wenlin Wang and Yunchen Pu and Lawrence Carin},
year={2018},
url={https://openreview.net/forum?id=rkcya1ZAW},
}

@inproceedings{abbati2018adageo,
  title={Adageo: Adaptive geometric learning for optimization and sampling},
  author={Abbati, Gabriele and Tosi, Alessandra and Osborne, Michael and Flaxman, Seth},
  booktitle={International Conference on Artificial Intelligence and Statistics},
  pages={226--234},
  year={2018}
}

@incollection{paszke2017automatic,
title = {PyTorch: An Imperative Style, High-Performance Deep Learning Library},
author = {Paszke, Adam and Gross, Sam and Massa, Francisco and Lerer, Adam and Bradbury, James and Chanan, Gregory and Killeen, Trevor and Lin, Zeming and Gimelshein, Natalia and Antiga, Luca and Desmaison, Alban and Kopf, Andreas and Yang, Edward and DeVito, Zachary and Raison, Martin and Tejani, Alykhan and Chilamkurthy, Sasank and Steiner, Benoit and Fang, Lu and Bai, Junjie and Chintala, Soumith},
booktitle = {Advances in Neural Information Processing Systems 32},
editor = {H. Wallach and H. Larochelle and A. Beygelzimer and F. d\textquotesingle Alch\'{e}-Buc and E. Fox and R. Garnett},
pages = {8024--8035},
year = {2019},
publisher = {Curran Associates, Inc.},
}

@article{andrieu2010particle,
  title={Particle {M}arkov chain Monte Carlo methods},
  author={Andrieu, Christophe and Doucet, Arnaud and Holenstein, Roman},
  journal={Journal of the Royal Statistical Society: Series B (Statistical Methodology)},
  volume={72},
  number={3},
  pages={269--342},
  year={2010},
  publisher={Wiley Online Library}
}

@article{gallego2019vis,
  title={Variationally inferred sampling through a refined bound},
  author={Gallego, V{\'\i}ctor and R{\'\i}os Insua, David},
  journal={Entropy},
  volume={23},
  number={1},
  pages={123},
  year={2021},
  publisher={Multidisciplinary Digital Publishing Institute}
}

@inproceedings{li2017dropout,
  title={Dropout inference in Bayesian neural networks with alpha-divergences},
  author={Li, Yingzhen and Gal, Yarin},
  booktitle={Proceedings of the 34th International Conference on Machine Learning-Volume 70},
  pages={2052--2061},
  year={2017},
  organization={JMLR. org}
}

@article{kass1998markov,
  title={Markov chain Monte Carlo in practice: a roundtable discussion},
  author={Kass, Robert E and Carlin, Bradley P and Gelman, Andrew and Neal, Radford M},
  journal={The American Statistician},
  volume={52},
  number={2},
  pages={93--100},
  year={1998},
  publisher={Taylor \& Francis Group}
}

@article{qian1999momentum,
  title={On the momentum term in gradient descent learning algorithms},
  author={Qian, Ning},
  journal={Neural networks},
  volume={12},
  number={1},
  pages={145--151},
  year={1999},
  publisher={Elsevier}
}

@article{frostig2018compiling,
  title={Compiling machine learning programs via high-level tracing},
  author={Frostig, Roy and Johnson, Matthew James and Leary, Chris},
  year={2018}
}

@article{barnes1986hierarchical,
  title={A hierarchical O (N log N) force-calculation algorithm},
  author={Barnes, Josh and Hut, Piet},
  journal={Nature},
  volume={324},
  number={6096},
  pages={446},
  year={1986},
  publisher={Nature Publishing Group}
}

@inproceedings{chen2016bridging,
  title={Bridging the gap between stochastic gradient MCMC and stochastic optimization},
  author={Chen, Changyou and Carlson, David and Gan, Zhe and Li, Chunyuan and Carin, Lawrence},
  booktitle={Artificial Intelligence and Statistics},
  pages={1051--1060},
  year={2016}
}

@software{jax2018github,
  author = {James Bradbury and Roy Frostig and Peter Hawkins and Matthew James Johnson and Chris Leary and Dougal Maclaurin and Skye Wanderman-Milne},
  title = {{JAX}: composable transformations of {P}ython+{N}um{P}y programs},
  url = {http://github.com/google/jax},
  version = {0.1.55},
  year = {2018},
}

@inproceedings{gong2019meta,
  title={Meta-learning for stochastic gradient MCMC},
  author={Gong, W and Li, Y and Hern{\'a}ndez-Lobato, JM},
  booktitle={7th International Conference on Learning Representations, ICLR 2019},
  year={2019}
}

@inproceedings{liu2017stein,
  title={Stein Variational Gradient Descent as Gradient Flow},
  author={Liu, Qiang},
  booktitle={Advances in Neural Information Processing Systems},
  pages={3115--3123},
  year={2017}
}

@article{chen2018unified,
  title={A unified particle-optimization framework for scalable bayesian sampling},
  author={Chen, Changyou and Zhang, Ruiyi and Wang, Wenlin and Li, Bai and Chen, Liqun},
  journal={arXiv preprint arXiv:1805.11659},
  year={2018}
}

@article{zhang2019cyclical,
  title={Cyclical stochastic gradient mcmc for bayesian deep learning},
  author={Zhang, Ruqi and Li, Chunyuan and Zhang, Jianyi and Chen, Changyou and Wilson, Andrew Gordon},
  journal={arXiv preprint arXiv:1902.03932},
  year={2019}
}

@book{risken-fpe-1989,
  author = {Hannes Risken},
  title = {The Fokker-Planck Equation: Methods of Solutions and Applications, 2nd ed.},
  publisher = {Springer Verlag, Berlin, Heidelberg},
  year = {1989}
}

@inproceedings{chen2014stochastic,
  title={Stochastic Gradient Hamiltonian Monte Carlo},
  author={Chen, Tianqi and Fox, Emily and Guestrin, Carlos},
  booktitle={International Conference on Machine Learning},
  pages={1683--1691},
  year={2014}
}

@article{carbonetto2012,
author = "Carbonetto, Peter and Stephens, Matthew",
doi = "10.1214/12-BA703",
fjournal = "Bayesian Analysis",
journal = "Bayesian Anal.",
month = "03",
number = "1",
pages = "73--108",
publisher = "International Society for Bayesian Analysis",
title = "Scalable Variational Inference for Bayesian Variable Selection in Regression, and Its Accuracy in Genetic Association Studies",
url = "https://doi.org/10.1214/12-BA703",
volume = "7",
year = "2012"
}

@article{li2019communication,
  title={Communication-Efficient Stochastic Gradient MCMC for Neural Networks},
  author={Li, Chunyuan and Chen, Changyou and Pu, Yunchen and Henao, Ricardo and Carin, Lawrence},
  year={2019}
}

@inproceedings{ding2014bayesian,
  title={Bayesian Sampling Using Stochastic Gradient Thermostats},
  author={Ding, Nan and Fang, Youhan and Babbush, Ryan and Chen, Changyou and Skeel, Robert D and Neven, Hartmut},
  booktitle={Advances in Neural Information Processing Systems},
  pages={3203--3211},
  year={2014}
}

@article{zhang2018,
author = "Zhang, Cheng and Shahbaba, Babak and Zhao, Hongkai",
doi = "10.1214/17-BA1060",
fjournal = "Bayesian Analysis",
journal = "Bayesian Anal.",
month = "06",
number = "2",
pages = "485--506",
publisher = "International Society for Bayesian Analysis",
title = "Variational Hamiltonian Monte Carlo via Score Matching",
url = "https://doi.org/10.1214/17-BA1060",
volume = "13",
year = "2018"
}

@INPROCEEDINGS{7952555,
  author={U. {Simsekli} and A. {Durmus} and R. {Badeau} and G. {Richard} and E. {Moulines} and A. T. {Cemgil}},
  booktitle={2017 IEEE International Conference on Acoustics, Speech and Signal Processing (ICASSP)}, 
  title={Parallelized Stochastic Gradient Markov Chain Monte Carlo algorithms for non-negative matrix factorization}, 
  year={2017},
  volume={},
  number={},
  pages={2242-2246},}

@INPROCEEDINGS{7780980,
  author={C. {Li} and A. {Stevens} and C. {Chen} and Y. {Pu} and Z. {Gan} and L. {Carin}},
  booktitle={2016 IEEE Conference on Computer Vision and Pattern Recognition (CVPR)}, 
  title={Learning Weight Uncertainty with Stochastic Gradient MCMC for Shape Classification}, 
  year={2016},
  volume={},
  number={},
  pages={5666-5675},}

@InProceedings{pmlr-v80-yao18a,
  title = 	 {Yes, but Did It Work?: Evaluating Variational Inference},
  author = 	 {Yao, Yuling and Vehtari, Aki and Simpson, Daniel and Gelman, Andrew},
  booktitle = 	 {Proceedings of the 35th International Conference on Machine Learning},
  pages = 	 {5581--5590},
  year = 	 {2018},
  editor = 	 {Dy, Jennifer and Krause, Andreas},
  volume = 	 {80},
  series = 	 {Proceedings of Machine Learning Research},
  address = 	 {Stockholmsmässan, Stockholm Sweden},
  month = 	 {10--15 Jul},
  publisher = 	 {PMLR},
  pdf = 	 {http://proceedings.mlr.press/v80/yao18a/yao18a.pdf},
  url = 	 {http://proceedings.mlr.press/v80/yao18a.html},
  abstract = 	 {While it’s always possible to compute a variational approximation to a posterior distribution, it can be difficult to discover problems with this approximation. We propose two diagnostic algorithms to alleviate this problem. The Pareto-smoothed importance sampling (PSIS) diagnostic gives a goodness of fit measurement for joint distributions, while simultaneously improving the error in the estimate. The variational simulation-based calibration (VSBC) assesses the average performance of point estimates.}
}

@book{gelman2013bayesian,
  title={Bayesian data analysis},
  author={Gelman, Andrew and Carlin, John B and Stern, Hal S and Dunson, David B and Vehtari, Aki and Rubin, Donald B},
  year={2013},
  publisher={Chapman and Hall/CRC}
}

@inproceedings{welling2011bayesian,
  title={Bayesian learning via stochastic gradient Langevin dynamics},
  author={Welling, Max and Teh, Yee W},
  booktitle={Proceedings of the 28th International Conference on Machine Learning (ICML-11)},
  pages={681--688},
  year={2011}
}

@article{graves2011automatic,
  title={Automatic step size selection in random walk Metropolis algorithms},
  author={Graves, Todd L},
  journal={arXiv preprint arXiv:1103.5986},
  year={2011}
}

@book{brooks2011handbook,
  added-at = {2015-03-24T17:28:34.000+0100},
  author = {Brooks, Steve and Gelman, Andrew and Jones, Galin and Meng, Xiao-Li},
  biburl = {https://www.bibsonomy.org/bibtex/22b8d02bec832fa945b62ecf7808614bf/becker},
  interhash = {0b127e40d41a970274484b65a7e0744f},
  intrahash = {2b8d02bec832fa945b62ecf7808614bf},
  keywords = {carlo chain diss handbook inthesis markov mcmc monte},
  publisher = {CRC press},
  timestamp = {2017-08-04T09:03:42.000+0200},
  title = {Handbook of Markov Chain Monte Carlo},
  year = 2011
}

@book{pavliotis2014stochastic,
  title={Stochastic Processes and Applications: Diffusion Processes, the Fokker-Planck and Langevin Equations},
  author={Pavliotis, G.A.},
  isbn={9781493913237},
  series={Texts in Applied Mathematics},
  url={https://books.google.es/books?id=8213BQAAQBAJ},
  year={2014},
  publisher={Springer New York}
}

@inproceedings{duvenaud2016early,
  title={Early stopping as nonparametric variational inference},
  author={Duvenaud, David and Maclaurin, Dougal and Adams, Ryan},
  booktitle={Artificial Intelligence and Statistics},
  pages={1070--1077},
  year={2016}
}

@inproceedings{hoffman2017learning,
  title={Learning deep latent Gaussian models with Markov chain Monte Carlo},
  author={Hoffman, Matthew D},
  booktitle={Proceedings of the 34th International Conference on Machine Learning-Volume 70},
  pages={1510--1519},
  year={2017},
  organization={JMLR. org}
}

@article{PhysRevLett.121.260601,
  title = {Neural Network Renormalization Group},
  author = {Li, Shuo-Hui and Wang, Lei},
  journal = {Phys. Rev. Lett.},
  volume = {121},
  issue = {26},
  pages = {260601},
  numpages = {7},
  year = {2018},
  month = {Dec},
  publisher = {American Physical Society},
  doi = {10.1103/PhysRevLett.121.260601},
  url = {https://link.aps.org/doi/10.1103/PhysRevLett.121.260601}
}

@inproceedings{fang2019implicit,
  title={Implicit Deep Latent Variable Models for Text Generation},
  author={Fang, Le and Li, Chunyuan and Gao, Jianfeng and Dong, Wen and Chen, Changyou},
  booktitle={Proceedings of the 2019 Conference on Empirical Methods in Natural Language Processing and the 9th International Joint Conference on Natural Language Processing (EMNLP-IJCNLP)},
  pages={3937--3947},
  year={2019}
}

@inproceedings{dai2018coupled,
  title={Coupled variational bayes via optimization embedding},
  author={Dai, Bo and Dai, Hanjun and He, Niao and Liu, Weiyang and Liu, Zhen and Chen, Jianshu and Xiao, Lin and Song, Le},
  booktitle={Advances in Neural Information Processing Systems},
  pages={9690--9700},
  year={2018}
}

@inproceedings{franceschi2017forward,
  title={Forward and reverse gradient-based hyperparameter optimization},
  author={Franceschi, Luca and Donini, Michele and Frasconi, Paolo and Pontil, Massimiliano},
  booktitle={Proceedings of the 34th International Conference on Machine Learning-Volume 70},
  pages={1165--1173},
  year={2017},
  organization={JMLR. org}
}

@article{melo2001convergence,
  title={Convergence of Q-learning: A simple proof},
  author={Melo, Francisco S},
  journal={Tech. Rep.},
  year={2001}
}

@article{rios2012adversarial,
  title={Adversarial risk analysis for counterterrorism modeling},
  author={Rios, Jesus and Insua, David },
  journal={Risk Analysis: An International Journal},
  volume={32},
  number={5},
  pages={894--915},
  year={2012},
  publisher={Wiley Online Library}
}

@article{Albrecht2018AutonomousAM,
  title={Autonomous agents modelling other agents: A comprehensive survey and open problems},
  author={Stefano V. Albrecht and Peter Stone},
  journal={Artif. Intell.},
  year={2018},
  volume={258},
  pages={66-95}
}

@article{raftery1985model,
  title={A model for high-order Markov chains},
  author={Raftery, Adrian E},
  journal={Journal of the Royal Statistical Society. Series B (Methodological)},
  pages={528--539},
  year={1985},
  publisher={JSTOR}
}

@article{lin2017tactics,
  title={Tactics of adversarial attack on deep reinforcement learning agents},
  author={Lin, Yen-Chen and Hong, Zhang-Wei and Liao, Yuan-Hong and Shih, Meng-Li and Liu, Ming-Yu and Sun, Min},
  journal={arXiv preprint arXiv:1703.06748},
  year={2017}
}

@article{leike2017ai,
  title={{AI} Safety Gridworlds},
  author={Leike, Jan and Martic, Miljan and Krakovna, Victoria and Ortega, Pedro A and Everitt, Tom and Lefrancq, Andrew and Orseau, Laurent and Legg, Shane},
  journal={arXiv preprint arXiv:1711.09883},
  year={2017}
}

@article{rios1,
author = {Insua, David and David Banks and Jesus Rios},
title = {Modeling Opponents in Adversarial Risk Analysis},
journal = {Risk Analysis},
volume = {36},
number = {4},
pages = {742-755},
year = {2016},
keywords = {Adversarial risk analysis, Bayesian model averaging, decision analysis, opponent modeling, simultaneous games}
}

@inproceedings{baxter2000direct,
  title={Direct gradient-based reinforcement learning},
  author={Baxter, Jonathan and Bartlett, Peter L},
  booktitle={2000 IEEE International Symposium on Circuits and Systems. Emerging Technologies for the 21st Century. Proceedings (IEEE Cat No. 00CH36353)},
  volume={3},
  pages={271--274},
  year={2000},
  organization={IEEE}
}

@inproceedings{tang2017exploration,
  title={\# Exploration: A Study of Count-Based Exploration for Deep Reinforcement Learning},
  author={Tang, Haoran and Houthooft, Rein and Foote, Davis and Stooke, Adam and Chen, OpenAI Xi and Duan, Yan and Schulman, John and DeTurck, Filip and Abbeel, Pieter},
  booktitle={Advances in Neural Information Processing Systems},
  pages={2750--2759},
  year={2017}
}

@article{mnih2015human,
  title={Human-level control through deep reinforcement learning},
  author={Mnih, Volodymyr and Kavukcuoglu, Koray and Silver, David and Rusu, Andrei A and Veness, Joel and Bellemare, Marc G and Graves, Alex and Riedmiller, Martin and Fidjeland, Andreas K and Ostrovski, Georg and others},
  journal={Nature},
  volume={518},
  number={7540},
  pages={529},
  year={2015},
  publisher={Nature Publishing Group}
}

@article{silver2017mastering,
  title={Mastering the game of go without human knowledge},
  author={Silver, David and Schrittwieser, Julian and Simonyan, Karen and Antonoglou, Ioannis and Huang, Aja and Guez, Arthur and Hubert, Thomas and Baker, Lucas and Lai, Matthew and Bolton, Adrian and others},
  journal={Nature},
  volume={550},
  number={7676},
  pages={354},
  year={2017},
  publisher={Nature Publishing Group}
}

@inproceedings{he2016opponent,
  title={Opponent modeling in deep reinforcement learning},
  author={He, He and Boyd-Graber, Jordan and Kwok, Kevin and Daume III, Hal},
  booktitle={International Conference on Machine Learning},
  pages={1804--1813},
  year={2016}
}

@inproceedings{foerster2018learning,
  title={Learning with opponent-learning awareness},
  author={Foerster, Jakob and Chen, Richard Y and Al-Shedivat, Maruan and Whiteson, Shimon and Abbeel, Pieter and Mordatch, Igor},
  booktitle={Proceedings of the 17th International Conference on Autonomous Agents and MultiAgent Systems},
  pages={122--130},
  year={2018},
  organization={International Foundation for Autonomous Agents and Multiagent Systems}
}

@incollection{littman1994markov,
  title={Markov games as a framework for multi-agent reinforcement learning},
  author={Littman, Michael L},
  booktitle={Machine Learning Proceedings 1994},
  pages={157--163},
  year={1994},
  publisher={Elsevier}
}

@article{hu2003nash,
  title={{Nash Q-learning for general-sum stochastic games}},
  author={Hu, Junling and Wellman, Michael P},
  journal={Journal of machine learning research},
  volume={4},
  number={Nov},
  pages={1039--1069},
  year={2003}
}

@inproceedings{littman2001friend,
  title={{Friend-or-Foe Q-learning in General-Sum Games}},
  author={Littman, Michael L},
  booktitle={Proceedings of the Eighteenth International Conference on Machine Learning},
  pages={322--328},
  year={2001},
  organization={Morgan Kaufmann Publishers Inc.}
}

@inproceedings{auer1995gambling,
  title={Gambling in a rigged casino: The adversarial multi-armed bandit problem},
  author={Auer, Peter and Cesa-Bianchi, Nicolo and Freund, Yoav and Schapire, Robert E},
  booktitle={Foundations of Computer Science, 1995. Proceedings., 36th Annual Symposium on},
  pages={322--331},
  year={1995},
  organization={IEEE}
}

@book{howard:dp,
  added-at = {2008-03-11T14:52:34.000+0100},
  address = {Cambridge, MA},
  author = {Howard, R. A.},
  priority = {2},
  publisher = {MIT Press},
  timestamp = {2008-03-11T14:56:12.000+0100},
  title = {Dynamic Programming and Markov Processes},
  year = 1960
}

@book{axelrod84,
  added-at = {2006-09-13T17:44:28.000+0200},
  address = {New York},
  author = {Axelrod, Robert},
  publisher = {Basic},
  timestamp = {2006-09-13T17:44:28.000+0200},
  title = {The Evolution of Cooperation},
  year = 1984
}

@article{stahl1994experimental,
  title={Experimental evidence on players' models of other players},
  author={Stahl, Dale O and Wilson, Paul W},
  journal={Journal of economic behavior \& organization},
  volume={25},
  number={3},
  pages={309--327},
  year={1994},
  publisher={Elsevier}
}

@inproceedings{dalvi2004adversarial,
  title={Adversarial classification},
  author={Dalvi, Nilesh and Domingos, Pedro and Sanghai, Sumit and Verma, Deepak and others},
  booktitle={Proceedings of the tenth ACM SIGKDD international conference on Knowledge discovery and data mining},
  pages={99--108},
  year={2004},
  organization={ACM}
}

@article{hart2008discrete,
  title={{Discrete Colonel Blotto and General Blotto Games}},
  author={Hart, Sergiu},
  journal={International Journal of Game Theory},
  volume={36},
  number={3-4},
  pages={441--460},
  year={2008},
  publisher={Springer}
}

@article{BIGGIO2018317,
title = "Wild patterns: Ten years after the rise of adversarial machine learning",
journal = "Pattern Recognition",
volume = "84",
pages = "317 - 331",
year = "2018",
issn = "0031-3203",
author = "Battista Biggio and Fabio Roli",
keywords = "Adversarial machine learning, Evasion attacks, Poisoning attacks, Adversarial examples, Secure learning, Deep learning",
abstract = "Learning-based pattern classifiers, including deep networks, have shown impressive performance in several application domains, ranging from computer vision to cybersecurity. However, it has also been shown that adversarial input perturbations carefully crafted either at training or at test time can easily subvert their predictions. The vulnerability of machine learning to such wild patterns (also referred to as adversarial examples), along with the design of suitable countermeasures, have been investigated in the research field of adversarial machine learning. In this work, we provide a thorough overview of the evolution of this research area over the last ten years and beyond, starting from pioneering, earlier work on the security of non-deep learning algorithms up to more recent work aimed to understand the security properties of deep learning algorithms, in the context of computer vision and cybersecurity tasks. We report interesting connections between these apparently-different lines of work, highlighting common misconceptions related to the security evaluation of machine-learning algorithms. We review the main threat models and attacks defined to this end, and discuss the main limitations of current work, along with the corresponding future challenges towards the design of more secure learning algorithms."
}

@book{skyrms2004stag,
  title={The stag hunt and the evolution of social structure},
  author={Skyrms, Brian},
  year={2004},
  publisher={Cambridge University Press}
}

@article{camerer2004cognitive,
  title={A cognitive hierarchy model of games},
  author={Camerer, Colin F and Ho, Teck-Hua and Chong, Juin-Kuan},
  journal={The Quarterly Journal of Economics},
  volume={119},
  number={3},
  pages={861--898},
  year={2004},
  publisher={MIT Press}
}

@article{press2012iterated,
  title={Iterated Prisoner’s Dilemma contains strategies that dominate any evolutionary opponent},
  author={Press, William H and Dyson, Freeman J},
  journal={Proceedings of the National Academy of Sciences},
  volume={109},
  number={26},
  pages={10409--10413},
  year={2012},
  publisher={National Acad Sciences}
}

@article{doi:10.1002/widm.1259,
author = {Zhou, Yan and Kantarcioglu, Murat and Xi, Bowei},
title = {A survey of game theoretic approach for adversarial machine learning},
journal = {Wiley Interdisciplinary Reviews: Data Mining and Knowledge Discovery},
year={2018},
pages = {e1259},
keywords = {adversarial machine learning, game theory},
abstract = {The field of machine learning is progressing at a faster pace than ever before. Many organizations leverage machine learning tools to extract useful information from a massive amount of data. In particular, machine learning finds its application in cybersecurity that begins to enter the age of automation. However, machine learning applications in cybersecurity face unique challenges other domains rarely do—attacks from active adversaries. Problems in areas such as intrusion detection, banking fraud detection, spam filtering, and malware detection have to face challenges of adversarial attacks that modify data so that malicious instances would evade detection by the learning systems. The adversarial learning problem naturally resembles a game between the learning system and the adversary. In such a game, both players would attempt to play their best strategies against each other while maximizing their own payoffs. To solve the game, each player would search for an optimal strategy against the opponent based on the prediction of the opponent's strategy choice. The problem becomes even more complicated in settings where the learning system may have to deal with many adversaries of unknown types. Applying game-theoretic approach, robust learning techniques have been developed to specifically address adversarial attacks and the preliminary results are promising. In this review, we summarize these results. This article is categorized under: Technologies > Machine Learning Fundamental Concepts of Data and Knowledge > Key Design Issues in Data Mining},
}

@article{huang2017adversarial,
  title={Adversarial attacks on neural network policies},
  author={Huang, Sandy and Papernot, Nicolas and Goodfellow, Ian and Duan, Yan and Abbeel, Pieter},
  journal={arXiv preprint arXiv:1702.02284},
  year={2017}
}

@article{goodfellow2014explaining,
  title={Explaining and Harnessing Adversarial Examples},
  author={Goodfellow, Ian J and Shlens, Jonathon and Szegedy, Christian},
  journal={arXiv preprint arXiv:1412.6572},
  year={2014}
}

@inproceedings{pinto2017robust,
  title={Robust Adversarial Reinforcement Learning},
  author={Pinto, Lerrel and Davidson, James and Sukthankar, Rahul and Gupta, Abhinav},
  booktitle={International Conference on Machine Learning},
  pages={2817--2826},
  year={2017}
}

@inproceedings{wen2019probabilistic,
  title={Probabilistic recursive reasoning for multi-agent reinforcement learning},
  author={Wen, Y and Yang, Y and Luo, R and Wang, J and Pan, W},
  booktitle={7th International Conference on Learning Representations, ICLR 2019},
  volume={7},
  year={2019},
  organization={International Conference on Learning Representations (ICLR)}
}

@article{caballero2021identifying,
  title={Identifying behaviorally robust strategies for normal form games under varying forms of uncertainty},
  author={Caballero, William N and Lunday, Brian J and Uber, Richard P},
  journal={European Journal of Operational Research},
  volume={288},
  number={3},
  pages={971--982},
  publisher={Elsevier},
  year={2021}
}

@incollection{marl_over,
  title={Multi-agent reinforcement learning: An overview},
  author={Busoniu, Lucian and Babuska, Robert and De Schutter, Bart},
  booktitle={Innovations in multi-agent systems and applications-1},
  pages={183--221},
  year={2010},
  publisher={Springer}
}

@inproceedings{lanctot2017unified,
  title={A unified game-theoretic approach to multiagent reinforcement learning},
  author={Lanctot, Marc and Zambaldi, Vinicius and Gruslys, Audrunas and Lazaridou, Angeliki and Tuyls, Karl and Perolat, Julien and Silver, David and Graepel, Thore},
  booktitle={Advances in Neural Information Processing Systems},
  pages={4190--4203},
  year={2017}
}

@article{gmytrasiewicz2005framework,
  title={A framework for sequential planning in multi-agent settings},
  author={Gmytrasiewicz, Piotr J and Doshi, Prashant},
  journal={Journal of Artificial Intelligence Research},
  volume={24},
  pages={49--79},
  year={2005}
}

@article{rochet2006two,
  title={Two-sided markets: a progress report},
  author={Rochet, Jean-Charles and Tirole, Jean},
  journal={The RAND journal of economics},
  volume={37},
  number={3},
  pages={645--667},
  year={2006},
  publisher={Wiley Online Library}
}

@book{brams,
  added-at = {2006-09-13T17:44:28.000+0200},
  address = {New York},
  author = {Brams, Steve},
  publisher = {Dover},
  timestamp = {2006-09-13T17:44:28.000+0200},
  title = {Game Theory and Politics},
  year = 2011
}

@inproceedings{10.5555/3016100.3016102,
author = {Alfeld, Scott and Zhu, Xiaojin and Barford, Paul},
title = {Data Poisoning Attacks against Autoregressive Models},
year = {2016},
publisher = {AAAI Press},
abstract = {Forecasting models play a key role in money-making ventures in many different markets. Such models are often trained on data from various sources, some of which may be untrustworthy. An actor in a given market may be incentivised to drive predictions in a certain direction to their own benefit. Prior analyses of intelligent adversaries in a machine-learning context have focused on regression and classification. In this paper we address the non-iid setting of time series forecasting. We consider a forecaster, Bob, using a fixed, known model and a recursive forecasting method. An adversary, Alice, aims to pull Bob's forecasts toward her desired target series, and may exercise limited influence on the initial values fed into Bob's model. We consider the class of linear autoregressive models, and a flexible framework of encoding Alice's desires and constraints. We describe a method of calculating Alice's optimal attack that is computationally tractable, and empirically demonstrate its effectiveness compared to random and greedy baselines on synthetic and real-world time series data. We conclude by discussing defensive strategies in the face of Alice-like adversaries.},
booktitle = {Proceedings of the Thirtieth AAAI Conference on Artificial Intelligence},
pages = {1452–1458},
numpages = {7},
location = {Phoenix, Arizona},
series = {AAAI'16}
}

@ARTICLE{9063523,  author={Karim, Fazle and Majumdar, Somshubra and Darabi, Houshang},  journal={IEEE Transactions on Pattern Analysis and Machine Intelligence},   title={Adversarial Attacks on Time Series},   year={2020},  volume={},  number={},  pages={1-1},  doi={10.1109/TPAMI.2020.2986319}}

@article{zeiler2012adadelta,
  title={Adadelta: an adaptive learning rate method},
  author={Zeiler, Matthew D},
  journal={arXiv preprint arXiv:1212.5701},
  year={2012}
}

@article{duchi2011adaptive,
  title={Adaptive subgradient methods for online learning and stochastic optimization.},
  author={Duchi, John and Hazan, Elad and Singer, Yoram},
  journal={Journal of machine learning research},
  volume={12},
  number={7},
  year={2011}
}

@article{lecun2015deep,
  title={Deep learning},
  author={LeCun, Yann and Bengio, Yoshua and Hinton, Geoffrey},
  journal={nature},
  volume={521},
  number={7553},
  pages={436--444},
  year={2015},
  publisher={Nature Publishing Group}
}

@article{10.2307/1266125,
 ISSN = {00401706},
 URL = {http://www.jstor.org/stable/1266125},
 abstract = {This paper is concerned with the dual problem of generating and analyzing data in experimental investigations in which the goal is to develop a suitable mechanistic model. The problem is first distinguished from that of response surface methodology. With regard to the analysis of data, topics that are discussed include the behavior of estimated constants with an inadequate model, a diagnostic technique for model-building, and the importance of visual scrutiny of data. With regard to the generation of data, the concept of placing a model in jeopardy is discussed. Designs for model discrimination and for parameter estimation are considered.},
 author = {G. E. P. Box and William G. Hunter},
 journal = {Technometrics},
 number = {1},
 pages = {23--42},
 publisher = {Taylor \& Francis},
 title = {The Experimental Study of Physical Mechanisms},
 volume = {7},
 year = {1965}
}

@article{doi:10.1080/00401706.1962.10490015,
author = { G.   E.P.   Box  and  William G.   Hunter },
title = {A Useful Method For Model-Building},
journal = {Technometrics},
volume = {4},
number = {3},
pages = {301-318},
year  = {1962},
publisher = {Taylor & Francis},
doi = {10.1080/00401706.1962.10490015},

URL = { 
        https://www.tandfonline.com/doi/abs/10.1080/00401706.1962.10490015
    
},
eprint = { 
        https://www.tandfonline.com/doi/pdf/10.1080/00401706.1962.10490015
    
}

}

@article{conant1959understanding,
  title={On Understanding Science an Historical Approach. With a New Foreword by the Author},
  author={Conant, James Bryant},
  year={1959}
}

@inproceedings{garipov2018loss,
  title={Loss Surfaces, Mode Connectivity, and Fast Ensembling of DNNs},
  author={Garipov, Timur and Izmailov, Pavel and Podoprikhin, Dmitrii and Vetrov, Dmitry P and Wilson, Andrew Gordon},
  booktitle={Advances in Neural Information Processing Systems},
  year={2018}
}

@article{ZOLNA2020102969,
title = {Classifier-agnostic saliency map extraction},
journal = {Computer Vision and Image Understanding},
volume = {196},
pages = {102969},
year = {2020},
issn = {1077-3142},
doi = {https://doi.org/10.1016/j.cviu.2020.102969},
url = {https://www.sciencedirect.com/science/article/pii/S1077314220300461},
author = {Konrad Zolna and Krzysztof J. Geras and Kyunghyun Cho},
keywords = {Saliency map, Convolutional neural networks, Image classification, Weakly supervised localization},
abstract = {Currently available methods for extracting saliency maps identify parts of the input which are the most important to a specific fixed classifier. We show that this strong dependence on a given classifier hinders their performance. To address this problem, we propose classifier-agnostic saliency map extraction, which finds all parts of the image that any classifier could use, not just one given in advance. We observe that the proposed approach extracts higher quality saliency maps than prior work while being conceptually simple and easy to implement. The method sets the new state of the art result for localization task on the ImageNet data, outperforming all existing weakly-supervised localization techniques, despite not using the ground truth labels at the inference time. The code reproducing the results is available at https://github.com/kondiz/casme.}
}

@book{sutton2012reinforcement,
  title={Reinforcement learning: An introduction},
  author={Sutton, Richard S and Barto, Andrew G},
  year={2018},
  publisher={MIT press}
}

@article{brown1951iterative,
  title={Iterative Solution of Games by Fictitious Play},
  author={Brown, George W},
  journal={Activity Analysis of Production and Allocation},
  pages={374--376},
  year={1951},
  publisher={Wiley}
}

@article{share,
  title={Information partnerships--shared data, shared scale.},
  author={Konsynski, Benn R and McFarlan, F Warren},
  journal={Harvard Business Review},
  volume={68},
  number={5},
  pages={114--120},
  year={1990}
}

@article{wolfram,
  title={Blockchain, a technology that also protects and promotes your intangible assets}, 
  author={Seuillet, Eric and Duvaut, Patrick},
  journal={Harvard Business Review France},
  url={https://www.hbrfrance.fr/chroniques-experts/2018/10/23021-la-blockchain-une-technologie-qui-permet-aussi-de-proteger-vos-actifs-immateriels/},
  year={1990}
}

@article{europe1,
author = {European Commission},
title = {A European strategy for data},
url={https://ec.europa.eu/digital-single-market/en/policies/building-european-data-economy},
year = {2020}
}

@article{gdpr,
author = {EUR-lex},
title = {Regulation (EU) 2016/679 of the EU Parliament 
and of the Council. General Data Protection Regulation},
url={https://data.europa.eu/eli/reg/2016/679/oj},
year = {2016}
}

@article{ibm,
author = {IBM},
title = {Watson Personality Insights},
url={https://cloud.ibm.com/docs/personality-insights/science.html#science},
year = {2020}
}

@article{aumann1960,
author = {Aumann, Robert J},
title = {Linearity of unrestrictedly transferable utilities},
journal = {Naval Research Logistics Quarterly},
volume = {7},
number = {3},
pages = {281-284},
year = {1960}
}

@article{allea,
author = {ALLEA},
title = {Flourishing in a data-enabled society},
journal = {ALLEA Discussion Paper},
year = {2019}
}

@article{pronk,
  title={A game theoretic analysis of research data sharing},
  author={Pronk, Tessa E and Wiersma, Paulien H and van Weerden, Anne and Schieving, Feike},
  journal={PeerJ},
  volume={3},
  pages={e1242},
  year={2015},
  publisher={PeerJ Inc.}
}

@article{target,
  title={How Target figured out a teen girl was pregnant},
  author={Hill, Kamir},
  journal={Forbes},
  pages={374--376},
  year={2012},
  publisher={Wiley}
}

@article{figueiredo2017data,
  title={Data sharing: convert challenges into opportunities},
  author={Figueiredo, Ana Sofia},
  journal={Frontiers in public health},
  volume={5},
  pages={327},
  year={2017},
  publisher={Frontiers}
}

@article{dehez2013data,
  title={Data games: Sharing public goods with exclusion},
  author={Dehez, Pierre and Tellone, Daniela},
  journal={Journal of Public Economic Theory},
  volume={15},
  number={4},
  pages={654--673},
  year={2013},
  publisher={Wiley Online Library}
}

@inproceedings{kamhoua2012game,
  title={A game theoretic approach for modeling optimal data sharing on online social networks},
  author={Kamhoua, Charles A and Kwiat, Kevin A and Park, Joon S},
  booktitle={2012 9th international conference on electrical engineering, computing science and automatic control (CCE)},
  pages={1--6},
  year={2012},
  organization={IEEE}
}

\end{document}